\DeclareFontFamily{U}{mathx}{}
\DeclareFontShape{U}{mathx}{m}{n}{<-> mathx10}{}
\DeclareSymbolFont{mathx}{U}{mathx}{m}{n}
\DeclareMathAccent{\widecheck}{0}{mathx}{"71}
\newcommand{\mymid}{\,|\,}
\newtheorem{theorem}{\textbf{Theorem}}
\newtheorem{claim}[theorem]{\textbf{Claim}}
\newtheorem{lemma}[theorem]{\textbf{Lemma}}
\theoremstyle{theorem}\newtheorem{remark}{\textbf{Remark}}
\theoremstyle{theorem}\newtheorem{definition}{\textbf{Definition}}
\theoremstyle{theorem}\newtheorem{assumption}{\textbf{Assumption}}
\newcommand{\simon}[1]{\textcolor{cyan}{[Simon: #1]}}
\definecolor{yxc}{RGB}{255,0,0}
\title{Settling the Sample Complexity of Online Reinforcement Learning}
\author{%
	Zihan Zhang\thanks{Department of Electrical and Computer Engineering, Princeton University; email: \texttt{\{zz5478,jasonlee\}@princeton.edu}.}\\
  Princeton  \and
 Yuxin Chen\thanks{Department of Statistics and Data Science, University of Pennsylvania; email: \texttt{yuxinc@wharton.upenn.edu}.}\\
 UPenn
 \and
Jason D.~Lee\footnotemark[1] \\
 Princeton 
 \and
 Simon S.~Du\thanks{Paul G. Allen School of Computer Science and Engineering, University of Washington; email: \texttt{ssdu@cs.washington.edu}.}\\
 U.~Washington
}
\date{July 2023; ~~Revised: April 2025}
\begin{document}
\maketitle

\begin{abstract}
    A central issue lying at the heart of online reinforcement learning (RL) is data efficiency. 
While a number of recent works achieved asymptotically minimal regret in online RL, 
the optimality of these results is only guaranteed in a ``large-sample'' regime, 
imposing enormous burn-in cost in order for their algorithms to operate optimally. 
How to achieve minimax-optimal regret without incurring any burn-in cost has been an open problem in RL theory.  

We settle this problem for finite-horizon inhomogeneous Markov decision processes. 
Specifically, we prove that a modified version of $\mathtt{MVP}$ (Monotonic Value Propagation), an optimistic model-based algorithm proposed by \citet{zhang2020reinforcement}, achieves a regret on the order of (modulo log factors)
\begin{equation*}
	\min\big\{  \sqrt{SAH^3K}, \,HK \big\},
\end{equation*}
where $S$ is the number of states, $A$ is the number of actions, $H$ is the horizon length, and $K$ is the total number of episodes. 
This regret matches the minimax lower bound for the entire range of sample size $K\geq 1$, essentially eliminating any burn-in requirement. It also translates to a PAC sample complexity (i.e., the number of episodes needed to yield $\varepsilon$-accuracy) of $\frac{SAH^3}{\varepsilon^2}$ up to log factor, which is minimax-optimal for the full $\varepsilon$-range. 
 Further, we extend our theory to unveil the influences of problem-dependent quantities like the optimal value/cost and certain variances.  
The key technical innovation lies in a novel analysis paradigm (based on a new concept called ``profiles'') to decouple complicated statistical dependency across the sample trajectories --- a long-standing challenge facing the analysis of online RL in the sample-starved regime.


\end{abstract}

\setcounter{tocdepth}{2}
\tableofcontents

\section{Introduction}\label{sec:intro}

In reinforcement learning (RL), 
an agent is often asked to learn optimal decisions (i.e., the ones that maximize cumulative reward) through real-time ``trial-and-error'' interactions with an unknown environment. 
This task is commonly dubbed as {\em online RL}, underscoring the critical role of adaptive online data collection and differentiating it from other RL settings that rely upon pre-collected data. A central challenge in achieving sample-efficient online RL boils down to how to optimally balance exploration and exploitation during data collection, 
namely, how to trade off the potential revenue of exploring unknown terrain/dynamics against the benefit of exploiting past experience. While decades-long effort has been invested towards unlocking the capability of online RL, 
how to {\em fully} characterize --- and more importantly, attain --- its fundamental performance limit remains largely unsettled.

In this paper, we take an important step towards settling the sample complexity limit of online RL, 
focusing on tabular Markov Decision Processes (MDPs) with finite horizon and finite state-action space. 
More concretely, imagine that one seeks to learn a near-optimal policy of a time-inhomogeneous MDP with $S$ states, $A$ actions, and horizon length $H$, and is allowed to execute the MDP of interest $K$ times to collect $K$ sample episodes each of length $H$. 
This canonical problem is among the most extensively studied in the RL literature, 
with formal theoretical pursuit dating back to more than 25 years ago (e.g., \citet{kearns1998near}). 
Numerous works have since been devoted to improving the sample efficiency and/or refining the analysis framework~\citep{brafman2002r,kakade2003sample,jaksch2010near,azar2017minimax, jin2018q,dann2017unifying,zanette2019tighter,bai2019provably,zhang2020almost,zhang2020reinforcement,menard2021ucb,li2021breaking,domingues2021episodic}. 
As we shall elucidate momentarily, however, information-theoretic optimality has only been achieved in the ``large-sample'' regime. 
When it comes to the most challenging sample-hungry regime, 
there remains a substantial gap between the state-of-the-art regret upper bound and the best-known minimax lower bound, 
which motivates the research of this paper.

\subsection{Inadequacy of prior art: enormous burn-in cost} 

While past research has obtained asymptotically optimal (i.e., optimal when $K$ approaches infinity) regret bounds in the aforementioned setting, 
all of these results incur an enormous burn-in cost --- that is, the minimum sample size needed for an algorithm to operate sample-optimally --- which we explain in the sequel.  
For simplicity of presentation, we assume that each immediate reward lies within the normalized range $[0,1]$ when discussing the prior art.

\paragraph{Minimax lower bound.}
To provide a theoretical benchmark,
we first make note of the best-known minimax regret lower bound developed by \citet{jin2018q,domingues2021episodic}:\footnote{Let $\mathcal{X}=\{S,A,H,K,\frac{1}{\delta}\}$, where $1-\delta$ is the target success rate (to be seen shortly). The notation $f(\mathcal{X})=O\big(g(\mathcal{X})\big)$ (or $f(\mathcal{X})\lesssim g(\mathcal{X})$) indicates the existence of some universal constant $c_1>0$ such that $f(\mathcal{X})\leq c_1 g(\mathcal{X})$; 
$f(\mathcal{X})=\Omega\big(g(\mathcal{X})\big)$ (or $f(\mathcal{X})\gtrsim g(\mathcal{X})$) means that there exists some universal constant $c_2>0$ such that $f(\mathcal{X})\geq c_2 g(\mathcal{X})$; and $f(\mathcal{X})=\Theta\big(g(\mathcal{X})\big)$ (or $f(\mathcal{X})\asymp g(\mathcal{X})$) means that $f(\mathcal{X})\lesssim g(\mathcal{X})$ and $f(\mathcal{X})\gtrsim g(\mathcal{X})$ hold simultaneously. 
Moreover, $\widetilde{O}\left(\cdot \right)$, $\widetilde{\Omega}\left(\cdot\right)$ and $\widetilde{\Theta}\left(\cdot\right)$ are defined analogously, except that all logarithmic dependency on the quantities of $\mathcal{X}$ are hidden. 
} 
\begin{align}
	\text{(minimax lower bound)} \qquad 
	\Omega\left(\min\big\{\sqrt{SAH^3K} ,\, HK \big\}\right), 
	\label{eq:minimax-lower-bound-1}
\end{align}
assuming that the immediate reward at each step falls within $[0,1]$ and imposing no restriction on $K$.  
Given that a regret of $O(HK)$ can be trivially achieved (as the sum of rewards in any $K$ episodes cannot exceed $HK$), 
we shall sometimes drop the $HK$ term and simply write
\begin{align}
	\text{(minimax lower bound)} \qquad 
	\Omega\big(\sqrt{SAH^3K} \big) \qquad \text{if }K \geq SAH. 
	\label{eq:minimax-lower-bound-2}
\end{align}

\paragraph{Prior upper bounds and burn-in cost.}
We now turn to the upper bounds developed in prior literature. For ease of presentation, we shall assume 
\begin{align}
	K \geq SAH  
\end{align}
in the rest of this subsection unless otherwise noted. Log factors are also ignored in the discussion below.

The first paper that achieves asymptotically optimal regret is \citet{azar2017minimax}, 
which came up with a model-based algorithm called $\mathtt{UCBVI}$ that enjoys a regret bound $\widetilde{O}\big(\sqrt{SAH^3K} + H^3S^2A\big)$. 
A close inspection reveals that this regret matches the minimax lower bound \eqref{eq:minimax-lower-bound-2} if and only if 
\begin{align}
	\big(\text{burn-in cost of \citet{azar2017minimax}}\big) \qquad K \gtrsim S^3A H^3, 
\end{align}
due to the presence of the lower-order term $H^3S^2A$ in the regret bound. 
This burn-in cost is clearly undesirable, 
since the sample size available in many practical scenarios might be far below this requirement.

In light of its fundamental importance in contemporary RL applications (which often have very large dimensionality and relatively limited data collection capability), reducing the burn-in cost without compromising sample efficiency has emerged as a central problem in recent pursuit of RL theory~\citep{zanette2019tighter,dann2019policy,zhang2020reinforcement,zhou2023sharp,menard2021ucb,li2021breaking,li2021settling,li2022minimax,agarwal2020model,sidford2018variance}. The state-of-the-art regret upper bounds for finite-horizon inhomogeneous MDPs can be summarized below (depending on the size of $K$): 
\begin{subequations}
\begin{align}
	\text{\citep{menard2021ucb}}  \qquad & \widetilde{O}\big(\sqrt{SAH^3K}+SAH^4 \big),   \\
	\text{\citep{zhang2020reinforcement,zhou2023sharp}} \qquad & \widetilde{O}\big(\sqrt{SAH^3K} +S^2AH^2 \big), 
\end{align}
\end{subequations}
meaning that even the most advanced prior results fall short of sample optimality unless 
\begin{align}
	\big(\text{best burn-in cost in past works}\big) \qquad K \gtrsim \min \big\{ SAH^5, S^3AH \big\}.  
\end{align}
The interested reader is referred to Table~\ref{tab:comparisons} for more details about existing regret upper bounds and their associated sample complexities.

In summary, no prior theory was able to achieve optimal sample complexity in the data-hungry regime 
\begin{equation}
SAH\leq  K \lesssim \min \big\{ SAH^5, S^3AH \big\},
\end{equation}
suffering from a significant barrier of either long horizon (as in the term $SAH^5$) or large state space (as in the term $S^3AH$). 
In fact, the information-theoretic limit is yet to be determined within this regime (i.e., neither the achievability results nor the lower bounds had been shown to be tight), although it has been conjectured by \citet{menard2021ucb} that 
the lower bound \eqref{eq:minimax-lower-bound-1} reflects the correct scaling for any sample size $K$.\footnote{Note that the original conjecture in \citet{menard2021ucb} was $\widetilde{\Theta}\big(\sqrt{SAH^3K}+SAH^2\big)$. 
Combining it with the trivial upper bound $HK$ allows one to remove the term $SAH^2$ (with a little algebra).}

\newcommand{\topsepremove}{\aboverulesep = 0mm \belowrulesep = 0mm} \topsepremove

\begin{table}[t]
		\centering
  \resizebox{\textwidth}{!}{
			\begin{tabular}{ c|c|c|c}
				\toprule
				\textbf{Algorithm} & \textbf{Regret upper bound} & \makecell{\textbf{Range of $K$ that} $\vphantom{\frac{1^{7^{7}}}{1^{7}}}$ \\\textbf{attains optimal regret}}   & \makecell{\textbf{Sample complexity} \\ \textbf{(or PAC bound)}} \\ 
				\toprule
\rowcolor{gray!25} 
	$\mathtt{MVP}$ &  &  & \\
\rowcolor{gray!25}
	{\bf (this work, Theorem~\ref{thm1})} & \multirow{-2}{*}{\cellcolor{gray!25}$\min\big\{\sqrt{SAH^3K},HK\big\}$} & 
				\multirow{-2}{*}{\cellcolor{gray!25}$[1,\infty)$ }
				& \multirow{-2}{*}{\cellcolor{gray!25}$\frac{SAH^3}{\varepsilon^2}$ }\\
				\hline
				\makecell{$\mathtt{UCBVI}$\\\citep{azar2017minimax}} & $\min\big\{\sqrt{SAH^3K}+S^2AH^3,\,HK\big\}$  & $[S^3AH^3,\infty)$  & $\frac{SAH^3}{\varepsilon^2} + \frac{S^2AH^3}{\varepsilon}$ \\
				\hline

				\makecell{
$\mathtt{ORLC}$\\\citep{dann2019policy} }& $\min\big\{\sqrt{SAH^3K}+S^2AH^4,\,HK\big\}$ & $[S^3AH^5,\infty)$ & $\frac{SAH^3}{\varepsilon^2}+\frac{S^2AH^4}{\varepsilon}$\\
  \hline
			
			\makecell{$\mathtt{EULER}$\\\citep{zanette2019tighter}} &    $\min\big\{\sqrt{SAH^3K}+ S^{3/2}AH^3(\sqrt{S}+\sqrt{H}),\,HK\big\}$ & $\big[S^{2}AH^3(\sqrt{S}+\sqrt{H}),\infty\big)$ & $\frac{SAH^3}{\varepsilon^2} + \frac{S^2AH^3(\sqrt{S}+\sqrt{H})}{\varepsilon}$\\
    \hline			
				\makecell{ $\mathtt{UCB}\text{-}\mathtt{Adv}$\\\citep{zhang2020almost}} & $\min\big\{\sqrt{SAH^3K}+S^2A^{3/2}H^{33/4}K^{1/4},\,HK\big\}$ & $[ S^6A^4H^{27},\infty)$ & $\frac{SAH^3}{\varepsilon^2} + \frac{S^{8/3}A^2H^{11}}{\varepsilon^{4/3}}$ 
  \\
  \hline
  \makecell{$\mathtt{MVP}$\\\citep{zhang2020reinforcement}} & $\min\big\{\sqrt{SAH^3K}+S^2AH^2,\,HK\big\}$  &  $[S^3AH,\infty)$ &  $\frac{SAH^3}{\varepsilon^2} + \frac{S^2AH^2}{\varepsilon}$\\
  \hline  
				\makecell{$\mathtt{UCBMQ}$\\\citep{menard2021ucb}} & $\min\big\{\sqrt{SAH^3K}+SAH^4,\,HK\big\}$ & $[SAH^5,\infty)$ & $\frac{SAH^3}{\varepsilon^2} + \frac{SAH^4}{\varepsilon}$\\
  \hline 
  \makecell{$\mathtt{Q}$-$\mathtt{Earlysettled}$-$\mathtt{Adv}$
  \\\citep{li2021breaking}}  & $\min\big\{\sqrt{SAH^3K}+SAH^6,\,HK\big\}$ & $[SAH^{9},\infty)$ &  $\frac{SAH^3}{\varepsilon^2} + \frac{SAH^{6}}{\varepsilon}$\\
  \toprule
	\makecell{Lower bound\\\citep{domingues2021episodic}} & $\min\big\{\sqrt{SAH^3K},HK\big\}$ & n/a & $\frac{SAH^3}{\varepsilon^2}$  \\
				\toprule
			\end{tabular}}
		\caption{
			Comparisons between our result and prior works that achieve asymptotically optimal regret for finite-horizon inhomogeneous MDPs (with all log factors omitted),   where $S$ (resp.~$A$) is the number of states (resp.~actions), $H$ is the planning horizon, and $K$ is the number of episodes. The third column reflects the burn-in cost, and the sample complexity (or PAC bound) refers to the number of episodes needed to yield $\varepsilon$ accuracy. 
The results provided here account for all $K\geq 1$ or all $\varepsilon \in (0,H]$. 
Our paper is the only one that gives regret (resp.~PAC) bound matching the minimax lower bound for the entire range of $K$ (resp.~$\varepsilon$).  
 \label{tab:comparisons}
}
	\end{table}


\paragraph{Comparisons with other RL settings and key challenges.} 
In truth, the incentives to minimize the burn-in cost and improve data efficiency arise in multiple other settings beyond online RL.  
For instance, in an idealistic setting that assumes access to a simulator (or a generative model) --- a model that allows the learner to query arbitrary state-action pairs to draw samples --- a recent work \citet{li2020breaking} developed a perturbed model-based approach that is provably optimal without incurring any burn-in cost. Analogous results have been obtained in \citet{li2021settling} for offline RL --- a setting that requires policy learning to be performed based on historical data --- unveiling the full-range optimality of a pessimistic model-based algorithm.

Unfortunately, the algorithmic and analysis frameworks developed in the above two works fail to accommodate the online counterpart. 
The main hurdle stems from the complicated statistical dependency intrinsic to episodic online RL; 
for instance, in online RL, the empirical transition probabilities and the running estimates of the value function are oftentimes statistically dependent in an intertwined manner (unless we waste data).  
How to decouple the intricate statistical dependency without compromising data efficiency constitutes the key innovation of this work. 
More precise, in-depth technical discussions will be provided in Section~\ref{sec:tec}.


\subsection{A peek at our main contributions} 

We are now positioned to summarize the main findings of this paper. 
Focusing on time-inhomogeneous finite-horizon MDPs, 
our main contributions can be divided into two parts:  
 the first part fully settles the minimax-optimal regret and sample complexity of online RL, 
 whereas the second part further extends and augments our theory to make apparent the impacts of certain problem-dependent quantities. 
 Throughout this subsection, the regret metric $\mathsf{Regret}(K)$ captures the cumulative sub-optimality gap (i.e., the gap between the performance of the policy iterates and that of the optimal policy) over all $K$ episodes, 
 to be formally defined in \eqref{eq:defn-regret}.

\subsubsection{Settling the optimal sample complexity with no burn-in cost} 

Our first result {\em fully} determines the sample complexity limit of online RL in a minimax sense, 
allowing one to attain the optimal regret regardless of the number $K$ of episodes that can be collected. 
\begin{theorem}\label{thm1}  
For any $K \ge 1$ and any $0<\delta<1$, 
there exists an algorithm (see Algorithm~\ref{alg:main}) obeying
\begin{equation}
	\mathsf{Regret}(K) 
	\lesssim \min\bigg\{\sqrt{SAH^{3}K \log^5\frac{SAHK}{\delta}},HK\bigg\}
	\label{eq:regret-Thm1}
\end{equation}
with probability at least $1-\delta$.
\end{theorem}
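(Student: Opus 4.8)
### Proof Proposal for Theorem~\ref{thm1}

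\textbf{High-level strategy.}
The plan is to analyze a modified $\mathtt{MVP}$ algorithm that maintains optimistic value estimates $\overline{V}_h$ via Bernstein-style bonuses built from empirical transition variances, and to bound the cumulative regret by the sum of the ``optimistic gaps'' $\sum_k (\overline{V}_1^{(k)}(s_1) - V_1^{\pi^k}(s_1))$. Since the $HK$ branch of the bound is trivial (rewards are bounded by $H$ per episode), the entire difficulty is establishing $\mathsf{Regret}(K) \lesssim \sqrt{SAH^3 K \cdot \mathrm{polylog}}$ \emph{uniformly in $K$}, with no lower-order additive term. The standard recursive decomposition expresses the per-episode gap as a telescoping sum over $h$ of (i) empirical-mean deviation terms $(\widehat{P}_h - P_h)\overline{V}_{h+1}$, (ii) the accumulated bonus terms, and (iii) a martingale difference sequence capturing the gap between the on-policy expected next-state value and its realized sample. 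Terms (ii) and (iii) are controlled by Azuma/Freedman and by the total-variance law (the $\sum_h \mathrm{Var} \lesssim H^2$ phenomenon of \citet{azar2017minimax}), which is what produces the $\sqrt{H^3}$ rather than $\sqrt{H^4}$ scaling. The crux is term (i): to get the sharp $\sqrt{SAH^3K}$ one must use a Bernstein bound with the \emph{true} variance $\mathrm{Var}_{P_h}(\overline{V}_{h+1})$, but $\overline{V}_{h+1}$ is itself a random function built from the same trajectories used to form $\widehat{P}_h$, so the naive union bound over all possible value functions costs extra $S$ (or $H$) factors --- precisely the source of every prior burn-in term.

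\textbf{Key steps in order.}
First I would set up the modified algorithm and verify the monotonicity/optimism invariant: with the chosen bonus, $\overline{V}_h^{(k)}(s) \geq V_h^\star(s)$ for all $k,h,s$ on a good event, and $\overline{V}_h^{(k)}$ is non-increasing in $k$ (the monotone value propagation property), so that reference values stabilize. Second, I would establish the recursive regret decomposition and reduce the problem to bounding $\sum_{k,h} |(\widehat{P}_h^{(k)} - P_h)\overline{V}_{h+1}^{(k)}|$ plus bonus and martingale terms. Third --- the heart of the proof --- I would invoke the ``profile'' machinery announced in the abstract: rather than union-bounding over the (dependent, data-determined) sequence of value functions $\overline{V}_{h+1}^{(k)}$, one groups trajectories according to a coarse \emph{profile} of the relevant statistics so that, conditionally on a profile, the transition samples used in the Bernstein concentration are (essentially) independent of the value function being concentrated against; the number of distinct profiles is polylogarithmic, so the union bound costs only log factors. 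Fourth, with the decoupled Bernstein bound in hand, I would apply Cauchy--Schwarz over $(s,a,h,k)$ together with the pigeonhole bound $\sum_{k,h} \frac{1}{N_h^{(k)}(s_h^k,a_h^k)} \lesssim SAH \log K$ and the total-variance law to collect everything into $\sqrt{SAH^3 K}\,\mathrm{polylog}$. Finally, I would assemble the pieces, track the $\log^5$ accumulation through the repeated union bounds and the recursion on $h$, and take the minimum with the trivial $HK$ bound.

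\textbf{Main obstacle.}
The decisive technical hurdle is the third step: the statistical entanglement between $\widehat{P}_h^{(k)}$ and $\overline{V}_{h+1}^{(k)}$ in the \emph{sample-starved} regime $K \gtrsim SAH$ (as opposed to $K \gtrsim \mathrm{poly}(S,H) \cdot SAH$, where one could afford to split the data or pay extra factors). In that regime one cannot afford a fresh independent batch to estimate each reference value, nor can one absorb an $S$- or $H$-sized covering number into the bound. The profile-based argument must therefore show that a polylogarithmic-size partition of the sample space suffices to make the relevant conditional independence hold, and that conditioning on a profile does not distort the variance proxy used in the Bernstein inequality by more than constant factors. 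Making this rigorous --- defining profiles so that they simultaneously (a) determine enough about the history to freeze $\overline{V}_{h+1}^{(k)}$, (b) leave enough fresh randomness in the transitions to retain concentration, and (c) number only $\mathrm{polylog}(SAHK/\delta)$ many --- is where essentially all the novelty of the argument resides; the remaining steps are by-now-standard optimistic-RL bookkeeping.
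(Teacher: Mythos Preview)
Your high-level architecture matches the paper's: optimism, the regret decomposition into the $(\widehat P-P)\overline V$ term, the bonus sum, and a martingale term, the total-variance law for the $H^{3}$ scaling, and profile-based decoupling for the first term. But your account of the profile step contains a concrete error that would derail the accounting. You assert that ``the number of distinct profiles is polylogarithmic'' and that conditioning on a profile suffices to ``freeze $\overline V_{h+1}^{(k)}$.'' Neither holds, and the two requirements are in fact incompatible. In the paper a profile records, for every $(s,a,h)$ and every $k$, the doubling-batch index of $\widehat P_{s,a,h}^{k}$; that index lies in $\{0,\ldots,\log_{2}K\}$ only because of the epoch/doubling update rule (which you do not mention but is structurally essential---without it the count is genuinely exponential in $K$). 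The resulting profile count is bounded by $(4SAHK)^{SAH\log_{2}K+1}$ (Lemma~\ref{lemma:key2}), so the \emph{logarithm} of the count is $\Theta(SAH\cdot\mathrm{polylog})$, not $\mathrm{polylog}$. This $SAH$ factor is precisely what sits inside the square root of the Freedman bound in Lemma~\ref{lemma:uniform} and Lemma~\ref{lemma:decouple}, yielding $T_{1}\lesssim\sqrt{SAH\cdot T_{6}\cdot\mathrm{polylog}}+SAH^{2}\cdot\mathrm{polylog}$ directly; no Cauchy--Schwarz/pigeonhole step is used for this term. Were you to apply a per-$(s,a,h,k)$ Bernstein bound with the actual (exponential) profile count and only then Cauchy--Schwarz against $\sum 1/N\lesssim SAH\log K$, you would pick up a second $SAH$ factor and overshoot the target.

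Relatedly, conditioning on a profile does not make $V_{h+1}^{k}$ deterministic; it only pins down which batch each $\widehat P_{s,a,h}^{k}$ refers to. The value estimate remains random but becomes a deterministic function of the profile \emph{together with} the samples at steps $h'>h$ (via the auxiliary Algorithm~\ref{alg:main-fixed-profile}), and Freedman is applied to the aggregate $\sum_{s,a,h}\langle \widehat P_{s,a,h}^{(l)}-P_{s,a,h},\,X_{h+1,s,a}\rangle$ viewed as a martingale running backward in $h$ under a generative-model coupling of the samples (Definition~\ref{def:filt2}); one then also union-bounds over the $\leq (K+1)^{SAH}$ choices of the sequence $\{X_{h+1,s,a}\}$, which contributes another $SAH\log K$ to the exponent. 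A minor aside: the ``monotonic'' in MVP refers to the bonus being monotone as a function of the value vector (used in the optimism proof, Lemma~\ref{lemma:opt}), not to $V_{h}^{(k)}$ being non-increasing in $k$.
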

The optimality of our regret bound \eqref{eq:regret-Thm1} can be readily seen given that 
it matches the minimax lower bound \eqref{eq:minimax-lower-bound-1} (modulo some logarithmic factor). 
One can also easily translate the above regret bound into sample complexity or probably approximately correct (PAC) bounds: 
the proposed algorithm is able to return an $\varepsilon$-suboptimal policy with high probability using at most 
\begin{equation}
	\text{(sample complexity)}\qquad 
	\widetilde{O}\left(\frac{SAH^3}{\varepsilon^2}\right) \quad \text{episodes}
\end{equation}
(or equivalently, $\widetilde{O}\big(\frac{SAH^4}{\varepsilon^2}\big)$ sample transitions as each episode has length $H$).  
Remarkably, this result holds true for the entire $\varepsilon$ range (i.e., any $\varepsilon \in (0,H]$), essentially eliminating the need of any burn-in cost.  
It is noteworthy that even in the presence of an idealistic generative model, this order of sample size is un-improvable \citep{azar2013minimax,li2020breaking}.

The algorithm proposed herein is a modified version of $\mathtt{MVP}$: {\em Monotonic Value Propagation}. 
Originally proposed by \citet{zhang2020reinforcement},  
the $\mathtt{MVP}$ method falls under the category of model-based approaches, 
a family of algorithms that construct explicit estimates of the probability transition kernel before value estimation and policy learning. 
Notably, a technical obstacle that obstructs the progress in understanding model-based algorithms  
arises from the exceedingly large model dimensionality: 
given that the dimension of the transition kernel scales proportionally with $S^2$, 
all existing analyses for model-based online RL fell short of effectiveness unless the sample size already far exceeds $S^2$ \citep{azar2017minimax,zhang2020reinforcement}.  
To overcome this undesirable source of burn-in cost, 
a crucial step is to empower the analysis framework in order to accommodate the highly sub-sampled regime (i.e., a regime where the sample size scales linearly with $S$), 
which we shall elaborate on in Section~\ref{sec:tec}. 
The full proof of Theorem~\ref{thm1} will be provided in Section~\ref{app:thmmain}.




\subsubsection{Extension: optimal problem-dependent regret bounds}

In practice, RL algorithms often perform far more appealingly than what their worst-case performance guarantees would suggest. 
This motivates a recent line of works that 
investigate optimal performance 
in a more problem-dependent fashion \citep{talebi2018variance,simchowitz2019non,zanette2019tighter,zhou2023sharp,fruit2018efficient,xu2021fine,yang2021q,jin2020reward,wagenmaker2022first,zhao2023variance,dann2021beyond,tirinzoni2021fully}. 
Encouragingly, the proposed algorithm automatically achieves optimality on a more refined problem-dependent level, 
without requiring prior knowledge of additional problem-specific knowledge. 
This results in several extended theorems that take into account different problem-dependent quantities.

The first extension below investigates how the optimal value influences the regret bound. 
\begin{theorem}[Optimal value-dependent regret] \label{thm:first}
For any $K \ge 1$, 
Algorithm~\ref{alg:main} satisfies
\begin{align}
	\mathbb{E}\big[\mathsf{Regret}(K)\big] \lesssim \min\big\{ \sqrt{SAH^{2}Kv^{\star}}, Kv^{\star}\big\} \log^{5}(SAHK), 
	\label{eq:optimal-regret-value}
\end{align}
%
where $v^{\star}$ is the value of the optimal policy averaged over the initial state distribution (to be formally defined in \eqref{eq:defn-vstar-formal}).
\end{theorem}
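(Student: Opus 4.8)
The plan is to piggyback on the regret decomposition already used to prove Theorem~\ref{thm1} and then trade one factor of $H$ for $v^\star$ by means of a law-of-total-variance argument. Recall that the analysis of Theorem~\ref{thm1} reduces $\mathsf{Regret}(K)$, up to lower-order terms, to the accumulated Bernstein-type exploration bonus $\sum_{k=1}^{K}\sum_{h=1}^{H}b_h^k$, where $b_h^k\asymp\sqrt{\mathsf{Var}_h^k\cdot\mathrm{polylog}/\max\{N_h^k,1\}}$, $N_h^k$ is the visitation count of the state-action pair played at step $h$ of episode $k$, and $\mathsf{Var}_h^k$ is the (empirical) variance of the running optimistic value iterate at that pair. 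Working on the high-probability event of Theorem~\ref{thm1} and applying Cauchy--Schwarz together with the standard pigeonhole bound $\sum_{k,h}1/\max\{N_h^k,1\}\lesssim SAH\log K$ yields
\begin{equation*}
\sum_{k,h}b_h^k \;\lesssim\; \sqrt{\Big(\sum_{k,h}\tfrac{1}{\max\{N_h^k,1\}}\Big)\Big(\sum_{k,h}\mathsf{Var}_h^k\Big)\mathrm{polylog}} \;\lesssim\; \sqrt{SAH\cdot\mathrm{polylog}\cdot\sum_{k,h}\mathsf{Var}_h^k}.
\end{equation*}
Thus everything hinges on bounding the aggregate variance $\sum_{k,h}\mathsf{Var}_h^k$ by $\widetilde{O}(KHv^\star)$ in expectation, instead of the crude $\widetilde{O}(KH^2)$.

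For a fixed episode $k$ executed under policy $\pi^k$, the law of total variance under the true kernel gives $\sum_h\mathbb{E}\big[\mathrm{Var}(r_h+V^{\pi^k}_{h+1}(s_{h+1})\mymid s_h,a_h)\big]=\mathrm{Var}\big(\sum_h r_h\mymid s_1^k\big)$; since the return lies in $[0,H]$ with conditional mean $V^{\pi^k}_1(s_1^k)$, this is at most $H\,\mathbb{E}[V^{\pi^k}_1(s_1^k)]\le Hv^\star$ after averaging over the initial distribution. It then remains to relate the empirical variances $\mathsf{Var}_h^k$ --- taken under the estimated kernel $\widehat P_h^k$ and the optimistic iterate --- to these true-kernel variances of $V^{\pi^k}$. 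I would do this by invoking (i) the monotonicity and optimism of $\mathtt{MVP}$, so that the optimistic iterate dominates $V^\star$ and its gap to $V^{\pi^k}$ is controlled by the per-episode sub-optimality, and (ii) concentration of $\widehat P_h^k$ towards $P_h$ on the relevant functionals, with visitation counts once more absorbed by the pigeonhole estimate. This produces $\sum_{k,h}\mathsf{Var}_h^k\lesssim \sum_k H\,\mathbb{E}[V^{\pi^k}_1(s_1^k)]+H\cdot\mathbb{E}[\mathsf{Regret}(K)]+(\text{l.o.t.})\lesssim KHv^\star+H\cdot\mathbb{E}[\mathsf{Regret}(K)]+\text{l.o.t.}$

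Plugging this back in gives a self-bounding inequality $\mathbb{E}[\mathsf{Regret}(K)]\lesssim\sqrt{SAH\cdot\mathrm{polylog}\cdot(KHv^\star+H\,\mathbb{E}[\mathsf{Regret}(K)])}+\text{l.o.t.}$, which, regarded as a quadratic in $\sqrt{\mathbb{E}[\mathsf{Regret}(K)]}$, solves to $\mathbb{E}[\mathsf{Regret}(K)]\lesssim\sqrt{SAH^2Kv^\star\cdot\mathrm{polylog}}+SAH^2\cdot\mathrm{polylog}+\text{l.o.t.}$ The leftover additive term carries no burn-in: whenever it is not already dominated by the first term it is dominated by the trivial bound $\mathsf{Regret}(K)\le\sum_k V^\star_1(s_1^k)$, whose expectation is exactly $Kv^\star$. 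Taking expectations throughout --- the $v^\star$-based variance bound being intrinsically an in-expectation statement --- and combining with $\mathbb{E}[\mathsf{Regret}(K)]\le Kv^\star$ yields $\mathbb{E}[\mathsf{Regret}(K)]\lesssim\min\{\sqrt{SAH^2Kv^\star},\,Kv^\star\}\log^5(SAHK)$, where the $\log^5$ factor is obtained by carrying the union bounds and Bernstein inequalities of Theorem~\ref{thm1} through the self-bounding step.

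The main obstacle is precisely step (ii): the passage from the empirical Bernstein variances in the bonus to the clean law-of-total-variance quantity $KHv^\star$. The value iterates are optimistic over-estimates rather than $V^\star$ or $V^{\pi^k}$, the variances are computed under the estimated kernel, and --- the very difficulty this paper is built to resolve --- these statistical dependencies are delicate in the sample-starved regime $K\asymp S$. I expect the ``profile'' machinery and the monotone-value-propagation structure developed for Theorem~\ref{thm1} to do the heavy lifting here, so that the present extension largely amounts to re-running that analysis while threading the factor $v^\star$ through the variance accounting rather than crudely bounding each term by $H^2$; a secondary nuisance is verifying that the self-bounding inequality leaves no residual burn-in term and that no extra logarithmic factors appear beyond the claimed fifth power.
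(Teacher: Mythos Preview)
Your proposal is essentially correct and would yield the claimed bound, but it takes a more circuitous route than the paper. The paper never invokes the law of total variance with respect to $V^{\pi^k}$. Instead it observes that, in the proof of Theorem~\ref{thm1}, the aggregate-variance terms $T_5=\sum_{k,h}\mathbb{V}(\widehat P^k,V_{h+1}^k)$ and $T_6=\sum_{k,h}\mathbb{V}(P,V_{h+1}^k)$ were \emph{already} bounded (see \eqref{eq:boundt5-intermediate-13} and \eqref{eq:boundt6-intermediate-135}) by expressions containing $2H\sum_{k,h}\widehat r_h^k(s_h^k,a_h^k)$ rather than by a crude $KH^2$. The single value-dependent input is then the elementary fact that $\sum_{k,h}\widehat r_h^k\lesssim Kv^\star$ (up to reward-concentration and lower-order terms), which follows because the realized/empirical rewards along episode $k$ total to something whose expectation is at most $V_1^{\pi^k}(s_1^k)\le V_1^\star(s_1^k)$, and $\mathbb{E}[V_1^\star(s_1^k)]=v^\star$. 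Threading this bound through the existing inequalities immediately gives $T_5,T_6\lesssim HKv^\star$ without ever comparing $V^k$ to $V^{\pi^k}$ --- so your ``main obstacle'' (ii) simply does not arise.

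Your approach --- bounding $\sum_{k,h}\mathbb{V}(P,V_{h+1}^{\pi^k})$ by the law of total variance plus the inequality $\mathrm{Var}\big(\sum_h r_h\big)\le H\,\mathbb{E}\big[\sum_h r_h\big]\le Hv^\star$, and then controlling the cross term $\sum_{k,h}\mathbb{V}(P,V_{h+1}^k-V_{h+1}^{\pi^k})$ via optimism and a self-bounding argument --- is precisely the heavier machinery the paper deploys later for the \emph{variance}-dependent Theorem~\ref{thm:var} (see Lemmas~\ref{lemma:kx1} and~\ref{lemma:bdv11}). It works here too, so your plan is sound; the paper's shortcut is shorter and sidesteps step~(ii) entirely, while your route has the merit that essentially the same argument proves both Theorems~\ref{thm:first} and~\ref{thm:var}. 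The handling of the additive $SAH^2$ term and the conversion to expectation (choosing $\delta\asymp 1/(KH)$) match the paper.
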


Moreover, there is also no shortage of applications where the use of a cost function is preferred over a value function \citep{agarwal2017open,allen2018make,lee2020bias,wang2023benefits}. For this purpose, we provide another variation based upon the optimal cost. 
\begin{theorem}[Optimal cost-dependent regret]\label{thm:cost}
For any $K \ge 1$ and any $0<\delta<1$, 
Algorithm~\ref{alg:main} achieves 
\begin{align}
	\mathsf{Regret}(K) \leq \widetilde{O}\left(\min\big\{\sqrt{SAH^2K c^{\star}}+SAH^2,\, K(H-c^{\star}) \big\}\right)
	\label{eq:optimal-regret-cost}
\end{align}
with probability exceeding $1-\delta$, where $c^{\star}$ denotes the cost of the optimal policy averaged over the initial state distribution (to be formally defined in \eqref{eq:defn-cstar-formal}).
\end{theorem}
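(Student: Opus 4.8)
The plan is to route Theorem~\ref{thm:cost} through the very same regret recursion that underlies Theorem~\ref{thm1} and Theorem~\ref{thm:first}, but to feed that recursion a sharper bound on the accumulated value variances that is governed by the optimal cost rather than the optimal value. Under the convention that the per-step cost equals one minus the per-step reward, the cumulative cost of any policy equals $H$ minus its value, so $C_1^{\pi}(s)=H-V_1^{\pi}(s)$ pointwise and, in particular, $c^{\star}=H-v^{\star}$. Hence the cost-regret and the reward-regret coincide, $\mathsf{Regret}(K)=\sum_{k=1}^{K}\big(V_1^{\star}(s_1^k)-V_1^{\pi_k}(s_1^k)\big)=\sum_{k=1}^{K}\big(C_1^{\pi_k}(s_1^k)-C_1^{\star}(s_1^k)\big)$, and the trivial bound $\mathsf{Regret}(K)\le\sum_{k}V_1^{\star}(s_1^k)=K(H-c^{\star})$ already supplies the second term in the minimum. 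It therefore suffices to establish the $\sqrt{SAH^{2}Kc^{\star}}+SAH^{2}$ branch.

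Step 1 (regret recursion). I would invoke the decomposition developed in Section~\ref{app:thmmain}: up to an additive lower-order term of size $\widetilde{O}(SAH^{2})$, the regret is controlled by a weighted sum of Bernstein-type bonuses of the form $\sum_{k=1}^{K}\sum_{h=1}^{H}\sqrt{\frac{\mathsf{V}_{h}^{k}}{\max\{N_{h}^{k},1\}}\log\frac{SAHK}{\delta}}$, where $N_{h}^{k}$ is the visitation count of $(s_{h}^{k},a_{h}^{k})$ before episode $k$ and $\mathsf{V}_{h}^{k}$ is the empirical variance of the running value iterate $\widehat{V}_{h+1}^{k}$ under the estimated transition at $(s_{h}^{k},a_{h}^{k})$. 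The profile-based decoupling already used for Theorem~\ref{thm1} is what guarantees that this reduction holds in the sub-sampled regime, i.e.\ without any $S^{2}$-type burn-in sneaking into the lower-order term.

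Step 2 (cost-driven law of total variance). The genuinely new ingredient is a bound on $\sum_{k,h}\mathsf{V}_{h}^{k}$. First, using the concentration and monotonicity lemmas already in place together with optimism $\widehat{V}_{h}^{k}\ge V_{h}^{\star}$, replace $\mathsf{V}_{h}^{k}$ by the true next-step variance $\mathsf{Var}_{h}\big(V_{h+1}^{\pi_k}\big)$ under the policy iterate $\pi_k$, up to lower-order corrections. Then apply the law of total variance along each sampled trajectory and bound it by the second moment: $\sum_{h=1}^{H}\mathsf{Var}_{h}\big(V_{h+1}^{\pi_k}\big)=\mathsf{Var}\big[\sum_{h=1}^{H}c_h\mymid s_1^k\big]\le\mathbb{E}\big[\big(\sum_{h=1}^{H}c_h\big)^{2}\mymid s_1^k\big]\le H\,\mathbb{E}\big[\sum_{h=1}^{H}c_h\mymid s_1^k\big]=H\,C_1^{\pi_k}(s_1^k)$, where the last inequality uses that the per-step costs are nonnegative and sum to at most $H$. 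Finally, a self-bounding step closes the loop: $\sum_k C_1^{\pi_k}(s_1^k)\le Kc^{\star}+\mathsf{Regret}(K)$, so $\sum_{k,h}\mathsf{V}_{h}^{k}\lesssim KHc^{\star}+H\cdot\mathsf{Regret}(K)+(\text{lower order})$.

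Step 3 (Cauchy--Schwarz, wrap-up, and the main obstacle). Combining Steps~1--2 with Cauchy--Schwarz and the standard counting bound $\sum_{k,h}1/\max\{N_{h}^{k},1\}\lesssim SAH\log K$ gives $\sum_{k,h}\sqrt{\mathsf{V}_{h}^{k}/\max\{N_{h}^{k},1\}}\lesssim\sqrt{\big(\sum_{k,h}\mathsf{V}_{h}^{k}\big)\big(\sum_{k,h}1/\max\{N_{h}^{k},1\}\big)}\lesssim\sqrt{SAH^{2}Kc^{\star}}+\sqrt{SAH^{2}\cdot\mathsf{Regret}(K)}$ up to log factors; treating this as a self-bounded inequality in $\mathsf{Regret}(K)$ and solving absorbs the recursive term and leaves the claimed $\sqrt{SAH^{2}Kc^{\star}}+\widetilde{O}(SAH^{2})$, and a Freedman-type martingale bound upgrades this to the high-probability statement. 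The crux is Step~2: passing from the algorithm's internal empirical variances $\mathsf{V}_{h}^{k}$ --- computed from optimistic value iterates and from empirical transitions that are statistically entangled with the very trajectory on which they are evaluated --- to the true trajectory variance $\mathsf{Var}[\sum_h c_h]$ of the data-dependent policy $\pi_k$, while keeping every error term at the $\widetilde{O}(SAH^{2})$ scale so that no burn-in reappears. This is exactly where the profile machinery of Theorem~\ref{thm1} is indispensable; the cost-specific content is only the elementary inequality $\mathbb{E}[(\sum_h c_h)^{2}]\le H\,\mathbb{E}[\sum_h c_h]$ and the identity equating the reward- and cost-return variances, which together let an optimism-based analysis output a $c^{\star}$-dependent rather than a $v^{\star}$-dependent bound.
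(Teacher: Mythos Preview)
Your reduction rests on a setup that is not the paper's. You posit the convention $c_h=1-r_h$, whence $C_1^\pi=H-V_1^\pi$ and $c^\star=H-v^\star$, and then analyze \emph{unmodified} Algorithm~\ref{alg:main}. But in Section~\ref{sec:extensions} the cost setting is a fresh problem: costs $c_h\in[0,H]$ are the primitives (so individual costs can exceed $1$), Algorithm~\ref{alg:main} is run with $r_h=-c_h$, and the $Q$-update is replaced by \eqref{eq:updatecost}, which clips to $[-H,0]$ rather than $[0,H]$. In that setting there is no affine identity linking $c^\star$ to a reward-side $v^\star$ (if anything $v^\star=-c^\star$), and your trivial-branch identity $\sum_k V_1^\star(s_1^k)=K(H-c^\star)$ fails as written.

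Once the setup is corrected, your Step~2 idea is sound but roundabout. The paper's proof (Appendix~\ref{app:cost}) does not detour through $V^{\pi_k}$ and the law of total variance; instead, the modified clipping rule \eqref{eq:updatecost} makes the telescoping expansion of $T_5,T_6$ spit out $2H\sum_{k,h}[-r_h(s_h^k,a_h^k)]=2H\sum_{k,h}c_h(s_h^k,a_h^k)$ on the right (compare \eqref{eq:ccbt5}--\eqref{eq:ccbt6} with \eqref{eq:boundt5-intermediate-13}--\eqref{eq:boundt6-intermediate-135}), and then $\sum_{k,h}c_h(s_h^k,a_h^k)\lesssim Kc^\star+\mathsf{Regret}(K)$ plus Freedman corrections closes the self-bounding loop exactly as you describe in Step~3. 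Your route---split $\mathbb{V}(P,V_{h+1}^k)$ into $\mathbb{V}(P,V_{h+1}^{\pi_k})$ plus a cross term, bound the first via $\mathsf{Var}[\sum_h c_h]\le H\,\mathbb{E}[\sum_h c_h]$, and absorb the second---is essentially the $\mathrm{var}_2$ path from the variance-dependent proof (Lemma~\ref{lemma:var2}); it works, but the cross term $\sum_{k,h}\mathbb{V}(P,V_{h+1}^k-V_{h+1}^{\pi_k})$ is not handled by ``optimism and monotonicity'' alone and needs the analogue of Lemma~\ref{lemma:bdv11}, which you glossed over.
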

\noindent 
It is worth noting that: despite the apparent similarity between the statements of Theorem~\ref{thm:first} and Theorem~\ref{thm:cost}, 
they do not imply each other, although their proofs are very similar to each other.

Finally, we establish another regret bound that reflects the effect of certain variance quantities of interest.  
\begin{theorem}[Optimal variance-dependent regret] \label{thm:var}
For any $K \ge 1$ and any $0<\delta<1$, 
Algorithm~\ref{alg:main} obeys
\begin{align}
	\mathsf{Regret}(K) \leq \widetilde{O}\left(\min\big\{\sqrt{SAHK\mathrm{var}}+SAH^2,\,KH\big\}\right) 
	\label{eq:optimal-regret-var}
\end{align}
with probability at least $1-\delta$, 
	where $\mathrm{var}$ is a certain variance-type metric (to be formally defined in \eqref{eq:defn-var-formal}).
\end{theorem}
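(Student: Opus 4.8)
The plan is to upgrade the regret decomposition already established in the proof of Theorem~\ref{thm1} (Section~\ref{app:thmmain}) by replacing the crude ``total-variance $\le H^2$ per episode'' estimate with a refinement that tracks the genuine variance quantity $\mathrm{var}$; the argument closely parallels that of Theorem~\ref{thm:cost}. Concretely, the analysis behind Theorem~\ref{thm1} shows that, on a high-probability event,
\[
  \mathsf{Regret}(K)\;\lesssim\; \sum_{k=1}^{K}\sum_{h=1}^{H}\sqrt{\frac{\mathrm{Var}_{\widehat P_h^k}\!\big(V_{h+1}^k\big)(s_h^k,a_h^k)\cdot\iota}{N_h^k(s_h^k,a_h^k)}}\;+\;\widetilde{O}\big(SAH^{2}\big),
\]
where $\iota$ is a polylogarithmic factor, $\widehat P_h^k$ is the empirical kernel, $V_{h+1}^k$ the value iterate, and $N_h^k$ the visitation count. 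The first move is a Cauchy--Schwarz step combined with the pigeonhole inequality $\sum_{k,h}1/N_h^k(s_h^k,a_h^k)\lesssim SAH\log K$, which turns the leading sum into $\sqrt{SAH\,\iota\log K}\cdot\big(\sum_{k,h}\mathrm{Var}_{\widehat P_h^k}(V_{h+1}^k)(s_h^k,a_h^k)\big)^{1/2}$.

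The heart of the proof is then to show that the aggregate variance obeys $\sum_{k,h}\mathrm{Var}_{\widehat P_h^k}(V_{h+1}^k)(s_h^k,a_h^k)\lesssim K\cdot\mathrm{var}+H\cdot\mathsf{Regret}(K)+\widetilde{O}(\text{lower order})$. I would do this in four stages: \emph{(i)} pass from the empirical variance $\mathrm{Var}_{\widehat P_h^k}$ to the population variance $\mathrm{Var}_{P_h}$ using concentration of $\widehat P_h^k$ around $P_h$, crucially reusing the ``profiles'' device of Section~\ref{sec:tec} so that the dependency between $\widehat P_h^k$ and $V_{h+1}^k$ does not cost extra factors of $S$; \emph{(ii)} replace the value iterate $V_{h+1}^k$ by the optimal value $V_{h+1}^\star$, paying $\big|\mathrm{Var}_{P_h}(V_{h+1}^k)-\mathrm{Var}_{P_h}(V_{h+1}^\star)\big|\lesssim H\cdot P_h\big(V_{h+1}^k-V_{h+1}^\star\big)(s_h^k,a_h^k)$ (valid since $0\le V_{h+1}^\star\le V_{h+1}^k\le H$ under optimism), whose aggregate is $\lesssim H\cdot\mathsf{Regret}(K)$ up to a martingale deviation; \emph{(iii)} replace the on-trajectory sum $\sum_h \mathrm{Var}_{P_h}(V_{h+1}^\star)(s_h^k,a_h^k)$ by its conditional expectation $\mathbb{E}^{\pi^k}\big[\sum_h \mathrm{Var}_{P_h}(V_{h+1}^\star)(s_h,a_h)\big]$ via a Freedman/Azuma bound over the $K$ episodes; and \emph{(iv)} invoke the definition of $\mathrm{var}$ in \eqref{eq:defn-var-formal}, which is tailored precisely so that this conditional expectation is $\lesssim\mathrm{var}$ for every executed policy $\pi^k$ (and is always $\le H^2$, recovering Theorem~\ref{thm1} as a special case), so that summing over $k$ gives $\lesssim K\cdot\mathrm{var}$.

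Plugging this back yields the self-referential inequality $\mathsf{Regret}(K)\lesssim\sqrt{SAH\,\iota\log K}\cdot\sqrt{K\,\mathrm{var}+H\,\mathsf{Regret}(K)+\widetilde{O}(\text{l.o.t.})}+\widetilde{O}(SAH^2)$. Separating the $H\cdot\mathsf{Regret}(K)$ contribution and applying AM--GM ($\sqrt{SAH^2\cdot\mathsf{Regret}(K)}\le\tfrac12\mathsf{Regret}(K)+O(SAH^2)$) lets one absorb the $\mathsf{Regret}(K)$ term on the right-hand side, leaving $\mathsf{Regret}(K)\lesssim\sqrt{SAHK\,\mathrm{var}\cdot\iota\log K}+\widetilde{O}(SAH^2)$; intersecting with the trivial bound $\mathsf{Regret}(K)\le HK$ gives the stated estimate. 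The main obstacle is stage \emph{(i)}: controlling $\sum_{k,h}\big|\mathrm{Var}_{\widehat P_h^k}-\mathrm{Var}_{P_h}\big|(V_{h+1}^k)$ uniformly along the realized trajectory in the sample-starved regime $K\asymp S$, where a naive union bound over the (coupled) value iterates would reintroduce the very $S^2$-type burn-in that the profile-based analysis is designed to eliminate; a secondary but still delicate point is bookkeeping the deviation terms from stages \emph{(ii)}--\emph{(iii)} so that they are genuinely $\widetilde{O}(SAH^2)$ or are swallowed by the AM--GM step, rather than inflating the bound beyond $\sqrt{SAHK\,\mathrm{var}}+SAH^2$.
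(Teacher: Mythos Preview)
Your overall architecture (Cauchy--Schwarz on the bonus sum, the profile machinery for stage~(i), and closing via a self-referential inequality) is exactly how the paper proceeds. However, there are two genuine gaps.

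\textbf{Stage (ii) is false as stated.} You claim $\sum_{k,h}P_{s_h^k,a_h^k,h}\big(V_{h+1}^k-V_{h+1}^\star\big)\lesssim \mathsf{Regret}(K)$ up to martingale noise. But the optimism gap $V^k-V^\star$ is \emph{not} controlled by the regret $V^\star-V^{\pi^k}$: take $\pi^k=\pi^\star$ (so $\mathsf{Regret}(K)=0$) while the bonuses still force $V^k_h>V^\star_h$; then the left side is positive and your inequality fails. If one instead uses the crude bound $\sum_{k,h}(V_h^k-V_h^\star)(s_h^k)\lesssim H\cdot(T_1+T_2+T_3+T_4)$, the extra $H$ propagates and the additive term becomes $SAH^3$ rather than $SAH^2$. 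The paper avoids this by \emph{not} passing through $|\mathbb{V}(P,V^k)-\mathbb{V}(P,V^\star)|$: it writes $T_6\le 2\sum\mathbb{V}(P,V^\star)+2\sum\mathbb{V}(P,V^k-V^\star)$ and bounds the second sum by a variance-telescoping argument (Lemma~\ref{lemma:bdv1}), which yields
\[
\sum_{k,h}\mathbb{V}\big(P,V_{h+1}^k-V_{h+1}^\star\big)\;\lesssim\;\sqrt{BH^2T_6}+HT_2+HT_9+\text{l.o.t.},
\]
i.e.\ the bonus sum $T_2$ and the profile-controlled term $T_9$ appear in place of your $\mathsf{Regret}(K)$. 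One then solves the coupled system in $(T_2,T_5,T_6)$ rather than a single self-referential inequality in $\mathsf{Regret}$.

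\textbf{Stage (iv) only handles $\mathrm{var}_1$.} Your expectation $\mathbb{E}^{\pi^k}\big[\sum_h\mathbb{V}(P_h,V_{h+1}^\star)\big]$ is bounded by $\mathrm{var}_1$ by definition, but it is \emph{not} bounded by $\mathrm{var}_2=\max_{\pi,s}\mathsf{Var}_\pi\big[\sum_h r_h\mid s_1=s\big]$, since the law of total variance relates $\mathrm{var}_2$ to $\mathbb{V}(P,V^{\pi})$, not $\mathbb{V}(P,V^\star)$. The paper therefore proves two separate lemmas (Lemmas~\ref{lemma:var1} and~\ref{lemma:var2}): for $\mathrm{var}_2$ one must decompose $V^k=(V^k-V^{\pi^k})+V^{\pi^k}$ instead of around $V^\star$, and then repeat the telescoping argument with $V^{\pi^k}$ as the reference.
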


Two remarks concerning the above extensions are in order: 
\begin{itemize}
	\item 
		In the worst-case scenarios, the quantities $v^{\star}$, $c^{\star}$ and $\mathrm{var}$ can all be as large as the order of $H$, 
		in which case Theorems~\ref{thm:first}-\ref{thm:var} readily recover Theorem~\ref{thm1}. 
In contrast, the advantages of Theorems~\ref{thm:first}-\ref{thm:var} 
		over Theorem~\ref{thm1} become more evident 
		in those favorable cases (e.g., the situation where $v^{\star} \ll H$ or $c^{\star} \ll H$, or the case when the environment is nearly deterministic (so that $\mathrm{var}\approx 0$)).

	\item 
Interestingly, the regret bounds in Theorems~\ref{thm:first}-\ref{thm:var} 
		all contain a lower-order term $SAH^2$, 
and one might naturally wonder whether this term is essential. 
To demonstrate the unavoidable nature of this term and hence the optimality of Theorems~\ref{thm:first}-\ref{thm:var},
we will provide matching lower bounds, to be detailed in Section~\ref{sec:extensions}.
\end{itemize}

\subsection{Related works}

\label{sec:rel}
Let us take a moment to discuss several related theoretical works on tabular RL. 
Note that there has also been an active line of research that exploits low-dimensional function approximation to further reduce sample complexity, 
which is beyond the scope of this paper.

Our discussion below focuses on two mainstream approaches that have received widespread adoption: the model-based approach and the model-free approach. 
In a nutshell, 
model-based algorithms decouple model estimation and policy learning, and often use the learned transition kernel to
 compute the value function and find a desired policy. 
In stark contrast, the model-free approach attempts to estimate the optimal value function and optimal policy directly without explicit estimation of the model. 
In general, model-free algorithms only require $O(SAH)$ memory --- needed when storing the running estimates for Q-functions and value functions --- while the model-based counterpart might require $\Omega(S^2AH)$ space in order to store the estimated transition kernel.

\paragraph{Sample complexity for RL with a simulator.} 
As an idealistic setting that separates the consideration of exploration from that of estimation, 
RL with a simulator (or generative model) has been studied by numerous works, 
allowing the learner to draw independent samples for any state-action pairs~\citep{kearns1998finite,pananjady2020instance,kakade2003sample,azar2013minimax,agarwal2020model,wainwright2019variance,wainwright2019stochastic,sidford2018near,sidford2018variance,chen2020finite,li2020breaking,li2023q,li2022minimax,even2003learning,shi2023curious,beck2012error,cui2021minimax}.
While both model-based and model-free approaches are capable of achieving asymptotic sample optimality \citep{sidford2018variance,wainwright2019variance,azar2013minimax,agarwal2020model}, 
all model-free algorithms that enjoy asymptotically optimal sample complexity suffer from dramatic burn-in cost. 
Thus far,  only the model-based approach has been shown to fully eliminate the burn-in cost  
for both discounted  infinite-horizon MDPs and inhomogeneous finite-horizon MDPs \citep{li2020breaking}.  
The full-range optimal sample complexity for time-homogeneous finite-horizon MDPs in the presence of a simulator remains open.

\paragraph{Sample complexity for offline RL.} 
The subfield of offline RL is concerned with learning based purely on a pre-collected dataset 
\citep{levine2020offline}. 
A frequently used mathematical model assumes that historical data are collected (often independently) using some behavior policy, 
and the key challenges (compared with RL with a simulator) come from distribution shift and incomplete data coverage. 
The sample complexity of offline RL has been the focus of 
a large strand of recent works, with asymptotically optimal sample complexity achieved by multiple algorithms~\citep{jin2021pessimism,xie2021policy,yin2022near,ren2021nearly,shi2022pessimistic,qu2020finite,yan2022efficacy,rashidinejad2021bridging,li2022settling,li2021sample,wang2022gap}.
Akin to the simulator setting, the fully optimal sample complexity (without burn-in cost) has only been achieved via the model-based approach when it comes to discounted infinite-horizon and inhomogeneous finite-horizon settings~\citep{li2022settling}. 
All asymptotically optimal model-free methods incur substantial burn-in cost. The case with time-homogeneous finite-horizon MDPs also remains unsettled.

\paragraph{Sample complexity for online RL.} 
Obtaining optimal sample complexity (or regret bound) in online RL without incurring any burn-in cost 
has been one of the most fundamental open problems in RL theory. 
In fact, the past decades have witnessed a flurry of activity towards improving the sample efficiency of online RL,  
partial examples including \citet{kearns1998near,brafman2002r,kakade2003sample,strehl2006pac,strehl2008analysis,kolter2009near,bartlett2009regal,jaksch2010near,szita2010model,lattimore2012pac,osband2013more,dann2015sample,agralwal2017optimistic,dann2017unifying,jin2018q,efroni2019tight,fruit2018efficient,zanette2019tighter,cai2019provably,dong2019q,russo2019worst,pacchiano2020optimism,neu2020unifying,zhang2020almost,zhang2020reinforcement,tarbouriech2021stochastic,xiong2021randomized,menard2021ucb,wang2020long,li2021settling,li2021breaking,domingues2021episodic,zhang2022horizon,li2023minimax,li2024reward,ji2023regret}. 
Unfortunately, no work has been able to conquer this problem completely: 
the state-of-the-art result for model-based algorithms still incurs a burn-in that scales at least quadratically in $S$ 
\citep{zhang2020reinforcement}, while the burn-in cost of the best model-free algorithms (particularly with the aid of variance reduction introduced in \citet{zhang2020almost}) still suffers from highly sub-optimal horizon dependency \citep{li2021breaking}.

\subsection{Notation} 
Before proceeding, let us introduce a set of notation to be used throughout. 
Let $1$ and $0$ indicate respectively the all-one vector and the all-zero vector. Let $e_{s}$ denote the $s$-th standard basis vector (which has $1$ at the $s$-th coordinate and $0$ otherwise).  
For any set $\mathcal{X}$,  $\Delta(\mathcal{X})$ represents the set of probability distributions over the set $\mathcal{X}$. 
For any positive integer $N$, we denote $[N]=\{1,\ldots,N\}$.
For any two vectors $x,y$ with the same dimension, 
we use $\langle x, y\rangle$ (or $x^{\top}y$) to denote the inner product of $x$ and $y$.   
For any integer $S>0$, any probability vector $p\in \Delta([S])$ and another vector $v=[v_i]_{1\leq i\leq S}$, 
we denote by 
\begin{equation}
	\mathbb{V}(p,v) \coloneqq \langle p, v^2 \rangle - (\langle p, v\rangle)^2 = \big\langle p,  \big(v-\langle p, v\rangle 1 \big) ^2  \big\rangle 
\end{equation}
 the associated variance,  
where $v^2=[v_i^2]_{1\leq i\leq S}$ represents element-wise square of $v$. 
For any two vectors $a=[a_i]_{1\leq i\leq n}$ and $b=[b_i]_{1\leq i\leq n}$, 
the notation $a\geq b$ (resp.~$a\leq b$) means $a_i\geq b_i$ (resp.~$a_i\leq b_i$) holds simultaneously for all $i$.  
Without loss of generality, we assume throughout that  $K$ is a power of $2$ to streamline presentation.

%
%
%
%

\section{Problem formulation}
\label{sec:pre}

In this section, we introduce the basics of tabular online RL, as well as some basic assumptions to be imposed throughout. 

\paragraph{Basics of finite-horizon MDPs.}
This paper concentrates on time-inhomogeneous (or nonstationary) finite-horizon MDPs. 
Throughout the paper, 
we employ $\mathcal{S}=\{1,\ldots,S\}$ to denote the state space, 
$\mathcal{A}=\{1,\ldots,A\}$ the action space, and $H$ the planning horizon. 
The notation $P=\big\{P_h: \mathcal{S}\times \mathcal{A} \rightarrow \Delta(\mathcal{S}) \big\}_{1\leq h\leq H}$ denotes the probability transition kernel of the MDP; 
for any current state $s$ at any step $h$, if action $a$ is taken, then the state at the next step $h+1$ of the environment is randomly drawn from $P_{s,a,h}\coloneqq P_h(\cdot \mymid s,a) \in \Delta(\mathcal{S})$.   
Also, the notation $R=\big\{R_{s,a,h}\in \Delta([0,H]) \big\}_{1\leq h\leq H, s\in \mathcal{S}, a\in \mathcal{A}}$ 
indicates the reward distribution; that is,  
while executing action $a$ in state $s$ at step $h$, 
the agent receives an immediate reward --- which is non-negative and possibly stochastic --- drawn from the distribution $R_{s,a,h}$. 
We shall also denote by $r=\big\{r_h(s,a) \big\}_{1\leq h\leq H, s\in \mathcal{S}, a\in \mathcal{A}}$ the mean reward function, 
so that $r_{h}(s,a)\coloneqq \mathbb{E}_{r'\sim R_{s,a,h}}[r'] \in [0,H]$ for any $(s,a,h)$-tuple.     
Additionally, a deterministic policy $\pi=\{\pi_h: \mathcal{S}\rightarrow \mathcal{A}  \}_{1\leq h\leq H}$ 
stands for an action selection rule, so that the action selected in state $s$ at step $h$ is given by $\pi_h(s)$. 
The readers can consult standard textbooks (e.g., \citet{bertsekas2019reinforcement}) for more extensive descriptions.

In each episode, 
a trajectory $(s_1,a_1,r_1', s_2,\ldots,s_H, a_H, r_H')$ is rolled out as follows: 
the learner starts from an initial state $s_1$ independently drawn 
from some fixed (but unknown) distribution $\mu\in \Delta(\mathcal{S})$;  
for each step $1\leq h\leq H$, 
the learner takes action $a_h$, gains an immediate reward $r_h'  \sim R_{s_h,a_h,h}$, 
and the environment transits to the state $s_{h+1}$ at step $h+1$ according to $P_{s_h,a_h,h}$.  
Note that both the reward and the state transition are independently drawn from their respective distributions, depending solely on the current state-action-step triple but not any previous outcomes. 
All of our results in this paper operate under the following assumption on the total reward. 
\begin{assumption}\label{assum1}
	For any possible trajectory $(s_1,a_1,r_1', \ldots,s_H, a_H, r_H')$, one always has $0\leq \sum_{h=1}^H r_h'\leq H$.
\end{assumption}
\noindent 
As can be easily seen, Assumption~\ref{assum1} is less stringent than another common choice that assumes 
$r_h'\in [0,1]$ for any $h$ in any episode. 
In particular, Assumption~\ref{assum1} allows for sparse and spiky rewards along an episode; 
more discussions can be found in \citep{jiang2018open,wang2020long}.

\paragraph{Value function and Q-function.} 
For any given policy $\pi$, one can define the value function $V^{\pi}=\{V_h^{\pi}: \mathcal{S}\rightarrow \mathbb{R}\}$ 
and the $Q$-function $Q^{\pi}=\{Q_h^{\pi}: \mathcal{S}\times \mathcal{A}\rightarrow \mathbb{R}\}$ such that
\begin{subequations}
\begin{align}
	V_h^{\pi}(s) &\coloneqq \mathbb{E}_{\pi}\left[\sum_{j=h}^H r_{j}' \,\Big|\, s_h=s\right], \qquad &&\forall (s,h)\in \mathcal{S}\times [H], \\
	Q_h^{\pi}(s,a) &\coloneqq \mathbb{E}_{\pi}\left[\sum_{j=h}^H r_{j}' \,\Big|\, (s_h,a_h)=(s,a)\right],
	\qquad &&\forall (s,a,h)\in \mathcal{S}\times \mathcal{A}\times [H],
\end{align}
\end{subequations}
where the expectation $\mathbb{E}_{\pi}[\cdot]$ is taken over the randomness of an episode $\big\{(s_h,a_h,r_h')\big\}_{1\leq h\leq H}$ 
generated under policy $\pi$, 
that is, $a_{j} = \pi_{j}(s_{j})$ for every $h\leq  j \leq H$ (resp.~$h<  j \leq H$) is chosen 
in the definition of $V_h^{\pi}$ (resp.~$Q_h^{\pi}$). 
Accordingly, we define the optimal value function and the optimal $Q$-function respectively as: 
\begin{subequations}
\begin{align}
	V_h^{\star}(s) &\coloneqq \max_{\pi}V^{\pi}_h(s) , \qquad &&\forall (s,h)\in \mathcal{S}\times [H],\\
	Q_h^{\star}(s,a) &\coloneqq \max_{\pi}Q^{\pi}_h(s,a) \qquad &&\forall (s,a,h)\in \mathcal{S}\times \mathcal{A}\times [H]. 
\end{align}
\end{subequations}
Throughout this paper, we shall often abuse the notation by letting both $V_h^{\pi}$ and $V_h^{\star}$ (resp.~$Q_h^{\pi}$ and $Q_h^{\star}$) represent $S$-dimensional (resp.~$SA$-dimensional) vectors containing all elements of the corresponding value functions (resp.~Q-functions).  
Two important properties are worth mentioning: (a) the optimal value and the optimal Q-function are linked by the Bellman equation:
\begin{align}
Q_h^{\star}(s,a) = r_h(s,a)+\big\langle P_{h,s,a}, V_{h+1}^{\star} \big\rangle,\qquad V_h^{\star}(s) = \max_{a'}Q_h^{\star}(s,a'), 
\qquad \forall (s,a,h)\in \mathcal{S}\times \mathcal{A}\times[H]; 
\end{align}
(b) there exists a deterministic policy, denoted by $\pi^{\star}$, that achieves optimal value functions and Q-functions for all state-action-step tuples simultaneously, that is, 
\[
	V_h^{\pi^{\star}}(s)=V_h^{\star}(s) \qquad \text{and} \qquad 
	Q_h^{\pi^{\star}}(s,a)=Q_h^{\star}(s,a),\qquad \forall (s,a,h)\in \mathcal{S}\times \mathcal{A}\times[H]. 
\]

\paragraph{Data collection protocol and performance metrics.} 
During the learning process, 
the learner is allowed to collect $K$ episodes of samples (using arbitrary policies it selects). 
More precisely, in the $k$-th episode, 
the learner is given an independently generated initial state $s_1^k \sim \mu$, 
and executes policy $\pi^k$ (chosen based on data collected in previous episodes) to obtain a sample trajectory 
$\big\{ (s_h^k,a_h^k,r_h^k) \big\}_{1\leq h\leq H}$, 
with $s_h^k$, $a_h^k$ and $r_h^k$ denoting the state, action and immediate reward at step $h$ of this episode.

To evaluate the learning performance, 
a widely used metric is the (cumulative) regret over all $K$ episodes:   
\begin{align}
	\mathsf{Regret}(K) \coloneqq \sum_{k=1}^K \left( V^{\star}_{1}(s_1^k) -V^{\pi^k}_1(s_1^k) \right),
	\label{eq:defn-regret}
\end{align}
and our goal is to design an online RL algorithm that minimizes $\mathsf{Regret}(K)$ regardless of the allowable sample size $K$. 
It is also well-known (see, e.g., \citet{jin2018q}) that 
a regret bound can often be readily translated into a PAC sample complexity result, 
the latter of which counts the number of episodes needed to find an $\varepsilon$-optimal policy $\widehat{\pi}$ in the sense that $\mathbb{E}_{s_1 \sim \mu}\big[V_1^{\star}(s_1) - V^{\widehat{\pi}}_1(s_1)\big] \le \varepsilon$.  
For instance, the reduction argument in \citet{jin2018q} reveals that: if an algorithm achieves  $\mathsf{Regret}(K)\leq f(S,A,H) K^{1-\alpha}$ for some function $f$ and some parameter $\alpha \in (0,1)$, then by randomly selecting a policy from $\{\pi^k\}_{1\leq k\leq K}$ as $\widehat{\pi}$ one achieves  $\mathbb{E}_{s_1 \sim \mu}\big[V_1^{\star}(s_1) - V_1^{\widehat{\pi}}(s_1)\big] \lesssim f(S,A,H) K^{-\alpha}$, 
thus resulting in a sample complexity bound of $\big(\frac{f(S,A,H)}{\varepsilon}\big)^{1/\alpha}$.

\section{A model-based algorithm: Monotonic Value Propagation}\label{sec:alg}

In this section, 
we formally describe our algorithm: a simple variation of the model-based algorithm called {\em Monotonic Value Propagation} proposed by \citet{zhang2020reinforcement}. 
We present the full procedure in Algorithm~\ref{alg:main}, and point out several key ingredients. 
\begin{itemize}
	\item {\em Optimistic updates using upper confidence bounds (UCB).} The algorithm implements the optimism principle in the face of uncertainty 
		by adopting the frequently used UCB-based framework (see, e.g., $\mathtt{UCBVI}$ by \citet{azar2017minimax}). 
More specifically, 
the learner calculates the optimistic Bellman equation backward (from $h=H,\ldots,1$): 
		it first computes an empirical estimate $\widehat{P}=\{\widehat{P}_h\in \mathbb{R}^{SA\times S}\}_{1\leq h\leq H}$ of the transition probability kernel  
		as well as an empirical estimate $\widehat{r}= \{\widehat{r}_h \in \mathbb{R}^{SA}\}_{1\leq h\leq H}$ of the mean reward function, 
and then maintains upper estimates for the associated value function and Q-function using  
\begin{subequations}
\label{eq:Qh-Vh-UCB-informal}	
\begin{align}
Q_{h}(s,a)\, & \leftarrow\,\min\big\{\widehat{r}_{h}(s,a)+\langle\widehat{P}_{s,a,h},V_{h+1}\rangle+b_{h}(s,a),H\big\}, 
	\label{eq:Qh-UCB-informal}\\
V_{h}(s)\, & \leftarrow\, \max\nolimits_{a}Q_{h}(s,a) 
\end{align}
\end{subequations}
for all state-action pairs. 
Here, $Q_{h}$ (resp.~$V_h$) indicates the running estimate for the Q-function (resp.~value function), 
whereas $b_{h}(s,a)\geq 0$ is some suitably chosen bonus term that compensates for the uncertainty.  
The above opportunistic Q-estimate in turn allows one to obtain a policy estimate (via a simple greedy rule), 
which will then be executed to collect new data. 
The fact that we first estimate the model (i.e., the transition kernel and mean rewards) makes it a model-based approach.
Noteworthily, the empirical model $(\widehat{P},\widehat{r})$ shall be updated multiple times as new samples continue to arrive, 
and hence the updating rule \eqref{eq:Qh-Vh-UCB-informal} will be invoked multiple times as well.

\end{itemize}

\begin{algorithm}[H]
	\DontPrintSemicolon
\caption{Monotonic Value Propagation ($\mathtt{MVP}$)~\citep{zhang2020reinforcement}\label{alg:main}}
	\textbf{input:} state space $\mathcal{S}$, action space $\mathcal{A}$, horizon $H$, total number of episodes $K$, confidence parameter $\delta$, 
	$c_1=\frac{460}{9}$, $c_2 = 2\sqrt{2}$, $c_3=\frac{544}{9}$. \\
	\textbf{initialization: } set $\delta' \leftarrow \frac{\delta}{200SAH^2K^2}$, and for all $(s,a,s',h)\in \mathcal{S}\times \mathcal{A}\times\mathcal{S}\times [H]$, set $\theta_h(s,a)\leftarrow 0$, $\kappa_h(s,a)\leftarrow 0$, $N^{\mathsf{all}}_h(s,a,s')\leftarrow 0$, $N_h(s,a,s')\leftarrow 0$, $N_h(s,a)\leftarrow 0$, $Q_h(s,a)\leftarrow H$, $V_h(s)\leftarrow H$. \\
	\For{$k=1,2,\ldots,K$} {
		Set $\pi^k$ such that $\pi_h^k(s) = \arg\max_{a}Q_h(s,a)$ for all $s\in \mathcal{S}$ and $h\in [H]$. {\color{blue}\tcc{policy update.}}
		\For {$h=1,2,...,H$} {
			Observe $s_{h}^k$, 
			take action $ a_h^k= \arg\max_{a}Q_h(s_h^k,a)$, 
			receive  $r_h^k$,  observe $s_{h+1}^k$. \label{line:choose_action} 
			{\color{blue}\tcc{sampling.}}
			$(s,a,s')\leftarrow (s_h^k,a_h^k,s_{h+1}^k)$. \\
			Update $N^{\mathsf{all}}_h(s,a) \leftarrow  N^{\mathsf{all}}_h( s,a )+1$, $N_h(s,a,s') \leftarrow   N_h(s,a,s')+1$, $\theta_h(s,a)\leftarrow \theta_h(s,a)+r_h^k$, $\kappa_h(s,a)\leftarrow \kappa_h(s,a)+(r_h^k)^2$. \\
		{\color{blue}\tcc{perform updates using data of this epoch.\label{line:a1}}}
		\If{$N^{\mathsf{all}}_h(s,a)\in \{1,2,\ldots, 2^{\log_2K}\}$ \label{line:rp_update_start} }   {\label{line:trigger-set}
			 $N_h(s,a)\leftarrow \sum_{\widetilde{s}}N_h(s,a,\widetilde{s})$.  
			{\color{blue}\tcp{number of visits to $(s,a,h)$ in this epoch.} \label{line:Nh-update}}
			$\widehat{r}_h(s,a)\leftarrow \frac{\theta_h(s,a)}{N_h(s,a)}$. \label{line:r-hsa-update}
			{\color{blue}\tcp{empirical rewards of this epoch.}} 
			$\widehat{\sigma}_h(s,a)\leftarrow \frac{\kappa_h(s,a)}{N_h(s,a) }  $. 
			{\color{blue}\tcp{empirical squared rewards of this epoch.}}
			$\widehat{P}_{s,a,h}(\widetilde{s}) \leftarrow  \frac{N_h(s,a,\widetilde{s})}{N_h(s,a)}$ for all $\widetilde{s} \in \mathcal{S}$.  \label{line:P-hsa-update}
			{\color{blue}\tcp{empirical transition for this epoch.}}
			Set TRIGGERED = TRUE, 
			and $\theta_h(s,a)\leftarrow 0$, $\kappa_h(s,a)\leftarrow 0$,  $N_h(s,a,\widetilde{s})\leftarrow 0$  for all $\widetilde{s}\in \mathcal{S}$. 
		}
		}
		{\color{blue}\tcc{optimistic Q-estimation using empirical model of this epoch.}}
		\If {\textnormal{TRIGGERED= TRUE}} {
			Set TRIGGERED = FALSE, and $V_{H+1}(s)\leftarrow 0$ for all $s\in \mathcal{S}$. \\
			\For{$h=H,H-1,...,1$} {
				\For{$(s,a)\in \mathcal{S}\times \mathcal{A}$} {

					\begin{align} 
						\vspace{-3ex}
						b_h(s,a) &\leftarrow c_1 \sqrt{\frac{   \mathbb{ V}(\widehat{P}_{s,a,h} ,V_{h+1}) \log \frac{1}{\delta'}  }{ \max\{N_h(s,a),1 \} }}+c_2 \sqrt{\frac{\big(\widehat{\sigma}_h(s,a)- (\widehat{r}_h(s,a))^2 \big)\log \frac{1}{\delta'}}{\max\{N_h(s,a),1\}}} \qquad\qquad\qquad\qquad\qquad\qquad\nonumber\\
						&\qquad\qquad\qquad +c_3\frac{H\log \frac{1}{\delta'}}{ \max\{N_h(s,a) ,1\}  },  \label{eq:update1}  \\
						Q_h(s,a) &\leftarrow \min\big\{    \widehat{r}_h(s,a)+\langle \widehat{P}_{s,a,h}, V_{h+1} \rangle +b_h(s,a)    ,H\big\},\,
						V_{h}(s) \leftarrow \max_{a}Q_h(s,a).
						\label{eq:updateq}
					\end{align}
					\vspace{-3ex}
				}
			}
		}
	}
\end{algorithm}

\begin{itemize}

	\item {\em An epoch-based procedure and a doubling trick.} 
		Compared to the original $\mathtt{UCBVI}$ \citep{azar2017minimax}, 
		one distinguishing feature of $\mathtt{MVP}$ is to update the empirical transition kernel and empirical rewards in an epoch-based fashion, 
		as motivated by a doubling update framework adopted in \citet{jaksch2010near}. 
		More concretely, the whole learning process is divided into consecutive epochs via a simple doubling rule; 
		namely, whenever there exits a $(s,a,h)$-tuple whose visitation count reaches a power of 2, we end the current epoch,  reconstruct the empirical model (cf.~lines~\ref{line:r-hsa-update} and \ref{line:P-hsa-update} of Algorithm~\ref{alg:main}), compute the Q-function and value function using the newly updated transition kernel and rewards (cf.~\eqref{eq:updateq}), and then start a new epoch with an updated sampling policy. 
		This stands in stark contrast with the original $\mathtt{UCBVI}$, which computes new estimates for the transition model, Q-function and value function in every episode. With this doubling rule in place, the estimated transition probability vector for each $(s,a,h)$-tuple will be updated by no more than $\log_2K$ times, 
		a feature that plays a pivotal role in significantly reducing some sort of covering number needed in our covering-based analysis (as we shall elaborate on shortly in Section~\ref{sec:tec}). In each epoch, the learned policy is induced by the optimistic Q-function estimate --- computed based on the empirical transition kernel of the {\em current} epoch --- which will then be employed to collect samples in {\em all} episodes of the next epoch. 
		More technical explanations of the doubling update rule will be provided in Section~\ref{sec:tec1}.

	\item {\em Monotonic bonus functions.} Another crucial step in order to ensure near-optimal regret lies in careful designs of the data-driven bonus terms $\{b_h(s,a)\}$ in \eqref{eq:Qh-UCB-informal}. 
		Here, we adopt the monotonic Bernstein-style bonus function for $\mathtt{MVP}$ originally proposed in \citet{zhang2020reinforcement}, 
		to be made precise in \eqref{eq:update1}. 
		Compared to the bonus function in $\mathtt{Euler}$~\citep{zanette2019tighter} and $\mathtt{UCBVI}$~\citep{azar2017minimax},  the monotonic bonus form has a cleaner structure that effectively avoids large lower-order terms. Note that in order to enable variance-aware regret, we also need to keep track of the empirical variance of the (stochastic) immediate rewards. 

\end{itemize}

\begin{remark}
We note that a doubling update rule has also been used in the original $\mathtt{MVP}$ \citep{zhang2020reinforcement}. 
A subtle difference between our modified version and the original one lies in that: when the visitation count for some $(s,a,h)$ reaches $2^i$ for some integer $i\geq 1$, we only use the second half of the samples (i.e., the $\{2^{i-1}+l\}_{l=1}^{2^{i-1}}$-th samples) to compute the empirical model, whereas the original $\mathtt{MVP}$ makes use of all the $2^i$ samples. This modified step turns out to be helpful in our analysis, 
	while still preserving sample efficiency in an orderwise sense (since the latest batch always contains at least half of the samples). 
\end{remark}

\section{Key technical innovations}\label{sec:tec}

In this section, we point out the key technical hurdles the previous approach encounters when mitigating the burn-in cost, 
and put forward a new strategy to overcome such hurdles. 
For ease of presentation, let us introduce a set of augmented notation to indicate several running iterates in Algorithm~\ref{alg:main}, 
which makes clear the dependency on the episode number $k$ and will be used throughout all of our analysis. 
\begin{itemize}


	\item  $\widehat{P}^k_{s,a,h}\in \mathbb{R}^{S}$: the latest update of the empirical transition probability vector $\widehat{P}_{s,a,h}$ before the $k$-th episode. 
	
	\item  $\widehat{r}_h^k(s,a)\in [0,H]$: the latest update of the empirical reward $\widehat{r}_h(s,a)$ before the $k$-th episode. 

	\item  $\widehat{\sigma}_h^k(s,a) \in [0,H^2] $: the latest update of the empirical squared reward $\widehat{\sigma}_h(s,a)$ before the $k$-th episode.

	\item  $b_h^k(s,a) \geq 0 $: the latest update of the bonus term $b_h(s,a)$ before the $k$-th episode.

	\item  $N^{k,\mathsf{all}}_{h}(s,a)$: the total visitation count of the $(s,a,h)$-tuple before the beginning of the $k$-th episode.

	\item  $N_h^k(s,a)$: the visitation count $N_h(s,a)$ of the $(s,a,h)$-tuple of the latest doubling batch used to compute $\widehat{P}_{s,a,h}$ before the $k$-th episode.  
		When $N^{k,\mathsf{all}}_{h}(s,a)=0$, we define $N_h^k(s,a)=1$ for ease of presentation.  

	\item  $V_h^k\in \mathbb{R}^{S}$: the value function estimate $V_h$ before the beginning of the $k$-th episode. 
	\item  $Q_h^k\in \mathbb{R}^{SA}$: the Q-function estimate $Q_h$ before the beginning of the $k$-th episode. 
	
\end{itemize}
Another notation for the empirical transition probability vector is also introduced below: 
\begin{itemize}
	\item For any $j\geq 2$ (resp.~$j=1$), let  $\widehat{P}^{(j)}_{s,a,h}$ be the empirical transition probability vector for $(s,a,h)$ computed using the $j$-th batch of data, i.e., the $\{2^{j-2}+i\}_{i=1}^{2^{j-2}}$-th samples (resp.~the 1st sample) for $(s,a,h)$.  
	For completeness, we take $\widehat{P}^{(0)}_{s,a,h} = \frac{1}{S} 1$ for the $0$-th batch. 
	
	\item Similarly, let $\widehat{r}^{(j)}_h(s,a)$ (resp.~$\widehat{\sigma}^{(j)}_h(s,a)$) denote the empirical reward (resp.~empirical squared reward) w.r.t.~$(s,a,h)$ based on the $j$-th batch of data. 

\end{itemize}

\subsection{Technical barriers in prior theory for $\mathtt{UCBVI}$} 
\label{sec:technical-barrier-prior}

%

Let us take a close inspection on prior regret analysis for UCB-based model-based algorithms, in order to illuminate the part that calls for novel analysis. 
To simplify presentation, this subsection assumes deterministic rewards so that each empirical reward is replaced by its mean.


Let us look at the original $\mathtt{UCBVI}$ algorithm proposed by \citet{azar2017minimax}. 
Standard decomposition arguments employed in the literature (e.g., \citet{jaksch2010near,azar2017minimax,zhang2020reinforcement}) decompose the regret as follows:  
\begin{align}
\mathsf{Regret}(K)\leq & \sum_{k,h}\Big(\widehat{P}_{s_{h}^{k},a_{h}^{k},h}^{k,\mathsf{all}}-P_{s_{h}^{k},a_{h}^{k},h}\Big)V_{h+1}^{k}+\sum_{k,h}b_{h}^{k}\big(s_{h}^{k},a_{h}^{k}\big)\nonumber\\& +\sum_{k,h}\Big(P_{s_{h}^{k},a_{h}^{k},h}-e_{s_{h+1}^{k}}\Big)\Big(V_{h+1}^{k}-V_{h+1}^{\pi^{k}}\Big);
	\label{eq:key}
\end{align}
see also the derivation in Section~\ref{app:thmmain}. 
Here, we abuse the notation by letting $V_{h+1}^k$ (resp.~$b_h^k$) be the value function estimate (resp.~bonus term) of $\mathtt{UCBVI}$ before the $k$-th episode, 
and in the meantime, we let  $\widehat{P}^{k,\mathsf{all}}_{s,a,h}$ 
represent the empirical transition probability 
for the ($s,a,h$)-tuple computed using {\em all} samples before the $k$-th episode (note that we add the superscript $\mathsf{all}$ to differentiate it from its counterpart in our algorithm).  
In order to achieve full-range optimal regret, 
one needs to bound the three terms on the right-hand side of \eqref{eq:key} carefully, among which two are easy to handle.    
\begin{itemize}
	\item It is known that the second term (i.e., the aggregate bonus) on the right-hand side of \eqref{eq:key} can be controlled in a rate-optimal manner if we adopt suitably chosen Bernstein-style bonus; see, e.g., \citet{zhang2020reinforcement}, which will also be made clear shortly in Section~\ref{app:thmmain}. 

	\item In the meantime, the third term on the right-hand side of \eqref{eq:key} can be easily coped with by means of standard martingale concentration bounds (e.g., the Freedman inequality). 

\end{itemize}




It then comes down to controlling the first term on the right-hand side of \eqref{eq:key}.  
This turns out to be the most challenging part, owing to the complicated statistical dependency between $\widehat{P}^{k,\mathsf{all}}_{s_h^k,a_h^k,h}$ and $V_{h+1}^k$. 
To see this, note that $\widehat{P}^{k,\mathsf{all}}_{s,a,h}$ is constructed based on {\em all} previous samples of $(s,a,h)$, which has non-negligible influences upon $V_{h+1}^k$ as $V_{h+1}^k$ is computed based on previous samples. 
At least two strategies have been proposed to circumvent this technical difficulty, which we take a moment to discuss. 
\begin{itemize}
	\item {\em Strategy 1: replacing $V_{h+1}^k$ with $V_{h+1}^{\star}$ for large $k$.} 
Most prior analysis for model-based algorithms \citep{azar2017minimax,dann2017unifying,zanette2019tighter,zhang2020reinforcement} decomposes  
\begin{align}
	&\sum_{k,h} \Big(\widehat{P}^{k,\mathsf{all}}_{s_h^k,a_h^k,h}-P_{s_h^k,a_h^k,h}\Big)V_{h+1}^k 
	\nonumber\\
	&\qquad \qquad 
	=\sum_{k,h} \Big(\widehat{P}^{k,\mathsf{all}}_{s_h^k,a_h^k,h}-P_{s_h^k,a_h^k,h}\Big)V_{h+1}^{\star}+ \sum_{k,h} \Big(\widehat{P}^{k,\mathsf{all}}_{s_h^k,a_h^k,h}-P_{s_h^k,a_h^k,h}\Big)\big(V_{h+1}^k-V_{h+1}^{\star}\big).
	\label{eq:decompose-past-work}
\end{align}
The rationale behind this decomposition is as follows: 
\begin{itemize}
	\item [(i)] given that $V_{h+1}^{\star}$ is fixed and independent from the data, the first term on the right-hand side of \eqref{eq:decompose-past-work} can be bounded easily using Freedman's inequality; 
	\item [(ii)] the second term on the right-hand side of \eqref{eq:decompose-past-work} would vanish as $V_{h+1}^{k}$ and $V_{h+1}^{\star}$ become exceedingly close (which would happen as $k$ becomes large enough). 
\end{itemize}
Such arguments, however, fall short of tightness when analyzing the initial stage of the learning process: given that $V_{h+1}^k-V_{h+1}^{\star}$ cannot be sufficiently small at the beginning, this approach necessarily results in a huge burn-in cost.

	\item {\em Strategy 2: a covering-based argument.} Let us discuss informally another potential strategy that motivates our analysis.  
		We first take a closer look at the relationship between $\widehat{P}^{k,\mathsf{all}}_{s,a,h}$ 
		and $V_{h+1}^k$.  
		Abusing notation by letting $N^{k,\mathsf{all}}_{h}(s,a)$ be the total number of visits to a $(s,a,h)$-tuple before the $k$-th episode in $\mathtt{UCBVI}$, 
		we can easily observe that $\widehat{P}^{k,\mathsf{all}}_{s,a,h}$ and $V_{h+1}^k$ are statistically independent conditioned on the set 
		$\big\{ N^{k,\mathsf{all}}_{h}(s,a)\big\}_{(s,a,k)\in \mathcal{S}\times \mathcal{A}\times [K]}$. 
		Consequently, if we ``pretend'' that $\{N^{k,\mathsf{all}}_{h}(s,a)\}$ are pre-fixed and independent of $\{\widehat{P}^{k,\mathsf{all}}_{s,a,h}\}$, 
		then one can invoke standard concentration inequalities to obtain a high-probability bound on $\sum_{k,h} \big(\widehat{P}^{k,\mathsf{all}}_{s_h^k,a_h^k,h}-P_{s_h^k,a_h^k,h}\big)V_{h+1}^k$ in a desired manner. 
		The next step would then be to invoke a union bound over all possible configurations of $\{N^{k,\mathsf{all}}_{h}(s,a)\}$, 
		so as to eliminate the above independence assumption. 
		The main drawback of this approach, however, is that there are exponentially many (e.g., in $K$) possible choices of $\{N^{k,\mathsf{all}}_{h}(s,a)\}$,  
		inevitably loosening the regret bound.




\end{itemize}

\subsection{Our approach}\label{sec:tec1}

In light of the covering-based argument in Section~\ref{sec:technical-barrier-prior}, 
one can only hope this analysis strategy to work if substantial compression (i.e., a significantly reduced covering number) 
of the visitation counts  is plausible. 
This motivates our introduction of the doubling batches as described in Section~\ref{sec:alg}, 
so that for each $(s,a,h)$-tuple, the empirical model $\widehat{P}_{s,a,h}$  and its associated visitation count $N_{h}(s,a)$ (for the associated batch) are updated at most $\log_2K$ times (see line~\ref{line:trigger-set} of Algorithm~\ref{alg:main}). 
Compared to the original $\mathtt{UCBVI}$ that recomputes the transition model in every episode, 
our algorithm allows for significant reduction of the covering number of the visitation counts,  
thanks to its much less frequent updates.

Similar to \eqref{eq:key}, we are in need of bounding the following term when analyzing Algorithm~\ref{alg:main}: 
\begin{equation}
	\sum_{k,h} \Big(\widehat{P}^{k}_{s_h^k,a_h^k,h}-P_{s_h^k,a_h^k,h}\Big)V_{h+1}^k .
\end{equation}
In what follows, we present our key ideas that enable tight analysis of this quantity, which constitute our main technical innovations.  
The complete regret analysis for Algorithm~\ref{alg:main} is postponed to Section~\ref{app:thmmain}.

\subsubsection{Key concept: profiles}  
\label{eq:sec-profile}
 %
%
One of the most important concepts underlying our analysis for Algorithm~\ref{alg:main} is the so-called ``profile'', defined below. 
\begin{definition}[Profile]
	\label{defn:profile}
Consider any combination $\{N_{h}^{k,\mathsf{all}}(s,a)\}_{(s,a,h,k)\in \mathcal{S}\times \mathcal{A}\times [H]\times [K]}$. 
For any $k\in [K]$, define
\begin{subequations}
\begin{align}
	\forall (s,a,h)\in \mathcal{S}\times \mathcal{A}\times [H]: \quad 
	I^k_{s,a,h} \coloneqq \begin{cases} \max\big\{ j\in \mathbb{N}: 2^{j-1} \leq N_{h}^{k,\mathsf{all}}(s,a) \big\} , & \text{if }N_{h}^{k,\mathsf{all}}(s,a)>0; \\
	0, & \text{if }N_{h}^{k,\mathsf{all}}(s,a)=0. \end{cases}
\end{align}
The profile for the $k$-th episode $(1\leq k\leq K)$ and the total profile are then defined respectively as 
\begin{align}
	\mathcal{I}^k &\coloneqq \big\{I^k_{s,a,h} \big\}_{(s,a,h)\in \mathcal{S}\times \mathcal{A}\times [H]} \\
	\text{and}\qquad \mathcal{I} &\coloneqq \{\mathcal{I}^k\}_{k=1}^K.
	\label{eq:defn-profile-Ik}
\end{align}
\end{subequations}
%
%
\end{definition}
Clearly, once a total profile $\mathcal{I}$ w.r.t.~$\{N_h^{k,\mathsf{all}}(s,a)\}$ is given, one can write
\begin{equation}
	\widehat{P}^{k}_{s,a,h} = \widehat{P}^{(I^k_{s,a,h})}_{s,a,h},\qquad \forall (s,a,h,k)\in \mathcal{S}\times \mathcal{A}\times [H]\times [K].
	\label{eq:relation-profile-k}
\end{equation}
In other words, a total profile specifies all the time instances and locations when the empirical model is updated. 
Given that each $N_h^k(s,a)$ is recomputed only when the associated empirical model is updated (see line~\ref{line:Nh-update} of Algorithm~\ref{alg:main}), 
the total profile also provides a succinct representation of the set $\{N_h^k(s,a)\}$.

In order to quantify the degree of compression Definition~\ref{defn:profile} offers when representing the update times and locations, 
we provide an upper bound on the number of possible total profiles in the lemma below. 
\begin{lemma}\label{lemma:key2} 
	Suppose that $K\geq SAH\log_2K$. Then the number of all possible total profiles w.r.t.~Algorithm~\ref{alg:main} is at most $$(4SAHK)^{SAH\log_2K +1}.$$ 
\end{lemma}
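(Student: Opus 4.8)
\textbf{Proof plan for Lemma~\ref{lemma:key2}.}

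The plan is to bound the number of total profiles by counting, for each $(s,a,h)$-tuple, the finitely many distinct values that the "batch index" $I^k_{s,a,h}$ can take as $k$ ranges over $[K]$, together with the episodes at which those values change. First I would observe that, for a fixed $(s,a,h)$, the sequence $k\mapsto N^{k,\mathsf{all}}_h(s,a)$ is nondecreasing in $k$ and increases by at most $1$ per episode (it increases exactly when $(s,a,h)$ is visited in episode $k-1$). Hence $k\mapsto I^k_{s,a,h}$ is also nondecreasing, and since $N^{k,\mathsf{all}}_h(s,a)\le K$ always, we have $I^k_{s,a,h}\le \log_2 K$ for all $k$. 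Consequently the step function $k\mapsto I^k_{s,a,h}$ takes at most $\log_2K+1$ distinct values and therefore has at most $\log_2 K$ "jump times" (episodes at which its value strictly increases), drawn from $[K]$.

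Next I would encode one coordinate sequence $\{I^k_{s,a,h}\}_{k\in[K]}$ by the (ordered) list of its jump times together with the values attained. Choosing the jump times is a matter of selecting a subset of $[K]$ of size at most $\log_2K$, of which there are at most $\sum_{i=0}^{\lfloor\log_2K\rfloor}\binom{K}{i}\le (\log_2K+1)\,K^{\log_2K}\le (2K)^{\log_2K}$ choices; once the number $m\le \log_2K$ of jumps is fixed, the attained values are forced to be $0,1,2,\ldots,m$ (since the sequence starts at $0$ and increases by integer steps — actually it is cleanest to note that we merely need to record which index is attained at each jump, at most $(\log_2K+1)^{\log_2K}$ options, and bound everything crudely). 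Multiplying, the number of possibilities for a single coordinate sequence is at most $(4K)^{\log_2K}$, say; I will carry a clean constant by writing this as at most $(4SAHK)^{\log_2K}$ to absorb slack. Since the total profile $\mathcal{I}=\{\mathcal{I}^k\}_{k=1}^K$ is determined by the $SAH$ coordinate sequences $\{I^k_{s,a,h}\}_{k}$ independently (one per $(s,a,h)$), the total count is at most $\big((4SAHK)^{\log_2K}\big)^{SAH}=(4SAHK)^{SAH\log_2K}$, and allowing one extra factor to cover the "all-zero"/degenerate bookkeeping gives the stated bound $(4SAHK)^{SAH\log_2K+1}$.

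The one genuine subtlety — and the step I expect to need the most care — is making sure the crude per-coordinate count really is $(4SAHK)^{\log_2K}$ (or whatever base yields the advertised final constant) rather than something with an extra polynomial factor that would spoil the exponent: the naive "choose $\le\log_2K$ jump times out of $K$ and assign each a value in $\{0,\dots,\log_2K\}$" bound is $\sum_{m\le\log_2K}\binom{K}{m}(\log_2K+1)^{m}\le (2K\log_2K)^{\log_2K}\le (4SAHK)^{\log_2K}$ using $K\ge SAH\log_2K$ (this is exactly where the hypothesis $K\ge SAH\log_2K$ is used, to absorb the $\log_2K$ factor into the base). So the proof is essentially: (i) monotonicity and unit increments of visitation counts bound the number of jumps per coordinate by $\log_2K$; (ii) a binomial/enumeration estimate, using $K\ge SAH\log_2K$, bounds each coordinate's sequence by $(4SAHK)^{\log_2K}$; (iii) raising to the $SAH$ power (coordinates are independent) and padding the exponent by $1$ yields $(4SAHK)^{SAH\log_2K+1}$.
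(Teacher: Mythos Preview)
Your coordinate-by-coordinate strategy is genuinely different from (and simpler than) the paper's. The paper works globally: it projects each profile to its sequence of \emph{distinct} values in $\{0,\ldots,\log_2 K\}^{SAH}$, embeds that sequence into a maximal unit-step lattice path from the all-zeros vector to the all-$M$ vector (at most $(SAH)^{SAH\log_2 K}$ such paths), bounds the number of ways to re-insert the repeated values via a stars-and-bars count $\binom{K+MN}{MN}\le (2K)^{MN}$, and multiplies. You instead exploit that the coordinatewise ordering $\mathcal{I}^1\le\cdots\le\mathcal{I}^K$ makes $\mathcal{C}$ a product over the $SAH$ coordinates, so it suffices to count nondecreasing scalar sequences and raise to the $SAH$-th power. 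Your route is more direct and in fact yields the tighter intermediate bound $|\mathcal{C}|\le (2K)^{SAH\log_2 K}$.

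That said, there is a concrete slip in your final arithmetic. You claim $(2K\log_2K)^{\log_2K}\le (4SAHK)^{\log_2K}$ ``using $K\ge SAH\log_2K$,'' which would require $\log_2 K\le 2SAH$. But the hypothesis says $SAH\le K/\log_2 K$, which gives no upper bound on $\log_2 K$ in terms of $SAH$ (take $K=2^{100\,SAH}$ to see the inequality fail). The fix is immediate once you stop hedging: either stick with your own observation that unit increments of $N^{k,\mathsf{all}}_h$ force the post-jump values to be $0,1,\ldots,m$, so the per-coordinate count is just $\sum_{m\le M}\binom{K}{m}\le (M{+}1)K^M\le (2K)^M$; or, more cleanly, note that the number of nondecreasing length-$K$ sequences in $\{0,\ldots,M\}$ is exactly $\binom{K+M}{M}\le (2K)^M$ by stars and bars, using only $K\ge M$. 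This is where the hypothesis $K\ge SAH\log_2 K\ge \log_2 K=M$ actually enters (mirroring the paper's use of $K\ge MN$ to bound $\binom{K+MN}{MN}\le(2K)^{MN}$). Taking the $SAH$-th power then gives $(2K)^{SAH\log_2 K}\le (4SAHK)^{SAH\log_2 K+1}$ as required.
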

\begin{proof}
	Define the following set (which will be useful in subsequent analysis as well)
\begin{equation}\mathcal{C} \coloneqq 
	\Big\{ \mathcal{I}=\{\mathcal{I}^1,\ldots,\mathcal{I}^K\} \,\Big|\, \mathcal{I}^1\leq \mathcal{I}^2\leq \cdots \leq \mathcal{I}^K, 
	\mathcal{I}^{k}\in \big\{0, 1,\cdots,\log_2K\big\}^{SAH} \text{ for all } 1\leq k\leq K \Big\}.
	\label{eq:defn-C-choice}
\end{equation}
Due to the monotonicity constraints, it is easily seen that the total profile of any set $\{N_h^k(s,a)\}$ must lie within $\mathcal{C}$. 
It then boils down to proving that  
$|\mathcal{C}|\leq (4SAHK)^{SAH\log_2K +1}$, which can be accomplished via elementary combinatorial calculations. 
The complete proof is deferred to Appendix~\ref{app:pfkey2}. 
\end{proof}
%
In comparison to using $\{N_h^{k,\mathsf{all}}(s,a)\}$ to encode all update times and locations ---  which might have exponentially many (in $K$) possibilities --- 
the use of doubling batches in Algorithm~\ref{alg:main} allows for remarkable compression (as the exponent of the number of possibilities only scales logarithmically in $K$).


\subsubsection{Decoupling the statistical dependency}
\label{sec:decoupling-all}

\paragraph{An expanded view of randomness w.r.t.~state transitions.}


%

To facilitate analysis, we find it helpful to look at a different yet closely related way to generate independent samples from a generative model.   
\begin{definition}[An expanded sample set from a generative model]\label{def:filt2}
	Let $\mathcal{D}^{\mathsf{expand}}$ be a set of $SAHK$ independent samples generated as follows: 
	for each $(s,a,h)\in \mathcal{S}\times \mathcal{A}\times [H]$, draw $K$ independent samples $(s,a,h,s'^{,(i)})$ obeying 
	$s'^{,(i)} \overset{\mathrm{ind.}}{\sim} P_{s,a,h}$ ($1\leq i\leq K$).
\end{definition}
Crucially, $\mathcal{D}^{\mathsf{expand}}$ can be viewed as an expansion of the original dataset --- denoted by $\mathcal{D}^{\mathsf{original}}$ --- collected in online learning,  
as we can couple the data collection processes of $\mathcal{D}^{\mathsf{original}}$ and $\mathcal{D}^{\mathsf{expand}}$ as follows: 
\begin{itemize}
	\item[(i)] generate $\mathcal{D}^{\mathsf{expand}}$ before the beginning of the online process;
	\item[(ii)] during the online learning process, whenever a sample needs to be drawn from $(s,a,h)$, 
		one can take an unused sample of $(s,a,h)$ from $\mathcal{D}^{\mathsf{expand}}$ without replacement. 
\end{itemize}
\noindent 
This allows one to conduct analysis alternatively based on the expanded sample set $\mathcal{D}^{\mathsf{expand}}$, 
which is sometimes more convenient (as we shall detail momentarily). 
Unless otherwise noted, all analyses in our proof assume that $\mathcal{D}^{\mathsf{original}}$ and $\mathcal{D}^{\mathsf{expand}}$ are {\em coupled} through the above simulation process.

In the sequel, we let $\widehat{P}^{(j)}_{s,a,h}$ (cf.~the beginning of Section~\ref{sec:tec})  denote the empirical probability vector based on the $j$-th batch of data from $\mathcal{D}^{\mathsf{original}}$ and $\mathcal{D}^{\mathsf{expand}}$ interchangeably, as long as it is clear from the context.

\paragraph{A starting point: a basic decomposition.} 
We now describe our approach to tackling the complicated statistical dependency between $\widehat{P}_{s,a,h}^k$ and $V_{h+1}^k$. 
To begin with, from relation~\eqref{eq:relation-profile-k} we can write
\begin{align}
 & \sum_{k=1}^{K}\sum_{h=1}^{H}\Big\langle\widehat{P}_{s_{h}^{k},a_{h}^{k},h}^{k}-P_{s_{h}^{k},a_{h}^{k},h},V_{h+1}^{k}\Big\rangle=\sum_{k=1}^{K}\sum_{h=1}^{H}\Big\langle\widehat{P}_{s_{h}^{k},a_{h}^{k},h}^{(I_{s_{h}^{k},a_{h}^{k},h}^{k,\mathsf{true}})}-P_{s_{h}^{k},a_{h}^{k},h},V_{h+1}^{k}\Big\rangle\nonumber\nonumber\\
 & =\sum_{l=0}^{\log_{2}K}\sum_{s,a,h}\bigg\langle\widehat{P}_{s,a,h}^{(l)}-P_{s,a,h},\,\sum_{k=1}^{K}\mathds1\left\{ (s_{h}^{k},a_{h}^{k})=(s,a),I_{s,a,h}^{k,\mathsf{true}}=l\right\} V_{h+1}^{k}\bigg\rangle\nonumber\nonumber\\
 & \leq\sum_{l=1}^{\log_{2}K}\sum_{s,a,h}\bigg\langle\widehat{P}_{s,a,h}^{(l)}-P_{s,a,h},\,\sum_{k=1}^{K}\mathds1\left\{ (s_{h}^{k},a_{h}^{k})=(s,a),I_{s,a,h}^{k,\mathsf{true}}=l\right\} V_{h+1}^{k}\bigg\rangle+ SAH^{2}\nonumber\nonumber\\
 & =\sum_{l=1}^{\log_{2}K}\sum_{j=1}^{2^{l-1}}\bigg\{\sum_{s,a,h}\Big\langle\widehat{P}_{s,a,h}^{(l)}-P_{s,a,h},V_{h+1}^{k_{l,j,s,a,h}}\Big\rangle\bigg\}+SAH^{2},
	\label{eq:PV-sum-decompose}
\end{align}
where $\mathcal{I}^{\mathsf{true}}=\{\mathcal{I}^{1,\mathsf{true}},\cdots,\mathcal{I}^{K,\mathsf{true}}\}$
with $\mathcal{I}^{k,\mathsf{true}}=\{I_{s,a,h}^{k,\mathsf{true}}\}$  
denotes the total profile w.r.t.~the true visitation counts in the online learning process, 
$k_{l,j,s,a,h}$ denotes the episode index of the sample that visits $(s,a,h)$ for the $(2^{l-1}+j)$-th time in the online learning process, 
and we take $V_{h+1}^k=0$ for any $k>K$. Here, the third line makes use of the fact that $0\leq V_{h+1}^{k} (s) \leq H$ for all $s\in \mathcal{S}$. 
The decomposition \eqref{eq:PV-sum-decompose} motivates us to first control the term $\sum_{s,a,h}\big\langle\widehat{P}_{s,a,h}^{(l)}-P_{s,a,h},V_{h+1}^{k_{l,j,s,a,h}}\big\rangle$, 
leading to the following 3-step analysis strategy.  
\begin{itemize}
	\item[1)] For any given total profile $\mathcal{I} \in \mathcal{C} $ and any fixed $1\leq l\leq \log_2K$, 
		develop a high-probability bound on a weighted sum taking the following form
		\begin{equation}
			\sum_{s,a,h} \Big(\widehat{P}^{(l)}_{s,a,h}-P_{s,a,h} \Big) X_{h+1,s,a},
			\label{eq:P-X-term}
		\end{equation}
		where each vector $X_{h+1,s,a}$ is any deterministic function of $\mathcal{I}$ and the samples collected for steps $h'\geq h+1$. 
		Given the statistical independence between $\widehat{P}^{(l)}_{s,a,h}$ and those samples for steps $h'\geq h+1$ (in the view of $\mathcal{D}^{\mathsf{expand}}$), 
		we can bound \eqref{eq:P-X-term} using standard martingale concentration inequalities.

	\item[2)] Take the union bound over all possible $\mathcal{I}\in \mathcal{C}$  
		--- with the aid of Lemma~\ref{lemma:key2} --- to obtain a uniform control of the term \eqref{eq:P-X-term}, 
		simultaneously accounting for all $\mathcal{I} \in \mathcal{C} $ and all associated sequences $\{X_{h+1,s,a}\}$.

	\item[3)] We then demonstrate that the above uniform bounds can be applied to the decomposition \eqref{eq:PV-sum-decompose} to obtain a desired bound. 
			%
		%

\end{itemize}

\paragraph{Main steps.} 
We now carry out the above three steps. 

\medskip \noindent
 {\em \underline{Steps 1) and 2).}} Let us first specify the types of vectors $\{X_{h,s,a}\}$ mentioned above in \eqref{eq:P-X-term}. 
 For each total profile $\mathcal{I} \in \mathcal{C}$ (cf.~\eqref{eq:defn-C-choice}), 
 consider any set $\big\{ \mathcal{X}_{h,\mathcal{I}} \big\}_{1\leq h\leq H}$ obeying: for each $1\leq h\leq H$,  
\begin{itemize}

	\item  $\mathcal{X}_{h+1,\mathcal{I}}$ is given by a {\em deterministic} function of $\mathcal{I}$ and 
		\[
			\Big\{\widehat{P}^{(I^k_{s,a,h'})}_{s,a,h'},\widehat{r}^{(I^k_{s,a,h'})}_{h'}(s,a),\widehat{\sigma}^{(I^k_{s,a,h'})}_{h'}(s,a) \Big\}_{ h< h' \leq H, (s,a,k)\in \mathcal{S}\times \mathcal{A} \times [K]};
		\]

	\item $\|X\|_{\infty}\leq H$ for each vector $X\in \mathcal{X}_{h,\mathcal{I}}$;

	\item $\mathcal{X}_{h,\mathcal{I}}$  is a set of no more than $K+1$ non-negative vectors in $\mathbb{R}^S$, and contains the all-zero vector $0$.

\end{itemize}
Given such a construction of $\big\{ \mathcal{X}_{h,\mathcal{I}} \big\}$, 
we can readily conduct  Steps 1) and 2), with a uniform concentration bound stated below.  
\begin{lemma}\label{lemma:uniform}
	Suppose that $K\geq SAH\log_2K$, and construct a set $\big\{ \mathcal{X}_{h,\mathcal{I}} \big\}_{1\leq h\leq H}$ for each $\mathcal{I} \in \mathcal{C}$ 
	satisfying the above properties. 
	 %
Then with probability at least $1- \delta'$, 
\begin{align}
	&\sum_{s,a,h\in \mathcal{S}\times \mathcal{A}\times [H]}\big\langle \widehat{P}_{s,a,h}^{(l)}-P_{s,a,h}, X_{h+1,s,a} \big\rangle 
	\leq \sum_{s,a,h\in \mathcal{S}\times \mathcal{A}\times [H]} \max\Big\{ \big\langle \widehat{P}_{s,a,h}^{(l)}-P_{s,a,h}, X_{h+1,s,a} \big\rangle, 0 \Big\}
	\nonumber\\
	&\qquad\quad \leq \sqrt{\frac{8}{2^{l-2}}\sum_{s,a,h}\mathbb{V}\big(P_{s,a,h},X_{h+1,s,a}\big)\left(6SAH\log_{2}^{2}K+\log\frac{1}{\delta'}\right)} \notag\\
	&\qquad\qquad\qquad\qquad+\frac{4H}{2^{l-2}}\left(6SAH\log_{2}^{2}K+\log\frac{1}{\delta'}\right)\label{eq:xx1-aux-1}
\end{align}
holds simultaneously for all $\mathcal{I} \in \mathcal{C}$, all $2\leq l\leq\log_{2}K+1$, and all sequences $\{X_{h,s,a}\}_{(s,a,h)\in\mathcal{S}\times\mathcal{A}\times[H]}$
obeying $X_{h,s,a}\in\mathcal{X}_{h+1,\mathcal{I}},$ $\forall (s,a,h)\in \mathcal{S}\times\mathcal{A}\times[H]$.  Here, we recall that $\delta'=\frac{\delta}{200SAH^2K^2}$. 
\end{lemma}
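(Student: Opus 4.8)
The plan is to reduce Lemma~\ref{lemma:uniform} to a union bound over a carefully bounded collection of "events", each of which is a statement about a fixed empirical batch and a fixed deterministic target vector. First I would fix a total profile $\mathcal{I}\in\mathcal{C}$, a level $l$ with $2\le l\le \log_2 K+1$, and a triple $(s,a,h)$. For the $l$-th batch, $\widehat{P}^{(l)}_{s,a,h}$ is the empirical average of $2^{l-2}$ i.i.d.\ samples drawn from $P_{s,a,h}$ (the samples $\{2^{l-2}+i\}_{i=1}^{2^{l-2}}$). By the construction hypotheses, every admissible $X_{h+1,s,a}\in\mathcal{X}_{h+1,\mathcal{I}}$ is a deterministic function of $\mathcal{I}$ together with the batch statistics for steps $h'\ge h+1$; under the coupling with $\mathcal{D}^{\mathsf{expand}}$ (Definition~\ref{def:filt2}), those samples for steps $h'\ge h+1$ are independent of the samples constituting $\widehat{P}^{(l)}_{s,a,h}$, since distinct $(s,a,h)$-cells draw from disjoint stacks of the expanded dataset. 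Hence, conditionally on the profile, $\widehat{P}^{(l)}_{s,a,h}$ is a fresh i.i.d.\ average and $X_{h+1,s,a}$ is a fixed vector in $[0,H]^S$. This is exactly the regime where Bernstein's inequality applies: for a fixed vector $X$ with $\|X\|_\infty\le H$, $\langle \widehat{P}^{(l)}_{s,a,h}-P_{s,a,h},X\rangle \lesssim \sqrt{\mathbb{V}(P_{s,a,h},X)\,\iota/2^{l-2}} + H\iota/2^{l-2}$ with probability $\ge 1-e^{-\iota}$.

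Next I would assemble the union bound. For each fixed $\mathcal{I}$ and $l$, there are at most $SAH$ cells, and for each cell the vector $X_{h+1,s,a}$ ranges over $\mathcal{X}_{h+1,\mathcal{I}}$, a set of at most $K+1$ vectors; by Lemma~\ref{lemma:key2} there are at most $(4SAHK)^{SAH\log_2 K+1}$ profiles, and at most $\log_2 K+1$ levels. Taking logarithms, the total number of (profile, level, cell, target-vector) combinations has log-cardinality at most $SAH\log_2 K \cdot \log(4SAHK) + \log(K+1) + \log(SAH) + \log\log_2 K + O(1)$, which is bounded by $6SAH\log_2^2 K$ after absorbing constants (using $K\ge SAH\log_2 K$ so that $\log(4SAHK)\le$ a constant multiple of $\log_2 K$, and all the small additive terms are dominated). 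Setting $\iota = 6SAH\log_2^2 K + \log\frac{1}{\delta'}$ and applying Bernstein's inequality to every combination, then union-bounding, yields that with probability $\ge 1-\delta'$ the bound $\langle \widehat{P}^{(l)}_{s,a,h}-P_{s,a,h},X_{h+1,s,a}\rangle \le \sqrt{\frac{8}{2^{l-2}}\mathbb{V}(P_{s,a,h},X_{h+1,s,a})\,\iota}+\frac{4H}{2^{l-2}}\iota$ holds for \emph{all} such combinations simultaneously — note I would split $\iota$-budget so the per-cell Bernstein tail gives a $\sqrt{\cdot}$ term and a linear term with the stated constants $8$ and $4$ (these come from Bernstein's $\sqrt{2}$ and the sub-multiplicativity bookkeeping; I will not grind the exact constant chase here).

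Finally I would sum over $(s,a,h)$. The first inequality in \eqref{eq:xx1-aux-1} is trivial since $x\le\max\{x,0\}$ termwise. For the second, I sum the per-cell bound over the $SAH$ cells and apply Cauchy--Schwarz to the $\sqrt{\cdot}$ contributions: $\sum_{s,a,h}\sqrt{\frac{8}{2^{l-2}}\mathbb{V}(P_{s,a,h},X_{h+1,s,a})\,\iota} \le \sqrt{\frac{8}{2^{l-2}}\,\iota\,\sum_{s,a,h}\mathbb{V}(P_{s,a,h},X_{h+1,s,a})}\cdot\sqrt{SAH}$ — wait, I should be careful: Cauchy--Schwarz gives $\sum_i \sqrt{a_i} \le \sqrt{SAH\sum_i a_i}$, which introduces an extra $\sqrt{SAH}$. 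To avoid this loss, I would instead absorb the $\sqrt{SAH}$ into the logarithmic factor, i.e.\ note that $\sqrt{SAH}\cdot\sqrt{\iota}=\sqrt{SAH\cdot\iota}$ and $SAH\cdot\iota \le$ something still of the form $\mathrm{const}\cdot(6SAH\log_2^2 K+\log\frac1{\delta'})^2/(\ldots)$; more cleanly, since the stated RHS already has $\sqrt{\sum_{s,a,h}\mathbb{V}(\cdot)(6SAH\log_2^2 K+\log\frac1{\delta'})}$ with no extra $\sqrt{SAH}$, the right move is to choose $\iota$ slightly larger — replace it by $\iota' = SAH\cdot\iota$ in the per-cell tail bound, so that after Cauchy--Schwarz the $SAH$ is already inside the square root — and check that $\log$-cardinality $\le \iota'$ still holds, which it does comfortably. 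The linear terms simply add: $\sum_{s,a,h}\frac{4H}{2^{l-2}}\iota = \frac{4H}{2^{l-2}}\cdot SAH\iota$, again matching the stated form after folding $SAH$ into the bracket. The main obstacle is precisely this constant/factor bookkeeping — ensuring the $SAH$ from summing over cells and from Cauchy--Schwarz is correctly absorbed into the $6SAH\log_2^2 K$ term without inflating it, and verifying that the independence claim (Step 1) is airtight under the $\mathcal{D}^{\mathsf{expand}}$ coupling, since $X_{h+1,s,a}$ depends on $\mathcal{I}$ which is itself a function of the online visitation counts; the resolution is that we have quantified over \emph{all} $\mathcal{I}\in\mathcal{C}$ deterministically, so within each fixed-$\mathcal{I}$ branch of the union bound the conditioning is clean.
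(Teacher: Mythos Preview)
Your proposal has a genuine gap: applying Bernstein \emph{per cell} and then summing cannot recover the stated bound, and your attempted fix does not rescue it.

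Concretely, suppose you obtain for each $(s,a,h)$ the bound
\[
\big\langle \widehat{P}^{(l)}_{s,a,h}-P_{s,a,h},X_{h+1,s,a}\big\rangle \le \sqrt{\tfrac{8}{2^{l-2}}\mathbb{V}(P_{s,a,h},X_{h+1,s,a})\,\iota}+\tfrac{4H}{2^{l-2}}\iota.
\]
Summing the linear terms over the $SAH$ cells gives $\tfrac{4H}{2^{l-2}}\cdot SAH\,\iota$, whereas the target is $\tfrac{4H}{2^{l-2}}\big(6SAH\log_2^2K+\log\tfrac{1}{\delta'}\big)$. Matching these would require $\iota\lesssim \log_2^2K+\tfrac{1}{SAH}\log\tfrac{1}{\delta'}$, but the union bound over profiles alone forces $\iota\gtrsim SAH\log_2^2K$. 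The same $SAH$ mismatch hits the square-root term after Cauchy--Schwarz. Your proposed ``replace $\iota$ by $SAH\cdot\iota$'' moves in the wrong direction: enlarging the confidence parameter only makes the per-cell bound \emph{worse}, and there is no way to ``fold $SAH$ into the bracket'' without blowing up one of the two terms.

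The paper avoids this loss by applying Freedman's inequality (Lemma~\ref{lemma:self-norm}) \emph{once} to the entire sum
\[
\sum_{s,a,h}2^{l-2}\big\langle \widehat{P}^{(l)}_{s,a,h}-P_{s,a,h},X_{h+1,s,a}\big\rangle,
\]
viewed as a martingale running backward from $h=H$ to $h=1$ (and across cells and individual samples within each $h$). This works precisely because $X_{h+1,s,a}$ is measurable with respect to the data at steps $h'>h$, so each sample in the $l$-th batch at step $h$ contributes a martingale increment with conditional variance $\mathbb{V}(P_{s,a,h},X_{h+1,s,a})$. Freedman then yields $\sqrt{2^{l-2}\sum_{s,a,h}\mathbb{V}(\cdot)\cdot\iota}+H\iota$ directly, with the variances already summed under a single square root --- no Cauchy--Schwarz, no extra $\sqrt{SAH}$. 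After dividing by $2^{l-2}$ and taking the union bound over $\mathcal{I}$, $l$, and the at most $(K+1)^{SAH}$ feasible sequences $\{X_{h+1,s,a}\}$, one lands on exactly \eqref{eq:xx1-aux-1}.

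Finally, the paper's treatment of the $\max$ term is slightly different from yours: rather than bounding each $\max\{\langle\cdot\rangle,0\}$ by a per-cell upper bound, the paper observes that since $0\in\mathcal{X}_{h+1,\mathcal{I}}$, the sequence achieving $\sum_{s,a,h}\max\{\langle\cdot\rangle,0\}$ (replace negative contributions by the zero vector) is itself a feasible sequence, so it is covered by the same uniform Freedman bound. This is what makes the middle inequality in \eqref{eq:xx1-aux-1} hold with the \emph{same} right-hand side.
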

\begin{proof} 
	We first invoke the Freedman inequality to bound the target quantity for any fixed $\mathcal{I}\in \mathcal{C}$, any fixed integer $l$, and 
	any fixed feasible sequence $\{X_{h,s,a}\}$, 
	before applying the union bound to establish uniform control. 
See Appendix~\ref{app:pfuniform} for details. \end{proof}

\medskip \noindent
 {\em \underline{Step 3).}} 
Next, we turn to Step 3), which is accomplished via the following lemma. 
Note that we also provide upper bounds for two additional quantities: $\sum_{k,h}\max\big\{\big\langle\widehat{P}_{s_{h}^{k},a_{h}^{k},h}^{k}-P_{s_{h}^{k},a_{h}^{k},h},V_{h+1}^{k}\big\rangle,0\big\}$ 
and $\sum_{k,h}\big\langle\widehat{P}_{s_{h}^{k},a_{h}^{k},h}^{k}-P_{s_{h}^{k},a_{h}^{k},h},\big(V_{h+1}^{k}\big)^{2}\big\rangle$, 
which will be useful in subsequent analysis. 
\begin{lemma}\label{lemma:decouple}
Suppose that $K\geq SAH\log_2K$. With probability exceeding $1-\delta'$, we have
\begin{align*} 
	& \sum_{k=1}^{K}\sum_{h=1}^{H}\Big\langle\widehat{P}_{s_{h}^{k},a_{h}^{k},h}^{k}-P_{s_{h}^{k},a_{h}^{k},h},V_{h+1}^{k}\Big\rangle
	\leq \sum_{k=1}^{K}\sum_{h=1}^{H}\max\Big\{\Big\langle\widehat{P}_{s_{h}^{k},a_{h}^{k},h}^{k}-P_{s_{h}^{k},a_{h}^{k},h},V_{h+1}^{k}\Big\rangle,0\Big\} \\
	& \qquad\leq \sqrt{16(\log_{2}K)\sum_{k=1}^{K}\sum_{h=1}^{H}\mathbb{V}\big(P_{s_{h}^{k},a_{h}^{k},h},V_{h+1}^{k}\big)\left(6SAH\log_{2}^{2}K+\log\frac{1}{\delta'}\right)} \\
	&\qquad\qquad\qquad\qquad +49SAH^{2}\log_{2}^{3}K+8H(\log_{2}K)\log\frac{1}{\delta'}
\end{align*}
and
\begin{align*} 
	& \sum_{k=1}^{K}\sum_{h=1}^{H}\Big\langle\widehat{P}_{s_{h}^{k},a_{h}^{k},h}^{k}-P_{s_{h}^{k},a_{h}^{k},h},\big(V_{h+1}^{k}\big)^{2}\Big\rangle\\
 & \qquad\leq8H\sqrt{(\log_{2}K)\sum_{k=1}^{K}\sum_{h=1}^{H}\mathbb{V}\big(P_{s_{h}^{k},a_{h}^{k},h},V_{h+1}^{k}\big)\left(6SAH\log_{2}^{2}K+\log\frac{1}{\delta'}\right)}\\
	&\qquad\qquad\qquad+49SAH^{3}\log_{2}^{3}K+8H^2(\log_{2}K)\log\frac{1}{\delta'}.
\end{align*}
\end{lemma}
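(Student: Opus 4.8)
\textbf{Proof plan for Lemma~\ref{lemma:decouple}.} The plan is to feed the decomposition \eqref{eq:PV-sum-decompose} into the uniform concentration bound of Lemma~\ref{lemma:uniform}, with a carefully chosen family $\{\mathcal{X}_{h,\mathcal{I}}\}$. First I would take $\mathcal{I}=\mathcal{I}^{\mathsf{true}}$, the total profile induced by the actual online run, which lies in $\mathcal{C}$ by the monotonicity argument behind Lemma~\ref{lemma:key2}. For each $h$ I would let $\mathcal{X}_{h+1,\mathcal{I}}$ consist of the all-zero vector together with the $K$ vectors $\{V_{h+1}^{k}\}_{k=1}^{K}$: by the optimistic backward recursion \eqref{eq:updateq}, each $V_{h+1}^{k}$ is a deterministic function of $\mathcal{I}$ and the empirical quantities $\widehat{P}^{(I^k_{s,a,h'})}_{s,a,h'},\widehat{r}^{(\cdot)}_{h'}(s,a),\widehat{\sigma}^{(\cdot)}_{h'}(s,a)$ for $h'>h$ (the model of all later steps determines $V_{h+1}$), it satisfies $0\le V_{h+1}^k\le H$, and there are at most $K+1$ such vectors --- exactly the three properties required in Lemma~\ref{lemma:uniform}. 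Then for each $l$ and $j$, the episode index $k_{l,j,s,a,h}$ is a deterministic function of $\mathcal{I}^{\mathsf{true}}$, so the inner sum $\sum_{s,a,h}\langle\widehat{P}^{(l)}_{s,a,h}-P_{s,a,h},V_{h+1}^{k_{l,j,s,a,h}}\rangle$ is exactly of the form \eqref{eq:P-X-term} with $X_{h+1,s,a}=V_{h+1}^{k_{l,j,s,a,h}}\in\mathcal{X}_{h+1,\mathcal{I}^{\mathsf{true}}}$, and hence is controlled uniformly by \eqref{eq:xx1-aux-1}.

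Next I would sum \eqref{eq:xx1-aux-1} over the $\sum_{l=1}^{\log_2 K}2^{l-1}\le K$ pairs $(l,j)$ arising in \eqref{eq:PV-sum-decompose}. For the linear/bonus term $\frac{4H}{2^{l-2}}(6SAH\log_2^2 K+\log\frac1{\delta'})$, summing over $j\in[2^{l-1}]$ kills the $2^{l-2}$ factor and leaves $8H(6SAH\log_2^2 K+\log\frac1{\delta'})$ per $l$; summing over the $\log_2 K$ values of $l$ gives $\lesssim SAH^2\log_2^3 K + H(\log_2 K)\log\frac1{\delta'}$, which together with the additive $SAH^2$ from \eqref{eq:PV-sum-decompose} yields the stated $49SAH^2\log_2^3 K+8H(\log_2 K)\log\frac1{\delta'}$ after tracking constants. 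For the square-root term I would apply Cauchy--Schwarz across the $(l,j)$ indices: $\sum_{l,j}\sqrt{\frac{8}{2^{l-2}}\,\Sigma_{l,j}\,(\cdots)}$ where $\Sigma_{l,j}=\sum_{s,a,h}\mathbb{V}(P_{s,a,h},V_{h+1}^{k_{l,j,s,a,h}})$; pulling out $\sqrt{8(\cdots)}$, using $\sum_j\sqrt{\frac{1}{2^{l-2}}\Sigma_{l,j}}\le\sqrt{\frac{2^{l-1}}{2^{l-2}}\sum_j\Sigma_{l,j}}=\sqrt{2\sum_j\Sigma_{l,j}}$, and then $\sum_l\sqrt{2\sum_j\Sigma_{l,j}}\le\sqrt{2\log_2 K\cdot\sum_{l,j}\Sigma_{l,j}}$; finally observing that $\sum_{l,j}\Sigma_{l,j}=\sum_{k,h}\mathbb{V}(P_{s_h^k,a_h^k,h},V_{h+1}^k)$ since the $\{k_{l,j,s,a,h}\}$ reindex the visits, this gives the claimed leading term $\sqrt{16(\log_2 K)\sum_{k,h}\mathbb{V}(P_{s_h^k,a_h^k,h},V_{h+1}^k)(6SAH\log_2^2 K+\log\frac1{\delta'})}$.

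For the second inequality, involving $(V_{h+1}^k)^2$, I would repeat the argument with $\mathcal{X}_{h+1,\mathcal{I}}$ containing $\{(V_{h+1}^k)^2\}_k$ in place of $\{V_{h+1}^k\}_k$ (still deterministic functions of later-step data, bounded by $H^2$, at most $K+1$ of them) --- note Lemma~\ref{lemma:uniform} only requires $\|X\|_\infty\le H$, so I would either invoke a version of Lemma~\ref{lemma:uniform} with bound $H^2$ or, more cleanly, write $\langle\widehat P-P,(V_{h+1}^k)^2\rangle$ and bound the variance term $\mathbb{V}(P_{s,a,h},(V_{h+1}^k)^2)\le H^2\,\mathbb{V}(P_{s,a,h},V_{h+1}^k)$ using $|V|\le H$, which is where the extra factor $H$ on the square-root term and the $H^2$ on the linear term in the second display come from. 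The same Cauchy--Schwarz-over-$(l,j)$ bookkeeping then produces the stated bound.

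\textbf{Main obstacle.} The delicate points are: (i) verifying the measurability/determinism hypothesis of Lemma~\ref{lemma:uniform} for $V_{h+1}^k$ --- one must use the $\mathcal{D}^{\mathsf{expand}}$ coupling so that $\widehat{P}^{(l)}_{s,a,h}$ is independent of all step-$(h+1,\dots,H)$ samples, and confirm that conditioning on $\mathcal{I}^{\mathsf{true}}$ makes both $V_{h+1}^k$ and the reindexing $k_{l,j,s,a,h}$ deterministic functions of admissible data, with no hidden circular dependence through the count $N_h^{k,\mathsf{all}}$; (ii) the combinatorial bookkeeping of constants through the two Cauchy--Schwarz steps and the geometric sums, to land exactly on $49$, $16$, and $8$. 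The union-bound cost is already absorbed into Lemma~\ref{lemma:uniform} via Lemma~\ref{lemma:key2}, so the remaining work is careful deterministic accounting rather than fresh probabilistic input.
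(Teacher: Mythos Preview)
Your proposal is essentially the paper's proof: construct $\mathcal{X}_{h+1,\mathcal{I}}=\{V_{h+1}^{k}\}_{k}\cup\{0\}$, invoke Lemma~\ref{lemma:uniform}, plug into the decomposition~\eqref{eq:PV-sum-decompose}, and do the two Cauchy--Schwarz passes over $j$ and then $l$; the constants work out exactly as you describe. Two small points of alignment are worth noting. First, the obstacle you flag in~(i) is precisely what the paper resolves by introducing an explicit auxiliary procedure $\mathtt{MVP}(\mathcal{I})$ (Algorithm~\ref{alg:main-fixed-profile}) that, for \emph{every} $\mathcal{I}\in\mathcal{C}$, computes vectors $V_h^{k,\mathcal{I}}$ by running the backward recursion with batch indices dictated by $\mathcal{I}$; this makes the ``deterministic function of $\mathcal{I}$ and later-step empirical quantities'' claim literal, and then one simply checks $V_h^k=V_h^{k,\mathcal{I}^{\mathsf{true}}}$ via the coupling with $\mathcal{D}^{\mathsf{expand}}$. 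Second, your assertion that $k_{l,j,s,a,h}$ is a deterministic function of $\mathcal{I}^{\mathsf{true}}$ is not quite right (the profile records only the batch index, not the exact episode of each visit) and is in fact unnecessary: since Lemma~\ref{lemma:uniform} is uniform over \emph{all} feasible sequences $\{X_{h+1,s,a}\}$ with $X_{h+1,s,a}\in\mathcal{X}_{h+1,\mathcal{I}}$, it suffices that $V_{h+1}^{k_{l,j,s,a,h}}\in\mathcal{X}_{h+1,\mathcal{I}^{\mathsf{true}}}$ whatever $k_{l,j,s,a,h}$ turns out to be, which is immediate from~\eqref{eq:X-I-ours}. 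For the $(V_{h+1}^k)^2$ bound, the paper proceeds exactly as you suggest (rerun the argument with the $H^2$-bounded targets, then apply Lemma~\ref{lemma:sqv} to get $\mathbb{V}(P,(V)^2)\le 4H^2\,\mathbb{V}(P,V)$, which accounts for the $8H$ and the extra $H$ in the lower-order terms).
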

\begin{proof} This result is proved by combining the uniform bound in Lemma~\ref{lemma:uniform} with the decomposition~\eqref{eq:PV-sum-decompose}. 
See Appendix~\ref{app:pfdecouple}. \end{proof}

Thus far, we have obtained high-probability bounds on the most challenging terms. The complete proof of Theorem~\ref{thm1} will be presented next in Section~\ref{app:thmmain}.

\section{Proof of Theorem~\ref{thm1}}\label{app:thmmain}

This section is devoted to proving Theorem~\ref{thm1}. 
For notational convenience, let $B$ be a logarithmic term
\begin{equation}
	B=4000 (\log_2 K)^3 \log(3SAH)\log\frac{1}{\delta'} 
	\label{eq:assumption-K-proof-B}, 
\end{equation}
where we recall that $\delta$ is the confidence parameter in Algorithm~\ref{alg:main} and $\delta' = \frac{\delta}{200SAH^2K^2}$. 
When $K\leq BSAH$, the claimed result in Theorem~\ref{thm1} holds trivially since
\[
\mathsf{Regret}(K)=\sum_{k=1}^{K}\left(V_{1}^{\star}(s_{1}^{k})-V_{1}^{\pi^{k}}(s_{1}^{k})\right)
\leq HK 
= \min\left\{ \sqrt{BSAH^{3}K},HK\right\}.
\]
As a result, it suffices to focus on the scenario with 
\begin{equation}
	K\geq BSAH \qquad \text{with } B=4000 (\log_2 K)^3 \log(3SAH)\log\frac{1}{\delta'} .
	\label{eq:assumption-K-proof}
\end{equation}

Our regret analysis for Algorithm~\ref{alg:main} consists of several steps described below. 

\paragraph{Step 1: the optimism principle.} 
To begin with, we justify that the running estimates of Q-function and value function in Algorithm~\ref{alg:main} are always upper bounds on the optimal Q-function and the optimal value function,  respectively,  
thereby guaranteeing optimism in the face of uncertainty. 
\begin{lemma}[Optimism]\label{lemma:opt}
With probability exceeding $1-4SAHK\delta'$, one has
\begin{equation}
	Q_h^k(s,a)\geq Q_h^{\star}(s,a) \qquad  \text{and}  \qquad V^k_h(s)\geq V^{\star}_h(s)
\end{equation}
for all $(s,a,h,k)$. 
 \end{lemma}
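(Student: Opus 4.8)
The plan is to prove the two inequalities $Q_h^k(s,a) \geq Q_h^\star(s,a)$ and $V_h^k(s) \geq V_h^\star(s)$ jointly by backward induction on $h$ (from $h = H+1$ down to $h=1$), with the induction running over all episodes $k$ simultaneously. The base case $h = H+1$ is immediate since $V_{H+1}^k \equiv 0 = V_{H+1}^\star$. For the inductive step, fix $(s,a,h,k)$ and suppose $V_{h+1}^k \geq V_{h+1}^\star$ (componentwise). If the running estimate $Q_h^k(s,a)$ equals $H$ (the clipping case in \eqref{eq:updateq}), then $Q_h^k(s,a) = H \geq Q_h^\star(s,a)$ by Assumption~\ref{assum1}; likewise if $N_h^{k,\mathsf{all}}(s,a)=0$ the estimate is still $H$. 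Otherwise we must show
\begin{equation*}
\widehat r_h^k(s,a) + \big\langle \widehat P^k_{s,a,h}, V_{h+1}^k\big\rangle + b_h^k(s,a) \;\geq\; r_h(s,a) + \big\langle P_{s,a,h}, V_{h+1}^\star\big\rangle = Q_h^\star(s,a).
\end{equation*}
Since $\langle \widehat P^k_{s,a,h}, V_{h+1}^k\rangle \geq \langle \widehat P^k_{s,a,h}, V_{h+1}^\star\rangle$ by the inductive hypothesis, it suffices to show the bonus $b_h^k(s,a)$ dominates $\big(r_h(s,a) - \widehat r_h^k(s,a)\big) + \big\langle P_{s,a,h} - \widehat P^k_{s,a,h}, V_{h+1}^\star\big\rangle$; then $V_h^k(s) = \max_a Q_h^k(s,a) \geq Q_h^k(s,\pi_h^\star(s)) \geq Q_h^\star(s,\pi_h^\star(s)) = V_h^\star(s)$.

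The key analytic ingredient is a high-probability concentration bound controlling $\big\langle P_{s,a,h} - \widehat P^{(j)}_{s,a,h}, V_{h+1}^\star\big\rangle$ and $r_h(s,a) - \widehat r_h^{(j)}(s,a)$ for every $(s,a,h)$ and every batch index $j \in \{1, \ldots, \log_2 K\}$. Here I would use the crucial fact that $V_{h+1}^\star$ is a \emph{fixed, data-independent} vector with entries in $[0,H]$, so for each batch the empirical average $\widehat P^{(j)}_{s,a,h}$ (built from i.i.d. samples, in the view of $\mathcal D^{\mathsf{expand}}$) concentrates around $P_{s,a,h}$ via an empirical-Bernstein inequality: with probability $\geq 1 - O(\delta')$,
\begin{equation*}
\big|\big\langle P_{s,a,h} - \widehat P^{(j)}_{s,a,h}, V_{h+1}^\star\big\rangle\big| \lesssim \sqrt{\frac{\mathbb V(\widehat P^{(j)}_{s,a,h}, V_{h+1}^\star)\log\frac1{\delta'}}{N}} + \frac{H\log\frac1{\delta'}}{N},
\end{equation*}
where $N = 2^{j-2}$ is the batch size. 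A similar (simpler, scalar Bernstein) bound handles the reward term using the empirical variance $\widehat\sigma_h^{(j)}(s,a) - (\widehat r_h^{(j)}(s,a))^2$. One then needs the standard "monotonicity" argument of \cite{zhang2020reinforcement}: replacing $\mathbb V(\widehat P, V^\star)$ by $\mathbb V(\widehat P, V_{h+1})$ in the bonus only inflates it (since $V_{h+1} \geq V_{h+1}^\star \geq 0$ and a short algebraic lemma shows $\mathbb V(\widehat P, V_{h+1}) \geq \mathbb V(\widehat P, V_{h+1}^\star)$ up to a controllable term, or more precisely that the chosen constants $c_1, c_2, c_3$ absorb the discrepancy). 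Taking a union bound over the at most $SAH \log_2 K$ triples-times-batches (absorbed into $\delta'$, which already carries the factor $200 S A H^2 K^2$), all these events hold simultaneously with probability $\geq 1 - 4SAHK\delta'$, and on this event the bonus dominates the deviation, closing the induction.

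The main obstacle I anticipate is \emph{not} the concentration step per se — because $V^\star$ is deterministic, the dependency issue that plagues the rest of the paper does not arise here — but rather the bookkeeping to show that the \emph{specific} constants $c_1 = 460/9$, $c_2 = 2\sqrt2$, $c_3 = 544/9$ in \eqref{eq:update1} are large enough to cover (i) the Bernstein deviation of the transition term, (ii) the reward deviation, (iii) the gap between $\mathbb V(\widehat P^{(j)}_{s,a,h}, V_{h+1}^\star)$ (the "correct" variance) and $\mathbb V(\widehat P^{(j)}_{s,a,h}, V_{h+1})$ (the one actually used in the algorithm), and (iv) the difference between $N_h^k(s,a)$ as recorded and the true batch size $2^{j-2}$ (off by a constant factor). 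This is the routine-but-delicate calculation that I would defer to an appendix, citing the analogous bonus-design lemma from \cite{zhang2020reinforcement} and adapting the constants.
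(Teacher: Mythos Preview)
Your overall skeleton---backward induction on $h$, empirical-Bernstein concentration against the fixed vector $V_{h+1}^\star$, then a monotonicity step---is the same as the paper's. However, there is a genuine gap in how you apply the monotonicity argument.

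You first discard the slack $\langle \widehat P^k_{s,a,h}, V_{h+1}^k\rangle \geq \langle \widehat P^k_{s,a,h}, V_{h+1}^\star\rangle$ and then ask that the remaining bonus $b_h^k(s,a)$ (which involves $\mathbb V(\widehat P, V_{h+1}^k)$) dominate a deviation controlled by $\mathbb V(\widehat P, V_{h+1}^\star)$. Your claim that ``$\mathbb V(\widehat P, V_{h+1}^k) \geq \mathbb V(\widehat P, V_{h+1}^\star)$ up to a controllable term'' is false in a way no choice of constants can repair: take $\widehat P = (\tfrac12,\tfrac12)$, $V^\star = (0,H)$, $V^k = (H,H)$; then $\mathbb V(\widehat P, V^k) = 0$ while $\mathbb V(\widehat P, V^\star) = H^2/4$. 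The bonus then reduces to $c_3 H\log(1/\delta')/N$, whereas the empirical-Bernstein deviation is of order $H\sqrt{\log(1/\delta')/N}$, so the inequality fails for $N \gg \log(1/\delta')$.

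The MVP monotonicity lemma does \emph{not} assert that the variance bonus alone is monotone in $v$; it asserts that the \emph{sum}
\[
f(p,v,n) \;=\; \langle p, v\rangle \;+\; \max\Big\{\tfrac{20}{3}\sqrt{\mathbb V(p,v)\log(1/\delta')/n},\; \tfrac{400}{9}\,H\log(1/\delta')/n\Big\}
\]
is nondecreasing entrywise in $v$ (this is exactly the function $f$ introduced in Appendix~\ref{sec:proof-lemma:opt}, where the derivative computation makes essential use of the $\langle p,v\rangle$ term). The paper's proof therefore keeps the linear piece and the variance-bonus piece together: it shows $Q_h^k(s,a) \geq \widehat r_h^k(s,a) + (\text{reward-concentration slack}) + f(\widehat P^k_{s,a,h}, V_{h+1}^k, N_h^k(s,a))$, applies the monotonicity of $f$ to replace $V_{h+1}^k$ by $V_{h+1}^\star$, and only then uses empirical Bernstein on the fixed vector $V_{h+1}^\star$ to conclude $f(\widehat P, V_{h+1}^\star, n) \geq \langle P_{s,a,h}, V_{h+1}^\star\rangle$. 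By throwing away $\langle \widehat P, V_{h+1}^k - V_{h+1}^\star\rangle$ at the outset, you lose precisely the term that compensates for a possible drop in variance.
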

 \begin{proof} See Appendix~\ref{sec:proof-lemma:opt}. \end{proof}


\paragraph{Step 2: regret decomposition.}
In view of the optimism shown in Lemma~\ref{lemma:opt}, 
the regret can be upper bounded by 
\begin{align}
\mathsf{Regret}(K) & =\sum_{k=1}^{K}\big(V_{1}^{\star}(s_{1}^{k})-V_{1}^{\pi^{k}}(s_{1}^{k})\big)\leq\sum_{k=1}^{K}\big(V_{1}^{k}(s_{1}^{k})-V_{1}^{\pi^{k}}(s_{1}^{k})\big)
	\label{eq:regret-UB1}
\end{align}
with probability at least $1-4SAHK\delta'$. 
In order to control the right-hand side of \eqref{eq:regret-UB1}, 
we first make note of the following upper bound on $V_{1}^{k}(s_{1}^{k})$. 
\begin{lemma}\label{lemma:decomdetail}
For every $1\leq k\leq K$, one has
$$
	V_1^k(s_1^k) \leq \sum_{h=1}^{H} \left( \big\langle \widehat{P}^k_{s_h^k,a_h^k,h} - P_{s_h^k,a_h^k,h}, V_{h+1}^k \big\rangle + b_h^k(s_h^k,a_h^k) + \widehat{r}_h^k(s_h^k,a_h^k) + \big\langle P_{s_h^k,a_h^k,h}-e_{s_{h+1}^k}, V_{h+1}^k \big\rangle \right) .
$$
\end{lemma}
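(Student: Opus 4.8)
The plan is to establish Lemma~\ref{lemma:decomdetail} by a straightforward backward induction on $h$, unrolling the definition of the optimistic update \eqref{eq:updateq} along the actual trajectory $\{(s_h^k,a_h^k)\}_{h=1}^H$ visited in episode $k$. First I would introduce the shorthand $\delta_h^k \coloneqq V_h^k(s_h^k)$, and observe that, by the greedy choice $a_h^k = \arg\max_a Q_h^k(s_h^k,a)$ together with $V_h^k(s) = \max_a Q_h^k(s,a)$, we have $V_h^k(s_h^k) = Q_h^k(s_h^k,a_h^k)$ for the running iterates used to collect data in episode $k$. (One should be slightly careful that the $Q_h^k,V_h^k$ appearing here are the iterates frozen at the start of episode $k$, which is exactly how $a_h^k$ was chosen, so this identity is exact.)

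Next I would apply the $Q$-update \eqref{eq:updateq}, which gives the bound
\begin{equation*}
	Q_h^k(s_h^k,a_h^k) \le \widehat{r}_h^k(s_h^k,a_h^k) + \big\langle \widehat{P}^k_{s_h^k,a_h^k,h}, V_{h+1}^k \big\rangle + b_h^k(s_h^k,a_h^k),
\end{equation*}
where we simply drop the $\min\{\cdot,H\}$ (the displayed quantity is the first argument of the min, hence an upper bound on the min). Then I would insert the telescoping identity $\big\langle \widehat{P}^k_{s_h^k,a_h^k,h}, V_{h+1}^k \big\rangle = \big\langle \widehat{P}^k_{s_h^k,a_h^k,h} - P_{s_h^k,a_h^k,h}, V_{h+1}^k \big\rangle + \big\langle P_{s_h^k,a_h^k,h} - e_{s_{h+1}^k}, V_{h+1}^k \big\rangle + V_{h+1}^k(s_{h+1}^k)$, noting that $\langle e_{s_{h+1}^k}, V_{h+1}^k\rangle = V_{h+1}^k(s_{h+1}^k) = \delta_{h+1}^k$. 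This yields the one-step recursion
\begin{equation*}
	\delta_h^k \le \delta_{h+1}^k + \big\langle \widehat{P}^k_{s_h^k,a_h^k,h} - P_{s_h^k,a_h^k,h}, V_{h+1}^k \big\rangle + b_h^k(s_h^k,a_h^k) + \widehat{r}_h^k(s_h^k,a_h^k) + \big\langle P_{s_h^k,a_h^k,h} - e_{s_{h+1}^k}, V_{h+1}^k \big\rangle.
\end{equation*}
Iterating this from $h=1$ down to $h=H$ and using the boundary condition $V_{H+1}^k \equiv 0$ (so $\delta_{H+1}^k = 0$) gives exactly the claimed inequality after summing over $h \in [H]$.

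Honestly, this lemma is essentially bookkeeping; there is no deep obstacle. The one point that deserves care is the consistent indexing: the quantities $\widehat{P}^k, \widehat{r}^k, \widehat{\sigma}^k, b^k, V^k, Q^k$ all refer to the iterates in force at the start of episode $k$ (as defined at the beginning of Section~\ref{sec:tec}), and since the empirical model is updated only between episodes (never mid-episode affecting the policy), the identity $V_h^k(s_h^k)=Q_h^k(s_h^k,a_h^k)$ and the update inequality hold with this common superscript $k$. Also worth a line: the bound uses $V_{h+1}^k \ge 0$ nowhere directly, but one should note $V_{h+1}^k(s_{h+1}^k)$ appearing on the right with a ``$+$'' sign is legitimate since the recursion is an inequality in the correct direction. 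I would write the proof as the short induction above, roughly half a page, deferring the probabilistic content entirely to the later steps of the Theorem~\ref{thm1} proof.
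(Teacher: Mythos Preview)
Your proof is correct and matches the paper's own argument essentially step for step: both use $V_h^k(s_h^k)=Q_h^k(s_h^k,a_h^k)$, drop the $\min\{\cdot,H\}$ in \eqref{eq:updateq}, insert the same telescoping decomposition of $\langle \widehat{P}^k_{s_h^k,a_h^k,h},V_{h+1}^k\rangle$, and unroll the resulting recursion using $V_{H+1}^k=0$. Your remarks on the consistent ``start-of-episode-$k$'' indexing are accurate and align with how the paper defines these quantities.
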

\begin{proof}[Proof of Lemma~\ref{lemma:decomdetail}]
From the construction of $V_h^k$ and $Q_h^k$, it is seen that, for each $1\leq h\leq H$,  
\begin{align}
	V_h^k(s_h^k) & = Q_h^k(s_h^k, a_h^k) \leq \widehat{r}_h^k(s_h^k,a_h^k) + \widehat{P}^k_{s_h^k,a_h^k,h}V_{h+1}^k + b_h^k(s_h^k,a_h^k) \nonumber
\\ &  =  \big\langle \widehat{P}^k_{s_h^k,a_h^k,h} - P_{s_h^k,a_h^k,h}, V_{h+1}^k \big\rangle + b_h^k(s_h^k,a_h^k) + \widehat{r}_h^k(s_h^k,a_h^k) + \big\langle P_{s_h^k,a_h^k,h}-e_{s_{h+1}^k}, V_{h+1}^k \big\rangle + V_{h+1}^k(s_{h+1}^k).\nonumber
\end{align}
Applying this relation recursively over $1\leq h\leq H$ gives 
\begin{align*}
 & V_1^k(s_1^k) \nonumber
 \\ & \leq  \sum_{h=1}^{H} \left( \big\langle \widehat{P}^k_{s_h^k,a_h^k,h} - P_{s_h^k,a_h^k,h}, V_{h+1}^k \big\rangle + b_h^k(s_h^k,a_h^k) 
	+ \widehat{r}_h^k(s_h^k,a_h^k) + \big\langle P_{s_h^k,a_h^k,h}-e_{s_{h+1}^k}, V_{h+1}^k \big\rangle \right) + V_{H+1}^k(s_{H+1}^k),
\end{align*}
which combined with $V_{H+1}^k=0$ concludes the proof. 
\end{proof}

Combine Lemma~\ref{lemma:decomdetail} with \eqref{eq:regret-UB1} to show that, with probability at least $1-4SAHK\delta'$, 
\begin{align}
\mathsf{Regret}(K) & \leq\underset{\eqqcolon\,T_{1}}{\underbrace{\sum_{k=1}^{K}\sum_{h=1}^{H}\big\langle\widehat{P}_{s_{h}^{k},a_{h}^{k},h}^{k}-P_{s_{h}^{k},a_{h}^{k},h},V_{h+1}^{k}\big\rangle}}+\underset{\eqqcolon\,T_{2}}{\underbrace{\sum_{k=1}^{K}\sum_{h=1}^{H}b_{h}^{k}(s_{h}^{k},a_{h}^{k})}}\nonumber\\
 & \quad+\underset{\eqqcolon\,T_{3}}{\underbrace{\sum_{k=1}^{K}\sum_{h=1}^{H}\big\langle P_{s_{h}^{k},a_{h}^{k},h}-e_{s_{h+1}^{k}},V_{h+1}^{k}\big\rangle}}+\underset{\eqqcolon\,T_{4}}{\underbrace{\sum_{k=1}^{K}\left(\sum_{h=1}^{H}\widehat{r}_{h}^{k}(s_{h}^{k},a_{h}^{k})-V_{1}^{\pi^{k}}(s_{1}^{k})\right)}}, 
	\label{eq:decomposition}
\end{align}
leaving us with four terms to control. 
In particular, $T_1$ has already been upper bounded in Section~\ref{sec:tec1}, and hence we shall describe how to bound $T_2,\ldots,T_4$ in the sequel.




\paragraph{Step 3.1: bounding the terms $T_2,T_3$ and $T_4$.}
In this section, we seek to bound the terms  $T_2,T_3$ and $T_4$ defined in the regret decomposition \eqref{eq:decomposition}.  
To do so, we find it helpful to first introduce the following quantities that capture some sort of aggregate variances: 
\begin{subequations}
\label{eq:defn-T56-proof}
\begin{align}
	T_5 &\coloneqq \sum_{k=1}^K\sum_{h=1}^H\mathbb{V}\big(\widehat{P}^k_{s_h^k,a_h^k,h},V_{h+1}^k \big),
	\label{eq:defn-T5-proof} \\
	T_6 &\coloneqq \sum_{k=1}^K \sum_{h=1}^H\mathbb{V} \big(P_{s_h^k,a_h^k,h},V_{h+1}^k \big) ,
	\label{eq:defn-T6-proof}
\end{align}
\end{subequations}
with $T_5$ denoting certain empirical variance and $T_6$ the true variance. 
With these quantities in place, we claim that the following bounds hold true. 
\begin{lemma}\label{lem:bound-T234}
With probability exceeding $1-15SAH^2K^2\delta'$, one has 
%
\begin{subequations}
\label{eq:boundt234}
\begin{align}
	T_2 &\leq 61\sqrt{2SAH(\log_2 K)\Big(\log\frac{1}{\delta'}\Big)T_5} +  8\sqrt{SAH^3K(\log_2 K)\log\frac{1}{\delta'}}+
	151 SAH^2(\log_2K)\log\frac{1}{\delta'},
	\label{eq:boundt2} \\
	|T_3|  &\leq \sqrt{ 8 T_6 \log \frac{1}{\delta'}  } + 3H\log \frac{1}{\delta'}, \label{eq:boundt3} \\
	|T_4| &\leq 6\sqrt{2SAH^3K(\log_2K)\log \frac{1}{\delta'} } +  55SAH^2(\log_2 K)\log \frac{1}{\delta'}.\label{eq:bdt_4f}
\end{align}
\end{subequations}
%
%
%
%
%
%
%
%
\end{lemma}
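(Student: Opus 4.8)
The plan is to bound $T_2$, $T_3$, and $T_4$ separately, with $T_3$ being an immediate application of Freedman's inequality and $T_2$, $T_4$ requiring more care. For $T_3 = \sum_{k,h}\langle P_{s_h^k,a_h^k,h}-e_{s_{h+1}^k},V_{h+1}^k\rangle$, I would note that this is a martingale sum with respect to the natural filtration of the online process (since $V_{h+1}^k$ is measurable with respect to data before step $h$ of episode $k$, and $s_{h+1}^k\sim P_{s_h^k,a_h^k,h}$ given that data). Each increment is bounded by $H$ in magnitude, and the predictable quadratic variation is exactly $\sum_{k,h}\mathbb{V}(P_{s_h^k,a_h^k,h},V_{h+1}^k)=T_6$. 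Freedman's inequality then yields $|T_3|\le\sqrt{8T_6\log\frac{1}{\delta'}}+3H\log\frac{1}{\delta'}$ with probability $1-O(\delta')$ (after a union bound over a dyadic grid of candidate values for $T_6$, which costs only logarithmic factors absorbed into the constant).

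For $T_4 = \sum_k\big(\sum_h \widehat r_h^k(s_h^k,a_h^k)-V_1^{\pi^k}(s_1^k)\big)$, the idea is to compare the empirical rewards collected along the executed trajectory to the true value of the executed policy. I would first replace $\widehat r_h^k$ by the true mean reward $r_h$ up to a concentration error: since each $\widehat r_h^k(s,a)$ is an empirical average of at most $N_h^k(s,a)$ i.i.d.\ rewards bounded appropriately, Bernstein/Freedman gives $\sum_{k,h}|\widehat r_h^k(s_h^k,a_h^k)-r_h(s_h^k,a_h^k)|\lesssim\sqrt{SAH^3K\log_2K\log\frac1{\delta'}}+SAH^2\log_2K\log\frac1{\delta'}$ after summing over the at most $\log_2 K$ batches per $(s,a,h)$ and using Assumption~\ref{assum1} to control the total reward scale. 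Then $\sum_h r_h(s_h^k,a_h^k)-V_1^{\pi^k}(s_1^k)$ is again a martingale difference sequence (the realized reward-to-go minus its expectation under $\pi^k$), handled by Freedman with quadratic variation $O(H^2)$ per episode, contributing $O(\sqrt{H^3K\log\frac1{\delta'}})$. Collecting terms gives \eqref{eq:bdt_4f}. The handling of the stochastic-reward empirical variance $\widehat\sigma_h^k$ parallels that of $\widehat P_h^k$ but is one-dimensional and hence much easier.

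The term $T_2=\sum_{k,h}b_h^k(s_h^k,a_h^k)$ is the main workhorse. Here I would plug in the explicit bonus \eqref{eq:update1}: $b_h^k(s,a)$ has three pieces, scaling like $\sqrt{\mathbb{V}(\widehat P_{s,a,h}^k,V_{h+1}^k)\log\frac1{\delta'}/N_h^k}$, $\sqrt{(\widehat\sigma_h^k-(\widehat r_h^k)^2)\log\frac1{\delta'}/N_h^k}$, and $H\log\frac1{\delta'}/N_h^k$. Summing the third piece over all $(k,h)$: for each fixed $(s,a,h)$, visitation counts fall into $\le\log_2K$ doubling batches, and within each batch of size $N$ the count $N_h^k$ equals roughly $N$ for $N$ episodes, so $\sum_k \frac{1}{N_h^k}\lesssim\log_2K$ per $(s,a,h)$ (a standard "harmonic-over-doubling-batches" estimate), giving the $SAH^2(\log_2K)\log\frac1{\delta'}$ term. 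For the first (variance) piece, Cauchy--Schwarz over the $(k,h)$ indices gives $\sum_{k,h}\sqrt{\mathbb{V}(\widehat P^k,V_{h+1}^k)\log\frac1{\delta'}/N_h^k}\le\sqrt{(\sum_{k,h}\mathbb{V}(\widehat P^k_{s_h^k,a_h^k,h},V_{h+1}^k))\cdot(\sum_{k,h}\frac{\log\frac1{\delta'}}{N_h^k})}\lesssim\sqrt{T_5\cdot SAH(\log_2K)\log\frac1{\delta'}}$, using $\sum_{k,h}\frac{1}{N_h^k}\lesssim SAH\log_2K$ again; this produces the $\sqrt{SAH(\log_2K)(\log\frac1{\delta'})T_5}$ term. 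The second (reward-variance) piece is bounded the same way, except that $\sum_{k,h}(\widehat\sigma_h^k-(\widehat r_h^k)^2)\lesssim H\sum_{k,h}\widehat r_h^k\lesssim HK\cdot H$ by Assumption~\ref{assum1} (the per-episode reward sum is at most $H$), so Cauchy--Schwarz yields the $\sqrt{SAH^3K(\log_2K)\log\frac1{\delta'}}$ term. Adding the three contributions gives \eqref{eq:boundt2}.

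The hard part will be making the "harmonic sum over doubling batches" bookkeeping fully rigorous under the modified algorithm, where only the second half of each batch is used — one must be careful that $N_h^k$ (the size of the \emph{current} batch, per the definition in Section~\ref{sec:tec}) is always within a constant factor of the true running count $N_h^{k,\mathsf{all}}$, so that $\sum_k 1/N_h^k$ over the episodes in an epoch is $O(1)$ and the total over all epochs is $O(\log_2K)$; and that the variance proxy $\mathbb{V}(\widehat P_{s,a,h}^k,V_{h+1}^k)$ appearing in the bonus can legitimately be summed into $T_5$ without circularity (it is, by definition of $T_5$ in \eqref{eq:defn-T5-proof}, so this is just a matter of matching notation). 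The concentration of $\widehat r_h^k$ and $\widehat\sigma_h^k$ and the final union bound over $O(SAH^2K^2)$ events (accounting for all $(s,a,h)$, all batches, and the dyadic grids) is routine and yields the stated failure probability $15SAH^2K^2\delta'$.
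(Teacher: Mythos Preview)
Your proposal is correct and follows essentially the same approach as the paper: Freedman for $T_3$, the decomposition $\widehat r_h^k\to r_h\to V_1^{\pi^k}$ plus Freedman for $T_4$, and Cauchy--Schwarz combined with the doubling harmonic bound $\sum_{k,h}1/N_h^k(s_h^k,a_h^k)\le 2SAH\log_2K$ for each of the three bonus pieces in $T_2$. The only small slippage is the step $\sum_{k,h}\widehat r_h^k(s_h^k,a_h^k)\lesssim KH$, which does not follow directly from Assumption~\ref{assum1} (since $\widehat r_h^k$ is a batch empirical mean, not the realized reward along episode $k$); the paper handles this via the concentration $\sum_{k,h}\widehat r_h^k\le 2\sum_{k,h}r_h(s_h^k,a_h^k)+O(SAH^2\log_2K\log\tfrac1{\delta'})$ (its Lemma~\ref{lemma:bdempr}), which is exactly the Bernstein-type argument you already invoke for $T_4$ and would carry over without difficulty.
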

\begin{proof} See Appendix~\ref{app:pflem:bound-T234}. \end{proof}

\paragraph{Step 3.2: bounding the aggregate variances $T_5$ and $T_6$.} 
The previous bounds on $T_2$ and $T_3$ stated in Lemma~\ref{lem:bound-T234} depend respectively on the aggregate variance $T_5$ and $T_6$  (cf.~\eqref{eq:defn-T5-proof} and \eqref{eq:defn-T6-proof}), 
which we would like to control now. 
By introducing the following quantities: 
\begin{subequations}
\label{eq:defn-T789-proof}
\begin{align}
T_{7} & \coloneqq\sum_{k=1}^{K}\sum_{h=1}^{H}\Big\langle\widehat{P}_{s_{h}^{k},a_{h}^{k},h}^{k}-P_{s_{h}^{k},a_{h}^{k},h},\big(V_{h+1}^{k}\big)^{2}\Big\rangle,\label{eq:defn-T7-proof}\\
T_{8} & \coloneqq\sum_{k=1}^{K}\sum_{h=1}^{H}\Big\langle P_{s_{h}^{k},a_{h}^{k},h}-e_{s_{h+1}^{k}},\big(V_{h+1}^{k}\big)^{2}\Big\rangle,\label{eq:defn-T8-proof}\\
T_9 & \coloneqq \sum_{k=1}^{K}\sum_{h=1}^{H}\max\Big\{\Big\langle\widehat{P}_{s_{h}^{k},a_{h}^{k},h}^{k}-P_{s_{h}^{k},a_{h}^{k},h},V_{h+1}^{k}\Big\rangle,0\Big\}, 
\label{eq:defn-T9-proof}
\end{align}
\end{subequations}
we can upper bound $T_5$ and $T_6$ through the following lemma.  
\begin{lemma}
\label{lem:bound-T56}
With probability at least $1-4SAHK\delta'$,
\begin{subequations}
\label{eq:boundt56}
\begin{align}
T_{5} & \leq T_{7}+T_8+ 2HT_{2}+6KH^{2},
\label{eq:boundt5} \\
T_{6} & \leq 2HT_{2}+6KH^{2}
	+\sqrt{32H^{2}T_{6}\log\frac{1}{\delta'}}+3H^{2}\log\frac{1}{\delta'}
	+2HT_{9}, 
\label{eq:boundt6} \\
|T_{8}|&
	\leq \sqrt{32H^{2}T_{6}\log\frac{1}{\delta'}}+3H^{2}\log\frac{1}{\delta'}	 . 
	\label{eq:boundt8}
\end{align}
\end{subequations}
\end{lemma}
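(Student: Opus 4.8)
The plan is to establish the three bounds \eqref{eq:boundt5}, \eqref{eq:boundt6}, and \eqref{eq:boundt8} by relating empirical and true variances through telescoping identities, leveraging the concentration results already in hand. First I would derive the key variance identity: for any probability vector $p$ and vector $v$, one has $\mathbb{V}(p,v) = \langle p, v^2\rangle - (\langle p,v\rangle)^2$, so that summing $\mathbb{V}(\widehat{P}^k_{s_h^k,a_h^k,h}, V_{h+1}^k)$ over $(k,h)$ produces two pieces: a term $\sum_{k,h}\langle \widehat{P}^k_{s_h^k,a_h^k,h}, (V_{h+1}^k)^2\rangle$ and a subtracted term $\sum_{k,h}(\langle \widehat{P}^k_{s_h^k,a_h^k,h}, V_{h+1}^k\rangle)^2$. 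The idea for \eqref{eq:boundt5} is to rewrite $\langle \widehat{P}^k_{s_h^k,a_h^k,h}, (V_{h+1}^k)^2\rangle = \langle P_{s_h^k,a_h^k,h}, (V_{h+1}^k)^2\rangle + \langle \widehat{P}^k_{s_h^k,a_h^k,h} - P_{s_h^k,a_h^k,h}, (V_{h+1}^k)^2\rangle$, which contributes $T_7$, and similarly expand $\langle \widehat{P}^k_{s_h^k,a_h^k,h}, V_{h+1}^k\rangle^2$; then use the telescoping structure of $V_h^k(s_h^k)$ versus $V_{h+1}^k(s_{h+1}^k)$ along the trajectory (from Lemma~\ref{lemma:decomdetail} and the definition of $V_h^k$) to convert the difference of squared value terms into a telescoping sum plus bonus terms plus the $e_{s_{h+1}^k}$-vs-$P$ martingale pieces, which is where the $2HT_2$, $6KH^2$, and $T_8$ contributions arise (using $0\le V_h^k\le H$ to bound cross terms by $2H$ times first-order differences).

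For \eqref{eq:boundt6}, I would relate $T_6$ (true variance) to $T_5$ (empirical variance) plus a correction. The correction is $\sum_{k,h}\big(\mathbb{V}(P_{s_h^k,a_h^k,h},V_{h+1}^k) - \mathbb{V}(\widehat{P}^k_{s_h^k,a_h^k,h},V_{h+1}^k)\big)$, which expands (again via the variance identity) into a term of the form $\sum_{k,h}\langle P - \widehat{P}^k, (V_{h+1}^k)^2\rangle$ bounded using the second conclusion of Lemma~\ref{lemma:decouple} (giving a $\sqrt{\cdots T_6\cdots}$ term and a lower-order term), plus a term involving $\big(\langle\widehat{P}^k,V_{h+1}^k\rangle\big)^2 - \big(\langle P,V_{h+1}^k\rangle\big)^2 = \langle \widehat{P}^k - P, V_{h+1}^k\rangle\cdot\langle \widehat{P}^k + P, V_{h+1}^k\rangle$, whose second factor is at most $2H$, so it is bounded by $2H\cdot T_9$ after taking positive parts. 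Combining with \eqref{eq:boundt5}'s bound on $T_5$ (and substituting $T_7+T_8$ appropriately, bounding $|T_8|$ by \eqref{eq:boundt8}) yields \eqref{eq:boundt6}; I would be a little careful about whether the cleanest route is $T_6 \le T_5 + (\text{correction})$ directly or going through $T_5$ first, and pick whichever makes the $\sqrt{32H^2 T_6\log\frac1{\delta'}}$ self-bounding term appear naturally so it can later be absorbed by an AM-GM / quadratic-formula step.

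The bound \eqref{eq:boundt8} on $|T_8|$ is the most self-contained: $T_8 = \sum_{k,h}\langle P_{s_h^k,a_h^k,h} - e_{s_{h+1}^k}, (V_{h+1}^k)^2\rangle$ is a martingale difference sum in the trajectory filtration, since $(V_{h+1}^k)^2$ is measurable with respect to data preceding the draw of $s_{h+1}^k$ and $\mathbb{E}[e_{s_{h+1}^k}] = P_{s_h^k,a_h^k,h}$. Applying Freedman's inequality with the per-step conditional variance bounded by $\mathbb{V}(P_{s_h^k,a_h^k,h}, (V_{h+1}^k)^2) \le H^2\,\mathbb{V}(P_{s_h^k,a_h^k,h}, V_{h+1}^k)$ (using $\|V_{h+1}^k\|_\infty \le H$ so that $(V_{h+1}^k)^2$ has range scaled by $2H$ relative to $V_{h+1}^k$) and with the almost-sure increment bound $H^2$, the total conditional variance is at most $H^2 T_6$, giving the stated $\sqrt{32H^2 T_6\log\frac1{\delta'}} + 3H^2\log\frac1{\delta'}$ after the usual constant bookkeeping. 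I expect the main obstacle to be the telescoping bookkeeping in \eqref{eq:boundt5}: correctly tracking which first-order quantities ($\langle\widehat{P}^k-P,V_{h+1}^k\rangle$, $b_h^k$, $\langle P-e_{s_{h+1}^k},V_{h+1}^k\rangle$) reappear multiplied by $2H$ when one telescopes the squared-value differences along a trajectory, and making sure nothing is double-counted or sign-flipped — the Freedman applications are routine by comparison. All three bounds hold on the intersection of the good events from Lemma~\ref{lemma:decouple} and two fresh Freedman events, so a union bound over $O(SAHK)$ such events gives the probability $1-4SAHK\delta'$.
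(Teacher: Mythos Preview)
Your treatment of $T_5$ and $|T_8|$ essentially matches the paper: for $T_5$ the paper does exactly the variance expansion plus telescoping you describe, bounding $(V_h^k(s_h^k))^2 - (\langle\widehat{P}^k,V_{h+1}^k\rangle)^2 \le 2H(b_h^k + \widehat{r}_h^k)$ via the update rule and then using $\sum_{k,h}\widehat{r}_h^k \le 3KH$ (from \eqref{eq:sum-empirical-r-UB}) for the $6KH^2$; for $|T_8|$ it applies Freedman together with Lemma~\ref{lemma:sqv} (which gives the factor $4H^2$, hence $2\sqrt{2}\cdot\sqrt{4H^2T_6\cdots}=\sqrt{32H^2T_6\cdots}$) just as you outline.

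For $T_6$ your route differs from the paper's, and the plan as stated has a gap. You write $T_6 = T_5 + \text{(correction)}$ with the correction containing $\sum_{k,h}\langle P-\widehat{P}^k,(V_{h+1}^k)^2\rangle=-T_7$, and propose bounding this via the second conclusion of Lemma~\ref{lemma:decouple}. But that lemma is one-sided (it upper-bounds $T_7$, not $-T_7$), and even a two-sided version would carry the extra $SAH\log_2^2K$ factor from the profile union bound, which does not appear in \eqref{eq:boundt6}. The clean fix within your own framework is to observe that no separate bound on $-T_7$ is needed: when you substitute \eqref{eq:boundt5} for $T_5$, the $+T_7$ there cancels exactly against the $-T_7$ in the correction, leaving $T_6 \le T_8 + 2HT_2 + 6KH^2 + 2HT_9$, after which \eqref{eq:boundt8} gives \eqref{eq:boundt6}. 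The paper sidesteps this detour by repeating the $T_5$-style telescoping directly for $T_6$ with $P$ in place of $\widehat{P}^k$: one gets $T_6 \le T_8 + 2H\sum_{k,h}\max\{V_h^k(s_h^k)-\langle P,V_{h+1}^k\rangle,0\}$, then inserts $\langle P,V_{h+1}^k\rangle=\langle\widehat{P}^k,V_{h+1}^k\rangle-\langle\widehat{P}^k-P,V_{h+1}^k\rangle$ inside the max to extract $2HT_9$ alongside the $2HT_2+6KH^2$ terms. Also note that Lemma~\ref{lemma:decouple} is \emph{not} invoked anywhere in this lemma's proof --- the good events are just the Freedman event for $T_8$ and the event behind \eqref{eq:sum-empirical-r-UB}.
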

\begin{proof} See Appendix~\ref{sec:pflem:bound-T56}. \end{proof}

\paragraph{Step 3.3: bounding the terms $T_1$, $T_7$ and $T_9$.} 
Taking a look at the above bounds on $T_2,\ldots,T_6$, 
we see that one still needs to deal with the terms $T_1$, $T_7$ and $T_9$ (see \eqref{eq:decomposition}, \eqref{eq:defn-T7-proof} and \eqref{eq:defn-T9-proof}, respectively). 
As it turns out, these quantities have already been bounded in Section~\ref{sec:tec}. 
Specifically, Lemma~\ref{lemma:decouple} tells us that: with probability at least $1-\delta'$, 
%
%
\begin{subequations}
\label{eq:boundt179}
\begin{align}
	T_1\leq T_9&\leq \sqrt{ B SAH \sum_{k=1}^K \sum_{h=1}^H \mathbb{V}(P_{s_h^k,a_h^k,h},V_{h+1}^k)}+BSAH^2 = \sqrt{BSAH T_6}+BSAH^2,\label{eq:boundt1}
 \\ 
	T_7 &\leq H \sqrt{ BSAH  \sum_{k=1}^K \sum_{h=1}^H \mathbb{V}(P_{s_h^k,a_h^k,h}, V_{h+1}^k) }    + BSAH^3 = H\sqrt{BSAH T_6}+BSAH^3 ,\label{eq:boundt7}
\end{align}
\end{subequations}
where we recall that $B=4000(\log_2K)^3\log(3SAH)\log\frac{1}{\delta'} $.

%
%

%
%

\paragraph{Step 4: putting all pieces together.}
The previous bounds \eqref{eq:boundt234}, \eqref{eq:boundt56} and \eqref{eq:boundt179} indicate that: 
with probability at least $1-100SAH^2K^2\delta'$, one has
\begin{subequations}
\label{eq:all-bounds-summary}
\begin{align}
	T_2 &\leq  \sqrt{B SAHT_5} +  \sqrt{BSAH^3K}+ BSAH^2,\label{eq:obt2}
	\\  T_3 &\leq \sqrt{BT_6}+HB  ,\label{eq:obt3}
	\\  T_4 &\leq \sqrt{ BSAH^3K}+BSAH^2,\label{eq:obt4}
	\\  T_5 &\leq T_7 + T_8 + 2H T_2 + 6KH^2 ,\label{eq:obt5}
	\\  T_6 &\leq \sqrt{B H^2T_6} + 2HT_2 + 2HT_9 + BH^2 + 6KH^2,\label{eq:obt6}
	\\  T_8 &\leq \sqrt{BH^2T_6 } + BH^2 ,\label{eq:obt8}
	\\  T_1 &\leq \sqrt{BSAHT_6} + BSAH^2,\label{eq:obt1}
	\\  T_7 &\leq H\sqrt{BSAHT_6} + BSAH^3,\label{eq:obt7}
	\\  T_9 &\leq \sqrt{BSAHT_6}+BSAH^2,\label{eq:obt9}
\end{align}
\end{subequations}
where we again use $B=4000(\log_2K)^3\log(3SAH)\log\frac{1}{\delta'} $.

 To solve the inequalities \eqref{eq:all-bounds-summary}, we resort to the elementary AM-GM inequality: if $a\leq \sqrt{bc}+d$ for some $b,c\geq 0$, then it follows that $a \leq \epsilon b + \frac{1}{2\epsilon}c +d$ for any $\epsilon>0$. This basic inequality combined with  \eqref{eq:all-bounds-summary} gives
 \begin{align}
	 HT_2 &\leq \epsilon T_5 + \left(\frac{1}{2\epsilon}+1\right) BSAH^3+ \frac{3}{2}BSAH^3 + \frac{1}{2} KH^2 ,\nonumber
	 \\  T_6 &\leq \epsilon T_6 + 2HT_2 + 2HT_9 + \left(1+\frac{1}{2\epsilon}\right) BH^2 + 6KH^2,\nonumber
	 \\  HT_9 &\leq \epsilon T_6 +\left( \frac{1}{2\epsilon}+1\right)BSAH^3,\nonumber
	 \\  T_8 &\leq \epsilon T_6 + \left( \frac{1}{2\epsilon}+1\right)BH^2,\nonumber
	 \\  T_7 &\leq  \epsilon T_6 + \left(\frac{1}{2\epsilon} +1\right)BSAH^3,\nonumber
 \end{align}
which in turn result in
\begin{align}
	 T_5 &\leq T_7+T_8 + 2HT_2 + 6KH^2\leq 2\epsilon T_5+2\epsilon T_6 +  \left( \frac{1}{\epsilon}+2\right) BSAH^3+6KH^2;\nonumber
\\  T_{6} & \leq\epsilon T_{6}+2HT_{2}+2HT_{9}+\left(1+\frac{1}{2\epsilon}\right)BH^{2}+6KH^{2} 
	\leq3\epsilon T_{6}+2\epsilon T_{5}+\left(\frac{3}{\epsilon}+8\right)BSAH^{3}+7KH^{2}. \notag
\end{align}
By taking $\epsilon= 1/20$, we arrive at 
\begin{align}
	T_5+T_6\lesssim BSAH^3+KH^2 \asymp KH^2,
\end{align}
where the last relation holds due to our assumption $K\geq SAHB$ (cf.~\eqref{eq:assumption-K-proof}).  
Substituting this into \eqref{eq:all-bounds-summary}  yields
\begin{align}
	T_1 \lesssim \sqrt{BSAH^3K},
	\quad T_2 \lesssim \sqrt{BSAH^3K}, \quad  T_3 \lesssim \sqrt{BKH^2}
	\quad \text{and} \quad
	T_4 &\lesssim \sqrt{ BSAH^3K},
\end{align}
provided that $K\geq SAHB$. 
These bounds taken collectively with \eqref{eq:decomposition} readily give
\[
	\mathsf{Regret}(K) \lesssim \sqrt{ BSAH^3K} . 
\]

Combining the two scenarios (i.e., $K\geq BSAH$ and $K\leq BSAH$) reveals that with probability at least $1-100SAH^2K^2\delta'$, 
\[
	\mathsf{Regret}(K) \lesssim \min\big\{ \sqrt{ BSAH^3K} , HK \big\}
	\lesssim  
	\min\bigg\{ 
	\sqrt{ BSAH^3K \log^5 \frac{SAHK}{\delta'}} , HK \bigg\}.
\]
The proof of Theorem~\ref{thm1} is thus completed by recalling that $\delta' = \frac{\delta}{200SAH^2K^2}$.

\section{Extensions}
\label{sec:extensions}

In this section,  we develop more refined regret bounds for Algorithm~\ref{alg:main} in order to reflect the role of several problem-dependent quantities.  
Detailed proofs are postponed to Appendix~\ref{sec:appfirst} and Appendix~\ref{app:var}.


%
%


%
\subsection{Value-based regret bounds} 
Thus far, we have not yet introduced the crucial quantity $v^{\star}$ in Theorem~\ref{thm:first}, 
which we define now. 
When the initial states are drawn from $\mu$, we define $v^{\star}$ to be the weighted optimal value: 
\begin{equation}
	v^{\star} \coloneqq \mathbb{E}_{s\sim \mu}\big[ V_1^{\star}(s) \big]. 
	\label{eq:defn-vstar-formal}
\end{equation}
Encouragingly, 
the value-dependent regret bound we develop in Theorem~\ref{thm:first} is still minimax-optimal, 
as asserted by the following lower bound. 
\begin{theorem}\label{thm:lb1} 
Consider any $p\in [0,1]$ and $K\geq 1$. 
For any learning algorithm, there exists an MDP with $S$ states, $A$ actions and horizon $H$
	obeying $v^\star\leq  Hp$ and 
	\begin{equation}
		\mathbb{E}\big[\mathsf{Regret}(K)\big]  \gtrsim \min\big\{ \sqrt{SAH^3Kp},\, KHp \big\} .
	\end{equation}
\end{theorem}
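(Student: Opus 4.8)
The plan is to reduce to the standard minimax lower bound construction for episodic RL by prepending a simple ``gate'' gadget that only lets the learner interact with the hard instance a $p$-fraction of the time. I would start from the hard-instance family $\{\mathcal{M}_v\}_v$ underlying the minimax lower bound of \citet{jin2018q,domingues2021episodic} --- with $v$ indexing which actions are ``good'' --- tuned to an episode budget $K''\asymp\max\{1,Kp\}$, so that over $S$ states, $A$ actions and horizon $H$ it satisfies $v^{\star}_{\mathcal{M}_v}=\Theta(H)\le H$ and, as an intermediate by-product of that proof, the per-episode PAC-type bound
\[
\mathbb{E}_{v}\big[V^{\star}_{\mathcal{M}_v}-V^{\pi^m}_{\mathcal{M}_v}\big]\;\gtrsim\;\min\Big\{H,\ \sqrt{\tfrac{SAH^{3}}{\max\{m,1\}}}\Big\}\qquad\text{for all }0\le m\le K''
\]
and every (adaptive) learner producing $\pi^m$ from $m$ episodes. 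I then define $\mathcal{M}_{p,v}$: add a fresh initial state $s_0$ (and set $\mu=\delta_{s_0}$) from which every action transitions, at step $1$, to the initial distribution of $\mathcal{M}_v$ with probability $p$ and to an absorbing zero-reward state with probability $1-p$ (the dynamics of $\mathcal{M}_v$ are shifted to steps $2,\dots,H$, and the state/horizon overhead of the gadget is absorbed into $\gtrsim$). Because a trajectory in $\mathcal{M}_{p,v}$ either dies immediately (contributing nothing, and $v$-independent) or enters a fresh copy of $\mathcal{M}_v$, we have the exact identity $V^{\star}_{\mathcal{M}_{p,v}}(s_0)-V^{\pi}_{\mathcal{M}_{p,v}}(s_0)=p\big(V^{\star}_{\mathcal{M}_v}-V^{\pi|_{\mathcal{M}_v}}_{\mathcal{M}_v}\big)$; in particular $v^{\star}=p\,v^{\star}_{\mathcal{M}_v}\le Hp$, as required.

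Next I would run the information-theoretic argument episode by episode on the random instance $\mathcal{M}_{p,v}$. At the start of episode $k$, let $N_{k-1}\sim\mathrm{Bin}(k-1,p)$ be the number of previous episodes that actually reached $\mathcal{M}_v$. Conditionally on $N_{k-1}=n$, the only data carrying information about $v$ are those $n$ sub-trajectories (the absorbing-state outcomes and the value $N_{k-1}$ itself are independent of $v$), so $\pi^k|_{\mathcal{M}_v}$ is, in distribution, the output of an $n$-episode learner on $\mathcal{M}_v$, and the displayed PAC bound gives $\mathbb{E}_v[V^{\star}_{\mathcal{M}_v}-V^{\pi^k|_{\mathcal{M}_v}}_{\mathcal{M}_v}\mid N_{k-1}=n]\gtrsim\min\{H,\sqrt{SAH^{3}/\max\{n,1\}}\}$ whenever $n\le K''$. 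Since the right-hand side is decreasing in $n$ and $\Pr[N_{k-1}\le K'']\ge\tfrac12$ (Markov's inequality, using $K''\gtrsim kp$ up to constants, and handling $\mathbb{E}N_{k-1}<1$ directly), it follows that $\mathbb{E}_v[V^{\star}_{\mathcal{M}_v}-V^{\pi^k|_{\mathcal{M}_v}}_{\mathcal{M}_v}]\gtrsim\min\{H,\sqrt{SAH^{3}/(kp)}\}$. Multiplying by $p$, summing over $k$, and splitting the resulting sum at the crossover index $k^{\star}\asymp SAH/p$,
\[
\mathbb{E}_v\big[\mathsf{Regret}_{\mathcal{M}_{p,v}}(K)\big]=\sum_{k=1}^{K}p\,\mathbb{E}_v\big[V^{\star}_{\mathcal{M}_v}-V^{\pi^k|_{\mathcal{M}_v}}_{\mathcal{M}_v}\big]\gtrsim\sum_{k=1}^{K}\min\Big\{Hp,\ \sqrt{\tfrac{SAH^{3}p}{k}}\Big\}\gtrsim\min\big\{\sqrt{SAH^{3}Kp},\,HKp\big\},
\]
where the last step is an elementary computation valid in all regimes of $K$ and $p$ (in particular $KHp$ when $Kp\lesssim SAH$ and $\sqrt{SAH^{3}Kp}$ otherwise). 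Finally, $\sup_v\mathbb{E}[\mathsf{Regret}_{\mathcal{M}_{p,v}}]\ge\mathbb{E}_v\mathbb{E}[\mathsf{Regret}_{\mathcal{M}_{p,v}}]$ selects a single deterministic MDP enjoying $v^{\star}\le Hp$ and the claimed regret lower bound.

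I expect the main obstacle to be the second step: turning the minimax lower bound of \citet{domingues2021episodic} into the per-episode PAC bound stated above, uniformly in $m$, and then justifying rigorously that conditioning on $N_{k-1}=n$ genuinely reduces $\pi^k|_{\mathcal{M}_v}$ to an $n$-episode learner on $\mathcal{M}_v$ (i.e.\ that the $n$ useful sub-trajectories form a sufficient statistic for $v$). The passage from $\mathbb{E}[f(N_{k-1})]$ to $f(\mathbb{E}N_{k-1})$ by monotonicity plus Markov, together with the degenerate cases $p\to0$ or $kp<1$, also needs care but is routine.
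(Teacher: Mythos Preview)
Your gate gadget $\mathcal{M}_{p,v}$ is exactly the paper's construction, and the identity $V^\star_{\mathcal{M}_{p,v}}-V^\pi_{\mathcal{M}_{p,v}}=p\big(V^\star_{\mathcal{M}_v}-V^{\pi|_{\mathcal{M}_v}}_{\mathcal{M}_v}\big)$ is correct. The gap is in the per-episode PAC bound you rely on. For a \emph{fixed} instance tuned to $K''$, the per-step gap is some $\epsilon=\epsilon(K'')$, and the maximum sub-optimality $\max_\pi\big(V^\star_{\mathcal{M}_v}-V^\pi_{\mathcal{M}_v}\big)$ is of order $\mathrm{poly}(H)\cdot\epsilon$, which is $\ll H$ once $K''\gg SAH$. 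Hence for small $m$ your claimed bound $\min\{H,\sqrt{SAH^3/m}\}=H$ exceeds what \emph{any} policy can incur on that instance, so the inequality is false. The $\sqrt{1/m}$ scaling can only arise by re-tuning $\epsilon$ with $m$, which is impossible once the instance is fixed. What you \emph{can} extract from the Le~Cam/KL step is the weaker, constant-in-$m$ bound $\mathbb{E}_v[V^\star-V^{\pi^m}]\gtrsim\min\{H,\sqrt{SAH^3/K''}\}$ for all $m\le cK''$; substituting this into your sum (with $K''\asymp Kp$) still gives $\min\{\sqrt{SAH^3Kp},KHp\}$, so the approach is salvageable, but not as written.

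The paper sidesteps per-episode bounds entirely. It observes that the regret on the gated MDP over $K$ episodes equals the regret of the induced learner on $\mathcal{M}_v$ over a random $X\sim\mathrm{Bin}(K,p)$ episodes, and then invokes the standard \emph{cumulative} minimax lower bound once: since cumulative regret is nondecreasing in the number of episodes and $\Pr[X\ge K']\ge\tfrac12$ with $K'\coloneqq Kp/10$ (when $Kp\ge10$), the bound $\Omega\big(\min\{\sqrt{SAH^3K'},K'H\}\big)$ follows directly; the case $Kp<10$ uses $\Pr[X\ge1]\gtrsim Kp$ together with the single-episode regret $\Omega(H)$. This is both shorter and avoids the extraction problem you yourself flagged as the main obstacle.
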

%
In fact, the construction of the hard instance (as in the proof of Theorem~\ref{thm:lb1}) is quite simple. 
Design a new branch with $0$ reward and set the probability of reaching this branch to be $1-p$. 
Also, with probability $p$, we direct the learner to a hard instance with regret $\Omega(\min\{\sqrt{SAH^3Kp},KpH\})$ and optimal value $H$. This guarantees that the optimal value obeys $v^\star \leq Hp$ and that the expected regret is at least 
$$\Omega\Big(\min\big\{ \sqrt{SAH^3Kp},KHp   \big\}\Big) \gtrsim \min\big\{ \sqrt{SAH^2Kv^\star},Kv^\star   \big\}.$$
See Appendix~\ref{app:lb} for more details.

%

%
\subsection{Cost-based regret bounds} 
Next, we turn to the cost-aware regret bound as in Theorem~\ref{thm:cost}. 
Note that all other results except for Theorem~\ref{thm:cost} (and a lower bound in this subsection) are about rewards as opposed to cost. 
In order to facilitate discussion, let us first formally formulate the cost-based scenarios.

Suppose that the reward distributions $\{R_{h,s,a}\}_{(s,a,h)}$ are replaced with the cost distributions $\{C_{h,s,a}\}_{(s,a,h)}$, 
where each distribution $C_{h,s,a}\in \Delta([0,H])$ has mean $c_h(s,a)$. 
In the $h$-th step of an episode, the learner pays an immediate cost $c_h\sim C_{h,s_h,a_h}$ instead of receiving an immediate reward $r_h$,  
and the objective of the learner is instead to minimize the total cost $\sum_{h=1}^H c_h$ (in an expected sense). 
The optimal cost quantity $c^{\star}$ is then defined as 
\begin{equation}
	c^{\star} \coloneqq \min_{\pi}\mathbb{E}_{\pi,s_1\sim \mu}\bigg[\sum_{h=1}^H c_h \bigg]. 
\label{eq:defn-cstar-formal}
\end{equation}

In this cost-based setting, we find it convenient to re-define the $Q$-function and value function as follows:
\begin{align}
	 Q_{h}^{\pi,\text{cost}}(s,a) &\coloneqq \mathbb{E}_{\pi}\left[\sum_{h'=h}^H c_{h'} \,\Big|\, (s_h,a_h)=(s,a)\right] ,
	 && \forall (s,a,h)\in \mathcal{S}\times \mathcal{A}\times [H], \nonumber
	\\ 
	V_h^{\pi,\text{cost}}(s) &\coloneqq \mathbb{E}_{\pi}\left[\sum_{h'=h}^H c_{h'} \,\Big|\, s_h = s\right],
	&& \forall (s,h)\in \mathcal{S}\times \times [H], 
	\nonumber
\end{align}
where we adopt different fonts to differentiate them from the original Q-function and value function. 
The optimal cost function is then define by 
$$
	Q_h^{\star,\text{cost}}(s,a) = \min_{\pi}Q_h^{\pi,\text{cost}}(s,a)
	\qquad \text{and} \qquad
	V_h^{\star,\text{cost}}(s)=  \min_{\pi}V_h^{\pi,\text{cost}}(s).
$$
Given the definitions above, 
we overload the notation $\mathsf{Regret}(K)$ to denote the regret for the cost-based scenario as 
$$
	\mathsf{Regret}(K) \coloneqq \sum_{k=1}^K \Big( V^{\pi^k,\text{cost}}_{1}(s_1^k) -  V_1^{\star,\text{cost}}(s_1^k) \Big).
$$
One can also simply regard the cost minimization problem as  reward maximization with negative rewards by choosing $r_h = -c_h$. 
This way allows us to apply Algorithm~\ref{alg:main} directly, except that \eqref{eq:updateq} is replaced by 
\begin{align}
Q_h(s,a) \,\leftarrow\, \max\left\{\min \left\{ \widehat{r}_h(s,a) + \widehat{P}_{s,a,h}V_{h+1}+b_h(s,a), \, 0 \right\} ,\,-H \right\}.
	\label{eq:updatecost}
\end{align}
%
%
%
%
%
Note that the proof of Theorem~\ref{thm:cost} closely resembles that of Theorem~\ref{thm:first}, 
which can be found in Appendix~\ref{app:cost}.
%

To confirm the tightness of Theorem~\ref{thm:cost}, we develop the following matching lower bound, 
which resorts to a similar hard instance as in the proof of Theorem~\ref{thm:lb1}. 
\begin{theorem}\label{corollary:costlb}
Consider any $p\in [0,1/4]$ and any $K\geq 1$.
For any algorithm, one can construct an MDP with $S$ states, $A$ actions and horizon $H$ 
	obeying $c^{\star} \asymp Hp$ and 
	\[
		\mathbb{E}\big[\mathsf{Regret}(K)\big] \gtrsim \min\big\{ \sqrt{SAH^3Kp}+SAH^2, \,KH(1-p) \big\} 
		\asymp \min\big\{ \sqrt{SAH^2Kc^{\star}}+SAH^2, \,KH \big\}.
	\]
\end{theorem}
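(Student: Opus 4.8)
\emph{Plan.} The plan is to build a single hard instance by gluing two gadgets at the root via a $p$-biased coin, exactly in the spirit of the construction behind Theorem~\ref{thm:lb1}. From the initial state (drawn from $\mu$), with probability $p$ the episode enters a \emph{core gadget} $\mathcal{M}_{\mathsf{core}}$, and with probability $1-p$ it enters an \emph{exploration gadget} $\mathcal{M}_{\mathsf{exp}}$; the two gadgets occupy disjoint pools of $\Theta(S)$ states (reusing the $A$ actions and $H$ steps). I would design $\mathcal{M}_{\mathsf{core}}$ to have optimal cost $\Theta(H)$ and $\mathcal{M}_{\mathsf{exp}}$ to have optimal cost $0$, so that $c^{\star}=p\cdot\Theta(H)=\Theta(Hp)$ and $H-c^{\star}\asymp H$ (here we use $p\le\tfrac14$). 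With high probability $\Theta(Kp)$ episodes land in $\mathcal{M}_{\mathsf{core}}$ and $\Theta(K(1-p))$ land in $\mathcal{M}_{\mathsf{exp}}$; splitting each episode's regret according to which gadget it visits, $\mathbb{E}[\mathsf{Regret}(K)]$ becomes the sum of a core contribution and an exploration contribution, which are lower bounded separately.

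\emph{Core contribution.} For $\mathcal{M}_{\mathsf{core}}$ I would transcribe the classical minimax lower-bound instance of \citet{jin2018q,domingues2021episodic} into the cost language --- e.g.\ a baseline per-step cost of $\tfrac12$, hidden ``good'' actions that shave off $\Theta(\varepsilon)$ of cost at $\Theta(H)$ relevant steps, ``bad'' actions costing up to $1$ per step --- calibrated so that the optimal cost is $\Theta(H)$ and the per-episode regret never exceeds $\Theta(H)$. The standard Bayes-risk/KL argument then shows that any algorithm, fed the $N$ episodes that reach the core, incurs expected core regret $\gtrsim\min\{\sqrt{SAH^{3}N},\,HN\}$; taking expectation over $N\sim\mathrm{Bin}(K,p)$ (a Chernoff bound gives $N\ge Kp/2$ with constant probability, while the regime $Kp=O(1)$ is handled by the trivial single-episode $\Omega(H)$ bound) yields a core contribution of order $\min\{\sqrt{SAH^{3}Kp},\,KpH\}$.

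\emph{Exploration contribution.} For $\mathcal{M}_{\mathsf{exp}}$ I would use: states $u_{1},\dots,u_{S'}$ with $S'\asymp S$, plus one absorbing cost-sink $\dagger$ (cost $1$ per remaining step). An episode entering $\mathcal{M}_{\mathsf{exp}}$ starts at a uniformly random $u_{i}$; at $(u_{i},h)$ a hidden good action $g_{h}(i)\sim\mathrm{Unif}([A])$ keeps the chain at $u_{i}$ at zero cost, while every other action moves to $\dagger$. Hence the optimal policy has cost $0$, whereas any algorithm must make $\Omega(A)$ mistaken guesses of $g_{h}(i)$ at each of the $S'H$ pairs $(u_{i},h)$ before it can rule them out (a bandit-type lower bound, using that $(u_{i},h)$ is reached only in an episode that starts at $u_{i}$ and has played the good action at steps $1,\dots,h-1$), and each mistake at step $h$ costs $H-h$. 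A layered counting over $h$ --- the pair $(u_{i},h)$ becomes ``explorable'' only after $\Omega((h-1)A)$ of the $\approx N_{\mathsf{exp}}/S$ episodes starting at $u_{i}$ have elapsed --- then produces an exploration contribution of order $\min\{N_{\mathsf{exp}}H,\,SAH^{2}\}\asymp\min\{K(1-p)H,\,SAH^{2}\}$. Securing the correct $S$-factor here (a single-start version only gives $\Omega(AH^{2})$) and verifying the $\Omega(H)$-per-episode rate in the small-$K$ regime is the step I expect to be the main obstacle; this $SAH^{2}$ term is exactly what distinguishes Theorem~\ref{thm:cost} from the purely value-dependent Theorem~\ref{thm:first}.

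\emph{Combining.} Adding the two contributions, for every algorithm some instance in the family satisfies
\[
\mathbb{E}\big[\mathsf{Regret}(K)\big]\;\gtrsim\;\min\big\{\sqrt{SAH^{3}Kp},\,KpH\big\}+\min\big\{K(1-p)H,\,SAH^{2}\big\}.
\]
An elementary case analysis --- splitting on whether $KH(1-p)$ exceeds $\sqrt{SAH^{3}Kp}+SAH^{2}$, and on whether $\sqrt{SAH^{3}Kp}$ exceeds $SAH^{2}$ --- together with $p\le\tfrac14$ and the AM--GM inequality $KpH+SAH^{2}\ge 2\sqrt{SAH^{3}Kp}$ shows the right-hand side is $\gtrsim\min\{\sqrt{SAH^{3}Kp}+SAH^{2},\,KH(1-p)\}$. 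Finally, substituting $c^{\star}\asymp Hp$ and $1-p\asymp1$ rewrites this as $\min\{\sqrt{SAH^{2}Kc^{\star}}+SAH^{2},\,KH\}$, which is the asserted bound; the routine constants and the verification of the concentration and combinatorial estimates are deferred.
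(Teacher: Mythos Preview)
Your approach is correct in outline but genuinely different from the paper's. The paper does not build a single composite instance; instead it splits on the regime of $K$ and uses a \emph{different} hard MDP in each regime. For $K\lesssim H$ it uses a deterministic two-state chain with a hidden good action $a_h^\star$ at each step (correct action has cost $p$ and stays; wrong action has cost $1$ and absorbs), which already has $c^\star=Hp$ and forces regret $\Omega(KH(1-p))$; for the intermediate range $H\lesssim K\lesssim H/p$ the same instance gives $\Omega(H^2(1-p))$; for $K\gtrsim H/p$ it uses the $p$-biased-coin-into-JAO-MDP construction (exactly as in Theorem~\ref{thm:lb1}, rewritten for costs) to obtain $\Omega(\sqrt{SAH^3Kp})$. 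The paper then assumes $S=A=2$ ``without loss of generality'' and leaves the lift to general $S,A$ as standard.

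Your single-instance route --- gluing a stochastic core gadget (for the $\sqrt{SAH^3Kp}$ term) and a deterministic multi-chain exploration gadget (for the $SAH^2$ term) at the root --- works too, and your exploration gadget is essentially the construction the paper uses elsewhere (Lemma~\ref{lemma:lb34}, in the proof of Theorem~\ref{thm:lb3}). The trade-off: the paper's regime-splitting avoids the need to argue that both gadget lower bounds hold \emph{simultaneously} for one algorithm and avoids your final $\min\{\cdot\}+\min\{\cdot\}\gtrsim\min\{\cdot\}$ casework, at the price of a mild inelegance (three instances rather than one). Your route handles general $S$ explicitly and yields a uniform instance independent of $K$, but you should be careful to justify that the Bayes risks of the two gadgets add (independence of the hidden parameters and of the coin flips makes observations from one gadget uninformative about the other), and your combining step indeed needs the case analysis you sketch.
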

\noindent 
The proof of this lower bound can be found in Appendix~\ref{app:lbc}.

\subsection{Variance-dependent regret bound}

The final regret bound presented in Theorem~\ref{thm:var} depends on some sort of variance metrics. 
Towards this end, let us first make precise the variance metrics of interest:
\begin{itemize}
	\item[(i)] The first variance metric is defined as 
		\begin{equation}
			\mathrm{var}_1 \coloneqq \max_{\pi}\mathbb{E}_{\pi}\Bigg[\sum_{h=1}^H \mathbb{V}\big(P_{s_h,a_h,h},V_{h+1}^{\star}\big)+\sum_{h=1}^H \mathrm{Var}\big(R_h(s_h,a_h)\big) \Bigg],
			\label{eq:defn-var1}
		\end{equation}
		where $\{(s_h,a_h)\}_{1\leq h\leq H}$ represents a sample trajectory under policy $\pi$. 
		This captures the maximal possible expected sum of variance with respect to the optimal value function $\{V_{h}^{\star}\}_{h=1}^H$. 
		
	\item[(ii)] Another useful variance metric is defined as
		\begin{equation}
			\mathrm{var}_2 \coloneqq \max_{\pi,s}\mathrm{Var}_{\pi}\bigg[\sum_{h=1}^H r_h \,\Big|\, s_1=s\bigg],
			\label{eq:defn-var2}
		\end{equation}
		where $\{r_h\}_{1\leq h\leq H}$ denotes a sample sequence of immediate rewards under policy $\pi$. 
		This indicates the maximal possible variance of the accumulative reward. 
\end{itemize}
The interested reader is referred  to \citet{zhou2023sharp} for further discussion about these two metrics. 
Our final variance metric is then defined as
\begin{align}
	\mathrm{var} \coloneqq \min\big\{\mathrm{var}_1,\mathrm{var}_2 \big\} .
	\label{eq:defn-var-formal}
\end{align}
%



With the above variance metrics in mind, we can then revisit Theorem~\ref{thm:var}. 
As a special case, when the transition model is fully deterministic, the regret bound in Theorem~\ref{thm:var} simplifies to $$\mathsf{Regret}(K)\leq \widetilde{O}\big(\min\big\{SAH^2,\,HK\big\}\big)$$ for any $K\geq 1$, which is roughly the cost of visiting each state-action pair. 
The full proof of Theorem~\ref{thm:var} is postponed to Appenndix~\ref{app:var}. 

To finish up, let us develop a matching lower bound to corroborate the tightness and optimality of Theorem~\ref{thm:var}. 
\begin{theorem}\label{thm:lb3}
Consider any $p\in [0,1]$ and any $K\geq 1$. For any algorithm, one can find an MDP instance with $S$ states, $A$ actions, and horizon $H$ satisfying $\max\{\mathrm{var}_1,\mathrm{var}_2\}\leq H^2p$ and 
	$$	
		\mathbb{E}\big[\mathsf{Regret}(K)\big]  \gtrsim \min\big\{\sqrt{SAH^3Kp}+SAH^2,\,KH\big\}.
	$$
\end{theorem}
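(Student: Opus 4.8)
The plan is to construct a hard MDP instance by combining two ingredients, following the modular strategy already used in the proofs of Theorem~\ref{thm:lb1} and Theorem~\ref{corollary:costlb}. First, I would recall (or re-derive) the two classical minimax lower-bound constructions for inhomogeneous finite-horizon MDPs: (i) the ``JAO-style'' bandit-in-MDP instance of \citet{jin2018q,domingues2021episodic} that forces $\widetilde{\Omega}(\sqrt{SAH^3K})$ regret whenever the optimal value is $\Theta(H)$; and (ii) a deterministic-transition instance that forces $\Omega(\min\{SAH^2,KH\})$ regret purely from the need to visit each of the $\Theta(SA)$ state-action pairs at each of the relevant steps at least once before the optimal action at that pair can be identified (this is where the lower-order $SAH^2$ term comes from, mirroring the remark after Theorem~\ref{thm:var} that $SAH^2$ is ``roughly the cost of visiting each state-action pair''). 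Crucially, in both base instances the per-step variance quantities $\mathbb{V}(P_{s_h,a_h,h},V_{h+1}^\star)$ and $\mathrm{Var}(R_h(s_h,a_h))$ are $O(H^2)$ along any trajectory, so that $\mathrm{var}_1,\mathrm{var}_2 = O(H^2)$ for the base instances.

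Next I would apply the ``dilution'' trick from the proof of Theorem~\ref{thm:lb1}: add an initial branching step where, with probability $1-p$, the learner is routed into an absorbing sub-MDP with identically zero reward (and deterministic, self-looping dynamics), and with probability $p$ the learner is routed into a rescaled copy of the hard base instance. One must rescale rewards inside the hard branch so that Assumption~\ref{assum1} (total reward in $[0,H]$) is respected — since only $H-1$ steps remain after the branching step, this costs only constants. On this diluted instance: (a) the optimal value is $v^\star = p \cdot \Theta(H) = \Theta(Hp)$, but more to the point the optimal value function restricted to the hard branch is $\Theta(H)$ while the zero branch contributes nothing, so the one-step transition variance at the branching state is $\Theta(H^2 p(1-p)) \le H^2 p$, and inside the hard branch the variances are $O(H^2)$ but are only ``seen'' with probability $p$, giving $\mathrm{var}_1 \le \Theta(H^2 p)$ after taking the expectation over $\pi$-trajectories; similarly $\mathrm{var}_2 \le \Theta(H^2 p)$ since with probability $1-p$ the accumulated reward is identically $0$. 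Hence $\max\{\mathrm{var}_1,\mathrm{var}_2\} \le H^2 p$ as required; (b) the expected regret is at least $p$ times the regret of the base instance run for an expected $\Theta(Kp)$ episodes in the hard branch, which gives $\Omega(\min\{\sqrt{SAH^3 (Kp)},\, (Kp)H\} \cdot 1)$ from ingredient (i) and $\Omega(\min\{SAH^2,\,KpH\})$ from ingredient (ii); combining, $\mathbb{E}[\mathsf{Regret}(K)] \gtrsim \min\{\sqrt{SAH^3Kp},\, KH\} + \min\{SAH^2, KH\} \gtrsim \min\{\sqrt{SAH^3Kp}+SAH^2,\,KH\}$.

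For the quantitative regret lower bounds on the base instances I would invoke standard information-theoretic machinery — a Le Cam / Assouad two-point (or multi-point) argument over a family of instances differing only in which action is optimal at each $(s,h)$, controlled by KL divergence and Pinsker's inequality — exactly as in \citet{domingues2021episodic}; since the excerpt already cites these lower bounds I would treat them as black boxes and only verify that the dilution preserves the relevant KL budgets (it multiplies them by $p$, which is why $K$ effectively becomes $Kp$). The two terms $\sqrt{SAH^3Kp}$ and $SAH^2$ can be obtained from two separate instances and then merged by the routine observation that the maximum of two lower bounds is a lower bound, which is within a constant of their sum.

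The main obstacle I anticipate is the bookkeeping around the variance metrics rather than the regret itself: one must check carefully that \emph{both} $\mathrm{var}_1$ and $\mathrm{var}_2$ (hence $\mathrm{var} = \min$ of the two, which is the quantity in Theorem~\ref{thm:var}) are simultaneously $\le H^2p$ on the \emph{same} instance, and in particular that the $SAH^2$-forcing deterministic sub-instance genuinely has $O(H^2)$ variances (it does, trivially, since deterministic transitions give zero transition variance and one can take deterministic rewards, so there the variance contribution is $0$ and only the $\sqrt{SAH^3Kp}$-instance's variances matter). A secondary subtlety is ensuring the state/action/horizon counts of the merged instance are still $\Theta(S),\Theta(A),\Theta(H)$ after adding the branching gadget and the absorbing branch, which only costs additive constants and can be absorbed by relabeling. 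I would present the full details in Appendix~\ref{app:lb}, reusing as much of the Theorem~\ref{thm:lb1} construction as possible.
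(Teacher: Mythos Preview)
Your approach matches the paper's: both use the diluted JAO instance from Theorem~\ref{thm:lb1} to obtain the $\sqrt{SAH^3Kp}$ term (with variance $\lesssim H^2p$) and a fully deterministic hard instance (the paper's Lemma~\ref{lemma:lb34}) to obtain the $\min\{SAH^2, KH\}$ term (with variance exactly $0$), then combine via a case split / max of two lower bounds.

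One inconsistency to fix: in paragraph two you write that ingredient (ii), after dilution, yields ``$\Omega(\min\{SAH^2,\,KpH\})$'', which would be insufficient in the regime $K\le SAH$ with small $p$ (you would only get $KpH$, not the required $KH$). Your later remark that the deterministic instance has zero transition and reward variance is the correct observation: the deterministic instance already satisfies $\max\{\mathrm{var}_1,\mathrm{var}_2\}=0\le H^2p$ \emph{without} dilution, so it must be used undiluted to deliver $\Omega(\min\{SAH^2,KH\})$. The paper handles this by an explicit case split on $K$ versus $SAH$ and $SAH/p$, never diluting the deterministic construction; make sure your final write-up does the same.
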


The proof of Theorem~\ref{thm:lb3} resembles that of Theorem~\ref{thm:lb1}, except that we need to construct a hard instance when $K\leq SAH/p$. For this purpose, we construct a fully deterministic MDP  (i.e., all of its transitions are deterministic and all rewards are fixed), and show that the learner has to visit about half of the state-action-layer tuples in order to learn a near-optimal policy. 
The proof details are deferred to Appendix~\ref{app:lb}.

\section{Discussion}

Focusing on tabular online RL in time-inhomogeneous finite-horizon MDPs, 
this paper has established the minimax-optimal regret (resp.~sample complexity) --- up to log factors --- for the entire range of sample size $K\geq 1$ (resp.~target accuracy level $\varepsilon \in (0,H]$), thereby fully settling an open problem at the core of recent RL theory. The $\mathtt{MVP}$ algorithm studied herein is model-based in nature. Remarkably, the model-based approach remains the only family of algorithms that is capable of obtaining minimax optimality without burn-ins, regardless of the data collection mechanism in use (e.g., online RL, offline RL, and the simulator setting). We have further unlocked the optimality of this algorithm in a more refined manner, 
making apparent the effect of several problem-dependent quantities (e.g., optimal value/cost, variance statistics) upon the fundamental performance limits. 
The new analysis and algorithmic techniques put forward herein might shed important light on  
how to conquer other RL settings as well.

Moving forward, there are multiple directions that anticipate further theoretical pursuit. 
To begin with, is it possible to develop a model-free algorithm --- which often exhibits more favorable memory complexity compared to the model-based counterpart --- that achieves full-range minimax optimality?  
As alluded to previously, existing paradigms that rely on reference-advantage decomposition (or variance reduction) seem to incur a high burn-in cost \citep{zhang2020almost,li2021breaking}, thus calling for new ideas to overcome this barrier. 
Additionally, multiple other tabular settings (e.g., time-homogeneous finite-horizon MDPs, discounted infinite-horizon MDPs) have also suffered from similar issues regarding the burn-in requirements \citep{zhang2020reinforcement,ji2023regret}. 
Take time-homogeneous finite-horizon MDPs for example: in order to achieve optimal sample efficiency, 
one needs to carefully deal with the statistical dependency incurred by aggregating data from across different time steps to estimate the same transition matrix (due to the homogeneous nature of $P$), which results in more intricate issues than the time-homogeneous counterpart.  
 We believe that resolving these two open problems will greatly enhance our theoretical understanding about online RL and beyond.

\section*{Acknowledgement}
We thank for Qiwen Cui for helpful discussions. 
Y.~Chen is supported in part by the Alfred P.~Sloan Research Fellowship, the Google Research Scholar Award, the AFOSR grant  FA9550-22-1-0198, 
the ONR grant N00014-22-1-2354,  and the NSF grants CCF-2221009 and CCF-1907661.  JDL acknowledges support of Open Philanthropy, NSF IIS 2107304,  NSF CCF 2212262, ONR Young Investigator Award, NSF CAREER Award 2144994, and NSF CCF 2019844. 
SSD acknowledges the support of NSF IIS 2110170, NSF
DMS 2134106, NSF CCF 2212261, NSF IIS 2143493,
NSF CCF 2019844, NSF IIS 2229881, and the Sloan Research Fellowship.

\appendix

\section{Preliminary facts} \label{sec:tech_lemmas}

In this section, we gather several useful results that prove useful in our analysis. We use $\mathds{1}\{\mathcal{E}\}$ to denote the indicator of the event $\mathcal{E}$.
The first result below is a user-friendly version of the celebrated Freedman inequality \citep{freedman1975tail}, 
a martingale counterpart to the Bernstein inequality. See \citet[Lemma~11]{zhang2020model} for the proof. 
\begin{lemma}[Freedman's inequality]\label{lemma:self-norm}
	Let $(M_n)_{n\geq 0}$ be a martingale such that $M_0=0$ and $|M_n-M_{n-1}|\leq c$  $(\forall n\geq 1)$ 
	hold for some quantity $c>0$. 
	Define $\mathsf{Var}_{n} \coloneqq \sum_{k=1}^n \mathbb{E}\left[  (M_{k}-M_{k-1})^2 \mymid \mathcal{F}_{k-1}\right]$ for every $n\geq 0$, where $\mathcal{F}_k$ is the $\sigma$-algebra generated by $(M_1,...,M_{k})$. Then for any integer $n\geq 1$ and any $\epsilon,\delta>0$, one has 
\begin{align}
	\mathbb{P} \left[       |M_n|\geq 2\sqrt{2}\sqrt{\mathsf{Var}_n \log\frac{1}{\delta} } +2\sqrt{\epsilon \log\frac{1}{\delta} } +2c\log\frac{1}{\delta} \right]\leq 2\left(\log_2\left(\frac{nc^2}{\epsilon}\right) +1 \right)\delta.\nonumber
\end{align}
\end{lemma}
Next, letting $\mathsf{Var}(X)$ represent the variance of $X$, 
we record a basic inequality connecting $\mathsf{Var}(X^2)$ with $\mathsf{Var}(X)$ for any bounded random variable $X$. 
\begin{lemma}[Lemma 30 in \citep{chen2021implicit}]\label{lemma:sqv}
	Let $X$ be a random variable, and denote by $C_{\max}$ the largest possible value of $X$. 
	Then we have $\mathsf{Var}(X^2)\leq 4 C_{\max}^2 \mathsf{Var}(X)$.
\end{lemma}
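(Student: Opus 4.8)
The plan is to combine the variational characterization of the variance with an elementary pointwise factorization. First I would recall that for any random variable $Y$ with finite second moment and any scalar $c$, one has $\mathsf{Var}(Y) = \mathbb{E}\big[(Y-\mathbb{E}Y)^2\big] \le \mathbb{E}\big[(Y-c)^2\big]$, since the mean minimizes the quadratic loss. Applying this with $Y = X^2$ and the convenient choice $c = (\mathbb{E}X)^2$ (all moments of $X$ exist since $X$ is bounded), and then invoking the algebraic identity $a^2 - b^2 = (a-b)(a+b)$ with $a = X$ and $b = \mathbb{E}X$, yields
\[
	\mathsf{Var}(X^2) \;\le\; \mathbb{E}\big[\big(X^2 - (\mathbb{E}X)^2\big)^2\big] \;=\; \mathbb{E}\big[(X - \mathbb{E}X)^2 (X + \mathbb{E}X)^2\big].
\]

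The second step is to control the factor $(X+\mathbb{E}X)^2$ deterministically. In every situation where this lemma is invoked, $X$ is a nonnegative quantity bounded above by $C_{\max}$ (typically a value-function iterate lying in $[0,H]$), so $0 \le \mathbb{E}X \le C_{\max}$ as well, whence $0 \le X + \mathbb{E}X \le 2C_{\max}$ almost surely and therefore $(X+\mathbb{E}X)^2 \le 4 C_{\max}^2$ surely. Since the remaining factor $(X - \mathbb{E}X)^2$ is nonnegative, I can pull this bound out of the expectation to obtain
\[
	\mathsf{Var}(X^2) \;\le\; 4 C_{\max}^2 \, \mathbb{E}\big[(X - \mathbb{E}X)^2\big] \;=\; 4 C_{\max}^2 \, \mathsf{Var}(X),
\]
which is exactly the claimed inequality.

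There is no genuine technical obstacle here — the argument is two lines. The only subtlety worth flagging is the reading of $C_{\max}$: the computation uses $|X| \le C_{\max}$ (so that $|\mathbb{E}X| \le C_{\max}$ and the cross factor is controlled), which is automatic whenever $X \ge 0$, the only regime in which the lemma is applied in this paper. For a fully general, possibly signed random variable one would simply replace $C_{\max}$ by $\sup |X|$ and the identical chain of inequalities goes through unchanged.
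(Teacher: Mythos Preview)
Your argument is correct and complete. The paper does not actually prove this lemma; it merely records it as a preliminary fact and cites \citet{chen2021implicit}, so there is no ``paper's proof'' to compare against. Your two-line derivation --- combining the variational characterization $\mathsf{Var}(Y)\le \mathbb{E}[(Y-c)^2]$ with the factorization $X^2-(\mathbb{E}X)^2=(X-\mathbb{E}X)(X+\mathbb{E}X)$ and the pointwise bound $|X+\mathbb{E}X|\le 2C_{\max}$ --- is exactly the standard proof of this inequality.

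Your closing remark about the reading of $C_{\max}$ is also well placed: the lemma as stated literally (``largest possible value'') would fail for signed random variables, and indeed when the paper invokes it in Appendix~\ref{sec:pflem:bound-T56} it explicitly writes ``for any random variable $X$ with support on $[-H,H]$,'' i.e.\ it is tacitly using $|X|\le C_{\max}$ rather than $X\le C_{\max}$. Your observation that all applications in the paper concern nonnegative value-function iterates (or their squares) in $[0,H]$ resolves this cleanly.
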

Now, we turn to an intimate connection between the sum of a sequence of bounded non-negative random variables and the sum of their associated conditional random variables (with each random variable conditioned on the past), which is a consequence of basic properties about supermartingales.   
\begin{lemma}[Lemma 10 in \citep{zhang2022horizon}]\label{lemma:con}
Let $X_1,X_2,\ldots$ be a sequence of random variables taking value in $[0,l]$. 
For any $k\geq 1$, let $\mathcal{F}_k$ be the $\sigma$-algebra generated by $(X_1,X_2,\ldots,X_k)$, and define 
	$Y_k \coloneqq \mathbb{E}[X_k \mymid \mathcal{F}_{k-1}]$. Then for any $\delta>0$, we have 
\begin{align}
& \mathbb{P}\left[ \exists n, \sum_{k=1}^n X_k \geq  3\sum_{k=1}^n Y_k+ l\log\frac{1}{\delta}\right]\leq \delta\nonumber
\\  & \mathbb{P}\left[  \exists n,  \sum_{k=1}^n Y_k \geq 3\sum_{k=1}^n X_k + l\log\frac{1}{\delta}  \right]    \leq \delta .\nonumber 
\end{align}
\end{lemma}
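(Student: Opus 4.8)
The plan is to prove both tail inequalities by the exponential-supermartingale method combined with Ville's maximal inequality, exploiting the self-bounding identity $\mathbb{E}[X_k\mid\mathcal{F}_{k-1}]=Y_k$. It is convenient to pass to the normalized variables $X_k/l\in[0,1]$, so that one only needs to track the two running sums $A_n:=\sum_{k=1}^n X_k$ and $B_n:=\sum_{k=1}^n Y_k$.

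For the first bound, fix a parameter $\lambda>0$ and invoke the convexity estimate $e^{\lambda t}\le 1+(e^{\lambda}-1)t$ valid for $t\in[0,1]$. Applying it with $t=X_k/l$, taking conditional expectation, and using that $Y_k$ is $\mathcal{F}_{k-1}$-measurable yields
\begin{equation*}
\mathbb{E}\!\left[\exp\!\Big(\tfrac{\lambda}{l}X_k-\tfrac{e^{\lambda}-1}{l}Y_k\Big)\,\Big|\,\mathcal{F}_{k-1}\right]\le 1 .
\end{equation*}
Hence $M_n:=\exp\!\big(\tfrac1l\sum_{k=1}^n(\lambda X_k-(e^{\lambda}-1)Y_k)\big)$ is a nonnegative supermartingale with $M_0=1$, so Ville's inequality gives $\mathbb{P}[\exists n:M_n\ge 1/\delta]\le\delta$; on the complementary event, $\lambda A_n<(e^{\lambda}-1)B_n+l\log\tfrac1\delta$ holds for every $n$ simultaneously. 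Taking $\lambda=1$ and using $e-1<3$ delivers the first claim.

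The second bound is handled symmetrically, replacing $e^{\lambda t}$ by $e^{-\lambda t}$: the estimate $e^{-\lambda t}\le 1-(1-e^{-\lambda})t\le\exp\!\big(-(1-e^{-\lambda})t\big)$ on $[0,1]$ shows that $M_n':=\exp\!\big(\tfrac1l\sum_{k=1}^n(-\lambda X_k+(1-e^{-\lambda})Y_k)\big)$ is again a nonnegative supermartingale started at $1$. Applying Ville's inequality and rearranging gives $B_n<\tfrac{\lambda}{1-e^{-\lambda}}A_n+\tfrac{1}{1-e^{-\lambda}}\,l\log\tfrac1\delta$ for all $n$ on an event of probability at least $1-\delta$; picking $\lambda$ to be a suitable absolute constant makes both prefactors admissible and yields the stated inequality.

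The one point requiring genuine care --- and the reason a single-shot Chernoff-type bound is not enough --- is that the estimate must hold uniformly in $n$, i.e.\ along the whole sample path at once; this is exactly what the supermartingale-plus-Ville packaging provides, since $M_n$ already encodes both running sums. The self-bounding structure $\mathbb{E}[X_k\mid\mathcal{F}_{k-1}]=Y_k$ is what lets the conditional moment generating function of $X_k$ be controlled purely through $Y_k$, closing the loop; everything else (verifying the two convexity estimates and bookkeeping the universal constants when optimizing over $\lambda$) is routine.
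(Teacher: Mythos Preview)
The paper does not prove this lemma; it is quoted verbatim as Lemma~10 of \citet{zhang2022horizon} and used as a black box. So there is no in-paper proof to compare against, and I will simply assess your argument.

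Your treatment of the first inequality is correct and complete: the chord bound $e^{\lambda t}\le 1+(e^{\lambda}-1)t$ on $[0,1]$ together with Ville's inequality gives, for $\lambda=1$, the time-uniform bound $A_n<(e-1)B_n+l\log\tfrac1\delta$, and $e-1<3$ finishes it.

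For the second inequality, however, your final sentence (``picking $\lambda$ to be a suitable absolute constant makes both prefactors admissible'') does not go through. Your supermartingale yields
\[
B_n<\frac{\lambda}{1-e^{-\lambda}}\,A_n+\frac{1}{1-e^{-\lambda}}\,l\log\tfrac1\delta,
\]
and the second prefactor satisfies $\tfrac{1}{1-e^{-\lambda}}>1$ for every finite $\lambda>0$. More generally, if one tries $M_n'=\exp\!\big(\alpha B_n/l-\beta A_n/l\big)$, the supermartingale condition (which must hold for the extremal Bernoulli law on $\{0,1\}$, where the chord bound is tight) forces $\alpha\le 1-e^{-\beta}<1$; hence the coefficient $1/\alpha$ in front of $l\log\tfrac1\delta$ is strictly larger than~$1$. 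Optimizing the tradeoff (take $\alpha$ to be the root of $\alpha=1-e^{-3\alpha}$, i.e.\ $\alpha\approx0.94$) gives only
\[
B_n<3A_n+\tfrac{1}{\alpha}\,l\log\tfrac1\delta,\qquad \tfrac{1}{\alpha}\approx 1.06,
\]
so the plain exponential-supermartingale route proves the second claim with a slightly inflated constant on the log term, not with the constant $1$ written in the statement. Either an additional device is needed, or the constant in the quoted statement is a minor overclaim; in any case you should not assert that a single $\lambda$ delivers both target prefactors.
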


The next two lemmas are concerned with concentration inequalities for the sum of i.i.d.~bounded random variables: 
the first one is a version of the Bennet inequality, and the second one is an empirical Bernstein inequality (which replaces the variance in the standard Bernstein inequality with the empirical variance). 
\begin{lemma}[Bennet's inequality]\label{bennet}
Let $Z,Z_1,...,Z_n$  be i.i.d.~random variables with values in $[0,1]$ and let $\delta>0$. Define $\mathbb{V}Z = \mathbb{E}\left[(Z-\mathbb{E}Z)^2 \right]$. Then one has
\begin{align}
\mathbb{P}\left[ \left|\mathbb{E}\left[Z\right]-\frac{1}{n}\sum_{i=1}^n Z_i  \right| > \sqrt{\frac{  2\mathbb{V}Z \log(2/\delta)}{n}} +\frac{\log(2/\delta)}{n} \right]\leq \delta.\nonumber
\end{align}
\end{lemma}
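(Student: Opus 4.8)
The plan is to prove a one-sided deviation bound via the exponential-moment (Chernoff) method, then symmetrize and invert. Write $X_i \coloneqq Z_i - \mathbb{E}Z$, so that the $X_i$ are i.i.d.\ with $\mathbb{E}X_i = 0$ and $\mathbb{E}X_i^2 = \sigma^2$, where $\sigma^2 \coloneqq \mathbb{V}Z$; crucially, because $Z_i \in [0,1]$, both $X_i \le 1$ and $-X_i = \mathbb{E}Z - Z_i \le \mathbb{E}Z \le 1$ hold almost surely. First I would establish the single-variable moment-generating-function bound: for any $\lambda > 0$,
\[
	\mathbb{E}\big[e^{\lambda X_i}\big] \le \exp\!\big(\sigma^2(e^{\lambda}-1-\lambda)\big).
\]
This follows from the elementary inequality $e^{u} \le 1 + u + (e^{\lambda}-1-\lambda)(u/\lambda)^2$, valid for all $u \le \lambda$ as a consequence of the monotonicity of $x \mapsto (e^x-1-x)/x^2$, applied with $u = \lambda X_i \le \lambda$, followed by taking expectations and using $1+x \le e^x$.

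Next I would tensorize: by independence, $\mathbb{E}\big[e^{\lambda \sum_i X_i}\big] \le \exp\!\big(n\sigma^2(e^{\lambda}-1-\lambda)\big)$, so Markov's inequality gives, for every $t>0$ and $\lambda>0$,
\[
	\mathbb{P}\Big(\textstyle\sum_i X_i \ge t\Big) \le \exp\!\big(-\lambda t + n\sigma^2(e^{\lambda}-1-\lambda)\big).
\]
Optimizing over $\lambda$ (the minimizer is $\lambda = \log(1 + t/(n\sigma^2))$) yields the Bennett form $\exp\!\big(-n\sigma^2\, h(t/(n\sigma^2))\big)$ with $h(u) = (1+u)\log(1+u) - u$. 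I would then invoke the elementary bound $h(u) \ge \frac{u^2}{2(1+u/3)}$ to pass to the Bernstein form $\mathbb{P}\big(\sum_i X_i \ge t\big) \le \exp\!\big(-\frac{t^2/2}{n\sigma^2 + t/3}\big)$. Applying the identical argument to $-X_i$ (legitimate since $-X_i \le 1$ a.s.) and taking a union bound gives $\mathbb{P}\big(|\sum_i X_i| \ge t\big) \le 2\exp\!\big(-\frac{t^2/2}{n\sigma^2 + t/3}\big)$.

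Finally I would invert: setting the right-hand side equal to $\delta$ amounts to requiring $t^2 - \tfrac{2}{3}\log(2/\delta)\,t - 2n\sigma^2\log(2/\delta) \ge 0$, and the quadratic formula together with $\sqrt{a+b}\le\sqrt{a}+\sqrt{b}$ shows this holds whenever $t \ge \sqrt{2n\sigma^2\log(2/\delta)} + \tfrac{2}{3}\log(2/\delta)$. Dividing through by $n$ and using $\tfrac{2}{3} \le 1$ gives exactly the stated inequality. I do not expect a genuine obstacle, since every step is classical; the only points that demand a little care are verifying the two boundedness conditions $X_i \le 1$ and $-X_i \le 1$ — this is precisely where the hypothesis $Z_i \in [0,1]$ (rather than merely bounded range) enters — and checking the elementary inequality $h(u) \ge u^2/(2(1+u/3))$, both of which are routine.
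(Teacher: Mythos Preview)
Your proof is correct and follows the classical Chernoff--Bennett--Bernstein route. The paper itself does not prove this lemma: it is listed among the preliminary facts in Section~\ref{sec:tech_lemmas} and stated without argument as a standard concentration inequality, so there is no paper proof to compare against. Your derivation supplies exactly the textbook justification one would expect for such a cited result.
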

%
%
\begin{lemma}[Theorem 4 in  \citet{maurer2009empirical}]\label{empirical bernstein}
Consider any $\delta>0$ and any integer $n\geq 2$. 
Let $Z,Z_1,...,Z_n$  be a collection of i.i.d.~random variables falling within $[0,1]$. 
Define the empirical mean $\overline{Z} \coloneqq \frac{1}{n}\sum_{i=1}^n Z_{i}$ and empirical variance $\widehat{V}_n  \coloneqq \frac{1}{n}\sum_{i=1}^n (Z_i- \overline{Z})^2$. Then we have
\begin{align}
\mathbb{P}\left[ \left|\mathbb{E}\left[Z\right]-\frac{1}{n}\sum_{i=1}^n Z_i  \right| > \sqrt{\frac{  2\widehat{V}_n \log(2/\delta)}{n-1}} +\frac{7\log(2/\delta)}{3(n-1)} \right] \leq \delta.\nonumber
\end{align}
\end{lemma}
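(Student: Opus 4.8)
The plan is to derive this empirical Bernstein inequality from two ingredients: (i) a classical Bennett-type bound featuring the \emph{true} variance $\sigma^2:=\mathrm{Var}(Z)$, and (ii) a concentration bound guaranteeing that $\sigma^2$ is controlled by the empirical variance $\widehat V_n$ up to a lower-order fluctuation. First I would invoke Bennett's inequality (Lemma~\ref{bennet}) to obtain, with probability at least $1-\delta/2$,
\[
	\Big|\,\mathbb{E}[Z]-\tfrac1n\sum_{i=1}^n Z_i\,\Big|\ \le\ \sqrt{\tfrac{2\sigma^2\log(4/\delta)}{n}}+\tfrac{\log(4/\delta)}{n}.
\]
This already has the desired shape; the only thing left to fix is that the leading term involves the unknown $\sigma^2$ rather than $\widehat V_n$.

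For ingredient (ii) I would show that, with probability at least $1-\delta/2$,
\[
	\sqrt{\sigma^2}\ \le\ \sqrt{\tfrac{n}{n-1}\,\widehat V_n}\ +\ \sqrt{\tfrac{2\log(2/\delta)}{n-1}}.
\]
The natural route is a bounded-differences (McDiarmid) argument applied to the sample standard deviation $z\mapsto\big(\tfrac{1}{n-1}\sum_i(z_i-\bar z)^2\big)^{1/2}$, which can be written as $\tfrac{1}{\sqrt{n-1}}\,\|\Pi z\|_2$ for $\Pi$ the orthogonal projection onto $\mathbf 1^{\perp}$: perturbing any single coordinate within $[0,1]$ moves this quantity by at most $\tfrac{1}{\sqrt{n-1}}$, which yields sub-Gaussian concentration around its mean, and one then uses $\mathbb{E}\big[\tfrac{n}{n-1}\widehat V_n\big]=\sigma^2$ together with Jensen's inequality to pass from that mean back to $\sqrt{\sigma^2}$. (Alternatively this step can be quoted verbatim from \citet{maurer2009empirical}.)

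Finally I would take a union bound over the two events and substitute the variance estimate into the Bennett bound, expanding $\sqrt{\sigma^2}$ and using $\sqrt{a+b}\le\sqrt a+\sqrt b$ together with $2\sqrt{ab}\le a+b$ to fold the resulting cross term $\sqrt{\tfrac{2\log(4/\delta)}{n}}\cdot\sqrt{\tfrac{2\log(2/\delta)}{n-1}}$ and the residual $\tfrac{\log(4/\delta)}{n}$ into a single lower-order term of the form $\tfrac{c\,\log(2/\delta)}{n-1}$. A careful bookkeeping of the constants — in particular how one splits the failure probability and the use of $n\ge n-1$ — is what collapses $c$ down to $7/3$ and the logarithmic arguments to $2/\delta$.

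I expect ingredient (ii) to be the main obstacle: obtaining the \emph{correct order} $\sqrt{\log(1/\delta)/n}$ for the standard-deviation fluctuation — rather than a crude $O(\log(1/\delta)/\sqrt n)$, which would be of the same order as the leading term and hence destroy the bound — hinges on the sharp bounded-differences constant for the sample standard deviation (equivalently, on a self-bounding property of the sample variance), after which tracking the numerical constants through the substitution and union bound is delicate but routine. Since the lemma is used only as a black box in the subsequent analysis, simply citing \citet{maurer2009empirical} is also a fully legitimate shortcut.
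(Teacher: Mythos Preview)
The paper does not prove this lemma at all: it is listed in the ``Preliminary facts'' section (Section~\ref{sec:tech_lemmas}) as a direct citation of Theorem~4 in \citet{maurer2009empirical}, with no argument supplied. Your sketch, by contrast, actually outlines a derivation --- and it is essentially the Maurer--Pontil proof itself (Bennett for the true variance plus a bounded-differences/self-bounding estimate $\sqrt{\sigma^2}\le\sqrt{\tfrac{n}{n-1}\widehat V_n}+\sqrt{\tfrac{2\log(2/\delta)}{n-1}}$ on the sample standard deviation, then a union bound and constant bookkeeping). So your approach is correct and in fact goes further than the paper, which treats the inequality purely as a black box; your closing remark that ``simply citing \citet{maurer2009empirical} is also a fully legitimate shortcut'' is exactly what the paper does.
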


Moreover, we record a simple fact concerning the visitation counts $\{N_h^k(s_h^k,a_h^k)\}$. 
\begin{lemma}\label{lemma:doubling}
Recall the definition of $N_h^k(s_h^k,a_h^k)$ in Algorithm~\ref{alg:main}. It holds that
\begin{align}
\sum_{k=1}^K \sum_{h=1}^H \frac{1}{\max\{ N_h^k(s_h^k,a_h^k),1\}}\leq 2SAH\log_2 K
\end{align}
\end{lemma}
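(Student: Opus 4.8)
The plan is to reorganize the double sum by grouping the terms according to the state-action-step tuple that is visited, and then to exploit the fact that, along the successive visits to any fixed tuple, the batch count $N_h^k$ is a \emph{deterministic} function of the visit index dictated by the doubling rule. Concretely, for each $(s,a,h)$ let $k_1<k_2<\cdots<k_{T_{s,a,h}}$ enumerate all episodes $k$ with $(s_h^k,a_h^k)=(s,a)$, so $T_{s,a,h}\le K$. Since $N_h^k(s_h^k,a_h^k)\ge 1$ always (it is either a batch size, hence at least $1$, or $1$ by the convention for un-visited tuples), we may drop the outer maximum and write
\[
\sum_{k=1}^K\sum_{h=1}^H \frac{1}{\max\{N_h^k(s_h^k,a_h^k),1\}}
= \sum_{(s,a,h)\in\mathcal S\times\mathcal A\times[H]} \ \sum_{t=1}^{T_{s,a,h}} \frac{1}{N_h^{k_t}(s,a)} .
\]

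First I would argue that $N_h^{k_t}(s,a)$ depends only on $t$. By definition $N_h^{k_t,\mathsf{all}}(s,a)=t-1$; and because the empirical model for $(s,a,h)$ — together with its associated batch count $N_h$ — is recomputed only when the running visitation count reaches a power of $2$, at which point (per the modification described in the Remark and the reset in Algorithm~\ref{alg:main} following line~\ref{line:Nh-update}) the new batch consists of the second half of the samples accumulated so far, one checks directly that $N_h^{k_1}(s,a)=1$ and, for $t\ge 2$, $N_h^{k_t}(s,a)=\max\{2^{\lfloor\log_2(t-1)\rfloor-1},\,1\}$. Setting $g(T):=1+\sum_{u=1}^{T-1}\frac{1}{\max\{2^{\lfloor\log_2 u\rfloor-1},1\}}$, the inner sum above equals $g(T_{s,a,h})$, and $g$ is non-decreasing in $T$ (each extra visit adds a strictly positive term), so it suffices to bound $g(K)$. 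Grouping the $u$-sum by the value $j=\lfloor\log_2 u\rfloor\in\{0,1,\dots,\log_2 K-1\}$: the block $j=0$ consists of $u=1$ alone and contributes $1$, while for each $j\ge 1$ the block $\{u:\lfloor\log_2 u\rfloor=j\}$ has exactly $2^j$ elements lying in $[1,K-1]$, each contributing $2^{-(j-1)}$, for a block total of $2$. Hence $g(K)=1+\big(1+2(\log_2 K-1)\big)=2\log_2 K$, so $g(T_{s,a,h})\le g(K)=2\log_2 K$ for every tuple, and summing over the $SAH$ tuples yields $\sum_{k,h}\frac{1}{\max\{N_h^k(s_h^k,a_h^k),1\}}\le 2SAH\log_2 K$. (For the degenerate case $K=1$ the bound is vacuous in our analysis, as Theorem~\ref{thm1} only invokes this lemma when $K\ge BSAH$.)

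The only real obstacle here is bookkeeping rather than depth: one must read off correctly from the algorithm's batch-reset mechanism that the batch used at the $t$-th visit has size $\max\{2^{\lfloor\log_2(t-1)\rfloor-1},1\}$ (this is where the ``second half of the samples'' modification matters), and then be slightly careful with constants — a crude estimate such as ``at most $2$ per doubling level and at most $\log_2 K+1$ levels'' overshoots the target by an additive $O(SAH)$, so one has to compute $g(K)$ exactly (equivalently, invoke monotonicity of $g$ to reduce to $T=K$) in order to land precisely on $2SAH\log_2 K$.
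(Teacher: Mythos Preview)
Your proof is correct and follows essentially the same approach as the paper: group the double sum by $(s,a,h)$-tuple and bound each tuple's contribution by $2\log_2 K$. The paper's argument is a terse three lines (``each $(s,a,h)$ is associated with at most $\log_2 K$ epochs'', each contributing at most $2$), whereas you expand this into an explicit formula for the batch size as a function of the visit index and sum exactly; your version is more careful with the bookkeeping (including the $K=1$ edge case the paper glosses over), but the underlying idea is identical.
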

\begin{proof}
In view of the doubling batch update rule, it is easily seen that: for any given $(s,a,h)$, 
\begin{align}
	\sum_{k=1}^K \frac{1}{\max\{ N_h^k(s_h^k,a_h^k),1\}}  \mathds{1}\Big\{(s,a)=\big(s_h^k,a_h^k \big) \Big\} \leq 2\log_2 K,
\end{align}
since each $(s,a,h)$ is associated with at most $\log_2 K $ epochs. 
Summing over $(s,a,h)$ completes the proof.
\end{proof}

As it turns out, Lemma~\ref{lemma:doubling} together with the Freedman inequality allows one to control the difference between the empirical rewards and the true mean rewards, as stated below. 
\begin{lemma}\label{lemma:bdempr}
With probability exceeding $1-2SAHK\delta'$, it holds that
\begin{align}
  \sum_{k=1}^{K}\sum_{h=1}^{H}\left|\widehat{r}_{h}^{k}(s_{h}^{k},a_{h}^{k})-r_{h}(s_{h}^{k},a_{h}^{k})\right|
	& \leq 4\sqrt{2SAH^{2}(\log_{2}K)\log\frac{1}{\delta'}}\sqrt{\sum_{k=1}^K\sum_{h=1}^Hr_{h}(s_{h}^{k},a_{h}^{k})}+52SAH^{2}(\log_{2}K)\log\frac{1}{\delta'} ;
	\nonumber \\
	\sum_{k=1}^{K}\sum_{h=1}^{H}\widehat{r}_{h}^{k}(s_{h}^{k},a_{h}^{k})&\leq2\sum_{k=1}^{K}\sum_{h=1}^{H}r_{h}(s_{h}^{k},a_{h}^{k})+60SAH^{2}(\log_{2}K)\log\frac{1}{\delta'}. \nonumber
\end{align}
\end{lemma}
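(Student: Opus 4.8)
\textbf{Proof proposal for Lemma~\ref{lemma:bdempr}.}

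The plan is to control each per-visit fluctuation $\widehat r_h^k(s_h^k,a_h^k)-r_h(s_h^k,a_h^k)$ by a Bernstein-type deviation and then sum over $(k,h)$, exploiting the doubling structure so that each $(s,a,h)$-cell contributes only $O(\log_2 K)$ batches. First I would fix a $(s,a,h)$-tuple and a batch index $j$. Within that batch the empirical reward $\widehat r_h^{(j)}(s,a)$ is the average of $N:=2^{j-2}$ i.i.d.\ samples drawn from $R_{s,a,h}$ (using the second-half-of-samples convention from the algorithm; for $j=1$ it is a single sample), each lying in $[0,H]$. Rescaling by $H$ and applying Bennett's inequality (Lemma~\ref{bennet}), with probability at least $1-\delta'$,
\begin{align}
\big|\widehat r_h^{(j)}(s,a)-r_h(s,a)\big| \;\lesssim\; \sqrt{\frac{H\,r_h(s,a)\log\frac1{\delta'}}{N}} + \frac{H\log\frac1{\delta'}}{N},\nonumber
\end{align}
where I used $\mathbb{V}(Z)\le \mathbb{E}[Z]$ for $Z\in[0,1]$, i.e.\ $\mathrm{Var}(r')\le H\,r_h(s,a)$ for $r'\sim R_{s,a,h}$. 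Union-bounding over all $(s,a,h)$ and all $\le \log_2 K+1$ batches costs at most $SAH(\log_2K+1)\le SAHK$ events, hence the stated $1-2SAHK\delta'$ (the factor $2$ also absorbing the logarithmic loss from Freedman if one prefers that route over Bennett).

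Next I would sum over the $K$ episodes. Since $N_h^k(s_h^k,a_h^k)$ is exactly the size of the currently active batch, the bound above applied along the trajectory gives
\begin{align}
\sum_{k,h}\big|\widehat r_h^k(s_h^k,a_h^k)-r_h(s_h^k,a_h^k)\big|
\;\lesssim\; \sqrt{H\log\tfrac1{\delta'}}\sum_{k,h}\sqrt{\frac{r_h(s_h^k,a_h^k)}{\max\{N_h^k(s_h^k,a_h^k),1\}}} \;+\; H\log\tfrac1{\delta'}\sum_{k,h}\frac{1}{\max\{N_h^k(s_h^k,a_h^k),1\}}.\nonumber
\end{align}
The second sum is handled directly by Lemma~\ref{lemma:doubling}, yielding the $SAH^2(\log_2 K)\log\frac1{\delta'}$ term. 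For the first sum I would apply Cauchy--Schwarz, splitting $\sqrt{r/\max\{N,1\}}=\sqrt{r}\cdot\sqrt{1/\max\{N,1\}}$:
\begin{align}
\sum_{k,h}\sqrt{\frac{r_h(s_h^k,a_h^k)}{\max\{N_h^k(s_h^k,a_h^k),1\}}} \;\le\; \sqrt{\sum_{k,h} r_h(s_h^k,a_h^k)}\,\cdot\,\sqrt{\sum_{k,h}\frac{1}{\max\{N_h^k(s_h^k,a_h^k),1\}}} \;\le\; \sqrt{2SAH(\log_2 K)}\sqrt{\sum_{k,h} r_h(s_h^k,a_h^k)},\nonumber
\end{align}
again invoking Lemma~\ref{lemma:doubling}. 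Combining the two pieces gives the first claimed inequality (after tracking the explicit constants $4\sqrt2$ and $52$, which come from the Bennett/Freedman constants, the doubling factor $2$, and merging lower-order terms via AM--GM).

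For the second inequality I would start from the triangle inequality $\widehat r_h^k \le r_h + |\widehat r_h^k - r_h|$, sum over $(k,h)$, and plug in the first bound. This leaves a cross term $4\sqrt{2SAH^2(\log_2 K)\log\frac1{\delta'}}\sqrt{\sum_{k,h} r_h(s_h^k,a_h^k)}$; applying AM--GM as $\sqrt{ab}\le \tfrac12 a + \tfrac12 b$ with $a=\sum_{k,h}r_h(s_h^k,a_h^k)$ absorbs half of it into $\sum_{k,h} r_h$ and dumps the rest into the $SAH^2(\log_2K)\log\frac1{\delta'}$ term, producing the factor-$2$ bound $\sum_{k,h}\widehat r_h^k \le 2\sum_{k,h} r_h + 60 SAH^2(\log_2 K)\log\frac1{\delta'}$. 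The main obstacle I anticipate is purely bookkeeping: making the constants $4\sqrt2$, $52$, $60$ come out exactly requires being careful about (i) the ``second half'' convention, which only costs a benign factor $2$ since $N_h^k$ already reflects the active batch size, (ii) whether one uses Bennett (clean constants, needs i.i.d.) versus Freedman (extra $\log$ factor, handles martingale structure if the batch boundaries are data-dependent) — here the batch contents are conditionally i.i.d.\ given the visitation counts, so Bennett suffices once we condition on the profile, and (iii) consolidating all the lower-order $H\log\frac1{\delta'}$, $SAH\log_2K$ contributions under a single $SAH^2(\log_2 K)\log\frac1{\delta'}$ umbrella without double counting. None of this is conceptually hard, but it is where a careless constant would slip in.
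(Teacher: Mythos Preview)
Your proposal is correct and follows essentially the same structure as the paper's proof: per-visit concentration, then Cauchy--Schwarz combined with Lemma~\ref{lemma:doubling}, then AM--GM for the second inequality. The one substantive difference is that the paper applies the \emph{empirical} Bernstein inequality (Lemma~\ref{empirical bernstein}), bounds the empirical variance by $H\widehat r_h^k(s,a)$, and then solves the resulting implicit inequality in $\widehat r_h^k$ to convert to a bound in terms of the true mean $r_h$; you instead apply Bennett's inequality directly with the true-variance bound $\mathrm{Var}(r')\le H\,r_h(s,a)$, which is shorter and avoids the ``solve the inequality'' step. Both routes land on the same per-visit estimate up to constants. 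One small point: to hit exactly the constant $60$ you need the weighted AM--GM $\sqrt{ab}\le a + b/4$ (not $\tfrac12 a+\tfrac12 b$), so that the cross term contributes a full $\sum_{k,h}r_h$ plus $8SAH^2(\log_2K)\log\frac{1}{\delta'}$; and, as the paper does, you should separately account for visits with $N_h^k\le 2$ by the crude bound $|\widehat r_h^k-r_h|\le 2H$, which contributes at most $4SAH^2$ and is absorbed into the lower-order term.
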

As an immediate consequence of Lemma~\ref{lemma:bdempr} and the basic fact $\sum_{k,h}r_{h}(s_{h}^{k},a_{h}^{k})\leq KH$, we have
\begin{align}
\sum_{k=1}^{K}\sum_{h=1}^{H}\widehat{r}_{h}^{k}(s_{h}^{k},a_{h}^{k}) & \leq2\sum_{k=1}^{K}\sum_{h=1}^{H}r_{h}(s_{h}^{k},a_{h}^{k})+60SAH^{2}(\log_{2}K)\log\frac{1}{\delta'}\nonumber \\
 & \leq2KH+60SAH^{2}(\log_{2}K)\log\frac{1}{\delta'}\leq3KH\label{eq:sum-empirical-r-UB}
\end{align}
with probability exceeding $1-2SAHK\delta'$, 
where the last inequality holds true under the assumption~\ref{eq:assumption-K-proof}. 
\begin{proof}[Proof of Lemma~\ref{lemma:bdempr}]
In view of Lemma~\ref{empirical bernstein} and the union bound, with probability $1-2SAHK\delta'$ we have 
\begin{align}
\left| \widehat{r}_{h}^{k}(s,a)-r_{h}(s,a) \right| & \leq2\sqrt{2}\sqrt{\frac{\left(\widehat{\sigma}_{h}^{k}(s_{h}^{k},a_{h}^{k})-\big(\widehat{r}_{h}^{k}(s_{h}^{k},a_{h}^{k})\big)^{2}\right)\log\frac{1}{\delta'}}{N_{h}^{k}(s,a)}}+\frac{28H\log\frac{1}{\delta'}}{3N_{h}^{k}(s,a)} \notag\\
 & \leq2\sqrt{2}\sqrt{\frac{H\widehat{r}_{h}^{k}(s,a)\log\frac{1}{\delta'}}{N_{h}^{k}(s,a)}}+\frac{28H\log\frac{1}{\delta'}}{3N_{h}^{k}(s,a)} 
	\notag
\end{align}
simultaneously for all $(s,a,h,k)$ obeying $N_h^k(s,a)>2$, 
	where we take advantage of the basic fact $\widehat{\sigma}_h^k(s_h^k,a_h^k) \leq H\widehat{r}_h^k(s,a)$ (since each immediate reward is upper bounded by $H$). 
Solve the inequality above to obtain 
\begin{align}
\left|\widehat{r}_h^k(s,a) - r_h(s,a)\right|\leq  4\sqrt{\frac{Hr_h(s,a)\log \frac{1}{\delta'}}{N_h^k(s,a)}} + 24\frac{H\log \frac{1}{\delta'}}{N_h^k(s,a)}.\label{eq:bdt4_0}
\end{align}
It is then seen that
\begin{align}
  \sum_{k,h}\left|\widehat{r}_{h}^{k}(s_{h}^{k},a_{h}^{k})-r_{h}(s_{h}^{k},a_{h}^{k})\right|
 & \leq4SAH^{2}+\sum_{k,h}\left(4\sqrt{\frac{Hr_{h}(s_{h}^{k},a_{h}^{k})\log\frac{1}{\delta'}}{N_{h}^{k}(s_{h}^{k},a_{h}^{k})}}+24\frac{H\log\frac{1}{\delta'}}{N_{h}^{k}(s_{h}^{k},a_{h}^{k})}\right)\nonumber\\
 & \leq4SAH^{2}+4\sqrt{\sum_{k,h}\frac{H\log\frac{1}{\delta'}}{N_{h}^{k}(s_{h}^{k},a_{h}^{k})}}\cdot\sqrt{\sum_{k,h}r_{h}(s_{h}^{k},a_{h}^{k})}+24\sum_{k,h}\frac{H\log\frac{1}{\delta'}}{N_{h}^{k}(s_{h}^{k},a_{h}^{k})}. \nonumber
\end{align}
Here, the second inequality arises from Cauchy-Schwarz,  
whereas the term $4SAH^2$ accounts for those state-action pairs with $N_h^k(s,a)\leq 2$ (since there are at most $2SAH$ such occurances and it holds that $\left|\widehat{r}_{h}^{k}(s_{h}^{k},a_{h}^{k})-r_{h}(s_{h}^{k},a_{h}^{k})\right|\leq 2H$). 
This together with Lemma~\ref{lemma:doubling} then leads to 
\begin{align*}
  \sum_{k,h}\left|\widehat{r}_{h}^{k}(s_{h}^{k},a_{h}^{k})-r_{h}(s_{h}^{k},a_{h}^{k})\right|
 & \leq4SAH^{2}+4\sqrt{2SAH^{2}(\log_{2}K)\log\frac{1}{\delta'}}\sqrt{\sum_{k,h}r_{h}(s_{h}^{k},a_{h}^{k})}+48SAH^{2}(\log_{2}K)\log\frac{1}{\delta'}
	\nonumber \\
 & \leq 4\sqrt{2SAH^{2}(\log_{2}K)\log\frac{1}{\delta'}}\sqrt{\sum_{k,h}r_{h}(s_{h}^{k},a_{h}^{k})}+52SAH^{2}(\log_{2}K)\log\frac{1}{\delta'}.
	\nonumber
\end{align*}
Moreover, the AM-GM inequality implies that
\begin{align*}
\sum_{k,h}\widehat{r}_{h}^{k}(s_{h}^{k},a_{h}^{k})-\sum_{k,h}r_{h}(s_{h}^{k},a_{h}^{k}) & \leq \sum_{k=1}^{K}\sum_{h=1}^{H}r_{h}(s_{h}^{k},a_{h}^{k})+8SAH^{2}(\log_{2}K)\log\frac{1}{\delta'}+52SAH^{2}(\log_{2}K)\log\frac{1}{\delta'}
\end{align*}
\[
\Longrightarrow\qquad\sum_{k,h}\widehat{r}_{h}^{k}(s_{h}^{k},a_{h}^{k})\leq 2\sum_{k,h}r_{h}(s_{h}^{k},a_{h}^{k})+60SAH^{2}(\log_{2}K)\log\frac{1}{\delta'},
\]
thus concluding the proof.    
\end{proof}


\section{Proofs of key lemmas in Section~\ref{sec:tec}}\label{app:mfsectec}

\subsection{Proof of Lemma~\ref{lemma:key2}}\label{app:pfkey2}
It suffices to develop an upper bound on the cardinality of $\mathcal{C}$ (cf.~\eqref{eq:defn-C-choice}). 
Setting   
\begin{equation}
	M = \log_2 K \qquad\qquad \text{and} \qquad\qquad N = SAH,
\end{equation}
we find it helpful to introduce the following useful sets: 
\begin{subequations}
\begin{align}
	\mathcal{C}^{\mathsf{distinct}}(l) 
	&\coloneqq \Big\{ \mathcal{I} = \{\mathcal{I}^1,\ldots,\mathcal{I}^l\} \mid \mathcal{I}^{1}\leq \cdots \leq\mathcal{I}^{l}  ,  \mathcal{I}^{\tau}\in \{0,1,\cdots,M\}^N 
	\text{ and } \mathcal{I}^{\tau}\neq \mathcal{I}^{\tau+1}  ~(\forall\tau) 
 \Big\} ;
	\\
	\mathcal{C}^{\mathsf{distinct}} &\coloneqq \bigcup_{l\geq 1}\mathcal{C}^{\mathsf{distinct}}(l). 
\end{align}
\end{subequations}
In words, $\mathcal{C}^{\mathsf{distinct}}(l)$ can be viewed as the set of non-decreasing length-$l$ paths in $\{0,1,\cdots,M\}^N$, with all points on a path being distinct;   
 $\mathcal{C}^{\mathsf{distinct}}$ thus consists of all such paths  regardless of the length.

We first establish a connection between $|\mathcal{C}|$ and $\big|\mathcal{C}^{\mathsf{distinct}}\big|$. 
Define the operator $\mathsf{Proj}:\mathcal{C}\to \mathcal{C}^{\mathsf{distinct}}$ 
that maps each $\mathcal{I}\in \mathcal{C}$ to $\mathcal{I}^{\mathsf{distinct}}\in \mathcal{C}^{\mathsf{distinct}}$, 
where $\mathcal{I}^{\mathsf{distinct}}$ is composed of all distinct elements in $\mathcal{I}$ 
(in other words, this operator simply removes redundancy in $\mathcal{I}$). Let us looking at the following set 
\[
	\mathcal{B}(\mathcal{I}^{\mathsf{distinct}}) \coloneqq \big\{ \mathcal{I}\in \mathcal{C} \mid \mathsf{Proj}(\mathcal{I})=\mathcal{I}^{\mathsf{distinct}} \big\}
\]
for each $\mathcal{I}^{\mathsf{distinct}}\in \mathcal{C}^{\mathsf{distinct}}$. 
Since $\mathcal{I}^{\mathsf{distinct}}$ is a non-decreasing path with all its points being distinct, 
there are at most $MN+1$ elements in each $\mathcal{I}^{\mathsf{distinct}}$. Hence, the size of $\mathcal{B}(\mathcal{I}^{\mathsf{distinct}}) $ is at most the number of solutions to the following equations
\begin{align}
	\sum_{i=1}^{MN+1} x_i = K \qquad \text{and} \qquad x_i\in \mathbb{N} ~~\text{for all } 1\leq i \leq MN+1\nonumber.
\end{align}
Elementary combinatorial arguments then reveal that 
\begin{align*}
	\big| \mathcal{B}(\mathcal{I}^{\mathsf{distinct}})  \big | \leq 
\left( \begin{array}{c} K+MN \\ MN
\end{array}\right) 
\leq (K+MN)^{MN}\leq (2K)^{MN}
\end{align*}
for each $\mathcal{I}^{\mathsf{distinct}}$, 
provided that $K\geq MN=SAH\log_2K$. 
We then arrive at 
\begin{equation}
	|\mathcal{C}|\leq \big|\mathcal{C}^{\mathsf{distinct}}\big|\cdot (2K)^{MN}. 
	\label{eq:C-Cdistinct}
\end{equation}

Everything then boils down to bounding $|\mathcal{C}^{\mathsf{distinct}}|$. 
To do so, let us first look at the set $\mathcal{C}^{\mathsf{distinct}}(MN+1)$, 
as each path in $\mathcal{C}^{\mathsf{distinct}}$ cannot have length more than $MN+1$.  
For each $\mathcal{I}^{\mathsf{distinct}} = \{\widetilde{\mathcal{I}}^1,\widetilde{\mathcal{I}}^2,\ldots, \widetilde{\mathcal{I}}^{MN+1}\}\in \mathcal{C}^{\mathsf{distinct}}(MN+1)$, 
it is easily seen that
\begin{itemize}
	\item	$\widetilde{\mathcal{I}}^1 = [0,0,\ldots,0]^{\top}$ and $\widetilde{\mathcal{I}}^{MN+1}=[M,M,\ldots, M]^{\top}$.
	\item For each $1\leq \tau \leq MN$, $\widetilde{\mathcal{I}}^{\tau}$ and $\widetilde{\mathcal{I}}^{\tau+1}$ differ only in one element (i.e., their Hamming distance is 1). 
\end{itemize}
In other words, we can view $\mathcal{I}^{\mathsf{distinct}}$ as an $MN$-step path from  $ [0,0,\ldots,0]^{\top}$ to $[M,M,\ldots, M]^{\top}$, 
with each step moving in one dimension. 
Clearly, each step has at most $N$ directions to choose from, meaning that there are at most $N^{MN}$ such paths. This implies that 
\begin{align*}
	\big|\mathcal{C}^{\mathsf{distinct}}(MN+1)\big|\leq N^{MN}.
\end{align*}
To finish up, we further observe that for each $\mathcal{I}^{\mathsf{distinct}}\in \mathcal{C}^{\mathsf{distinct}}$, there exists some $\widetilde{\mathcal{I}}^{\mathsf{distinct}}\in \mathcal{C}^{\mathsf{distinct}}(MN+1)$ such that $\mathcal{I}^{\mathsf{distinct}}\subseteq \widetilde{\mathcal{I}}^{\mathsf{distinct}}$.  
This observation together with basic combinatorial arguments indicates that
\begin{align*}
	\big|\mathcal{C}^{\mathsf{distinct}}\big| \leq  2^{MN+1} \big|\mathcal{C}^{\mathsf{distinct}}(MN+1)\big|\leq (2N)^{MN+1}, 
\end{align*}
which taken collectively with \eqref{eq:C-Cdistinct} leads to the advertised bound  
\begin{align*}
	|\mathcal{C}|\leq (2K)^{MN} \big|\mathcal{C}^{\mathsf{distinct}}\big|
	\leq (4KN)^{MN+1} \leq (4KN)^{MN+1}.
\end{align*}

\subsection{Proof of Lemma~\ref{lemma:uniform}}\label{app:pfuniform}
Let us begin by considering any fixed total profile $\mathcal{I}\in \mathcal{C}$, 
any fixed integer $l$ obeying $2\leq l\leq\log_{2}K+1$, and any given
feasible sequence $\{X_{h,s,a}\}_{(s,a,h)\in\mathcal{S}\times\mathcal{A}\times[H]}$.
Recall that (i) $\widehat{P}_{s,a,h}^{(l)}$ is computed based on
the $l$-th batch of data comprising $2^{l-2}$ independent samples
from $\mathcal{D}^{\mathsf{expand}}$ (see Definition~\ref{def:filt2});
and (ii) each $X_{h+1,s,a}$ is given by a deterministic function of $\mathcal{I}$
and the empirical models for steps $h'\in[h+1,H]$. Consequently,
Lemma~\ref{lemma:self-norm} together with Definition~\ref{def:filt2} tells us that:
with probability at least $1-\delta'$, one has
\begin{align}
	&\sum_{s,a,h}\big\langle \widehat{P}_{s,a,h}^{(l)}-P_{s,a,h}, X_{h+1,s,a} \big\rangle \notag\\
	&\qquad 
	\leq \sqrt{\frac{8}{2^{l-2}}\sum_{s,a,h}\mathbb{V}\big(P_{s,a,h},X_{h+1,s,a}\big)\log\frac{3\log_{2}(SAHK)}{\delta'}}+\frac{4H}{2^{l-2}}\log\frac{3\log_{2}(SAHK)}{\delta'},
	\label{eq:xx1-aux}
\end{align}
where we view the left-hand side of \eqref{eq:xx1-aux} as a martingale sequence from $h=H$ back to $h=1$.

Moreover,  given that each $X_{h,s,a}$ has at most $K+1$ different
choices (since we assume $|\mathcal{X}_{h,\mathcal{I}}|\leq K+1$), 
there are no more than $(K+1)^{SAH}\leq (2K)^{SAH}$ possible choices of the feasible sequence
$\{X_{h,s,a}\}_{(s,a,h)\in\mathcal{S}\times\mathcal{A}\times[H]}$. 
In addition, it has been shown in Lemma~\ref{lemma:key2} that there are no more than $(4SAHK)^{2SAH\log_{2}K}$ possibilities of the total profile $\mathcal{I}$.
Taking the union bound over all these choices and replacing $\delta'$
in (\ref{eq:xx1-aux}) with $\delta'/\big((4SAHK)^{2SAH\log_{2}K} (2K)^{SAH}\log_2K\big)$, we can demonstrate that with probability at least $1-\delta'$, 
\begin{align}
 & 
	\sum_{s,a,h}\big\langle \widehat{P}_{s,a,h}^{(l)}-P_{s,a,h}, X_{h+1,s,a} \big\rangle
	\notag\\
	& \leq\sqrt{\frac{8}{2^{l-2}}\sum_{s,a,h}\mathbb{V}\big(P_{s,a,h},X_{h+1,s,a}\big)\left(2SAH\log_{2}K\log(4SAHK)+SAH\log (2K)+\log\frac{3\log_{2}^{2}(SAHK)}{\delta'}\right)}\nonumber\\
	& \qquad+\frac{4H}{2^{l-2}}\left(2SAH\log_{2}K\log(4SAHK)+SAH\log (2K)+\log\frac{3\log_{2}^{2}(SAHK)}{\delta'}\right) \notag\\
 & \leq\sqrt{\frac{8}{2^{l-2}}\sum_{s,a,h}\mathbb{V}\big(P_{s,a,h},X_{h+1,s,a}\big)\left(6SAH\log_{2}^{2}K+\log\frac{1}{\delta'}\right)}
	+\frac{4H}{2^{l-2}}\left(6SAH\log_{2}^{2}K+\log\frac{1}{\delta'}\right)
	\label{eq:xx1-aux-123}
\end{align}
holds simultaneously for all $\mathcal{I}\in \mathcal{C}$, all $2\leq l\leq\log_{2}K+1$, and all feasible sequences $\{X_{h,s,a}\}_{(s,a,h)\in\mathcal{S}\times\mathcal{A}\times[H]}$.

Finally, recalling our assumption $0\in \mathcal{X}_{h+1,\mathcal{I}}$, we see that 
for every total profile $\mathcal{I}$ and its associated feasible sequence  $\{X_{h,s,a}\}$, 
\[
	\sum_{s,a,h}\max\Big\{\big\langle\widehat{P}_{s,a,h}^{(l)}-P_{s,a,h},X_{h+1,s,a}\big\rangle,0\Big\}\in\bigg\{ \sum_{s,a,h}\big\langle\widehat{P}_{s,a,h}^{(l)}-P_{s,a,h},\widetilde{X}_{h+1,s,a}\big\rangle \,\Big|\, \widetilde{X}_{h+1,s,a}\in\mathcal{X}_{h+1,\mathcal{I}}, \forall (s,a,h)\bigg\} 
\]
holds true.  Consequently, the uniform upper bound on the right-hand side of \eqref{eq:xx1-aux-123} continues to be a valid upper bound on 
$\sum_{s,a,h}\max\big\{\big\langle\widehat{P}_{s,a,h}^{(l)}-P_{s,a,h},X_{h+1,s,a}\big\rangle,0\big\}$. This concludes the proof.

\subsection{Proof of Lemma~\ref{lemma:decouple}}\label{app:pfdecouple}

We begin by making the following claim, which we shall establish towards the end of this subsection.   
\begin{claim}
	\label{claim:PV-l-UB}
	With probability exceeding $1-\delta'$, 
\begin{align}
 & \sum_{s,a,h}\Big\langle\widehat{P}_{s,a,h}^{(l)}-P_{s,a,h},V_{h+1}^{k_{l,j,s,a,h}}\Big\rangle \notag\\
 & \qquad\leq\sqrt{\frac{8}{2^{l-2}}\sum_{s,a,h}\mathbb{V}\big(P_{s,a,h},V_{h+1}^{k_{l,j,s,a,h}}\big)\left(6SAH\log_{2}^{2}K+\log\frac{1}{\delta'}\right)}+\frac{4H}{2^{l-2}}\left(6SAH\log_{2}^{2}K+\log\frac{1}{\delta'}\right)
	\label{eq:claim-PV-l-UB}
\end{align}
holds simultaneously for all $l=1,\ldots,\log_{2}K$ and all $j=1,\ldots,2^{l-1}$, 
where $k_{l,j,s,a,h}$ stands for the episode index of the sample that visits $(s,a,h)$ for the $(2^{l-1}+j)$-th time in the online learning process. 
\end{claim}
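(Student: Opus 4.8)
The plan is to apply the uniform concentration bound of Lemma~\ref{lemma:uniform} with a carefully chosen family of vector sets $\{\mathcal{X}_{h,\mathcal{I}}\}$, and then read off \eqref{eq:claim-PV-l-UB} by specializing the free sequence $\{X_{h,s,a}\}$ to the value iterates $V_{h+1}^{k_{l,j,s,a,h}}$ that appear in the decomposition \eqref{eq:PV-sum-decompose}. The key conceptual point is that, once a total profile $\mathcal{I}$ is fixed, every running value estimate $V_{h+1}^k$ of Algorithm~\ref{alg:main} is a \emph{deterministic} function of $\mathcal{I}$ together with the empirical models $\{\widehat{P}^{(I^{k'}_{s,a,h'})}_{s,a,h'}, \widehat{r}^{(I^{k'}_{s,a,h'})}_{h'}(s,a), \widehat{\sigma}^{(I^{k'}_{s,a,h'})}_{h'}(s,a)\}$ for steps $h'\ge h+1$ (this is exactly the structure made explicit by the ``auxiliary algorithm for a fixed profile'' discussion, and it follows from the backward recursion \eqref{eq:updateq} which only ever references $V_{h+1}$ and the step-$h$ empirical quantities). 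Hence the natural choice is, for each $\mathcal{I}\in\mathcal{C}$ and each $h$, to let $\mathcal{X}_{h+1,\mathcal{I}} \coloneqq \{V_{h+1}^k : 1\le k\le K\}\cup\{0\}$, where each $V_{h+1}^k$ is the value estimate produced under profile $\mathcal{I}$.

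First I would verify the three admissibility conditions required by Lemma~\ref{lemma:uniform}: (i) $V_{h+1,\mathcal{I}}^k$ is a deterministic function of $\mathcal{I}$ and the step-$h'$ empirical models for $h'\ge h+1$ — this is the structural fact above; (ii) $\|V_{h+1}^k\|_\infty\le H$ — immediate from the truncation to $H$ in \eqref{eq:updateq} and the initialization $V_h\leftarrow H$, $V_{H+1}\leftarrow 0$; (iii) $|\mathcal{X}_{h+1,\mathcal{I}}|\le K+1$ and $0\in\mathcal{X}_{h+1,\mathcal{I}}$ — true by construction. With these in hand, Lemma~\ref{lemma:uniform} gives, with probability at least $1-\delta'$, the bound \eqref{eq:xx1-aux-1} simultaneously over all $\mathcal{I}\in\mathcal{C}$, all $2\le l\le\log_2 K+1$, and all sequences $\{X_{h,s,a}\}$ with $X_{h,s,a}\in\mathcal{X}_{h+1,\mathcal{I}}$. (A trivial reindexing handles the discrepancy between the ranges $1\le l\le \log_2 K$ in the claim and $2\le l\le\log_2 K+1$ in Lemma~\ref{lemma:uniform}; the $l=1$ case, corresponding to $\widehat P^{(0)}_{s,a,h}=\tfrac1S\mathbf 1$, is handled separately in \eqref{eq:PV-sum-decompose} via the $SAH^2$ slack term, so it does not enter the claim.)

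Next I would instantiate this uniform bound on the \emph{realized} profile $\mathcal{I}=\mathcal{I}^{\mathsf{true}}$ (the true visitation profile of the online run, which lies in $\mathcal{C}$ by Lemma~\ref{lemma:key2} / the monotonicity argument). For each fixed pair $(l,j)$ with $1\le l\le\log_2 K$, $1\le j\le 2^{l-1}$, I choose the sequence $X_{h,s,a}\coloneqq V_{h+1}^{k_{l,j,s,a,h}}$, where $k_{l,j,s,a,h}$ is the episode index of the $(2^{l-1}+j)$-th visit to $(s,a,h)$ (with the convention $V_{h+1}^k=0$, the all-zero vector in $\mathcal{X}_{h+1,\mathcal{I}}$, when $k>K$). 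Each such $X_{h,s,a}$ is a legitimate element of $\mathcal{X}_{h+1,\mathcal{I}^{\mathsf{true}}}$, so the uniform bound \eqref{eq:xx1-aux-1} applies verbatim, yielding exactly \eqref{eq:claim-PV-l-UB}. Because the event in Lemma~\ref{lemma:uniform} already holds simultaneously over all feasible sequences, no further union bound over $(l,j)$ is needed — the single event of probability $\ge 1-\delta'$ suffices for all $l,j$ at once.

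The main obstacle — and the step that genuinely requires care rather than bookkeeping — is justifying the deterministic-measurability claim (i): one must argue that, \emph{for a fixed profile}, $V_{h+1}^{k_{l,j,s,a,h}}$ does not secretly depend on step-$h$ (or earlier) transition samples through the choice of which episode $k_{l,j,s,a,h}$ is. The subtlety is that the \emph{identity} of the episode $k_{l,j,s,a,h}$ (and the policies played, hence which $(s,a,h)$ get visited) is itself determined by the data; however, conditioning on the total profile $\mathcal{I}^{\mathsf{true}}$ pins down the update schedule, and then within the coupled expanded sample model $\mathcal{D}^{\mathsf{expand}}$ of Definition~\ref{def:filt2} the batch estimators $\widehat P^{(l)}_{s,a,h}$ for step $h$ are, by construction, independent of everything used to build $V_{h+1}^{\cdot}$ (which only touches steps $h'\ge h+1$). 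One must also check that the value estimate $V_{h+1}^{k_{l,j,s,a,h}}$ referenced in the decomposition \eqref{eq:PV-sum-decompose} is precisely the estimate attached to the $(s,a,h)$-visit in question, i.e.\ that it was computed using the empirical model of the epoch \emph{preceding} that visit, so that it belongs to $\mathcal{X}_{h+1,\mathcal{I}}$ and is independent of $\widehat P^{(l)}_{s,a,h}$ — this is exactly the role played by the doubling rule and by the choice (in the modified $\mathtt{MVP}$) to use only the second half of the samples. Once this independence structure is spelled out, the rest is a direct invocation of Lemma~\ref{lemma:uniform}.
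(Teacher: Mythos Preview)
Your proposal is correct and follows essentially the same route as the paper: the paper likewise constructs, for each profile $\mathcal{I}\in\mathcal{C}$, an auxiliary ``$\mathtt{MVP}(\mathcal{I})$'' procedure producing value iterates $V_h^{k,\mathcal{I}}$, sets $\mathcal{X}_{h,\mathcal{I}}=\{V_h^{k,\mathcal{I}}:1\le k\le K\}\cup\{0\}$, invokes Lemma~\ref{lemma:uniform}, and then specializes to $\mathcal{I}=\mathcal{I}^{\mathsf{true}}$ using $V_h^k=V_h^{k,\mathcal{I}^{\mathsf{true}}}$. Your measurability worry in the last paragraph is handled automatically by the uniformity of Lemma~\ref{lemma:uniform} over all feasible sequences---once the bound holds for every choice of $X_{h+1,s,a}\in\mathcal{X}_{h+1,\mathcal{I}^{\mathsf{true}}}$, it does not matter that the selector $k_{l,j,s,a,h}$ depends on step-$h$ data.
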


Assuming the validity of Claim~\ref{claim:PV-l-UB} for the moment, 
we can combine this claim with the decomposition~\eqref{eq:PV-sum-decompose} and applying the Cauchy-Schwarz inequality to reach
\begin{align*}
 & \sum_{k=1}^{K}\sum_{h=1}^{H}\Big\langle\widehat{P}_{s_{h}^{k},a_{h}^{k},h}^{k}-P_{s_{h}^{k},a_{h}^{k},h},V_{h+1}^{k}\Big\rangle\leq\sum_{l=1}^{\log_{2}K}\sum_{j=1}^{2^{l-1}}\sum_{s,a,h}\Big\langle\widehat{P}_{s,a,h}^{(l)}-P_{s,a,h},V_{h+1}^{k_{l,j,s,a,h}}\Big\rangle+SAH^{2}\\
 & \leq\sum_{l=1}^{\log_{2}K}\sum_{j=1}^{2^{l-1}}\sqrt{\frac{8}{2^{l-2}}\sum_{s,a,h}\mathbb{V}\big(P_{s,a,h},V_{h+1}^{k_{l,j,s,a,h}}\big)\left(6SAH\log_{2}^{2}K+\log\frac{1}{\delta'}\right)}\\
 & \qquad\qquad+\sum_{l=1}^{\log_{2}K}\sum_{j=1}^{2^{l-1}}\frac{4H}{2^{l-2}}\left(6SAH\log_{2}^{2}K+\log\frac{1}{\delta'}\right)+SAH^{2}\\
 & \leq\sum_{l=1}^{\log_{2}K}\sqrt{16\sum_{j=1}^{2^{l-1}}\sum_{s,a,h}\mathbb{V}\big(P_{s,a,h},V_{h+1}^{k_{l,j,s,a,h}}\big)\left(6SAH\log_{2}^{2}K+\log\frac{1}{\delta'}\right)}\\
 & \qquad\qquad+\sum_{l=1}^{\log_{2}K}8H\left(6SAH\log_{2}^{2}K+\log\frac{1}{\delta'}\right)+SAH^{2}\\
 & \leq\sqrt{16(\log_{2}K)\sum_{l=1}^{\log_{2}K}\sum_{j=1}^{2^{l-1}}\sum_{s,a,h}\mathbb{V}\big(P_{s,a,h},V_{h+1}^{k_{l,j,s,a,h}}\big)\left(6SAH\log_{2}^{2}K+\log\frac{1}{\delta'}\right)}\\
 & \qquad\qquad+\left(48SAH^{2}\log_{2}^{3}K+8H(\log_{2}K)\log\frac{1}{\delta'}\right)+SAH^{2}\\
 & \leq\sqrt{16(\log_{2}K)\sum_{k=1}^{K}\sum_{h=1}^{H}\mathbb{V}\big(P_{s_{h}^{k},a_{h}^{k},h},V_{h+1}^{k}\big)\left(6SAH\log_{2}^{2}K+\log\frac{1}{\delta'}\right)}+49SAH^{2}\log_{2}^{3}K+8H(\log_{2}K)\log\frac{1}{\delta'}.
\end{align*}
Here, the last inequality is valid due to our assumption $V_{h+1}^k=0$ ($\forall k > K$) and the identity
\begin{align}
 & \sum_{k=1}^{K}\sum_{h=1}^{H}\mathbb{V}\big(P_{s_{h}^{k},a_{h}^{k},h},V_{h+1}^{k}\big)\nonumber\\
 & =\sum_{l=1}^{\log_{2}K}\sum_{s,a,h}\sum_{j=1}^{2^{l-1}}\mathbb{V}\big(P_{s,a,h},V_{h+1}^{k_{l,j,s,a,h}}\big)+\sum_{k=1}^{K}\sum_{h=1}^{H}\mathds{1}\Big\{ N_{h}^{k,\mathsf{all}}(s_{h}^{k},a_{h}^{k})=0\Big\}\mathbb{V}\big(P_{s_{h}^{k},a_{h}^{k},h},V_{h+1}^{k}\big).\nonumber
\end{align}
This establishes our advertised bound on $\sum_{k,h}\big\langle\widehat{P}_{s_{h}^{k},a_{h}^{k},h}^{k}-P_{s_{h}^{k},a_{h}^{k},h},V_{h+1}^{k}\big\rangle$, 
provided that Claim~\ref{claim:PV-l-UB} is valid.

Before proceeding to the proof of Claim~\ref{claim:PV-l-UB}, 
we note that the other two quantities $\sum_{k,h}\max\big\{\big\langle\widehat{P}_{s_{h}^{k},a_{h}^{k},h}^{k}-P_{s_{h}^{k},a_{h}^{k},h},V_{h+1}^{k}\big\rangle,0\big\}$ 
and $\sum_{k,h}\big\langle\widehat{P}_{s_{h}^{k},a_{h}^{k},h}^{k}-P_{s_{h}^{k},a_{h}^{k},h},\big(V_{h+1}^{k}\big)^{2}\big\rangle$ 
can be upper bounded using exactly the same arguments, which we omit for the sake of brevity. 
In particular, the latter quantity further satisfies 
\begin{align*} 
	& \sum_{k=1}^{K}\sum_{h=1}^{H}\Big\langle\widehat{P}_{s_{h}^{k},a_{h}^{k},h}^{k}-P_{s_{h}^{k},a_{h}^{k},h},\big(V_{h+1}^{k}\big)^{2}\Big\rangle\\
 & \leq\sqrt{16(\log_{2}K)\sum_{k=1}^{K}\sum_{h=1}^{H}\mathbb{V}\big(P_{s_{h}^{k},a_{h}^{k},h},\big(V_{h+1}^{k}\big)^{2}\big)\left(6SAH\log_{2}^{2}K+\log\frac{1}{\delta'}\right)}+49SAH^{3}\log_{2}^{3}K+8H^2(\log_{2}K)\log\frac{1}{\delta'}\\
 & \leq8H\sqrt{(\log_{2}K)\sum_{k=1}^{K}\sum_{h=1}^{H}\mathbb{V}\big(P_{s_{h}^{k},a_{h}^{k},h},V_{h+1}^{k}\big)\left(6SAH\log_{2}^{2}K+\log\frac{1}{\delta'}\right)}+49SAH^{3}\log_{2}^{3}K+8H^2(\log_{2}K)\log\frac{1}{\delta'},
\end{align*}
where the last inequality follows from Lemma~\ref{lemma:sqv} and
the fact that $0\leq V_{h+1}^{k}(s)\le H$ for all $s\in\mathcal{S}$.

\begin{proof}[Proof of Claim~\ref{claim:PV-l-UB}]

To invoke Lemma~\ref{lemma:decouple} to prove this claim,  we need to choose the set $\{\mathcal{X}_{h,\mathcal{I}}\}$ properly to include the true value function estimates $\{V_h^k\}$. 
To do so, we find it helpful to first introduce an auxiliary algorithm tailored to each total profile. 
Specifically, for each $\mathcal{I} \in \mathcal{C}$ (cf.~\eqref{eq:defn-C-choice}), consider the following updates operating upon the expanded sample set $\mathcal{D}^{\mathsf{expand}}$.  
%
%

\begin{algorithm}[ht]
	\DontPrintSemicolon
	\caption{Monotoinic Value Propagation for a given total profile $\mathcal{I}\in \mathcal{C}$ ($\mathtt{MVP}(\mathcal{I})$) \label{alg:main-fixed-profile}}
	
	\textbf{initialization: } set $V_{H+1}^{k,\mathcal{I}}(s)\leftarrow H$ for all $s\in \mathcal{S}$ and $1\leq k\leq K$. \\
	\For{$k=1,2,\ldots,K$} {
			\For{$h=H,H-1,...,1$} {
				\For{$(s,a)\in \mathcal{S}\times \mathcal{A}$} { \vspace{-1ex}
					\begin{align*} 
						\vspace{-3ex}
						j & \leftarrow I_{s,a,h}^k,~~ n \leftarrow 2^{j-2}, \\
						b_h(s,a) &\leftarrow c_1 \sqrt{\frac{   \mathbb{ V}\big(\widehat{P}^{(j)}_{s,a,h} ,V_{h+1}^{k,\mathcal{I}}\big) \log \frac{1}{\delta'}  }{ \max\{n,1 \} }}+c_2 \sqrt{\frac{\big(\widehat{\sigma}^{(j)}_h(s,a)- (\widehat{r}^{(j)}_h(s,a))^2 \big)\log \frac{1}{\delta'}}{\max\{n,1\}}} 
						 +c_3\frac{H\log \frac{1}{\delta'}}{ \max\{n,1\}  },  
						\\
						Q_h^{k,\mathcal{I}}(s,a) &\leftarrow \min\Big\{    \widehat{r}^{(j)}_h(s,a)+\langle \widehat{P}^{(j)}_{s,a,h}, V_{h+1}^{k,\mathcal{I}} \rangle +b_h(s,a)    ,H\Big\}, \\
						V^{k,\mathcal{I}}_{h}(s) &\leftarrow \max_{a}Q^{k,\mathcal{I}}_{h}(s,a).
					\end{align*}
					\vspace{-3ex}
				}
			}
	}
\end{algorithm}
If we construct
\begin{align}
	\mathcal{X}_{h,\mathcal{I}} \coloneqq \Big\{ V_h^{k,\mathcal{I}} \mid 1\leq k\leq K\Big\} \cup \{0\}, \qquad \forall h\in [H] \text{ and }\mathcal{I}\in \mathcal{C}, 
	\label{eq:X-I-ours}
\end{align}
then it can be easily seen that $\{\mathcal{X}_{h,\mathcal{I}}\}$ satisfies the properties stated right before  Lemma~\ref{lemma:uniform}.  
As a consequence, applying Lemma~\ref{lemma:uniform} yields
\begin{align}
 & \sum_{s,a,h}\Big\langle\widehat{P}_{s,a,h}^{(l)}-P_{s,a,h},X_{h+1,s,a}\Big\rangle \notag\\
 & \qquad\leq\sqrt{\frac{8}{2^{l-2}}\sum_{s,a,h}\mathbb{V}\big(P_{s,a,h},X_{h+1,s,a}\big)\left(6SAH\log_{2}^{2}K+\log\frac{1}{\delta'}\right)}+\frac{4H}{2^{l-2}}\left(6SAH\log_{2}^{2}K+\log\frac{1}{\delta'}\right)
	\label{eq:claim-PV-l-UB-temp}
\end{align}
simultaneously for all $l=1,\ldots,\log_{2}K$, all $\mathcal{I}\in \mathcal{C}$, and all sequences $\{X_{h,s,a}\}$ obeying $X_{h,s,a}\in \mathcal{X}_{h,\mathcal{I}}$, $\forall (s,a,h)$.

To finish up, denote by $\mathcal{I}^{\mathsf{true}}$ the true total profile resulting from the online learning process. 
Given the way we couple $\mathcal{D}^{\mathsf{expand}}$ and $\mathcal{D}^{\mathsf{original}}$ (see the beginning of Section~\ref{sec:decoupling-all}), 
	we can easily see that the true value function estimate $\{V_{h}^{k}\}$ obeys
\begin{equation}
	V_{h}^{k} = V_{h}^{k,\mathcal{I}^{\mathsf{true}}} \in \mathcal{X}_{h,\mathcal{I}^{\mathsf{true}}},\qquad 1\leq k \leq K. 
	\label{eq:Vk-Itrue}
\end{equation}
The claimed result then follows immediately from \eqref{eq:Vk-Itrue} and the uniform bound \eqref{eq:claim-PV-l-UB-temp}. 
\end{proof}

\section{Proofs of auxiliary lemmas in Section~\ref{app:thmmain}}\label{app:proof-lem-main}

\subsection{Proof of Lemma~\ref{lemma:opt}} \label{sec:proof-lemma:opt}
 To begin with, we find it helpful to define the following function 
 $$
	 f(p,v,n) \coloneqq \langle p, v\rangle + \max\Bigg\{\frac{20}{3}\sqrt{\frac{\mathbb{V}(p,v)\log \frac{1}{\delta'}} {n}} ,\frac{400}{9} \frac{H\log \frac{1}{\delta'}}{n} \Bigg\}
 $$ 
 for any vector $p\in \Delta^{S}$, any non-negative vector $v\in \mathbb{R}^S$ obeying $\|v\|_{\infty}\leq H $, and any positive integer $n$. 
 We claim that 
\begin{equation}
	f(p,v,n)  \text{ is non-decreasing in each entry of }v.
	\label{eq:nondecreasing-f-property}
\end{equation}
To justify this claim, consider any $1\leq s\leq S$, and let us freeze $p$, $n$ and all but the $s$-th entries of $v$.  
It then suffices to observe that (i) $f$ is a continuous function, and (ii) except for at most two possible choices of $v(s)$ that obey $\frac{20}{3}\sqrt{\frac{\mathbb{V}(p,v)\log \frac{1}{\delta'}} {n}}  =\frac{400}{9} \frac{H\log \frac{1}{\delta'}}{n} $, one can use the properties of $p$ and $v$ to calculate
 \begin{align}
	\frac{\partial f(p,v,n)}{\partial v(s)} & =p(s)+\frac{20}{3}\mathds{1}\Bigg\{\frac{20}{3}\sqrt{\frac{\mathbb{V}(p,v)\log\frac{1}{\delta'}}{n}}\geq\frac{400}{9}\frac{H\log\frac{1}{\delta'}}{n}\Bigg\}\frac{p(s)\big(v(s)-\langle p,v\rangle\big)\sqrt{\log\frac{1}{\delta'}}}{\sqrt{n\mathbb{V}(p,v)}}\nonumber\\
 & =p(s)+\mathds{1}\Bigg\{\sqrt{n\mathbb{V}(p,v)\log\frac{1}{\delta'}}\geq\frac{20}{3}H\log\frac{1}{\delta'}\Bigg\}\frac{\frac{20}{3}H\log\frac{1}{\delta'}}{\sqrt{n\mathbb{V}(p,v)\log\frac{1}{\delta'}}}\cdot\frac{p(s)\big(v(s)-\langle p,v\rangle\big)}{H}\nonumber\\
 & \geq\min\bigg\{ p(s)+p(s)\frac{\big(v(s)-\langle p,v\rangle\big)}{H},p(s)\bigg\}\nonumber\\
 & \geq p(s)\min\bigg\{\frac{H+v(s)-\langle p,v\rangle}{H},1\bigg\}\geq0,\nonumber
\end{align}
thus establishing the claim \eqref{eq:nondecreasing-f-property}.

We now proceed to the proof of Lemma~\ref{lemma:opt}. 
	 Consider any $(h,k,s,a)$, and we divide into two cases. 

\paragraph{Case 1: $N_{h}^k(s,a) \leq 2$.}  
In this case, the following trivial bounds arise directly from the update rule \eqref{eq:update1}:  
$$
	 Q_h^k(s,a)=H \geq Q_h^{\star}(s,a) \qquad \text{and} \qquad V_h^k(s)=H\geq V_h^{\star}(s). 
$$ 

\paragraph{Case 2: $N_{h}^k(s,a) > 2$.}
Suppose now that $Q_{h+1}^k\geq Q_{h+1}^{\star}$, which also implies that $V_{h+1}^k \geq V_{h+1}^{\star}$. 
If $Q_h^k(s,a)=H$, then $Q_h^k(s,a)\geq Q_h^{\star}(s,a)$ holds trivially, 
and hence it suffices to look at the case with $Q_h^k(s,a)<H$. 
According to the update rule in \eqref{eq:update1}, it holds that
\begin{align}
 & Q_{h}^{k}(s,a)\nonumber\\
 & =\widehat{r}_{h}^{k}(s,a)+\big\langle\widehat{P}_{s,a,h}^{k},V_{h+1}^{k}\big\rangle\nonumber
 \\ & \qquad \qquad +c_{1}\sqrt{\frac{\mathbb{V}(\widehat{P}_{s,a,h}^{k},V_{h+1}^{k})\log\frac{1}{\delta'}}{N_{h}^{k}(s,a)}}+c_{2}\sqrt{\frac{\left(\widehat{\sigma}_{h}^{k}(s,a)-\big(\widehat{r}_{h}^{k}(s,a)\big)^{2}\right)\log\frac{1}{\delta'}}{N_{h}^{k}(s,a)}}+c_{3}\frac{H\log\frac{1}{\delta'}}{N_{h}^{k}(s,a)}\nonumber\\
 & \geq\widehat{r}_{h}^{k}(s,a)+2\sqrt{2}\sqrt{\frac{\left(\widehat{\sigma}_{h}^{k}(s,a)-\big(\widehat{r}_{h}^{k}(s,a)\big)^{2}\right)\log\frac{1}{\delta'}}{N_{h}^{k}(s,a)}}+\frac{48H\log\frac{1}{\delta'}}{3N_{h}^{k}(s,a)}+f\big(\widehat{P}_{s,a,h}^{k},V_{h+1}^{k},N_{h}^{k}(s,a)\big)\nonumber\\
 & \geq\widehat{r}_{h}^{k}(s,a)+2\sqrt{2}\sqrt{\frac{\left(\widehat{\sigma}_{h}^{k}(s,a)-\big(\widehat{r}_{h}^{k}(s,a)\big)^{2}\right)\log\frac{1}{\delta'}}{N_{h}^{k}(s,a)}}+\frac{48H\log\frac{1}{\delta'}}{3N_{h}^{k}(s,a)}+f\big(\widehat{P}_{s,a,h}^{k},V_{h+1}^{\star},N_{h}^{k}(s,a)\big)
	\label{eq:Qhk-sa-LB1}
 \end{align}
for any $(s,a)$, where the last inequality results from the claim \eqref{eq:nondecreasing-f-property} and the property $V_{h+1}^k \geq V_{h+1}^{\star}$.
Moreover, applying Lemma \ref{empirical bernstein} and recalling the definition of $\widehat{\sigma}_h^k(s,a)$, we have 
\begin{subequations}
\label{eq_lemma1_ref.5+ref1}
\begin{align}
 & \mathbb{P}\left\{ \Big|\big\langle\widehat{P}_{s,a,h}^{k}-P_{s,a,h},\,V_{h+1}^{\star}\big\rangle\Big|>2\sqrt{\frac{\mathbb{V}\big(\widehat{P}_{s,a,h}^{k},V_{h+1}^{\star}\big)\log \frac{1}{\delta'}}{N_{h}^{k}(s,a)}}+\frac{14H\log \frac{1}{\delta'}}{3N_h^{k}(s,a)}\right\} \nonumber\\
 & \quad\leq\mathbb{P}\left\{ \Big|\big\langle\widehat{P}_{s,a,h}^{k}-P_{s,a,h},\,V_{h+1}^{\star}\big\rangle\Big|>\sqrt{\frac{2\mathbb{V}\big(\widehat{P}_{s,a,h}^{k},V_{h+1}^{\star}\big)\log \frac{1}{\delta'}}{N_{h}^{k}(s,a)-1}}+\frac{7H\log \frac{1}{\delta'}}{3N_{h}^{k}(s,a)-1}\right\} \leq2\delta'
	\label{eq_lemma1_ref.5}	
\end{align}
and
\begin{align}
\mathbb{P}\left\{ \Big|\widehat{r}_{h}^{k}(s,a)-r_{h}(s,a)\Big|>2\sqrt{\frac{\left(\widehat{\sigma}_{h}^{k}(s,a)-\big(\widehat{r}_{h}^{k}(s,a)\big)^{2}\right)\log \frac{1}{\delta'}}{N_{h}^{k}(s,a)}}+\frac{28H\log \frac{1}{\delta'}}{3N_{h}^{k}(s,a)}\right\}  & \leq2\delta' .
	\label{eq_lemma1_ref1}	
\end{align}
\end{subequations}
%
%
These two inequalities imply that with probability exceeding $1-4\delta'$, 
\begin{align*}
r_{h}(s,a) & \leq\widehat{r}_{h}^{k}(s,a)+2\sqrt{2}\sqrt{\frac{\left(\widehat{\sigma}_{h}^{k}(s,a)-\big(\widehat{r}_{h}^{k}(s,a)\big)^{2}\right)\log\frac{1}{\delta'}}{N_{h}^{k}(s,a)}}+\frac{28H\log\frac{1}{\delta'}}{3N_{h}^{k}(s,a)};\\
f\big(\widehat{P}_{s,a,h}^{k},V_{h+1}^{\star},N_{h}^{k}(s,a)\big) & =\big\langle P_{s,a,h},V_{h+1}^{\star}\big\rangle+\big\langle\widehat{P}_{s,a,h}^{k}-P_{s,a,h},V_{h+1}^{\star}\big\rangle\\
 & \qquad+\max\Bigg\{\frac{20}{3}\sqrt{\frac{\mathbb{V}(\widehat{P}_{s,a,h}^{k},V_{h+1}^{\star})\log\frac{1}{\delta'}}{N_{h}^{k}(s,a)}},\frac{400}{9}\frac{H\log\frac{1}{\delta'}}{N_{h}^{k}(s,a)}\Bigg\}\\
 & \geq\big\langle P_{s,a,h},V_{h+1}^{\star}\big\rangle.
\end{align*}
Substitution into \eqref{eq:Qhk-sa-LB1} gives: with probability at least $1-4\delta'$,  
$$	
Q_h^k(s,a)\geq r_h(s,a)+ \big\langle P_{s,a,h}, V_{h+1}^{\star} \big\rangle= Q_h^{\star}(s,a).
$$

\paragraph{Putting all this together.} 
With the above two cases in place,  one can invoke standard induction arguments to deduce that: with probability at least $1-4SAHK\delta'$, one has $Q_h^k(s,a)\geq Q_h^{\star}(s,a)$ 
and $V_h^k = \max_a Q_h^k(s,a)\geq \max_a Q_h^{\star}(s,a)= V_h^{\star}(s)$ for every $(s,a,h,k)$. The proof is thus completed.
%
%

\subsection{Proof of Lemma~\ref{lem:bound-T234}}\label{app:pflem:bound-T234}

\subsubsection{Bounding $T_2$} 
We first establish the bound \eqref{eq:boundt2} on $T_2$. To begin with, $T_2$ can be decomposed using the definition \eqref{eq:update1} of the bonus term:  
\begin{align}
T_{2} & =\sum_{k=1}^{K}\sum_{h=1}^{H}b_{h}^{k}(s_{h}^{k},a_{h}^{k})\nonumber\\
 & =\frac{460}{9}\sum_{k,h}\sqrt{\frac{\mathbb{V}\big(\widehat{P}_{s_{h}^{k},a_{h}^{k},h}^{k},V_{h+1}^{k}\big)\log\frac{1}{\delta'}}{N_{h}^{k}(s_{h}^{k},a_{h}^{k})}}+2\sqrt{2}\sum_{k,h}\sqrt{\frac{\left(\widehat{\sigma}_{h}^{k}(s_{h}^{k},a_{h}^{k})-\big(\widehat{r}_{h}^{k}(s_{h}^{k},a_{h}^{k})\big)^{2}\right)\log\frac{1}{\delta'}}{N_{h}^{k}(s_{h}^{k},a_{h}^{k})}}\nonumber\\
 & \qquad\qquad\qquad\qquad\qquad\qquad\qquad\qquad\qquad\qquad\qquad\qquad+\frac{544}{9}\sum_{k,h}\frac{H\log\frac{1}{\delta'}}{N_{h}^{k}(s_{h}^{k},a_{h}^{k})}.
	\label{eq:T2-decomposition-main1}
\end{align}
Applying the Cauchy-Schwarz inequality and invoking Lemma~\ref{lemma:doubling}, we obtain
\begin{align}
T_{2} & \leq\frac{460}{9}\sqrt{\sum_{k,h}\frac{\log\frac{1}{\delta'}}{N_{h}^{k}(s_{h}^{k},a_{h}^{k})}}\sqrt{\sum_{k,h}\mathbb{V}\big(\widehat{P}_{s_{h}^{k},a_{h}^{k},h}^{k},V_{h+1}^{k}\big)}\nonumber\\
 & \qquad+2\sqrt{2}\sqrt{\sum_{k,h}\frac{\log\frac{1}{\delta'}}{N_{h}^{k}(s_{h}^{k},a_{h}^{k})}}\sqrt{\sum_{k,h}\left(\widehat{\sigma}_{h}^{k}(s_{h}^{k},a_{h}^{k})-\big(\widehat{r}_{h}^{k}(s_{h}^{k},a_{h}^{k})\big)^{2}\right)}+\frac{544H\log\frac{1}{\delta'}}{9}\sum_{k,h}\frac{1}{N_{h}^{k}(s_{h}^{k},a_{h}^{k})}\nonumber\\
 & \leq\frac{460}{9}\sqrt{2SAH(\log_{2}K)\Big(\log\frac{1}{\delta'}\Big)\sum_{k,h}\mathbb{V}\big(\widehat{P}_{s_{h}^{k},a_{h}^{k},h}^{k},V_{h+1}^{k}\big)}\nonumber\\
 & \qquad+4\sqrt{SAH(\log_{2}K)\log\frac{1}{\delta'}}\sqrt{\sum_{k,h}\left(\widehat{\sigma}_{h}^{k}(s_{h}^{k},a_{h}^{k})-\big(\widehat{r}_{h}^{k}(s_{h}^{k},a_{h}^{k})\big)^{2}\right)}+\frac{1088}{9}SAH^{2}(\log_{2}K)\log\frac{1}{\delta'}.
	\label{eq:boundt2o-temp}
\end{align}
Using the basic fact $\widehat{\sigma}_h^k(s_h^k,a_h^k) \leq H\widehat{r}_h^k(s,a)$ (since each immediate reward is at most $H$) and the definition \eqref{eq:defn-T5-proof} of $T_5$,  we can continue the bound in \eqref{eq:boundt2o-temp} to derive
\begin{align}
T_{2} & \leq\frac{460}{9}\sqrt{2SAH(\log_{2}K)\Big(\log\frac{1}{\delta'}\Big)T_{5}}\nonumber\\
 & \qquad+4\sqrt{SAH^{2}(\log_{2}K)\log\frac{1}{\delta'}}\sqrt{\sum_{k,h}\widehat{r}_{h}^{k}(s_{h}^{k},a_{h}^{k})}+\frac{1088}{9}SAH^{2}(\log_{2}K)\log\frac{1}{\delta'}.
	\label{eq:boundt2o}
\end{align}
%
%
Applying Lemma~\ref{lemma:bdempr} to bound $\sum_{k,h}\widehat{r}_h^k(s_h^k,a_h^k)$ 
and using the basic fact $\sum_{k,h} r_h(s_h^k,a_h^k)\leq KH$, 
we can employ a little algebra to deduce that 
\begin{align*}
	T_2\leq 61\sqrt{2SAH(\log_2 K)\Big(\log\frac{1}{\delta'}\Big)T_5} +  8\sqrt{SAH^3K(\log_2 K)\log\frac{1}{\delta'}}+
	155SAH^2(\log_2K)\log\frac{1}{\delta'}
\end{align*}
with probability exceeding $1-2SAHK\delta'$.

%

\subsubsection{Bounding $T_3$} 
Next, let us prove the bound \eqref{eq:boundt3} on $|T_3|$. 
Recall that $V_{h+1}^{k}(s)$ denotes the value function estimate
of state $s$ \emph{before} the $k$-th episode, which corresponds
to the value estimate computed at the end of the \emph{previous epoch}.
This important fact implies that conditional on  $(s_{h}^{k},a_{h}^{k})$, the vector $e_{s_{h+1}^{k}}$ is statistically independent of $V_{h+1}^{k}$ and has conditional mean $P_{s_{h}^{k},a_{h}^{k},h}$,  
allowing us to invoke the Freedman inequality for
martingales (see Lemma~\ref{lemma:self-norm}) to control the sum
of $\big\langle P_{s_{h}^{k},a_{h}^{k},h}-e_{s_{h+1}^{k}},V_{h+1}^{k}\big\rangle$. 
Recalling the definition of $T_6$ in \eqref{eq:defn-T6-proof}, 
we can see from Lemma~\ref{lemma:self-norm} that
\begin{align}
 |T_3|  \leq 2\sqrt{2}\cdot \sqrt{  T_6 \log \frac{1}{\delta'} } + \log \frac{1}{\delta'} + 2H\log \frac{1}{\delta'} 
	\leq 2\sqrt{2}\cdot \sqrt{  T_6 \log \frac{1}{\delta'}  } + 3H\log \frac{1}{\delta'} \label{eq:boundt3-proof}
\end{align}
with probability at least $1-10SAH^2K^2\delta'$.

\subsubsection{Bounding $T_4$} 
We now turn attention to the bound \eqref{eq:bdt_4f} on $|T_4|$. 
Recall that
\begin{align}
T_4 = \sum_{k=1}^K\sum_{h=1}^H \left(\widehat{r}_h^k(s_h^k,a_h^k) - r_h(s_h^k,a_h^k)  \right) + \sum_{k=1}^K \left( \sum_{h=1}^H r_h(s_h^k,a_h^k) - V_1^{\pi^k}(s_1^k) \right),
	\label{eq:T4-defn-repeat}
\end{align}
and we shall bound the two terms above separately. 
\begin{itemize}
	\item Regarding the first term on the right-hand side of \eqref{eq:T4-defn-repeat}, we can apply Lemma~\ref{lemma:bdempr} and the fact $\sum_{k,h} r_h(s_h^k,a_h^k)\leq KH$ to show that
%
\begin{align}
  \left| \sum_{k=1}^K\sum_{h=1}^H \big(\widehat{r}_h^k(s_h^k,a_h^k)-r_{h}(s_h^k,a_h^k)\big)\right|
	\leq  4\sqrt{2SAH^3K (\log_2K)\log\frac{1}{\delta'}}+52 SAH^2 (\log_2K)\log \frac{1}{\delta'}\label{eq:bdt4_2}
\end{align}
holds with probability at least $1-2SAHK\delta'$. 
	\item
With regards to the second term on the right-hand side of \eqref{eq:T4-defn-repeat}, 
we note that conditional on $\pi^k$,  $E_k:=\sum_{h=1}^H r_{h}(s_h^k,a_h^k)- V_1^{\pi^k}(s_1^k)$ is a zero-mean random variable bounded in magnitude by $H$. 
According to Lemma~\ref{lemma:self-norm}, 
\begin{align}
	\left| \sum_{k=1}^K E_k \right|
	&\leq 2\sqrt{2}\cdot \sqrt{\sum_{k=1}^K \mathsf{Var}(E_k)\log \frac{1}{\delta'} } + 3H^2\log \frac{1}{\delta'} \nonumber\\
	&\leq 2\sqrt{2KH^2\log \frac{1}{\delta'}} + 3H^2 \log \frac{1}{\delta'}
	\label{eq:bdt4_1}
\end{align}
holds with probability exceeding $1- 4\delta' \log_2(KH)$, 
where $\mathsf{Var}(E_k)$ denotes the variance of $E_k$  conditioned on what happens before the $k$-th episode, 
and the last inequality follows since $|E_k|\leq H$ always holds. 
\end{itemize}

Substituting \eqref{eq:bdt4_2} and \eqref{eq:bdt4_1} into \eqref{eq:T4-defn-repeat} reveals that with probability at least $1-3SAHK\delta'$, 
\begin{align}
	|T_4| \leq 6\sqrt{2SAH^3K(\log_2K)\log \frac{1}{\delta'} } +  55SAH^2(\log_2 K)\log \frac{1}{\delta'}.\label{eq:bdt_4f-proof}
\end{align}

\subsection{Proof of Lemma~\ref{lem:bound-T56}} \label{sec:pflem:bound-T56}
Regarding the term $T_5$, direct calculation gives 
\begin{align}
T_{5} & =\sum_{k=1}^{K}\sum_{h=1}^{H}\mathbb{V}\big(\widehat{P}_{s_{h}^{k},a_{h}^{k},h}^{k},V_{h+1}^{k}\big)=\sum_{k=1}^{K}\sum_{h=1}^{H}\left(\Big\langle\widehat{P}_{s_{h}^{k},a_{h}^{k},h}^{k},\big(V_{h+1}^{k}\big)^{2}\Big\rangle-\big(\big\langle\widehat{P}_{s_{h}^{k},a_{h}^{k},h}^{k},V_{h+1}^{k}\big\rangle\big)^{2}\right)\nonumber\\
 & =\underset{=\:T_{7}}{\underbrace{\sum_{k=1}^{K}\sum_{h=1}^{H}\Big\langle\widehat{P}_{s_{h}^{k},a_{h}^{k},h}^{k}-P_{s_{h}^{k},a_{h}^{k},h},\big(V_{h+1}^{k}\big)^{2}\Big\rangle}}+\underset{=\:T_{8}}{\underbrace{\sum_{k=1}^{K}\sum_{h=1}^{H}\Big\langle P_{s_{h}^{k},a_{h}^{k},h}-e_{s_{h+1}^{k}},\big(V_{h+1}^{k}\big)^{2}\Big\rangle}}\nonumber\\
 & \qquad\qquad\qquad\qquad\qquad\qquad+\sum_{k=1}^{K}\sum_{h=1}^{H}\big(V_{h+1}^{k}(s_{h+1}^{k})\big)^{2}-\sum_{k=1}^{K}\sum_{h=1}^{H}\big(\big\langle\widehat{P}_{s_{h}^{k},a_{h}^{k},h}^{k},V_{h+1}^{k}\big\rangle\big)^{2}\notag\\
 & =T_{7}+T_{8}+\sum_{k=1}^{K}\sum_{h=2}^{H}\big(V_{h}^{k}(s_{h}^{k})\big)^{2}-\sum_{k=1}^{K}\sum_{h=1}^{H}\big(\big\langle\widehat{P}_{s_{h}^{k},a_{h}^{k},h}^{k},V_{h+1}^{k}\big\rangle\big)^{2}\notag\\
 & \leq T_{7}+T_{8}+2H\sum_{k=1}^{K}\sum_{h=1}^{H}\max\Big\{ V_{h}^{k}(s_{h}^{k})-\big\langle\widehat{P}_{s_{h}^{k},a_{h}^{k},h}^{k},V_{h+1}^{k}\big\rangle,0\Big\}\nonumber\\
 & \leq T_{7}+T_{8}+2H\sum_{k=1}^{K}\sum_{h=1}^{H}b_{h}^{k}(s_{h}^{k},a_{h}^{k})+2H\sum_{k=1}^{K}\sum_{h=1}^{H}\widehat{r}^k_{h}(s_{h}^{k},a_{h}^{k})\label{eq:boundt5-intermediate-13}\\
 & \leq T_{7}+T_{8}+2HT_{2}+6KH^{2}\label{eq:boundt5-intermediate}
\end{align}
with probability at least $1-3\delta'\log(KH^{3})$. Here, the third line utilizes the fact that $V_{H+1}^k=0$,  
the first inequality holds since 
\begin{align*}
\big(V_{h}^{k}(s_{h}^{k})\big)^{2}-\big(\big\langle\widehat{P}_{s_{h}^{k},a_{h}^{k},h}^{k},V_{h+1}^{k}\big\rangle\big)^{2} 
	& =\Big(V_{h}^{k}(s_{h}^{k})+\big\langle\widehat{P}_{s_{h}^{k},a_{h}^{k},h}^{k},V_{h+1}^{k}\big\rangle\Big)\Big(V_{h}^{k}(s_{h}^{k})-\big\langle\widehat{P}_{s_{h}^{k},a_{h}^{k},h}^{k},V_{h+1}^{k}\big\rangle\Big)\\
 & \leq2H\max\left\{ V_{h}^{k}(s_{h}^{k})-\big\langle\widehat{P}_{s_{h}^{k},a_{h}^{k},h}^{k},V_{h+1}^{k}\big\rangle,\,0\right\} ,
\end{align*}
 the penultimate line makes use of the property $V_{h}^{k}(s_{h}^{k})=Q_{h}^{k}(s_{h}^{k},a_{h}^{k})$ and the update rule \eqref{eq:updateq}, 
 whereas the last line applies property \eqref{eq:sum-empirical-r-UB} and the definition \eqref{eq:boundt2} of $T_2$.


Akin to the above bound on $T_5$, we can show that with probability at least $1-3SAHK\delta'$, 
\begin{align}
T_{6} & =\sum_{k=1}^{K}\sum_{h=1}^{H}\mathbb{V}\big(P_{s_{h}^{k},a_{h}^{k},h},V_{h+1}^{k}\big)=\sum_{k=1}^{K}\sum_{h=1}^{H}\Big\langle P_{s_{h}^{k},a_{h}^{k},h},\big(V_{h+1}^{k}\big)^{2}\Big\rangle-\sum_{k=1}^{K}\sum_{h=1}^{H}\Big(\big\langle P_{s_{h}^{k},a_{h}^{k},h},V_{h+1}^{k}\big\rangle\Big)^{2}\nonumber\\
 & =\sum_{k=1}^{K}\sum_{h=1}^{H}\Big\langle P_{s_{h}^{k},a_{h}^{k},h}-e_{s_{h+1}^{k}},\big(V_{h+1}^{k}\big)^{2}\Big\rangle+\sum_{k=1}^{K}\sum_{h=2}^{H}\big(V_{h}^{k}(s_{h}^{k})\big)^{2}-\sum_{k=1}^{K}\sum_{h=1}^{H}\Big(\big\langle P_{s_{h}^{k},a_{h}^{k},h},V_{h+1}^{k}\big\rangle\Big)^{2}\nonumber\\
 & \leq T_{8}+2H\sum_{k=1}^{K}\sum_{h=1}^{H}\max\Big\{ V_{h}^{k}(s_{h}^{k})-\big\langle P_{s_{h}^{k},a_{h}^{k},h},V_{h+1}^{k}\big\rangle,0\Big\}\notag\\
 & \leq T_{8}+2H\sum_{k=1}^{K}\sum_{h=1}^{H}\max\Big\{ V_{h}^{k}(s_{h}^{k})-\big\langle\widehat{P}_{s_{h}^{k},a_{h}^{k},h},V_{h+1}^{k}\big\rangle,0\Big\}+2H\sum_{k=1}^{K}\sum_{h=1}^{H}\max\Big\{\big\langle\widehat{P}_{s_{h}^{k},a_{h}^{k},h}^{k}-P_{s_{h}^{k},a_{h}^{k},h},V_{h+1}^{k}\big\rangle,0\Big\}\nonumber\\
 & \leq T_{8}+2H\sum_{k=1}^{K}\sum_{h=1}^{H}b_{h}^{k}(s_{h}^{k},a_{h}^{k})+2H\sum_{k=1}^{K}\sum_{h=1}^{H}\widehat{r}_{h}^{k}(s_{h}^{k},a_{h}^{k})+2HT_{9}\label{eq:boundt6-intermediate-135}\\
 & \leq T_{8}+2HT_{2}+6KH^2+2HT_{9}.\label{eq:boundt6-intermediate}
\end{align}
%
%


Finally, note that the above bounds on $T_5$ and $T_6$ both depend on the term $T_8$ (cf.~\eqref{eq:defn-T8-proof}), 
which we would like to cope with now. 
Using Freedman's inequality (cf.~Lemma~\ref{lemma:self-norm}) and the fact that $\mathsf{Var}(X^2)\leq 4 H^2\mathsf{Var}(X)$ for any random variable $X$ with support on $[-H,H]$ (cf.~Lemma~\ref{lemma:sqv}), we reach
\begin{align}
	|T_{8}|\leq2\sqrt{2}\sqrt{\sum_{k,h}\mathbb{V}\Big(\widehat{P}_{s_{h}^{k},a_{h}^{k},h}^{k},\big(V_{h+1}^{k}\big)^{2}\Big)\log\frac{1}{\delta'}}+3H^{2}\log\frac{1}{\delta'}\leq \sqrt{32H^{2}T_{6}\log\frac{1}{\delta'}}+3H^{2}\log\frac{1}{\delta'}	 
\end{align}
with probability at least $1-3\delta'\log(KH^{3})$. 
Substitution into \eqref{eq:boundt5-intermediate} and \eqref{eq:boundt6-intermediate} establishes \eqref{eq:boundt56}. 
%
%

\section{Proof of the value-based regret bound (proof of Theorem~\ref{thm:first})}\label{sec:appfirst}

Recall that 
\begin{equation}
	B=4000 (\log_2 K)^3 \log(3SAH)\log\frac{1}{\delta'} 
	\qquad \text{with }\delta' = \frac{\delta}{200SAH^2K^2}. 
	\label{eq:defn-B-first}
\end{equation}
Consider first the scenario where $K\leq \frac{BSAH^2}{v^{\star}}$: 
the regret bound can be upper bounded by 
\begin{align}
\mathbb{E}\big[\mathsf{Regret}(K)\big] & =\mathbb{E}\left[\sum_{k=1}^{K}\Big(V_{1}^{\star}(s_{1}^{k})-V_{1}^{\pi^{k}}(s_{1}^{k})\Big)\right]\leq\mathbb{E}\left[\sum_{k=1}^{K}V_{1}^{\star}(s_{1}^{k})\right]=K\mathbb{E}_{s_{1}\sim\mu}\big[V_{1}^{\star}(s_{1})\big]\notag\\
 & =Kv^{\star}=\min\Big\{\sqrt{BSAH^{2}Kv^{\star}}
	,Kv^{\star}\Big\}.
	\label{eq:E-regret-UB-easy-first}
\end{align}
As a result, 
the remainder of the proof is dedicated to the the case with
\begin{equation}
	K\geq \frac{BSAH^2}{v^{\star}} .
	\label{eq:K-focus-first}
\end{equation}
%


To begin with, recall that the proof of Theorem~\ref{thm1} in Section~\ref{app:thmmain} consists of bounding the quantities $T_1,\ldots,T_9$ (see \eqref{eq:decomposition}, \eqref{eq:defn-T56-proof} and \eqref{eq:defn-T789-proof}) and recall that $\delta' = \frac{\delta}{200SAH^2K^2}$. 
%
%
In order to establish Theorem~\ref{thm:first}, we need to develop tighter bounds on some of these quantities (i.e., $T_2$, $T_4$, $T_5$ and $T_6$)  to reflect their dependency on $v^{\star}$ (cf.~\eqref{eq:defn-vstar-formal}). 



\paragraph{Bounding $T_2$.}
Recall that we have shown in \eqref{eq:boundt2o} that
\begin{align}
 & T_{2}\leq\frac{460}{9}\sqrt{2SAH(\log_{2}K)\Big(\log\frac{1}{\delta'}\Big)T_{5}}\nonumber\\
 & \qquad+4\sqrt{SAH^{2}(\log_{2}K)\log\frac{1}{\delta'}}\sqrt{\sum_{k,h}\widehat{r}_{h}^{k}(s_{h}^{k},a_{h}^{k})}+\frac{1088}{9}SAH^{2}(\log_{2}K)\log\frac{1}{\delta'}.
\nonumber
\end{align}
In view of the definition of $T_4$ (cf.~\eqref{eq:decomposition}) as well as the fact that $\sum_{k=1}^K V_1^{\star}(s_1^k)\leq 3Kv^{\star} + H\log \frac{1}{\delta'}$ holds with probability at least $1-\delta'$ (see Lemma~\ref{lemma:con}),
we arrive at
\begin{align}
\sum_{k,h}\widehat{r}_{h}^{k}(s_{h}^{k},a_{h}^{k})\leq T_{4}+\sum_{k}V_{1}^{\pi_{k}}(s_{1}^{k})\leq T_{4}+\sum_{k}V_1^{\star}(s_{1}^{k})\leq T_{4}+3Kv^{\star}+H\log\frac{1}{\delta'},
	\label{eq:sum-rhat-vstar}
\end{align}
which in turn gives
 \begin{align}
T_{2} & \leq\frac{460}{9}\sqrt{2SAH(\log_{2}K)\Big(\log\frac{1}{\delta'}\Big)T_{5}}\nonumber\\
 & \qquad\qquad+4\sqrt{SAH^{2}(\log_{2}K)\log\frac{1}{\delta'}}\sqrt{T_{4}+3Kv^{\star}}+130SAH^{2}(\log_{2}K)\log\frac{1}{\delta'}.
\label{eq:nbft2}
 \end{align}

\paragraph{Bounding $T_4$.}
When it comes to the quantity $T_4$ (cf.~\eqref{eq:decomposition}), we make the observation that 
\begin{align}
 T_4 
	& 
	=\underset{\eqqcolon\,\widecheck{T}_{1}}{\underbrace{\sum_{k=1}^{K}\left(\sum_{h=1}^{H}\widehat{r}_{h}^{k}(s_{h}^{k},a_{h}^{k})-r_{h}(s_{h}^{k},a_{h}^{k})\right)}}+\underset{\eqqcolon\,\widecheck{T}_{2}}{\underbrace{\sum_{k=1}^{K}\left(\sum_{h=1}^{H}r_{h}(s_{h}^{k},a_{h}^{k})-V_{1}^{\pi^{k}}(s_{1}^{k})\right)}}.
	\label{eq:T4-decompose-T12-check}
 \end{align}
%
%
Repeating the arguments for \eqref{eq:sum-rhat-vstar} yields
\begin{align}
	\sum_{k,h} r_{h}(s_{h}^{k},a_{h}^{k})\leq \widecheck{T}_{2}+\sum_{k}V_{1}^{\pi_{k}}(s_{1}^{k})\leq \widecheck{T}_{2}+\sum_{k}V_1^{\star}(s_{1}^{k})\leq \widecheck{T}_{2}+3Kv^{\star}+H\log\frac{1}{\delta'}
	\label{eq:sum-rnohat-vstar}
\end{align}
 with probability at least $1-\delta'$. 
Combining this with Lemma~\ref{lemma:bdempr},
we see that   
\begin{align}
	\widecheck{T}_1 & \leq 4\sqrt{2SAH^2\log_2 K \log \frac{1}{\delta'}} \sqrt{\sum_{k=1}^K\sum_{h=1}^H r_h(s_h^k,a_h^k)} + 52SAH^2(\log_2 K)\log \frac{1}{\delta'} \nonumber
	\\ & \leq 4\sqrt{2SAH^2\log_2 K \log \frac{1}{\delta'}} \sqrt{\widecheck{T}_2 + 3Kv^{\star}} + 60SAH^2(\log_2 K)\log \frac{1}{\delta'}\label{eq:ct1}
\end{align}
with probability exceeding $1-3SAHK\delta'$. 
In addition,  Lemma~\ref{lemma:self-norm} tells us that
\begin{align}
\widecheck{T}_2 & \leq 2\sqrt{2\sum_{k=1}^K \mathbb{E}_{\pi^k,s_1\sim \mu}\left[\left(\sum_{h=1}^H r_h(s_h,a_h) \right)^2 \right]\log \frac{1}{\delta'}}+3H^2\log \frac{1}{\delta'} \nonumber
\\ & \leq 2\sqrt{2H\sum_{k=1}^K \mathbb{E}_{\pi^k,s_1\sim \mu}\left[\sum_{h=1}^H r_h(s_h,a_h)  \right]\log \frac{1}{\delta'}}+3H\log \frac{1}{\delta'} \nonumber
\\ & \leq 2\sqrt{2KHv^{\star}\log \frac{1}{\delta'}}+3H\log \frac{1}{\delta'} \label{eq:ct1.5}
\\ & \leq 2Kv^{\star} + 5H\log \frac{1}{\delta'}\label{eq:ct2}
\end{align}
with probability at least $1-2SAHK\delta'$, 
where the expectation operator $\mathbb{E}_{\pi^k,s_1\sim \mu}[\cdot]$ is taken over the randomness of a trajectory $\{(s_h,a_h)\}$ 
generated under policy $\pi^k$ and initial state $s_1\sim \mu$, 
the last line arises from the AM-GM inequality, 
and the penultimate line makes use of Assumption~\ref{assum1} and the fact that 
\[
\mathbb{E}_{\pi^{k},s_{1}\sim\mu}\left[\sum_{h=1}^{H}r_{h}(s_{h},a_{h})\right]=\mathbb{E}_{s_{1}\sim\mu}\left[V_{1}^{\pi^{k}}(s_{1})\right]\leq\mathbb{E}_{s_{1}\sim\mu}\left[V_{1}^{\star}(s_{1})\right]=v^{\star}.
\]
Taking \eqref{eq:ct1}, \eqref{eq:ct1.5} and \eqref{eq:ct2} together, we can demonstrate that with probability exceeding $1-5SAHK\delta'$,
\begin{subequations}
\begin{align}
	& \widecheck{T}_1\leq     13\sqrt{SAH^2Kv^{\star}(\log_2 K)\log \frac{1}{\delta'}}  + 80SAH^2(\log_2 K)\log\frac{1}{\delta'},\label{eq:check-T-bound-first}
\\ & \widecheck{T}_2 \leq  2\sqrt{2KHv^{\star}\log \frac{1}{\delta'}}+3H\log\frac{1}{\delta'} .
\end{align}
\end{subequations}
Substitution into \eqref{eq:T4-decompose-T12-check} reveals that: with probability exceeding $1-5SAHK\delta'$, 
\begin{align}
	T_4 \leq 15\sqrt{SAH^2Kv^{\star}(\log_2K)\log \frac{1}{\delta'}}  + 83SAH^2(\log_2K)\log\frac{1}{\delta'}.\label{eq:fnbt4}
\end{align}

\paragraph{Bounding $T_5$.}
Recall that we have proven in \eqref{eq:boundt5-intermediate-13} that
\begin{align}
	T_{5} & \leq T_{7}+T_{8}+2HT_2+2H\sum_{k=1}^{K}\sum_{h=1}^{H}\widehat{r}^k_{h}(s_{h}^{k},a_{h}^{k}).\label{eq:T5-UB-first-123}
\end{align}
%
%
With \eqref{eq:sum-rnohat-vstar} and \eqref{eq:ct2} in place, we can deduce that, with probability at least $1-3SAHK\delta'$, 
\begin{align}
\sum_{k,h}r_{h}(s_{h}^{k},a_{h}^{k}) & \leq\widecheck{T}_{2}+3Kv^{\star}+H\log\frac{1}{\delta'}\leq5Kv^{\star}+6H\log\frac{1}{\delta'}. 
	\label{eq:sum-r-UB-first}
\end{align}
Moreover, under the assumption~\eqref{eq:K-focus-first}, we can further bound \eqref{eq:check-T-bound-first} as
\[
	\widecheck{T}_{1}\leq\sqrt{BSAH^{2}Kv^{\star}}+BSAH^{2}\leq2Kv^{\star}
\]
with probability exceeding $1-3SAHK\delta'$, which combined with \eqref{eq:sum-r-UB-first} and the assumption~\eqref{eq:K-focus-first} results in
\begin{align}
\sum_{k,h}\widehat{r}_{h}^{k}(s_{h}^{k},a_{h}^{k})=\sum_{k,h}r_{h}(s_{h}^{k},a_{h}^{k})+\widecheck{T}_{1} & \leq7Kv^{\star}+6H\log\frac{1}{\delta'}\leq8Kv^{\star}.
	\label{eq:sum-hat-r-UB-first}
\end{align}
Substitution into \eqref{eq:T5-UB-first-123} indicates that: with probability exceeding $1-6SAHK\delta'$, 
\begin{align}
T_5\leq  T_7 + T_8+2HT_2 + 16HKv^{\star} .\label{eq:fnbt5}
\end{align}

\paragraph{Bounding $T_6$.}
Making use of our bounds \eqref{eq:boundt6-intermediate-135}, \eqref{eq:boundt8} and \eqref{eq:sum-hat-r-UB-first}, we can readily derive
\begin{align}
T_{6} & \leq T_{8}+2HT_{2}+2HT_{9}+2H\sum_{k=1}^{K}\sum_{h=1}^{H}\widehat{r}_{h}(s_{h}^{k},a_{h}^{k})\nonumber\\
 & \leq \sqrt{32T_{6}\log\frac{1}{\delta'}}+2HT_{9}+16HKv^{\star}+3H^{2}\log\frac{1}{\delta'}+2HT_{2}
	\label{eq:fnbt6-first}
\end{align}
with probability at least $1-16SAH^2K^2\delta'$.

\paragraph{Putting all pieces together.}
%
%
Recalling our choice of $B$ (cf.~\eqref{eq:defn-B-first}), 
we can see from \eqref{eq:nbft2}, \eqref{eq:boundt3}, \eqref{eq:fnbt4}, \eqref{eq:fnbt5}, \eqref{eq:fnbt6-first}, \eqref{eq:boundt8}, \eqref{eq:boundt1} and \eqref{eq:boundt7} that
\begin{subequations}
	\label{eq:all-T-bounds-first}
\begin{align}
T_{2} & \leq\sqrt{BSAHT_{5}}+\sqrt{BSAH^{2}(T_{4}+3Kv^{\star})}+BSAH^{2},\\
T_{3} & \leq\sqrt{BT_{6}}+BH,\\
T_{4} & \leq\sqrt{BSAH^{2}Kv^{\star}}+BSAH^{2},\\
T_{5} & \leq T_{7}+T_{8}+2HT_{2}+16HKv^{\star},\\
T_{6} & \leq\sqrt{BT_{6}}+2HT_{9}+16HKv^{\star}+BH^{2}+2HT_{2},\\
T_{8} & \leq\sqrt{BH^{2}T_{6}}+BH^{2},\\
T_{1}\leq T_{9} & \leq\sqrt{BSAHT_{6}}+BSAH^{2},\\
T_{7} & \leq H\sqrt{BSAHT_{6}}+BSAH^{3}.
\end{align}
\end{subequations}
Solving \eqref{eq:all-T-bounds-first} under the assumption $K\geq \frac{BSAH^2}{v^{\star}}$ allows us to demonstrate that
\begin{subequations}
\begin{align}
	T_6 &\lesssim BHKv^{\star} \\
	T_1\leq T_9 &\lesssim \sqrt{B^2SAH^2Kv^{\star}}\\
	T_7+T_8 &\lesssim \sqrt{B^2SAH^4Kv^{\star}} \\
	T_5 &\lesssim BHKv^{\star} \\
	T_2 &\lesssim \sqrt{B^2SAH^2Kv^{\star}} \\
	T_3 &\lesssim \sqrt{B^2HKv^{\star}} \\
	T_4 & \lesssim \sqrt{BSAH^{2}Kv^{\star}}
\end{align}
\end{subequations}
with probability exceeding $1-200SAH^2K^2\delta'$. 
Putting these bounds together with \eqref{eq:decomposition}, we arrive at
\[
	\mathsf{Regret}(K)\leq T_{1}+T_{2}+T_{3}+T_{4} 
	\lesssim B\sqrt{SAH^2Kv^{\star}} 
\]
with probability exceeding $1-200SAH^2K^2\delta'$. 
Replacing $\delta'$ with $\frac{\delta}{200SAH^2K^2}$ and taking $\delta=\frac{1}{2KH}$ give
\begin{align*}
	\mathbb{E}\big[\mathsf{Regret}(K)\big]
	&\lesssim(1-\delta)B\sqrt{SAH^{2}Kv^{\star}}+\delta Kv^{\star}\lesssim B\sqrt{SAH^{2}Kv^{\star}}+1 \asymp B\sqrt{SAH^{2}Kv^{\star}} \\
	&  
	\asymp \min\big\{ B\sqrt{SAH^{2}Kv^{\star}}, B Kv^{\star} \big\} 
	\asymp  \min\big\{ \sqrt{SAH^{2}Kv^{\star}}, Kv^{\star}\big\} \log^{5}(SAHK) 
	,
\end{align*}
provided that $K\geq \frac{BSAH^2}{v^{\star}}$. 
Taking this collectively with \eqref{eq:E-regret-UB-easy-first} concludes the proof.

\section{Proof of the cost-based regret bound (proof of Theorem~\ref{thm:cost})}\label{app:cost}

We now turn to the proof of Theorem~\ref{thm:cost}. 
For notational convenience, we shall use $r$ to denote the negative cost (namely, $r_h= -c_h$, $\widehat{r}_h=-\widehat{c}_h$, and so on) throughout this section. 
We shall also use the following notation (and similar quantities like $Q_h^k$, $V_h^k$, $\ldots$)
\begin{align}
	Q_h(s,a)&\leftarrow \max\left\{\min \Big\{ \widehat{r}_h(s,a) + \widehat{P}_{s,a,h}V_{h+1}+b_h(s,a), 0 \Big\} ,-H \right\},\nonumber \\
	V_h(s) & \leftarrow \max_a Q_h(s,a),
	\nonumber
\end{align}
in order to be consistent with the reward-based setting.

Akin to the proof of Theorem~\ref{thm:first},  
we need to bound the quantities $T_1,\ldots,T_9$ introduced previously (see \eqref{eq:decomposition}, \eqref{eq:defn-T56-proof} and \eqref{eq:defn-T789-proof}). 
We note that the analysis for $T_1$, $T_3$, $T_7$, $T_8$ and $T_9$ in Appendix~\ref{sec:appfirst} readily applies to the negative reward case herein. 
Thus, it suffices to develop bounds on $T_2$, $T_4$, $T_5$ and $T_6$ to capture their dependency on $c^{\star}$, 
which forms the main content of the remainder of  this section.

\paragraph{Bounding $T_2$.}
Recall from \eqref{eq:T2-decomposition-main1} that 
\begin{align}
T_{2} 
 & =\frac{460}{9}\sum_{k,h}\sqrt{\frac{\mathbb{V}\big(\widehat{P}_{s_{h}^{k},a_{h}^{k},h}^{k},V_{h+1}^{k}\big)\log\frac{1}{\delta'}}{N_{h}^{k}(s_{h}^{k},a_{h}^{k})}}+\nonumber\\
 & \qquad2\sqrt{2}\sum_{k,h}\sqrt{\frac{\left(\widehat{\sigma}_{h}^{k}(s_{h}^{k},a_{h}^{k})-\big(\widehat{r}_{h}^{k}(s_{h}^{k},a_{h}^{k})\big)^{2}\right)\log\frac{1}{\delta'}}{N_{h}^{k}(s_{h}^{k},a_{h}^{k})}} +\frac{544}{9}\sum_{k,h}\frac{H\log\frac{1}{\delta'}}{N_{h}^{k}(s_{h}^{k},a_{h}^{k})}.
\label{eq:cc1}
\end{align}
In what follows, let us bound the three terms on the right-hand side of \eqref{eq:cc1} separately. 
\begin{itemize}
	\item 
For the first and the third terms on the right-hand side of \eqref{eq:cc1}, invoking the Cauchy-Schwarz inequality and Lemma~\ref{lemma:doubling} gives
\begin{align}
\sum_{k,h}\sqrt{\frac{\mathbb{V}\big(\widehat{P}_{s_{h}^{k},a_{h}^{k},h}^{k},V_{h+1}^{k}\big)\log\frac{1}{\delta'}}{N_{h}^{k}(s_{h}^{k},a_{h}^{k})}} & \leq\sqrt{2SAH(\log_{2}K)\Big(\log\frac{1}{\delta'}\Big)\sum_{k,h}\mathbb{V}\big(\widehat{P}_{s_{h}^{k},a_{h}^{k},h}^{k},V_{h+1}^{k}\big)}\nonumber\\
 & =\sqrt{2SAH(\log_{2}K)\Big(\log\frac{1}{\delta'}\Big)T_{5}}\label{eq:cterm1} 
 \end{align}
with $T_5$ defined in \eqref{eq:defn-T5-proof},  and in addition, 
 \begin{align}
	 \sum_{k,h} \frac{H\log \frac{1}{\delta'}}{N_h^k(s_h^k,a_h^k)} \leq 2SAH^2(\log_2K)\log \frac{1}{\delta'}.\label{eq:cterm2}
\end{align}

\item 
Let us turn to the second term on the right-hand side of \eqref{eq:cc1}. Observing the basic fact that
\begin{align}
 \widehat{\sigma}_h^k(s_h^k,a_h^k) - \big(\widehat{r}_h^k(s_h^k,a_h^k)\big)^2 \leq  -H \widehat{r}_h^k(s_h^k,a_h^k),\nonumber
\end{align}
we can combine it with Lemma~\ref{lemma:doubling} to derive 
\begin{align}
 & \sqrt{\frac{\left(\widehat{\sigma}_{h}^{k}(s_{h}^{k},a_{h}^{k})-\big(\widehat{r}_{h}^{k}(s_{h}^{k},a_{h}^{k})\big)^{2}\right)\log\frac{1}{\delta'}}{N_{h}^{k}(s_{h}^{k},a_{h}^{k})}}\leq\sqrt{2SAH(\log_{2}K)\log\frac{1}{\delta'}}\sqrt{H\sum_{k,h}-\widehat{r}_{h}^{k}(s_{h}^{k},a_{h}^{k})}\nonumber\\
 & \leq\sqrt{2SAH^{2}(\log_{2}K)\log\frac{1}{\delta'}}\sqrt{-T_{4}+3Kc^{\star}+\sum_{k=1}^{K}\Big(-V_{1}^{\pi^{k}}(s_{1}^{k})+V_{1}^{\star}(s_{1}^{k})\Big)+\sum_{k=1}^{K}\Big(-V_{1}^{\star}(s_{1}^{k})-3c^{\star}\Big)},\label{eq:cterm3}
\end{align}
where the last inequality invokes the definition of $T_4$ (see \eqref{eq:decomposition}). 
By virtue of Lemma~\ref{lemma:con} and the definition \eqref{eq:defn-cstar-formal} of $c^{\star}$, one can show that
\begin{align}
\sum_{k=1}^K -V_1^{\star}(s_1^k)\leq 3Kc^{\star} + H\log \frac{1}{\delta'}\nonumber
\end{align}
with probability exceeding $1-\delta'$. 
In addition, we note that
\begin{align}
\sum_{k=1}^K \Big(-V_1^{\pi^k}(s_1^k) +V_1^{\star}(s_1^k)  \Big) = \mathsf{Regret}(K) = T_1+T_2+T_3+T_4.
\end{align}
Taking these properties together with \eqref{eq:cterm3} yields
\begin{align*}
 & \sqrt{\frac{\left(\widehat{\sigma}_{h}^{k}(s_{h}^{k},a_{h}^{k})-\big(\widehat{r}_{h}^{k}(s_{h}^{k},a_{h}^{k})\big)^{2}\right)\log\frac{1}{\delta'}}{N_{h}^{k}(s_{h}^{k},a_{h}^{k})}}\nonumber\\
 & \qquad\leq\sqrt{2SAH^{2}(\log_{2}K)\log\frac{1}{\delta'}}\sqrt{T_{1}+T_{2}+T_{3}+2|T_{4}|+3Kc^{\star}+H\log\frac{1}{\delta'}}
\end{align*}
\end{itemize}
Putting the above results together, we can deduce that, with probability exceeding $1-\delta'$,
\begin{align}
	T_2 &  \leq 90\sqrt{SAH(\log_2K)\Big(\log\frac{1}{\delta'}\Big)T_5}  \nonumber\\ &  + 4\sqrt{SAH^2(\log_2K)\log\frac{1}{\delta'}}\sqrt{T_1+T_2+T_3+2|T_4|+3Kc^{\star}+H\log\frac{1}{\delta'}} + 130 SAH^2(\log_2 K)\log \frac{1}{\delta'}.\label{eq:ccbt2}
\end{align}

\paragraph{Bounding $T_4$.}
When it comes to the quantity $T_4$, we recall that
\begin{align}
 T_4 
	& = \underset{\eqqcolon\,\widecheck{T}_1}{\underbrace{ \sum_{k=1}^K \left( \sum_{h=1}^H \widehat{r}_h^k(s_h^k,a_h^k)-r_{h}(s_h^k,a_h^k)\right) }} 
	+ \underset{\eqqcolon\,\widecheck{T}_2}{\underbrace{ \sum_{k=1}^K \left( \sum_{h=1}^H r_h(s_h^k,a_h^k) - V_{1}^{\pi^k}(s_1^k) \right) }}.
	\label{eq:T4-1234567}
 \end{align}
%
%
To control $T_4$, we first make note of the following result that bounds the empirical reward (for the case with negative rewards), 
which assists in bounding the term $\widecheck{T}_1$.
\begin{lemma}\label{lemma:bdempc}
With probability at least $1-2SAHK\delta'$, it holds that
\begin{align}
 & \sum_{k=1}^{K}\sum_{h=1}^{H}\left|\widehat{r}_{h}^{k}(s_{h}^{k},a_{h}^{k})-r_{h}(s_{h}^{k},a_{h}^{k})\right|\nonumber\\
	& \leq 4\sqrt{2SAH^{2}(\log_{2}K)\log\frac{1}{\delta'}}\cdot\sqrt{\sum_{k=1}^{K}\sum_{h=1}^{H} \big(-r_{h}(s_{h}^{k},a_{h}^{k}) \big)}+52SAH^{2}(\log_{2}K)\log\frac{1}{\delta'}.\nonumber
\end{align}
\end{lemma}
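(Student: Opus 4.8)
The plan is to mirror the proof of Lemma~\ref{lemma:bdempr} almost verbatim, replacing the rewards $r_h$ with $-r_h = c_h \in [0,H]$ so that the quantities of interest are again non-negative and bounded by $H$. First I would invoke the empirical Bernstein inequality (Lemma~\ref{empirical bernstein}) together with a union bound over all $(s,a,h,k)$ with $N_h^k(s,a) > 2$, which gives
\[
\left|\widehat{r}_h^k(s,a) - r_h(s,a)\right| \leq 2\sqrt{2}\sqrt{\frac{\big(\widehat{\sigma}_h^k(s,a)-(\widehat{r}_h^k(s,a))^2\big)\log\frac{1}{\delta'}}{N_h^k(s,a)}} + \frac{28H\log\frac{1}{\delta'}}{3N_h^k(s,a)}
\]
with probability at least $1-2SAHK\delta'$. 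The key structural fact in the cost setting is that each immediate reward lies in $[-H,0]$, so $\widehat{\sigma}_h^k(s,a) = \frac{1}{N}\sum (r')^2 \leq H\cdot\frac{1}{N}\sum(-r') = -H\widehat{r}_h^k(s,a)$, hence $\widehat{\sigma}_h^k(s,a) - (\widehat{r}_h^k(s,a))^2 \leq -H\widehat{r}_h^k(s,a) = H|\widehat{r}_h^k(s,a)|$. Substituting this bound and solving the resulting quadratic inequality in $|\widehat{r}_h^k(s,a)-r_h(s,a)|$ (exactly as in \eqref{eq:bdt4_0}) yields
\[
\left|\widehat{r}_h^k(s,a) - r_h(s,a)\right| \leq 4\sqrt{\frac{H\,|r_h(s,a)|\,\log\frac{1}{\delta'}}{N_h^k(s,a)}} + 24\frac{H\log\frac{1}{\delta'}}{N_h^k(s,a)},
\]
where $|r_h(s,a)| = -r_h(s,a)$.

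Next I would sum over all $(k,h)$ along the sample trajectory. The $2SAH$ state-action-step tuples with $N_h^k(s,a)\le 2$ contribute at most $4SAH^2$ (since each term is at most $2H$). For the remaining terms, apply Cauchy-Schwarz to split $\sum_{k,h}\sqrt{\frac{H|r_h|\log\frac1{\delta'}}{N_h^k}} \leq \sqrt{\sum_{k,h}\frac{H\log\frac1{\delta'}}{N_h^k}}\cdot\sqrt{\sum_{k,h}|r_h(s_h^k,a_h^k)|}$, then invoke Lemma~\ref{lemma:doubling} to bound $\sum_{k,h}\frac{1}{\max\{N_h^k(s_h^k,a_h^k),1\}} \leq 2SAH\log_2 K$. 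This gives
\[
\sum_{k,h}\left|\widehat{r}_h^k(s_h^k,a_h^k) - r_h(s_h^k,a_h^k)\right| \leq 4\sqrt{2SAH^2(\log_2 K)\log\tfrac1{\delta'}}\sqrt{\sum_{k,h}(-r_h(s_h^k,a_h^k))} + 48SAH^2(\log_2 K)\log\tfrac1{\delta'} + 4SAH^2,
\]
and absorbing $4SAH^2$ into the last term (using $48 + 4 \leq 52$ times the logarithmic factor, which is at least $1$) yields the claimed bound.

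There is no real obstacle here; the only point requiring a moment's care is the sign bookkeeping — making sure that $-r_h(s_h^k,a_h^k) = c_h(s_h^k,a_h^k) \geq 0$ so that the square root on the right-hand side is well-defined and the Cauchy-Schwarz step is valid, and checking that the empirical-variance bound $\widehat{\sigma}_h^k - (\widehat{r}_h^k)^2 \leq H|\widehat{r}_h^k|$ goes through with the reward range $[-H,0]$ exactly as $\widehat{\sigma}_h^k \leq H\widehat{r}_h^k$ did with the range $[0,H]$. Everything else is an algebraic transcription of the proof of Lemma~\ref{lemma:bdempr}.
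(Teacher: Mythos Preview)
Your proposal is correct and takes essentially the same approach as the paper: the paper's own proof of Lemma~\ref{lemma:bdempc} simply states that it follows the arguments of Lemma~\ref{lemma:bdempr} with $r$ replaced by $-r$, and you have accurately filled in those details, including the key variance bound $\widehat{\sigma}_h^k - (\widehat{r}_h^k)^2 \leq -H\widehat{r}_h^k$ (which the paper also uses explicitly elsewhere in Appendix~\ref{app:cost}).
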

\begin{proof}
The proof basically follows the same arguments as in the proof of Lemma~\ref{lemma:bdempr}, except that $r$ is now replaced with $-r$.
\end{proof}
\noindent 
Lemma~\ref{lemma:bdempc} tells us that with probability at least $1-3SAHK\delta'$,  
\begin{align}
	|\widecheck{T}_1 |& \leq 4\sqrt{2SAH^{2}(\log_{2}K)\log\frac{1}{\delta'}}\cdot\sqrt{\sum_{k=1}^{K}\sum_{h=1}^{H}\big(-r_{h}(s_{h}^{k},a_{h}^{k}) \big)}+52SAH^{2}(\log_{2}K)\log\frac{1}{\delta'} \nonumber
	\\ & \leq 4\sqrt{2SAH^2(\log_2 K)}\cdot \sqrt{-\widecheck{T}_2 + 3Kc^{\star}+\sum_{k=1}^K \big(-V_1^{\star}(s_1^k)- 3c^{\star}\big) } + 52SAH^2(\log_2 K)\log \frac{1}{\delta'} \nonumber
  \\ & \leq 4\sqrt{2SAH^2(\log_2 K)}\cdot \sqrt{\widecheck{T}_2 + 3Kc^{\star} } + 60SAH^2(\log_2 K)\log \frac{1}{\delta'} .\label{eq:cct1}
\end{align}
Here, the last line uses the fact (see Lemma~\ref{lemma:con}) that, with probability exceeding $1-\delta'$, 
\begin{align}
	\sum_{k=1}^K \big(-V_1^{\star}(s_1^k) \big) \leq 3Kc^{\star} + H\log \frac{1}{\delta'}. \label{eq:addc}
\end{align}

In addition, the Freedman inequality in Lemma~\ref{lemma:self-norm} combined with \eqref{eq:addc} implies that, with probability at least $1-3SAHK\delta$,
\begin{align}
|\widecheck{T}_2| & \leq 2\sqrt{2\sum_{k=1}^K \mathbb{E}_{\pi^k}\left[\left(\sum_{h=1}^H r_h(s_h,a_h) \right)^2 \,\Big|\, s_1=s_1^k\right]\log \frac{1}{\delta} }+3H\log \frac{1}{\delta} \nonumber
\\ & \leq 2\sqrt{2H\sum_{k=1}^K \mathbb{E}_{\pi^k}\left[\sum_{h=1}^H  -r_h(s_h,a_h) \,\Big|\, s_1=s_1^k \right]\log \frac{1}{\delta} }+3H\log \frac{1}{\delta} \nonumber
\\ & = 2\sqrt{2H\left( \sum_{k=1}^K \left(- V_1^{\pi^k}(s_1^k) +V_1^{\star}(s_1^k) \right) + \sum_{k=1}^K\left(-V_1^{\star}(s_1^k)-3c^{\star}\right)+3Kc^{\star}\right)\log \frac{1}{\delta}}
	+3H\log \frac{1}{\delta} \label{eq:cct1.5}
\\ & \leq 3Kc^{\star}  +T_1+T_2+T_3+T_4 + 9H\log \frac{1}{\delta}.
	\label{eq:cct2}
\end{align}
Combining \eqref{eq:cct1}, \eqref{eq:cct1.5} with \eqref{eq:cct2} reveals that, with probability at least $1-4SAHK\delta$,
\begin{align}
	& |\widecheck{T}_1|\leq     16\sqrt{SAH^2(Kc^{\star}+T_1+T_2+T_3+T_4)(\log_2 K)\log \frac{1}{\delta}}  + 200SAH^2(\log_2K)\log\frac{1}{\delta}\nonumber
\\ & |\widecheck{T}_2| \leq  2\sqrt{2H(3Kc^{\star}+T_1+T_2+T_3+T_4)\log \frac{1}{\delta}}+9H\log \frac{1}{\delta} .\nonumber
\end{align}
As a result, substitution into \eqref{eq:T4-1234567} leads to
\begin{align}
	|T_4| \leq 22\sqrt{SAH^2(Kc^{\star} +T_1+T_2+T_3+T_4) (\log_2 K)\log \frac{1}{\delta}}  + 209SAH^2 (\log_2 K)\log\frac{1}{\delta}.\label{eq:ccbt4}
\end{align}

\paragraph{Bounding $T_5$.}

Invoking the arguments in  \eqref{eq:boundt5} and recalling the update rule \eqref{eq:updatecost}, we obtain
\begin{align}
	T_{5} & \leq\sum_{k=1}^{K}\sum_{h=1}^{H}\Big\langle\widehat{P}_{s_{h}^{k},a_{h}^{k},h}^{k}-P_{s_{h}^{k},a_{h}^{k},h},\,\big(V_{h+1}^{k}\big)^{2}\Big\rangle+\sum_{k=1}^{K}\sum_{h=1}^{H}\Big\langle P_{s_{h}^{k},a_{h}^{k},h}-e_{s_{h+1}^{k}},\,\big(V_{h+1}^{k}\big)^{2}\Big\rangle\\
 & \qquad+2H\sum_{k=1}^{K}\sum_{h=1}^{H}\big[-r_{h}(s_{h}^{k},a_{h}^{k})\big].\nonumber
\end{align}
Moreover, we recall that
\begin{align}
	\sum_{k=1}^K \sum_{h=1}^H  \big[-r_h(s_h^k,a_h^k) \big] &  = - \widecheck{T}_2 -\sum_{k=1}^K V_1^{\pi^k}(s_1) \leq -\widecheck{T}_2 + \sum_{k=1}^K V_1^{\star}(s_1^k).\label{eq:cx1}
\end{align}
By virtue of \eqref{eq:addc}, one sees that with probability at least $1-5SAHK\delta$,
\begin{align}
	\sum_{k=1}^K \sum_{h=1}^H \big[ -r_h(s_h^k,a_h^k) \big] \leq  2\sqrt{2H(3Kc^{\star} + T_1+T_2+T_3+T_4)\log \frac{1}{\delta} } + 3Kc^{\star} + 10H\log \frac{1}{\delta}.
\end{align}
Consequently, we arrive at
\begin{align}
T_5 & \leq T_7 + T_8+2HT_2 + 4\sqrt{2H^3(3Kc^{\star} +T_1+T_2+T_3+T_4)\log \frac{1}{\delta}}+ 6HKc^{\star} + 20H^2\log \frac{1}{\delta}\label{eq:ccbt5}
\end{align}
with probability exceeding $1-5SAHK\delta$.

\paragraph{Bounding $T_6$.}
Invoking the arguments in  \eqref{eq:boundt5}, \eqref{eq:addc} and \eqref{eq:cx1}, and recalling the update rule \eqref{eq:updatecost}, 
we can demonstrate that 
\begin{align}
T_{6} & \leq2\sqrt{8T_{6}\log\frac{1}{\delta}}+3H^{2}\log\frac{1}{\delta}+2H\sum_{k=1}^{K}\sum_{h=1}^{H}\max\big\{\big\langle P_{s_{h}^{k},a_{h}^{k},h},V_{h+1}^{k}\big\rangle-V_{h}^{k}(s_{h}^{k}),0\big\}\nonumber\\
 & \leq2\sqrt{8T_{6}\log\frac{1}{\delta}}+3H^{2}\log\frac{1}{\delta}+2HT_{9}+2H\sum_{k=1}^{K}\sum_{h=1}^{H}\left[-r_{h}(s_{h}^{k},a_{h}^{k})\right]\nonumber\\
 & \leq2\sqrt{8T_{6}\log\frac{1}{\delta}}+3H^{2}\log\frac{1}{\delta}+2HT_{9}\nonumber\\
 & \qquad\qquad\qquad+2H\left(2\sqrt{2H(3Kc^{\star}+T_{1}+T_{2}+T_{3}+T_{4})\log\frac{1}{\delta}}+3Kc^{\star}+10H\log\frac{1}{\delta}\right)
	\label{eq:ccbt6}
\end{align}
with probability at least $1-3SAHK\delta$.

\paragraph{Putting all this together.}

Armed with the preceding bounds, we are ready to establish the claimed regret bound. 
By solving \eqref{eq:ccbt2},\eqref{eq:boundt3},\eqref{eq:ccbt4},\eqref{eq:ccbt5},\eqref{eq:ccbt6},\eqref{eq:boundt8},\eqref{eq:boundt1} and \eqref{eq:boundt7}, 
we can show that, with probability exceeding $1-100SAH^2K\delta$, 
\begin{align*}
	T_6 &\lesssim HKc^{\star}+BSAH^3, \\
	T_1 &\lesssim  \sqrt{BSAH^2Kc^{\star}}+BSAH^2, \\
	T_7+T_8 &\lesssim  \sqrt{BSAH^4Kc^{\star}}+BSAH^3, \\
	T_5 &\lesssim  HKc^{\star}+BSAH^2, \\
	T_2 &\lesssim  \sqrt{BSAH^2Kc^{\star}}+BSAH^2, \\
	T_3 &\lesssim  \sqrt{BHKc^{\star}}+BSAH^2. 
\end{align*}
%
%
We then readily conclude that the total regret is bounded by $$O\big(\sqrt{BSAH^2Kc^{\star}}+BSAH^2 \big).$$ 
In addition, the regret bound is trivially upper bounded by $O\big(K(H-c^{\star})\big)$. 
The proof is thus completed by combining these two regret bounds and replacing $\delta'$ with $\frac{\delta}{100SAH^2K}$.

\section{Proof of the variance-dependent regret bounds (proof of Theorem~\ref{thm:var})}\label{app:var}


In this section, we turn to establishing Theorem~\ref{thm:var}. The proof primarily contains two parts, as summarized in the following lemmas. 
%
%
\begin{lemma}\label{lemma:var1}
 With probability exceeding $1-\delta/2$, Algorithm~\ref{alg:main} obeys
	$$
		\mathsf{Regret}(K) \leq \widetilde{O}\Big(\min\big\{\sqrt{SAHK\mathrm{var}_1}+SAH^2,KH\big\}\Big). 
	$$
\end{lemma}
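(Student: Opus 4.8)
The plan is to re-run the regret analysis of Theorem~\ref{thm1} (Section~\ref{app:thmmain}) essentially verbatim, but to replace every place where that proof used a crude worst-case bound of order $KH^{2}$ on an aggregate variance (or $KH$ on an aggregate reward) by a variance-aware bound of order $K\,\mathrm{var}_{1}$ plus terms that are lower order or proportional to $\mathsf{Regret}(K)$ itself. Recall that the proof of Theorem~\ref{thm1} bounds $\mathsf{Regret}(K)\le T_{1}+T_{2}+T_{3}+T_{4}$ (cf.~\eqref{eq:decomposition}) and then solves a self-bounding system relating $T_{1},\dots,T_{9}$, whose only ``large'' input is $T_{5}+T_{6}\lesssim KH^{2}$, where $T_{5},T_{6}$ are the empirical/true aggregate variances in \eqref{eq:defn-T56-proof}. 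So the task is to feed this system a sharper input.

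The first ingredient is a concentration bound for the aggregate variances with respect to the \emph{optimal} value. Writing $Z_{k}\coloneqq\sum_{h=1}^{H}\mathbb{V}\big(P_{s_{h}^{k},a_{h}^{k},h},V_{h+1}^{\star}\big)$, the quantity $Z_{k}$ is bounded and its conditional expectation given everything before episode $k$ equals $\mathbb{E}_{\pi^{k},s_{1}\sim\mu}\big[\sum_{h}\mathbb{V}(P_{s_{h},a_{h},h},V_{h+1}^{\star})\big]\le\mathrm{var}_{1}$ by the very definition \eqref{eq:defn-var1} (dropping the nonnegative reward-variance part), so Lemma~\ref{lemma:con} gives $\sum_{k,h}\mathbb{V}(P_{s_{h}^{k},a_{h}^{k},h},V_{h+1}^{\star})\lesssim K\,\mathrm{var}_{1}+\widetilde{O}(H^{3})$ with high probability, and the identical argument gives $\sum_{k,h}\mathrm{Var}\big(R_{h}(s_{h}^{k},a_{h}^{k})\big)\lesssim K\,\mathrm{var}_{1}+\widetilde{O}(H^{3})$. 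The second ingredient converts these into bounds on the quantities that actually appear, which involve $V_{h+1}^{k}$ rather than $V_{h+1}^{\star}$. By optimism (Lemma~\ref{lemma:opt}), $0\le V_{h+1}^{k}-V_{h+1}^{\star}\le H$, so $\mathbb{V}(P_{s,a,h},V_{h+1}^{k})\le 2\mathbb{V}(P_{s,a,h},V_{h+1}^{\star})+2H\langle P_{s,a,h},V_{h+1}^{k}-V_{h+1}^{\star}\rangle$, and likewise with $\widehat{P}_{s,a,h}^{k}$ in place of $P_{s,a,h}$ (the discrepancy $\widehat{P}^{k}-P$ contracted against $V_{h+1}^{\star}$, $(V_{h+1}^{\star})^{2}$ and $V_{h+1}^{k}$ is already controlled by Lemma~\ref{lemma:decouple}, after enlarging the vector class $\mathcal{X}_{h,\mathcal{I}}$ of Section~\ref{sec:decoupling-all} to also contain the complements $H-V_{h+1}^{k}$ so that absolute values can be taken). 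Applying the same decomposition inside $T_{2}$ (to $\mathbb{V}(\widehat{P}^{k}_{s_{h}^{k},a_{h}^{k},h},V_{h+1}^{k})$ and to the empirical reward variance $\widehat{\sigma}_{h}^{k}-(\widehat{r}_{h}^{k})^{2}$, which must now be kept rather than bounded by $H\widehat{r}_{h}^{k}$ as in Lemma~\ref{lemma:bdempr}), inside $T_{3}$, and inside $T_{4}$ then splits each variance contribution into a ``$V^{\star}$/reward-variance'' piece that concentrates to $O(K\,\mathrm{var}_{1})$ and a ``gap'' piece controlled by $\sum_{k,h}\langle P_{s_{h}^{k},a_{h}^{k},h},V_{h+1}^{k}-V_{h+1}^{\star}\rangle$ and $\sum_{k,h}\langle P_{s_{h}^{k},a_{h}^{k},h},V_{h+1}^{\star}-V_{h+1}^{\pi^{k}}\rangle$.

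It then remains to bound these two gap-accumulation sums; using the Bellman recursion for $V_{h}^{k}$, the greedy policy rule, and a Freedman step for the martingale remainder, each is at most $O(\mathsf{Regret}(K))+\widetilde{O}(SAH^{2})$, where one exploits the monotonicity of the $\mathtt{MVP}$ value iterates in $k$ (and the decay of the bonus terms across doubling epochs) to keep this dependence linear in $\mathsf{Regret}(K)$ rather than losing an extra factor of $H$. Feeding all of this into the self-bounding system \eqref{eq:all-bounds-summary} — now with $K\,\mathrm{var}_{1}$ and an $O(\mathsf{Regret}(K))$ feedback term in place of $KH^{2}$ — and repeating the AM--GM manipulation of Step~4 of Section~\ref{app:thmmain} (choosing the splitting parameter $\epsilon$ small enough that the constant-fraction-of-$\mathsf{Regret}(K)$ feedback is absorbed, and noting that the residual $\widetilde{O}(H^{3})$ terms from the concentration steps always pass through a square root before entering $\mathsf{Regret}(K)$ and hence contribute at most $\widetilde{O}(SAH^{2})$) yields $\mathsf{Regret}(K)\lesssim\sqrt{SAHK\,\mathrm{var}_{1}}+\widetilde{O}(SAH^{2})$ with probability at least $1-\delta/2$; combined with the trivial bound $\mathsf{Regret}(K)\le HK$, this is the claim. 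The main obstacle is precisely this last step — controlling $\sum_{k,h}\langle P_{s_{h}^{k},a_{h}^{k},h},V_{h+1}^{k}-V_{h+1}^{\star}\rangle$ and its $V^{\pi^{k}}$ analogue tightly enough that they feed back only as a constant multiple of $\mathsf{Regret}(K)$ plus a genuinely lower-order $\widetilde{O}(SAH^{2})$ term, since a naive within-episode telescoping loses a factor of $H$ and would produce an $\widetilde{O}(SAH^{3})$ lower-order term, which is too large to be optimal in view of the lower bound of Theorem~\ref{thm:lb3}.
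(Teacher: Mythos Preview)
Your overall architecture is right and matches the paper: re-run the system \eqref{eq:all-bounds-summary}, but replace the crude input $T_5+T_6\lesssim KH^2$ by a variance-aware one via the decomposition $\mathbb{V}(P,V_{h+1}^k)\le 2\mathbb{V}(P,V_{h+1}^{\star})+2\mathbb{V}(P,V_{h+1}^k-V_{h+1}^{\star})$ together with a concentration bound $\sum_{k,h}\mathbb{V}(P_{s_h^k,a_h^k,h},V_{h+1}^{\star})\lesssim K\,\mathrm{var}_1$ (this is exactly the paper's Lemma~\ref{lemma:k1}). Your treatment of $T_2$ and $T_4$ is also in the right spirit.

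The genuine gap is precisely the step you flag as ``the main obstacle''. You bound $\mathbb{V}(P,V_{h+1}^k-V_{h+1}^{\star})\le H\langle P,V_{h+1}^k-V_{h+1}^{\star}\rangle$ and then try to show $\sum_{k,h}\langle P_{s_h^k,a_h^k,h},V_{h+1}^k-V_{h+1}^{\star}\rangle=O(\mathsf{Regret}(K))+\widetilde{O}(SAH^2)$. This cannot be salvaged by the monotonicity you invoke: $\mathtt{MVP}$ as in Algorithm~\ref{alg:main} does \emph{not} take a running minimum over past iterates, so $V_h^k$ is not monotone in $k$; and even if it were, telescoping still gives $\sum_{k,h}(V_h^k(s_h^k)-V_h^{\star}(s_h^k))$, which is an $H$-fold sum of per-step gaps and is genuinely of order $H\cdot\mathsf{Regret}(K)$ in the worst case. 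Feeding that back yields $T_6\lesssim K\,\mathrm{var}_1+H^2\,\mathsf{Regret}(K)$ and hence only the suboptimal $\widetilde{O}(SAH^3)$ additive term you anticipate.

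The paper avoids this by never passing through the crude bound $\mathbb{V}\le H\cdot\text{(expectation)}$. Instead (Lemma~\ref{lemma:bdv1}) it telescopes the \emph{squared} gap:
\[
\sum_{k,h}\mathbb{V}\big(P_{s_h^k,a_h^k,h},V_{h+1}^k-V_{h+1}^{\star}\big)
=\text{(Freedman martingale)}+\sum_{k,h}\Big\{(V_h^k-V_h^{\star})^2(s_h^k)-\big(\langle P,V_{h+1}^k-V_{h+1}^{\star}\rangle\big)^2\Big\},
\]
and factors the last bracket as $\le 2H\max\{(V_h^k-V_h^{\star})-\langle P,V_{h+1}^k-V_{h+1}^{\star}\rangle,\,0\}\le 2H\max\{V_h^k-r_h-\langle P,V_{h+1}^k\rangle,\,0\}$, using $V_h^{\star}\ge r_h+\langle P,V_{h+1}^{\star}\rangle$. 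The point is that this is a \emph{one-step} increment, bounded by $b_h^k+\max\{\langle\widehat{P}^k-P,V_{h+1}^k\rangle,0\}$, so summing over $k,h$ gives $2H(T_2+T_9)$ rather than $H$ times a cumulative gap. This closes the self-bounding loop with $T_6\lesssim K\,\mathrm{var}_1+HT_2+BSAH^3$ and delivers the correct $\widetilde{O}(SAH^2)$ lower-order term. Replacing your second ingredient by this squared-gap telescoping (and using the analogous Lemma~\ref{lemma:empv} to pass from $T_5$ to $T_6$) is what is missing.
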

\begin{lemma}\label{lemma:var2}
 With probability at least $1-\delta/2$, Algorithm~\ref{alg:main} satisfies 
	$$
		\mathsf{Regret}(K) \leq \widetilde{O}\Big(\min\big\{\sqrt{SAHK\mathrm{var}_2}+SAH^2,KH\big\}\Big).
	$$
\end{lemma}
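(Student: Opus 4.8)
The plan is to run the proof of Theorem~\ref{thm1} in Section~\ref{app:thmmain} essentially verbatim --- keeping the regret decomposition \eqref{eq:decomposition} into $T_1,\dots,T_4$, the auxiliary quantities $T_5,\dots,T_9$, the optimism of Lemma~\ref{lemma:opt}, the recursion of Lemma~\ref{lemma:decomdetail}, and the bounds of Lemmas~\ref{lemma:decouple} and~\ref{lem:bound-T234} --- but to replace every step in which an aggregate variance was bounded crudely by $\Theta(KH^2)$ (or an aggregate mean reward by $\Theta(KH)$) with a sharper bound of the form $K\,\mathrm{var}_2+\widetilde{O}\big(\mathrm{poly}(S,A,H)\big)$. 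As usual it suffices to treat the ``hard'' regime in which $\sqrt{SAHK\,\mathrm{var}_2}$ dominates the $\widetilde{O}(SAH^2)$ residuals (roughly, $K\,\mathrm{var}_2$ above a polylogarithmic multiple of $SAH^3$); in the complementary regime the trivial bound $\mathsf{Regret}(K)\le HK$ already matches $\min\{\sqrt{SAHK\,\mathrm{var}_2}+SAH^2,\,HK\}$, and it also guarantees that the self-referential inequalities below become resolvable, exactly as the assumption $K\ge BSAH$ did in Section~\ref{app:thmmain}.

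The core step is a sharp bound on the true aggregate variance $T_6=\sum_{k,h}\mathbb{V}\big(P_{s_h^k,a_h^k,h},V_{h+1}^k\big)$. First I would compare the optimistic iterates $V^k$ to the values $V^{\pi^k}$ of the \emph{executed} policies: since optimism gives $0\le V_{h+1}^{\pi^k}\le V_{h+1}^{\star}\le V_{h+1}^k\le H$, an elementary computation yields the pointwise bound $\mathbb{V}(P,V_{h+1}^k)\le\mathbb{V}(P,V_{h+1}^{\pi^k})+2H\big\langle P,\,V_{h+1}^k-V_{h+1}^{\pi^k}\big\rangle$, hence $T_6\le T_6^{\pi}+2HA$ with $T_6^{\pi}:=\sum_{k,h}\mathbb{V}\big(P_{s_h^k,a_h^k,h},V_{h+1}^{\pi^k}\big)$ and $A:=\sum_{k,h}\big\langle P_{s_h^k,a_h^k,h},\,V_{h+1}^k-V_{h+1}^{\pi^k}\big\rangle$. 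For $T_6^{\pi}$ I would invoke the per-episode law of total variance: conditional on $s_1^k$ and the history before episode $k$,
\[
\mathbb{E}_{\pi^k}\Big[\textstyle\sum_h\big(\mathrm{Var}(R_h(s_h,a_h))+\mathbb{V}(P_{s_h,a_h,h},V_{h+1}^{\pi^k})\big)\Big]=\mathrm{Var}_{\pi^k}\Big[\textstyle\sum_h r_h\,\Big|\,s_1=s_1^k\Big]\le\mathrm{var}_2,
\]
so that Lemma~\ref{lemma:con} converts this conditional-mean bound into $\sum_{k,h}\mathrm{Var}(R_h(s_h^k,a_h^k))+T_6^{\pi}\lesssim K\,\mathrm{var}_2+\widetilde{O}(H^3)$ with high probability. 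For $A$ I would write $A=\sum_{k,h}\big(V_{h+1}^k-V_{h+1}^{\pi^k}\big)(s_{h+1}^k)+\sum_{k,h}\zeta_h^k$, where $\zeta_h^k=\big\langle P_{s_h^k,a_h^k,h}-e_{s_{h+1}^k},\,V_{h+1}^k-V_{h+1}^{\pi^k}\big\rangle$ is a martingale difference whose predictable variances sum to at most $HA$ (using $0\le V_{h+1}^k-V_{h+1}^{\pi^k}\le H$), so that Freedman's inequality (Lemma~\ref{lemma:self-norm}) plus AM--GM absorbs $\sum_{k,h}\zeta_h^k$ into $\tfrac14A+\widetilde{O}(H)$; and I would unroll, along each trajectory, the recursion $V_h^k(s_h^k)-V_h^{\pi^k}(s_h^k)=Q_h^k(s_h^k,a_h^k)-Q_h^{\pi^k}(s_h^k,a_h^k)\le\delta_h^k+\big\langle P_{s_h^k,a_h^k,h},V_{h+1}^k-V_{h+1}^{\pi^k}\big\rangle$, whose per-step residual is $\delta_h^k=b_h^k(s_h^k,a_h^k)+\big(\widehat r_h^k-r_h\big)(s_h^k,a_h^k)+\big\langle\widehat P_{s_h^k,a_h^k,h}^k-P_{s_h^k,a_h^k,h},V_{h+1}^k\big\rangle$, to obtain $\sum_{k,h}(V_h^k-V_h^{\pi^k})(s_h^k)\lesssim H\big(T_2+U+T_9\big)+\tfrac14A+\widetilde{O}(SAH^3)$, with $T_2=\sum_{k,h}b_h^k$, $T_9$ as in Section~\ref{app:thmmain}, and $U:=\sum_{k,h}|\widehat r_h^k-r_h|(s_h^k,a_h^k)$. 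Combining these pieces gives $A\lesssim H(T_2+U+T_9)+\widetilde{O}(SAH^3)$ and thus $T_6\lesssim K\,\mathrm{var}_2+H^2(T_2+U+T_9)+\widetilde{O}(SAH^5)$.

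To close the loop I would sharpen the bonus sum $T_2$ (cf.\ \eqref{eq:update1}, \eqref{eq:boundt2o}) and the empirical-reward error $U$ so that they too scale with $K\,\mathrm{var}_2$: the first term of the bonus is $\lesssim\sqrt{SAH\,\mathrm{polylog}\cdot T_5}$, and since $\mathbb{V}(\widehat P^k,V^k)-\mathbb{V}(P,V^k)$ telescopes into inner products of $\widehat P^k-P$ against $V^k$ and $(V^k)^2$, the bounds of Lemma~\ref{lemma:decouple} give $T_5\lesssim T_6+T_7+HT_9\lesssim T_6+H\sqrt{SAH\,\mathrm{polylog}\cdot T_6}+\widetilde{O}(SAH^3)$; the second (reward-variance) term of the bonus and the quantity $U$ are handled by a variance-aware refinement of Lemma~\ref{lemma:bdempr} that keeps the empirical reward variance $\widehat\sigma_h^k-(\widehat r_h^k)^2$ inside the empirical Bernstein inequality (Lemma~\ref{empirical bernstein}) rather than bounding it by $H\widehat r_h^k$, combined with a further use of Lemma~\ref{empirical bernstein} and Lemma~\ref{lemma:con} to show $\sum_{k,h}\big(\widehat\sigma_h^k-(\widehat r_h^k)^2\big)(s_h^k,a_h^k)\lesssim\sum_{k,h}\mathrm{Var}(R_h(s_h^k,a_h^k))+\widetilde{O}(SAH^2)\lesssim K\,\mathrm{var}_2+\widetilde{O}(SAH^2)$, so that $T_2,U\lesssim\sqrt{SAHK\,\mathrm{var}_2\,\mathrm{polylog}}+\widetilde{O}(SAH^2)$. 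At this point every cross-term in the system $(T_1,\dots,T_9,U)$ has the shape $\sqrt{SAH\,\mathrm{polylog}\cdot(K\,\mathrm{var}_2+\text{other }T\text{'s})}$ or is $\widetilde{O}(SAH^2)$, and repeating the AM--GM bookkeeping of Step~4 of Section~\ref{app:thmmain} (using the easy regime to absorb residuals when $K\,\mathrm{var}_2$ is small) yields $T_5,T_6\lesssim K\,\mathrm{var}_2+\widetilde{O}(SAH^2)$ and hence $T_1,T_2,T_3,T_4\lesssim\sqrt{SAHK\,\mathrm{var}_2}+\widetilde{O}(SAH^2)$; plugging into \eqref{eq:decomposition} gives $\mathsf{Regret}(K)\lesssim\sqrt{SAHK\,\mathrm{var}_2}+SAH^2$ up to log factors, and intersecting with $\mathsf{Regret}(K)\le HK$ gives the stated $\min$, the probability $1-\delta/2$ following from the usual rescaling of $\delta'$.

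The hard part will be the control of $A=\sum_{k,h}\langle P,\,V_{h+1}^k-V_{h+1}^{\pi^k}\rangle$: this quantity is genuinely self-referential, since the step-$h$ Bellman residual already contains $\langle P,\,V_{h+1}^k-V_{h+1}^{\pi^k}\rangle$, which forces a horizon-length unrolling that recognizes the summed residuals as the already-controlled quantities $T_2,U,T_9$ and then demands resolving a simultaneous inequality in $T_6$ --- this is only self-consistent once $K\,\mathrm{var}_2$ exceeds a $\mathrm{poly}(S,A,H)$ threshold. The truly delicate point --- and the reason the bound is stated with the precise lower-order term $SAH^2$, which Theorem~\ref{thm:lb3} shows is unavoidable --- is keeping track of all the powers of $H$ picked up in the unrolling and in the $T_7,T_8,T_9$-bounds so that, after the dust settles, the residual terms collapse to $\widetilde{O}(SAH^2)$ rather than to a larger power of $H$.
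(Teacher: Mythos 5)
Your high-level plan is close to the paper's proof in Appendix~D.2: keep the $T_1,\dots,T_9$ bookkeeping, sharpen $T_2$ and $T_4$ via a variance-aware refinement of Lemma~\ref{lemma:bdempr} (this is exactly what Lemma~\ref{lemma:ks2} and \eqref{eq:wc3} do), use the law of total variance along the executed policy's trajectory to show $T_6^{\pi}:=\sum_{k,h}\mathbb{V}(P_{s_h^k,a_h^k,h},V_{h+1}^{\pi^k})\lesssim K\,\mathrm{var}_2$ (the paper proves the same thing as Lemma~\ref{lemma:kx1}, albeit through an auxiliary value function rather than directly through Lemma~\ref{lemma:con}), and then close a self-referential system of inequalities in the ``hard'' regime. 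The one step where you genuinely depart from the paper is how you relate $T_6$ to $T_6^{\pi}$, and that is where your argument has a gap.

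Concretely, you use the pointwise bound $\mathbb{V}(P,V_{h+1}^k)\le\mathbb{V}(P,V_{h+1}^{\pi^k})+2H\langle P, V_{h+1}^k-V_{h+1}^{\pi^k}\rangle$, so that $T_6\le T_6^{\pi}+2HA$ with $A=\sum_{k,h}\langle P_{s_h^k,a_h^k,h},V_{h+1}^k-V_{h+1}^{\pi^k}\rangle$, and then you bound $A$ by a horizon-length unrolling of the Bellman recursion, which (as you acknowledge) gives $A\lesssim H(T_2+U+T_9)+\widetilde{O}(\mathrm{poly}(S,A,H))$. Two factors of $H$ are therefore paid: one in $2HA$, one in $A\lesssim H\sum_{k,h}\delta_h^k$, and they are both unavoidable in this route. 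Plugging $T_6\lesssim K\,\mathrm{var}_2+H^2(T_2+U+T_9)+\widetilde{O}(\cdot)$ together with $T_2\lesssim\sqrt{SAH\,T_5}+\sqrt{SAHK\,\mathrm{var}_2}+SAH^2$ and $T_5\lesssim T_6$ yields, after the AM--GM bookkeeping, $T_6\lesssim K\,\mathrm{var}_2+\widetilde{O}(SAH^5)$ and hence a regret of order $\sqrt{SAHK\,\mathrm{var}_2}+\widetilde{O}(SAH^3)$, one power of $H$ worse than the $SAH^2$ stated in Lemma~\ref{lemma:var2}. You flag the power-of-$H$ accounting as the ``truly delicate point'' that should ``collapse to $\widetilde{O}(SAH^2)$,'' but with the decomposition you chose it does not collapse; the extra $H$ is structural, not a constant to chase.

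The fix --- which is what the paper does in Lemma~\ref{lemma:bdv11} --- is to pass to the \emph{variance} of the difference rather than its mean: use $\mathbb{V}(P,V^k)\le 2\,\mathbb{V}(P,V^{\pi^k})+2\,\mathbb{V}(P,V^k-V^{\pi^k})$, and then control $\sum_{k,h}\mathbb{V}\big(P_{s_h^k,a_h^k,h},V_{h+1}^k-V_{h+1}^{\pi^k}\big)$ by the identity
\[
\sum_{k,h}\mathbb{V}(P,V^k-V^{\pi^k})
=\text{martingale}+\sum_{k,h}\Big\{\big(V_h^k(s_h^k)-V_h^{\pi^k}(s_h^k)\big)^2-\big\langle P,V_{h+1}^k-V_{h+1}^{\pi^k}\big\rangle^2\Big\},
\]
together with $a^2-b^2\le 2H\max\{a-b,0\}$. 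The quantity $a-b$ here \emph{is already the one-step Bellman residual} $V_h^k(s_h^k)-\langle P,V_{h+1}^k\rangle-r_h(s_h^k,a_h^k)\le b_h^k+\langle\widehat P^k-P,V_{h+1}^k\rangle$, so the whole sum is $\lesssim H(T_2+T_9)+\widetilde{O}(\cdot)$ with a single factor of $H$, no horizon-length unrolling needed. This single-$H$ bound is precisely what makes the residuals collapse to $\widetilde{O}(SAH^2)$ when the system of inequalities is solved. If you replace your step $T_6\le T_6^{\pi}+2HA$ and your $A$-bound by this variance-based telescoping argument (keeping everything else in your plan as is), the proof goes through and indeed matches the paper's.
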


Putting these two regret bounds together and rescaling $\delta$ to $\delta/2$, we immediately conclude the proof of Theorem~\ref{thm:var}. 
The remainder of this section is thus devoted to establishing Lemma~\ref{lemma:var1} and Lemma~\ref{lemma:var2}. 

\subsection{Proof of Lemma~\ref{lemma:var1}}

Before proceeding, we recall that 
\begin{align}
&T_4 = \sum_{k=1}^K \left( \sum_{h=1}^H \widehat{r}_h^k(s_h^k,a_h^k) - V_1^{\pi^k}(s_1^k) \right),\nonumber
\\&T_5 = \sum_{k=1}^K \sum_{h=1}^H \mathbb{V}\big(\widehat{P}_{s_h^k,a_h^k,h},V_{h+1}^k\big),\nonumber
\\ &T_6 = \sum_{k=1}^K \sum_{h=1}^H \mathbb{V}\big(P_{s_h^k,a_h^k,h},V_{h+1}^k\big),\nonumber
 \end{align}
and that 
$$
	B =4000(\log_2K)^3\log(3SAH)\log\frac{1}{\delta'}
	\qquad\text{and}\qquad
	\delta' = \frac{\delta}{200SAH^2K^2}.
$$

\subsubsection{Bounding $T_2$}
Recall that when proving \eqref{eq:boundt2}, we have demonstrated that (see \eqref{eq:boundt2o-temp}) 
\begin{align}
	T_2 &\leq \frac{460}{9} \sqrt{2SAH (\log_2K) \Big( \log \frac{1}{\delta'} \Big) T_5 } \nonumber
\\ & \qquad  +4\sqrt{SAH(\log_2K)\log \frac{1}{\delta'}}\sqrt{\sum_{k,h}\left(\widehat{\sigma}_h^k(s_h^k,a_h^k)- \big(\widehat{r}_h^k(s_h^k,a_h^k)\big)^2\right)} 
	+ \frac{1088}{9} SAH^2(\log_2K)\log \frac{1}{\delta'} .\label{eq:local3}
 \end{align}
This motivates us to bound the sum $\sum_{k,h}\big(\widehat{\sigma}_h^k(s_h^k,a_h^k)- \big(\widehat{r}_h^k(s_h^k,a_h^k)\big)^2\big)$, 
which we accomplish via the following lemma.
\begin{lemma}\label{lemma:bdrv}
With probability at least $1-4SAHK\delta'$, one has
\begin{align}
\sum_{k,h}\left(\widehat{\sigma}_h^k(s_h^k,a_h^k)- \big(\widehat{r}_h^k(s_h^k,a_h^k) \big)^2\right)\leq  6K\mathrm{var}_1 + 242SAH^3(\log_2K)\log \frac{1}{\delta'}.
\end{align}
\end{lemma}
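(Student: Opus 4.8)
\textbf{Proof plan for Lemma~\ref{lemma:bdrv}.}
The quantity $\widehat{\sigma}_h^k(s,a) - \big(\widehat{r}_h^k(s,a)\big)^2$ is precisely the empirical variance of the immediate rewards collected in the latest doubling batch for the tuple $(s,a,h)$, so the plan is to first relate this empirical variance to the true reward variance $\mathrm{Var}\big(R_h(s,a)\big)$, and then to relate the cumulative true reward variance along the trajectories to the problem-dependent quantity $\mathrm{var}_1$ defined in \eqref{eq:defn-var1}. For the first step I would invoke the empirical Bernstein-type concentration (Lemma~\ref{empirical bernstein}), together with a union bound over all $(s,a,h)$-tuples and all $\log_2 K$ doubling epochs, to show that with high probability $\widehat{\sigma}_h^k(s_h^k,a_h^k) - \big(\widehat{r}_h^k(s_h^k,a_h^k)\big)^2 \le 2\,\mathrm{Var}\big(R_h(s_h^k,a_h^k)\big) + O\big(\frac{H^2 \log(1/\delta')}{N_h^k(s_h^k,a_h^k)}\big)$ (the constant and additive form can be tuned; a clean way is to control $|\widehat\sigma - \mathbb{E}[r^2]|$ and $|\widehat r - r|^2$ separately, each via Bernstein, using that rewards lie in $[0,H]$). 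Summing the additive term over $(k,h)$ and applying Lemma~\ref{lemma:doubling} produces a contribution of order $SAH^3(\log_2 K)\log\frac1{\delta'}$, which matches the second term in the claimed bound.

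The remaining task is to bound $\sum_{k,h}\mathrm{Var}\big(R_h(s_h^k,a_h^k)\big)$ by $O(K\,\mathrm{var}_1)$. Here I would use the fact that each episode's trajectory $\{(s_h^k,a_h^k)\}$ is generated by executing the (fixed, for that episode) policy $\pi^k$, so conditioned on $\pi^k$ the expected sum $\mathbb{E}_{\pi^k}\big[\sum_h \mathrm{Var}(R_h(s_h,a_h))\big]$ is bounded by $\mathrm{var}_1$ by definition \eqref{eq:defn-var1} (the $\max_\pi$ in that definition covers $\pi^k$). Thus the per-episode conditional expectation of $\sum_h \mathrm{Var}(R_h(s_h^k,a_h^k))$ is at most $\mathrm{var}_1$, and the actual sum concentrates around $K\,\mathrm{var}_1$. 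To convert conditional expectations into the realized sums, I would apply Lemma~\ref{lemma:con} (the supermartingale comparison between a sum of bounded nonnegative random variables and the sum of their conditional means), with $X_k = \sum_{h=1}^H \mathrm{Var}(R_h(s_h^k,a_h^k))$, which lies in $[0, H\cdot H^2]$ actually — more carefully, since each $\mathrm{Var}(R_h(s_h^k,a_h^k)) \le H^2/4$ and there are $H$ terms, $X_k \in [0, H^3/4]$, so Lemma~\ref{lemma:con} gives $\sum_k X_k \le 3\sum_k Y_k + \frac{H^3}{4}\log\frac1{\delta'} \le 3K\,\mathrm{var}_1 + O(H^3\log\frac1{\delta'})$ with probability $\ge 1-\delta'$.

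Combining the two pieces yields $\sum_{k,h}\big(\widehat{\sigma}_h^k(s_h^k,a_h^k) - (\widehat r_h^k(s_h^k,a_h^k))^2\big) \le 6K\,\mathrm{var}_1 + O\big(SAH^3(\log_2 K)\log\frac1{\delta'}\big)$, after absorbing the $O(H^3\log\frac1{\delta'})$ term (and the factor-$2$ and factor-$3$ slacks) into the $SAH^3$ term, which dominates since $SA\ge 1$. The constant $242$ and the coefficient $6$ are then obtained by carefully tracking the multiplicative constants from the two applications of Bernstein and from Lemma~\ref{lemma:con}; this is routine arithmetic. The main obstacle I anticipate is the first step: one must handle the tuples with very few visits (say $N_h^k(s,a)\le 2$, where the empirical variance is meaningless) by bounding their total contribution crudely by $O(SAH^2)\cdot H$, and one must be careful that the empirical Bernstein inequality in Lemma~\ref{empirical bernstein} requires $n\ge 2$, so the small-count tuples really do need separate (trivial) treatment. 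Everything else is a direct assembly of results already proved in the excerpt.
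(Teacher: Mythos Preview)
Your proposal is correct and reaches the same bound, but both steps differ from the paper's route.

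For the first step, the paper is slicker: rather than bounding $|\widehat\sigma-\mathbb{E}[r^2]|$ and $|\widehat r-r|$ separately via Bernstein, it observes that $N_h^k(s,a)\big(\widehat\sigma_h^k(s,a)-(\widehat r_h^k(s,a))^2\big)=\sum_i(r_i-\bar r)^2\le\sum_i(r_i-r_h(s,a))^2$ and applies Lemma~\ref{lemma:con} directly to the nonnegative summands $(r_i-r_h(s,a))^2\in[0,H^2]$, yielding the factor~$3$ and the additive $H^2\log\frac{1}{\delta'}$ in one shot. Your Bernstein-plus-AM-GM argument also works and gives a factor~$2$, but is a bit longer.

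For the second step the difference is more substantial. The paper introduces the auxiliary value $\widetilde V_h^k(s)=\mathbb{E}_{\pi^k}\big[\sum_{h'\ge h}v_{h'}(s_{h'},a_{h'})\mid s_h=s\big]$, writes $\sum_{k,h}v_h(s_h^k,a_h^k)-\sum_k\widetilde V_1^k(s_1^k)$ as a martingale, and applies Freedman (Lemma~\ref{lemma:self-norm}) twice---once for the martingale and once to bound the resulting variance term $\sum_{k,h}\mathbb{V}(P_{s_h^k,a_h^k,h},\widetilde V_{h+1}^k)$---to obtain $\sum_{k,h}v_h(s_h^k,a_h^k)\le 2K\mathrm{var}_1+80H^2\log\frac{1}{\delta'}$. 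Your approach of applying Lemma~\ref{lemma:con} once to the episode-level sums $X_k=\sum_h v_h(s_h^k,a_h^k)\in[0,H^3/4]$ is considerably more elementary and yields $\sum_k X_k\le 3K\mathrm{var}_1+\tfrac{H^3}{4}\log\frac{1}{\delta'}$. The price is a factor~$3$ instead of~$2$ and an additive $O(H^3)$ instead of $O(H^2)$; since your step~1 gives factor~$2$, the product is still~$6$, and the $O(H^3\log\frac{1}{\delta'})$ is dominated by the $SAH^3(\log_2K)\log\frac{1}{\delta'}$ term from step~1, so the stated bound goes through. The paper's route buys a sharper additive constant (useful if one cared about the $H^2$ vs.\ $H^3$ distinction elsewhere), while yours buys simplicity.
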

Combining Lemma~\ref{lemma:bdrv} with \eqref{eq:local3}, we can readily derive
\begin{align}
	T_2&\leq \frac{460}{9} \sqrt{2SAH (\log_2K) \Big( \log \frac{1}{\delta'} \Big) T_5 }  +12\sqrt{SAH(\log_2K)\log \frac{1}{\delta'}}\sqrt{2K\mathrm{var}_1} \notag\\
	& \qquad\qquad + 157SAH^2(\log_2K)\log \frac{1}{\delta'} \label{eq:nbt2}
\end{align}
with probability at least $1-4SAHK\delta'$. 

\begin{proof}[Proof of Lemma~\ref{lemma:bdrv}]
For notational convenience, let us define the variance of $R_h(s,a)$ as $v_h(s,a)$.

Firstly, we control each $\widehat{\sigma}_h^k(s_h^k,a_h^k)- (\widehat{r}_h^k(s_h^k,a_h^k))^2$ with $v_h(s,a)$. Fix $(s,a,h,k)$. 
Applying Lemma~\ref{lemma:con} shows that, with probability at least $1-2\delta'$,
\begin{align}
N_h^k(s,a)\left(\widehat{\sigma}_h^k(s_h^k,a_h^k)- \big(\widehat{r}_h^k(s_h^k,a_h^k)\big)^2\right)\leq 3N_h^k v_h(s,a)+ H^2\log \frac{1}{\delta'}.
\end{align}
This allows us to deduce that, with probability exceeding $1-2SAHK\delta'$,
\begin{align}
\sum_{k,h}\left(\widehat{\sigma}_h^k(s_h^k,a_h^k)- \big(\widehat{r}_h^k(s_h^k,a_h^k)\big)^2\right)
 &\leq 3\sum_{k,h}v_h(s_h^k,a_h^k) + \sum_{k,h}\frac{H^2\log \frac{1}{\delta'} }{N_h^k(s_h^k,a_h^k)} \notag\\
	& \leq 3\sum_{k,h}v_h(s_h^k,a_h^k) 
	+2SAH^3(\log_2K)\log \frac{1}{\delta'}.\label{eq:bdvr1}
\end{align}

It then suffices to bound the sum $\sum_{k,h}v_h(s_h^k,a_h^k)$. 
Towards this end, let 
$$
	\widetilde{V}_h^k(s) \coloneqq \mathbb{E}_{\pi^k}\left[\sum_{h'=h}^H v_{h'}(s_{h'},a_{h'}) \,\Big|\, s_h = s \right]
$$ 
be the value function with rewards taken to be $\{v_h(s,a)\}$ and the policy selected as $\pi^k$. 
It is clearly seen that $$\widetilde{V}^k_h(s,a)\leq H^2.$$
In view of Lemma~\ref{lemma:self-norm}, we can obtain
\begin{align}
\sum_{k=1}^K\sum_{h=1}^Hv_h(s_h^k,a_h^k) - \sum_{k=1}^K \widetilde{V}_1^k(s_1^k) 
&= \sum_{k=1}^K \left( \sum_{h=1}^H \big\langle e_{s_{h+1}^k}-P_{s_h^k,a_h^k,h},\,\widetilde{V}^k_{h+1} \big\rangle \right) \nonumber
\\ & \leq 2\sqrt{2\sum_{k=1}^K \sum_{h=1}^H \mathbb{V}\big(P_{s_h^k,a_h^k,h},\widetilde{V}_{h+1}^k \big)\log\frac{1}{\delta'}} + 3H^2 \log\frac{1}{\delta'} \label{eq:local0}
\end{align}
with probability at least $1-2SAHK\delta'$. 
Moreover, invoking Lemma~\ref{lemma:self-norm} once again reveals that 
\begin{align}
&\sum_{k=1}^K \sum_{h=1}^H \mathbb{V}\big(P_{s_h^k,a_h^k,h},\widetilde{V}_{h+1}^k\big)\nonumber
 \\ & = \sum_{k=1}^K \sum_{h=1}^H \big\langle P_{s_h^k,a_h^k,h} - e_{s_{h+1}^k}  ,\, (\widetilde{V}_{h+1}^k)^2 \big\rangle  \nonumber
 \\ & \qquad  + \sum_{k=1}^H \sum_{h=1}^H \left(\big(\widetilde{V}_{h+1}^k(s_{h+1}^k)\big)^2- \big(\widetilde{V}_h^k(s_h^k)\big)^2 \right)
	+\sum_{k=1}^K \sum_{h=1}^H \left( \big(\widetilde{V}_h^k(s_h^k)\big)^2 - \big(\big\langle P_{s_h^k,a_h^k,h}, \widetilde{V}_{h+1}^k \big\rangle\big)^2\right) \nonumber
 \\ & \leq 2\sqrt{8H^4\sum_{k=1}^K \sum_{h=1}^H \mathbb{V}\big(P_{s_h^k,a_h^k,h},\widetilde{V}_{h+1}^k \big)\log \frac{1}{\delta'}   } +2H^2\sum_{k=1}^K\sum_{h=1}^H v_h(s_h^k,a_h^k) 
	+ 3H^4\log \frac{1}{\delta'} \nonumber
 \\ & \leq 4H^2\sum_{k=1}^K\sum_{h=1}^H v_h(s_h^k,a_h^k)+42H^4\log \frac{1}{\delta'} \label{eq:local1}
\end{align}
with probability at least $1-2SAHK\delta'$. 
Combine \eqref{eq:local0} and \eqref{eq:local1} to yield 
\begin{align}
\sum_{k=1}^K\sum_{h=1}^K v_h(s_h^k,a_h^k) & \leq \sum_{k=1}^K \widetilde{V}_1^k(s_1^k) + 2\sqrt{8H^2\sum_{k=1}^K\sum_{h=1}^Hv_h(s_h^k,a_h^k) \log\frac{1}{\delta'}+ 84H^4\log^2 \frac{1}{\delta'}}+3H^2\log \frac{1}{\delta'} \nonumber
\\ & \leq   2\sum_{k=1}^K \widetilde{V}_1^k(s_1^k)  +80H^2\log \frac{1}{\delta'} \nonumber
\\ & \leq 2K\mathrm{var}_1 +80H^2\log \frac{1}{\delta'} \label{eq:bdddv}
\end{align}
with probability exceeding $1-4SAHK\delta'$. 
\end{proof}

\subsubsection{Bounding $T_4$}

We now move on to the term $T_4$, 
which can be written as $T_4 = \widecheck{T}_1+\widecheck{T}_2$ with
\begin{align*}
	\widecheck{T}_1 = \sum_{k=1}^K \sum_{h=1}^H  \left( \widehat{r}_h^k(s_h^k,a_h^k)-r_h(s_h^k,a_h^k) \right) \\
	\widecheck{T}_2 = \sum_{k=1}^K \left( \sum_{h=1}^H r_h(s_h^k,a_h^k) - V_1^{\pi^k}(s_1^k) \right).
\end{align*}
This leaves us with two quantities to control.

To begin with, let us look at $\widecheck{T}_1$. In view of Lemma~\ref{bennet} and the union bound over $(s,a,h,k)$, 
we see that, with probability at least $1-2SAHK\delta'$,
\begin{align}
\widehat{r}_h^k(s,a)-r_h(s,a)\leq \sqrt{\frac{2v_h(s,a)\log \frac{1}{\delta'}}{N_h^k(s,a)}}+ \frac{H\log \frac{1}{\delta'}}{N_h^k(s,a)}.
\end{align}
As a result, we obtain
\begin{align}
|\widecheck{T}_1| &  \leq \sum_{k=1}^K \sum_{h=1}^H\left(  \sqrt{\frac{2v_h(s_h^k,a_h^k)\log \frac{1}{\delta'}}{N_h^k(s_h^k,a_h^k)}}+ \frac{H\log \frac{1}{\delta'}}{N_h^k(s_h^k,a_h^k)}  \right)\nonumber
	\\ & \leq \sqrt{4SAH (\log_2 K)\log \frac{1}{\delta'}}\cdot \sqrt{\sum_{k=1}^K\sum_{h=1}^H v_h(s_h^k,a_h^k)} + 2SAH^2 (\log_2 K)\log \frac{1}{\delta'}.\label{eq:wc3}
\end{align}
In view of \eqref{eq:bdddv}, with probability exceeding $1-4SAHK\delta'$ we have
\begin{align}
\sum_{k=1}^K\sum_{h=1}^H v_h(s_h^k,a_h^k)\leq 2K\mathrm{var}_1 + 80H^2\log \frac{1}{\delta'}.\label{eq:wc1}
\end{align}
Consequently, we arrive at
\begin{align}
	|\widecheck{T}_1| \leq \sqrt{8SAHK\mathrm{var}_1 (\log_2K)\log \frac{1}{\delta'} }+ 20SAH^2 (\log_2 K)\log \frac{1}{\delta'}.\label{eq:cht1}
\end{align}

Next, we proceed to bound $\widecheck{T}_2$. Towards this, we make the observation that
\begin{align}
\widecheck{T}_2 = \sum_{k=1}^K \sum_{h=1}^H \big\langle e_{s_{h+1}^k}-P_{s_h^k,a_h^k,h} ,\, V_{h+1}^{\pi^k} \big\rangle.
\end{align}
Applying Lemma~\ref{lemma:self-norm} shows that, with probability at least $1-2SAHK\delta'$, 
\begin{align}
|\widecheck{T}_2|  & \leq 2\sqrt{2\sum_{k=1}^K\sum_{h=1}^H \mathbb{V}(P_{s_h^k,a_h^k,h},V_{h+1}^{\pi^k})\log \frac{1}{\delta'} }+ 3H\log \frac{1}{\delta'}\nonumber
\\ & \leq 2\sqrt{4\sum_{k=1}^K \sum_{h=1}^H \big(\mathbb{V}(P_{s_h^k,a_h^k,h},V_{h+1}^{\star}) + \mathbb{V}(P_{s_h^k,a_h^k,h}, V^{\star}_{h+1}-V_{h+1}^{\pi^k})  \big)\log \frac{1}{\delta'}}+3H\log \frac{1}{\delta'}.\label{eq:cls0}
\end{align}
Continue the calculation to derive
\begin{align}
 & \sum_{k=1}^{K}\sum_{h=1}^{H}\mathbb{V}(P_{s_{h}^{k},a_{h}^{k},h},V_{h+1}^{\star}-V_{h+1}^{\pi^{k}})\nonumber\\
 & =\sum_{k=1}^{K}\sum_{h=1}^{H}\left(\big\langle P_{s_{h}^{k},a_{h}^{k},h},(V_{h+1}^{\star}-V_{h+1}^{\pi^{k}})^{2}\big\rangle-\big(\big\langle P_{s_{h}^{k},a_{h}^{k},h},\,V_{h+1}^{\star}-V_{h+1}^{\pi^{k}}\big\rangle\big)^{2}\right)\nonumber\\
 & \leq\sum_{k=1}^{K}\sum_{h=1}^{H}\Big\langle P_{s_{h}^{k},a_{h}^{k},h}-e_{s_{h+1}^{k}},\,(V_{h+1}^{\star}-V_{h+1}^{\pi^{k}})^{2}\Big\rangle\nonumber\\
 & \quad+2H\sum_{k=1}^{K}\sum_{h=1}^{H}\max\left\{ \left(V_{h}^{\star}(s_{h}^{k})-r_{h}(s_{h}^{k},a_{h}^{k})-\big\langle P_{s_{h}^{k},a_{h}^{k},h},V_{h+1}^{\star}\big\rangle\right)-\left(V_{h}^{\pi^{k}}(s_{h}^{k})-r_{h}(s_{h}^{k},a_{h}^{k})-\big\langle P_{s_{h}^{k},a_{h}^{k},h},V_{h+1}^{\pi^{k}}\big\rangle\right),0\right\} \nonumber\\
 & \leq2\sqrt{8H^{2}\sum_{k=1}^{K}\sum_{h=1}^{H}\mathbb{V}(P_{s_{h}^{k},a_{h}^{k},h},V_{h+1}^{\star}-V_{h+1}^{\pi^{k}})\log\frac{1}{\delta'}}\nonumber\\
 & \qquad\qquad+2H\sum_{k=1}^{K}\sum_{h=1}^{H}\left(V_{h}^{\star}(s_{h}^{k})-r_{h}(s_{h}^{k},a_{h}^{k})-\big\langle P_{s_{h}^{k},a_{h}^{k},h},V_{h+1}^{\star}\big\rangle\right)+3H^{2}\log\frac{1}{\delta'}.
	\label{eq:csl1}
\end{align}
Here, \eqref{eq:csl1} holds with probability at least $1-2SAHK\delta'$, a consequence of Lemma~\ref{lemma:self-norm}
 and Lemma~\ref{lemma:sqv}.

To further bound the right-hand side of \eqref{eq:csl1}, we develop the following upper bound: 
\begin{align}
 & \sum_{k=1}^{K}\sum_{h=1}^{H}\left(V_{h}^{\star}(s_{h}^{k})-r_{h}(s_{h}^{k},a_{h}^{k})-\big\langle P_{s_{h}^{k},a_{h}^{k},h},V_{h+1}^{\star}\big\rangle\right)\nonumber\\
 & =\sum_{k=1}^{K}\left(V_{1}^{\star}(s_{1}^{k})-V_{1}^{\pi^{k}}(s_{1}^{k})\right)+\sum_{k=1}^{K}\big(V_{1}^{\pi^{k}}(s_{1}^{k})-\sum_{h=1}^{H}r_{h}(s_{h}^{k},a_{h}^{k})\big)+\sum_{k=1}^{K}\sum_{h=1}^{H}\big\langle e_{s_{h+1}^{k}}-P_{s_{h}^{k},a_{h}^{k},h},\,V_{h+1}^{\star}\big\rangle.
	\label{eq:d1}
\end{align}
Note that the first term on the right-hand side \eqref{eq:d1} is exactly $\mathsf{Regret}(K)=T_1+T_2+T_3+T_4$, the second term on the right-hand side \eqref{eq:d1} corresponds to $-T_4$, 
whereas the third term on the right-hand side \eqref{eq:d1} can be bounded by 
\begin{align}
\sum_{k=1}^K\sum_{h=1}^H \big\langle e_{s_{h+1}^k}-P_{s_h^k,a_h^k,h}, V_{h+1}^{\star} \big\rangle
	\leq 2\sqrt{2\sum_{k=1}^K\sum_{h=1}^H \mathbb{V}(P_{s_h^k,a_h^k,h},V_{h+1}^{\star})\log \frac{1}{\delta'}}+3H\log \frac{1}{\delta'}
\end{align}
with probability at least $1-2SAHK\delta'$.
It then implies the validity of the following bound with probability exceeding $1-8SAHK\delta'$: 
\begin{align}
& \sum_{k=1}^K \sum_{h=1}^H \left(V_{h}^{\star}(s_h^k)-r_h(s_h^k,a_h^k) - \big\langle P_{s_h^k,a_h^k,h}, V_{h+1}^{\star} \big\rangle \right) \nonumber
\\ & \leq T_1+T_2+T_3+2|T_4| + 2\sqrt{2\sum_{k=1}^K\sum_{h=1}^H\mathbb{V}(P_{s_h^k,a_h^k,h},V_{h+1}^{\star})\log \frac{1}{\delta'} } + 55H\log \frac{1}{\delta'} .
\end{align}
Combining these bounds with \eqref{eq:csl1}, we can use a little algebra to further obtain 
\begin{align}
&\sum_{k=1}^K\sum_{h=1}^H \mathbb{V}(P_{s_h^k,a_h^k,h},V_{h+1}^{\star}-V_{h+1}^{\pi^k}) \nonumber
	\\ & \leq   4H\left(T_1+T_2+T_3+2|T_4|+2\sqrt{2\sum_{k=1}^K\sum_{h=1}^H\mathbb{V}(P_{s_h^k,a_h^k,h},V_{h+1}^{\star})\log \frac{1}{\delta'}} \right)    +262H^2\log \frac{1}{\delta'} 
	\label{eq:wc2}
\end{align}
with probability at least $1-8SAHK\delta'$. 
If we define $T_{10} = \sum_{k=1}^K\sum_{h=1}^H\mathbb{V}(P_{s_h^k,a_h^k,h},V_{h+1}^{\star})$,  
then substituting \eqref{eq:wc2} into \eqref{eq:cls0} yields: with probability exceeding $1-10SAHK\delta'$, 
\begin{align}
|\widecheck{T}_2|&\leq 2\sqrt{8K\mathrm{var}_1 \log \frac{1}{\delta'} } +  8\sqrt{H\left(T_1+T_2+T_3+2|T_4|+2\sqrt{2T_{10}\log \frac{1}{\delta'}}\right)\log \frac{1}{\delta'}} +107H\log\frac{1}{\delta'}  \nonumber
\\ &  \leq 11\sqrt{T_{10}\log \frac{1}{\delta'}} +16\sqrt{H(T_1+T_2+T_3+2|T_4|)\log \frac{1}{\delta'}} + 115H\log \frac{1}{\delta'}.
\end{align}

Combining the above bound on $|\widecheck{T}_2|$ with \eqref{eq:cht1},  with probability exceeding $1-10SAHK\delta'$
\begin{align}
|T_4|&\leq |\widecheck{T}_1| + |\widecheck{T}_2| \notag\\
	&\leq
	18\sqrt{SAHT_{10}(\log_2K)\log \frac{1}{\delta'}} +16\sqrt{H(T_1+T_2+T_3+2|T_4|)\log \frac{1}{\delta'}} + 135SAH^2(\log_2K)\log \frac{1}{\delta'},\nonumber
\end{align}
which together with a little algebra yields
\begin{align}
|T_4| & \leq   36\sqrt{SAHT_{10}(\log_2K)\log \frac{1}{\delta'}} + 32\sqrt{H(T_1+T_2+T_3)\log \frac{1}{\delta'}}+           306SAH^2(\log_2K)\log \frac{1}{\delta'}.\label{eq:nbt4}
\end{align}

\subsubsection{Bounding $T_5$ and $T_6$}

We now turn attention to the terms $T_5$ and $T_6$. 
Towards this, 
we start with the following lemma. 
\begin{lemma}\label{lemma:empv}
With probability at least $1-2SAHK\delta'$, one has
\begin{align}
 T_5 \leq  5T_6 +8BSAH^3.\label{eq:var5x}
\end{align}
\end{lemma}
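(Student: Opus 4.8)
\textbf{Proof proposal for Lemma~\ref{lemma:empv}.}
The plan is to compare $\mathbb{V}(\widehat{P}^k_{s,a,h},v)$ with $\mathbb{V}(P_{s,a,h},v)$ entrywise for $v=V_{h+1}^k\in[0,H]^S$, and then feed the difference into the decoupling bounds already established in Section~\ref{sec:tec1}. Using $\mathbb{V}(p,v)=\langle p,v^2\rangle-\langle p,v\rangle^2$ and subtracting, for any two probability vectors $\widehat{p},p$ one has
\[
\mathbb{V}(\widehat{p},v)-\mathbb{V}(p,v)=\big\langle\widehat{p}-p,\,v^2\big\rangle-\big(\langle\widehat{p},v\rangle-\langle p,v\rangle\big)\big(\langle\widehat{p},v\rangle+\langle p,v\rangle\big)\le\big\langle\widehat{p}-p,\,v^2\big\rangle+2H\big|\langle\widehat{p}-p,\,v\rangle\big|,
\]
since $\langle\widehat{p}+p,v\rangle\in[0,2H]$. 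Summing over $(k,h)$ with $\widehat{p}=\widehat{P}^k_{s_h^k,a_h^k,h}$, $p=P_{s_h^k,a_h^k,h}$ and $v=V_{h+1}^k$, and recalling the definitions of $T_7$ and $T_9$ (cf.~\eqref{eq:defn-T7-proof} and \eqref{eq:defn-T9-proof}), this gives
\[
T_5-T_6\le T_7+2H\sum_{k,h}\Big|\big\langle\widehat{P}^k_{s_h^k,a_h^k,h}-P_{s_h^k,a_h^k,h},\,V_{h+1}^k\big\rangle\Big|.
\]
To handle the absolute value I would write $|x|=\max\{x,0\}+\max\{-x,0\}$ and use the identity $-\langle\widehat{P}^k_{s_h^k,a_h^k,h}-P_{s_h^k,a_h^k,h},V_{h+1}^k\rangle=\langle\widehat{P}^k_{s_h^k,a_h^k,h}-P_{s_h^k,a_h^k,h},\,H1-V_{h+1}^k\rangle$, which is valid because $\widehat{P}^k$ and $P$ are probability vectors. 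Hence the sum becomes $T_9+T_9'$, where $T_9'\coloneqq\sum_{k,h}\max\big\{\big\langle\widehat{P}^k_{s_h^k,a_h^k,h}-P_{s_h^k,a_h^k,h},\,H1-V_{h+1}^k\big\rangle,0\big\}$.

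Next I would bound $T_7$, $T_9$ and $T_9'$ via the decoupling machinery. The bounds \eqref{eq:boundt7} and \eqref{eq:boundt1} (both consequences of Lemma~\ref{lemma:decouple}) already give $T_7\le H\sqrt{BSAH\,T_6}+BSAH^3$ and $T_9\le\sqrt{BSAH\,T_6}+BSAH^2$. For $T_9'$, I would observe that $H1-V_{h+1}^k$ is, exactly like $V_{h+1}^k$, a nonnegative vector bounded by $H$ and a deterministic function of the empirical models of the later steps; therefore the proof of Lemma~\ref{lemma:decouple} (through Lemma~\ref{lemma:uniform}) applies verbatim once the set $\mathcal{X}_{h,\mathcal{I}}$ constructed in Claim~\ref{claim:PV-l-UB} is enlarged to also contain the reflected vectors $H1-V_h^{k,\mathcal{I}}$ --- this at most triples $|\mathcal{X}_{h,\mathcal{I}}|$, affecting only logarithmic factors that are absorbed into $B$. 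Since variance is invariant under the reflection $v\mapsto H1-v$, i.e.\ $\mathbb{V}(P_{s,a,h},H1-V_{h+1}^k)=\mathbb{V}(P_{s,a,h},V_{h+1}^k)$, this yields $T_9'\le\sqrt{BSAH\,T_6}+BSAH^2$. Combining the three estimates and using $BSAH^2\le BSAH^3$ (as $B\ge1$) to merge lower-order terms,
\[
T_5\le T_6+T_7+2H(T_9+T_9')\le T_6+5H\sqrt{BSAH\,T_6}+5BSAH^3=T_6+5\sqrt{BSAH^3\,T_6}+5BSAH^3 .
\]

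Finally I would close with the elementary AM-GM inequality $5\sqrt{BSAH^3\,T_6}\le 4T_6+\tfrac{25}{16}BSAH^3$, which gives $T_5\le 5T_6+\big(5+\tfrac{25}{16}\big)BSAH^3\le 5T_6+8BSAH^3$, exactly the claimed bound \eqref{eq:var5x}; a union bound over the (at most three) concentration events invoked, each of probability at least $1-\delta'$, produces the stated failure probability $2SAHK\delta'$ with room to spare. I expect the only genuine obstacle to be the $T_9'$ step: one must verify that the decoupling framework of Lemmas~\ref{lemma:uniform}--\ref{lemma:decouple} applies to the reflected value functions $H1-V_{h+1}^k$ --- which it does, since those lemmas are stated for an arbitrary bounded family $\{\mathcal{X}_{h,\mathcal{I}}\}$ of vectors that are deterministic functions of the future, not only for $\{V_h^k\}$ --- and to confirm that the enlargement of $\mathcal{X}_{h,\mathcal{I}}$ leaves the covering/cardinality counts unchanged up to constants. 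The remaining ingredients are the one-line variance identity and AM-GM.
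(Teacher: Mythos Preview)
Your proposal is correct and follows the same overall route as the paper: write $\mathbb{V}(\widehat{P},v)-\mathbb{V}(P,v)$ as $\langle\widehat{P}-P,v^2\rangle$ plus a cross term bounded by $2H$ times the linear deviation, sum, and then appeal to the decoupling bounds of Lemma~\ref{lemma:decouple} for $T_7$ and the linear part before closing with AM--GM.

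Where you differ from the paper is in the handling of the sign of the linear deviation. The paper writes the chain as $T_5\le T_6+T_7+2HT_1$ and then upper-bounds $T_1$ via Lemma~\ref{lemma:decouple}; however, the termwise step $\langle P,v\rangle^2-\langle\widehat{P},v\rangle^2\le 2H\langle\widehat{P}-P,v\rangle$ fails whenever $\langle\widehat{P}-P,v\rangle<0$ (set $d=\langle\widehat{P}-P,v\rangle$, $s=\langle\widehat{P}+P,v\rangle\in[0,2H]$; the claim becomes $d(2H+s)\ge 0$). Your fix---splitting $|x|=\max\{x,0\}+\max\{-x,0\}$ and using the reflection $v\mapsto H\mathbf{1}-v$ to produce a companion quantity $T_9'$---is exactly the right patch: $H\mathbf{1}-V_{h+1}^{k}\in[0,H]^S$ is a deterministic function of the later-step empirical models, so Lemma~\ref{lemma:uniform} applies verbatim once $\mathcal{X}_{h,\mathcal{I}}$ is augmented with the reflected vectors (the cardinality stays $O(K)$, and $\mathbb{V}(P,H\mathbf{1}-v)=\mathbb{V}(P,v)$), yielding $T_9'\le\sqrt{BSAH\,T_6}+BSAH^2$. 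Your final AM--GM step and the probability bookkeeping are fine. In short, your argument is the paper's argument done with the sign handled rigorously.
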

\begin{proof}[Proof of Lemma~\ref{lemma:empv}]
Direct computation gives 
\begin{align}
 & \sum_{k,h}\mathbb{V}(\widehat{P}_{s_{h}^{k},a_{h}^{k},h}^{k},V_{h+1}^{k})\nonumber\\
 & =\sum_{k,h}\left(\big\langle\widehat{P}_{s_{h}^{k},a_{h}^{k},h}^{k},(V_{h+1}^{k})^{2}\big\rangle-\big(\big\langle\widehat{P}_{s_{h}^{k},a_{h}^{k},h},V_{h+1}^{k}\big\rangle\big)^{2}\right)\nonumber\\
 & \leq\sum_{k,h}\left(\big\langle P_{s_{h}^{k},a_{h}^{k},h}^{k},(V_{h+1}^{k})^{2}\big\rangle-\big(\big\langle P_{s_{h}^{k},a_{h}^{k},h},V_{h+1}^{k}\big\rangle\big)^{2}\right)+\sum_{k,h}\big\langle\widehat{P}_{s_{h}^{k},a_{h}^{k},h}-P_{s_{h}^{k},a_{h}^{k},h},(V_{h+1}^{k})^{2}\big\rangle\\
 & \qquad\qquad+2H\sum_{k,h}\big\langle\widehat{P}_{s_{h}^{k},a_{h}^{k},h}-P_{s_{h}^{k},a_{h}^{k},h},V_{h+1}^{k}\big\rangle\nonumber\\
 & \leq\sum_{k,h}\mathbb{V}(P_{s_{h}^{k},a_{h}^{k},h}^{k},V_{h+1}^{k})+\sum_{k,h}\big\langle\widehat{P}_{s_{h}^{k},a_{h}^{k},h}-P_{s_{h}^{k},a_{h}^{k},h},\,(V_{h+1}^{k})^{2}\big\rangle+2H\sum_{k,h}\big\langle\widehat{P}_{s_{h}^{k},a_{h}^{k},h}-P_{s_{h}^{k},a_{h}^{k},h},V_{h+1}^{k}\big\rangle\nonumber\\
 & =T_{5}+T_{7}+2HT_{1}.
\end{align}
Invoking Lemma~\ref{lemma:decouple} to bound $T_7$ and $T_1$, we obtain 
\begin{align}
 \sum_{k,h}\mathbb{V}(\widehat{P}_{s_h^k,a_h^k,h}^k, V_{h+1}^k) &\leq \sum_{k,h}\mathbb{V}(P_{s_h^k,a_h^k,h}^k  , V_{h+1}^k) +  6\sqrt{ \sum_{k,h} \mathbb{V}(P_{s_h^k,a_h^k,h},V_{h+1}^k)BSAH^3 }+3BSAH^3\nonumber
 \\ & \leq 5\sum_{k,h}\mathbb{V}(P_{s_h^k,a_h^k,h}^k  , V_{h+1}^k) +8BSAH^3 \label{eq:var5}
\end{align}
with probability exceeding $1-2SAHK\delta'$. 
\end{proof}

In view of Lemma~\ref{lemma:empv}, it suffices to bound $T_6 = \sum_{k,h}\mathbb{V}(P_{s_h^k,a_h^k,h},V_{h+1}^k)$. 
Given that $\mathsf{Var}(X+Y)\leq 2(\mathsf{Var}(X)+\mathsf{Var}(Y))$ holds for any two random variables $X,Y$, we have 
\begin{align}
	T_6&=\sum_{k,h} \mathbb{V}\big(P_{s_h^k,a_h^k,h},V_{h+1}^k\big)   \leq  2\sum_{k,h} \mathbb{V}\big(P_{s_h^k,a_h^k,h},V_{h+1}^{\star} \big) + 2\sum_{k,h}\mathbb{V}\big(P_{s_h^k,a_h^k,h},V_{h+1}^k -V_{h+1}^{\star} \big)\nonumber
\\ 
	& \leq 3   K\mathrm{var}_1 +\sum_{k=1}^K \left( \sum_{h=1}^H \mathbb{V}\big(P_{s_h^k,a_h^k,h},V_{h+1}^{\star} \big) - 3\mathrm{var}_1 \right) + 2\sum_{k,h}\mathbb{V}\big(P_{s_h^k,a_h^k,h},V_{h+1}^k -V_{h+1}^{\star} \big).\label{eq:var0}
\end{align}
To further upper bound the right-hand side of \eqref{eq:var0}, we make note of the following lemmas. 
\begin{lemma}\label{lemma:k1}
With probability at least $1-4SAHK\delta'$, it holds that
\begin{align}
T_{10} -2K\mathrm{var}_1=\sum_{k=1}^K \left( \sum_{h=1}^H \mathbb{V}\big(P_{s_h^k,a_h^k,h},V_{h+1}^{\star} \big) -2\mathrm{var}_1 \right)  \leq 80H^2\log \frac{1}{\delta'}.
\end{align}
\end{lemma}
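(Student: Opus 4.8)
\textbf{Proof proposal for Lemma~\ref{lemma:k1}.} The key observation is that for each fixed episode index $k$, the summand $\sum_{h=1}^H \mathbb{V}(P_{s_h^k,a_h^k,h},V_{h+1}^{\star})$ is a random variable taking values in $[0,H^2]$ (since each $\mathbb{V}(P_{s,a,h},V_{h+1}^{\star}) \le H^2$ and there are $H$ terms, but more carefully the total variance along a trajectory telescopes into something at most $H^2$ under Assumption~\ref{assum1}), and its conditional expectation given the history before the $k$-th episode --- in particular given $\pi^k$ and the initial state distribution --- is exactly $\mathbb{E}_{\pi^k,s_1\sim\mu}[\sum_{h=1}^H \mathbb{V}(P_{s_h,a_h,h},V_{h+1}^{\star})]$, which by definition~\eqref{eq:defn-var1} of $\mathrm{var}_1$ is at most $\mathrm{var}_1$. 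Here I would be careful about whether the relevant conditional expectation is over a fresh trajectory drawn under $\pi^k$ starting from $s_1^k$ (a fixed state) or from $s_1\sim\mu$; since $\mathrm{var}_1$ takes a max over policies and the expectation under $\pi$ is itself an average over the trajectory, one has for any fixed starting state $s$ that $\mathbb{E}_{\pi^k}[\sum_h \mathbb{V}(P_{s_h,a_h,h},V_{h+1}^{\star}) \mid s_1 = s] \le \mathrm{var}_1$, because we can always take the policy achieving the max (or note the trajectory-expectation version dominates the per-state one appropriately); I would state this cleanly as $Y_k := \mathbb{E}[X_k \mid \mathcal{F}_{k-1}] \le \mathrm{var}_1$ where $X_k := \sum_{h=1}^H \mathbb{V}(P_{s_h^k,a_h^k,h},V_{h+1}^{\star})$.

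The plan is then a direct application of Lemma~\ref{lemma:con} (the supermartingale tail bound from \citet{zhang2022horizon}). That lemma, applied with the bounded sequence $X_1, X_2, \ldots$ taking values in $[0, l]$ with $l = H^2$ (using the trivial bound $X_k \le H^2$ --- or a sharper $H^2$ bound via the law-of-total-variance telescoping argument, but $l=H^2$ suffices), gives that with probability at least $1-\delta'$,
\begin{equation}
\sum_{k=1}^K X_k \le 3\sum_{k=1}^K Y_k + H^2 \log\frac{1}{\delta'}.
\end{equation}
Since $Y_k \le \mathrm{var}_1$ for every $k$, we get $\sum_{k=1}^K X_k \le 3K\mathrm{var}_1 + H^2\log\frac{1}{\delta'}$, hence $T_{10} = \sum_k X_k \le 3K\mathrm{var}_1 + H^2\log\frac{1}{\delta'}$. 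Rearranging, $\sum_{k=1}^K(X_k - 2\mathrm{var}_1) = T_{10} - 2K\mathrm{var}_1 \le K\mathrm{var}_1 + H^2\log\frac1{\delta'} - $ wait, I need to recheck: I want $T_{10} - 2K\mathrm{var}_1 \le 80H^2\log\frac{1}{\delta'}$, but the bound above gives $T_{10} - 2K\mathrm{var}_1 \le K\mathrm{var}_1 + H^2\log\frac1{\delta'}$, which is not of the stated form unless $K\mathrm{var}_1$ is absorbed. So the constant $3$ in Lemma~\ref{lemma:con} is too lossy here; instead I should apply Lemma~\ref{lemma:con} in the "other direction" is also not right. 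Let me reconsider: the correct move is to apply Freedman's inequality (Lemma~\ref{lemma:self-norm}) to the martingale $\sum_{k\le n}(X_k - Y_k)$, whose increments are bounded by $H^2$ and whose predictable quadratic variation is $\sum_k \mathbb{E}[(X_k-Y_k)^2 \mid \mathcal{F}_{k-1}] \le H^2 \sum_k Y_k \le H^2 K\mathrm{var}_1$. This yields $\sum_k(X_k - Y_k) \lesssim \sqrt{H^2 K\mathrm{var}_1 \log\frac1{\delta'}} + H^2\log\frac1{\delta'}$, and combined with $Y_k \le \mathrm{var}_1$ and the AM-GM inequality $\sqrt{H^2K\mathrm{var}_1\log\frac1{\delta'}} \le K\mathrm{var}_1 + \tfrac14 H^2\log\frac1{\delta'}$, we obtain $T_{10} \le 2K\mathrm{var}_1 + O(H^2\log\frac1{\delta'})$, i.e. $T_{10} - 2K\mathrm{var}_1 \le 80H^2\log\frac1{\delta'}$ after tracking constants.

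The main obstacle I anticipate is not analytical depth but bookkeeping: pinning down the exact constant $80$ requires being careful with (i) the constant in Freedman's inequality (the $2\sqrt2$ and the extra logarithmic prefactor $\log_2(\cdot)$ of failure-probability inflation, which must be absorbed into $\delta'$, contributing the factor $1-4SAHK\delta'$ rather than $1-\delta'$), (ii) the crude bound $X_k \le H^2$ versus a tighter one, and (iii) the AM-GM split. None of these is conceptually hard; the only genuinely substantive point is the claim $Y_k \le \mathrm{var}_1$, which follows immediately from the definition of $\mathrm{var}_1$ as a maximum over policies of the expected trajectory-sum of one-step variances of $V^{\star}$, together with the fact that $V_{h+1}^{\star}$ is a fixed (data-independent) vector, so that $X_k$ is a legitimate function of the trajectory with the stated conditional mean. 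I would write the proof in three short paragraphs: set up $X_k, Y_k$ and verify $Y_k \le \mathrm{var}_1$ and $X_k \le H^2$; invoke Freedman (Lemma~\ref{lemma:self-norm}) with the union bound over the $\le SAHK$ relevant epochs to handle the $\log_2$ prefactor; and finish with AM-GM to reach the displayed inequality.
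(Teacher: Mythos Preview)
Your approach has a genuine gap: the claim that $X_k := \sum_{h=1}^H \mathbb{V}(P_{s_h^k,a_h^k,h},V_{h+1}^{\star}) \le H^2$ pathwise is false. The law-of-total-variance identity gives
\[
\mathsf{Var}_{\pi}\Big[\textstyle\sum_h r_h \,\big|\, s_1=s\Big]=\mathbb{E}_{\pi}\Big[\textstyle\sum_h \mathbb{V}(P_{s_h,a_h,h},V_{h+1}^{\pi})+\sum_h \mathrm{Var}(R_h)\,\big|\,s_1=s\Big],
\]
but this uses $V^{\pi}$, not $V^{\star}$, and even then it bounds only the \emph{expectation} of the sum, not its pathwise value. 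A simple two-state example (a ``good'' state with $V_h^{\star}=H-h+1$ and an absorbing ``bad'' state with $V^{\star}=0$, plus a coin-flip action at the good state) shows that on the event of staying in the good state throughout, $X_k=\sum_{h}(H-h)^2/4=\Theta(H^3)$. Consequently the increment bound $c$ in your episode-level Freedman application must be $H^3$, not $H^2$, and the additive term becomes $O(H^3\log\tfrac{1}{\delta'})$ rather than $80H^2\log\tfrac{1}{\delta'}$. The same issue infects your predictable-variance bound $\mathbb{E}[(X_k-Y_k)^2\mid\mathcal{F}_{k-1}]\le H^2 Y_k$, which relies on $X_k^2\le H^2 X_k$.

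The paper's fix is to work at the $(k,h)$ level rather than the episode level. It introduces the ``variance value function'' $\overline{V}_h^k(s):=\mathbb{E}_{\pi^k}\big[\sum_{h'\ge h}\mathbb{V}(P_{s_{h'},a_{h'},h'},V_{h'+1}^{\star})\mid s_h=s\big]$, which satisfies $\|\overline{V}_h^k\|_\infty\le \mathrm{var}_1\le H^2$, and writes
\[
X_k-\overline{V}_1^k(s_1^k)=\sum_{h=1}^H \big\langle e_{s_{h+1}^k}-P_{s_h^k,a_h^k,h},\,\overline{V}_{h+1}^k\big\rangle.
\]
Now each martingale increment is bounded by $\|\overline{V}_{h+1}^k\|_\infty\le H^2$, so Freedman at the $(k,h)$ level yields the correct $H^2\log\tfrac{1}{\delta'}$ additive term. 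The predictable variance $\sum_{k,h}\mathbb{V}(P_{s_h^k,a_h^k,h},\overline{V}_{h+1}^k)$ is then handled by a second telescoping-plus-Freedman pass (on $(\overline{V}^k)^2$) that produces a self-bounding inequality, closed via AM-GM. Your high-level plan (Freedman plus $Y_k\le\mathrm{var}_1$ plus AM-GM) is the right shape, but the decomposition must go one level finer to get the increment bound down to $H^2$.
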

%
%
\begin{lemma}\label{lemma:bdv1}
With probability at least $1-2\delta'$, it holds that
\begin{align}
 & \sum_{k,h}\mathbb{V}\big(P_{s_h^k,a_h^k,h}, V_{h+1}^k -V_{h+1}^{\star}\big)   \leq 4\sqrt{BH^2\sum_{k,h}\mathbb{V}(P_{s_h^k,a_h^k,h},V_{h+1}^k)}+ 4H\sum_{k,h}b_h^k(s_h^k,a_h^k)+ 3BSAH^3.\nonumber
\end{align}
\end{lemma}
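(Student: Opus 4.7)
The plan is to begin from the pointwise inequality
\[
(V^k_{h+1}-V^{\star}_{h+1})^2\le (V^k_{h+1})^2-(V^{\star}_{h+1})^2,
\]
which holds since $0\le V^{\star}_{h+1}\le V^k_{h+1}$ and therefore $V^k_{h+1}+V^{\star}_{h+1}\ge V^k_{h+1}-V^{\star}_{h+1}$. Combined with the trivial bound $\mathbb{V}(P,X)\le \langle P,X^2\rangle$, this reduces the target sum to $\sum_{k,h}\langle P_h,(V^k_{h+1})^2-(V^{\star}_{h+1})^2\rangle$. I would then split this, by subtracting and adding $e_{s^k_{h+1}}$, into (i) an ``observed'' part $\sum_{k,h}\bigl[(V^k_{h+1}(s^k_{h+1}))^2-(V^{\star}_{h+1}(s^k_{h+1}))^2\bigr]$, which is at most $2H\sum_{k,h}\Delta^k_{h+1}(s^k_{h+1})$ thanks to $(V^k)^2-(V^{\star})^2\le 2H(V^k-V^{\star})$, and (ii) a martingale residual $\widetilde T_8\coloneqq \sum_{k,h}\langle P_h-e_{s^k_{h+1}},(V^k_{h+1})^2-(V^{\star}_{h+1})^2\rangle$ (writing $\Delta^k_h\coloneqq V^k_h-V^{\star}_h$).

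The martingale $\widetilde T_8$ is the place where the characteristic $\sqrt{BH^2T_6}$ factor enters. I would write $\widetilde T_8=T_8-T_8^{\star}$, where $T_8$ is the exact quantity already controlled in Lemma~\ref{lem:bound-T56} (so $|T_8|\lesssim\sqrt{BH^2T_6}+BH^2$), while $T_8^{\star}\coloneqq\sum_{k,h}\langle P_h-e_{s^k_{h+1}},(V^{\star}_{h+1})^2\rangle$ is a martingale with deterministic integrand. By Lemma~\ref{lemma:sqv} its per-step conditional variance is at most $4H^2\mathbb{V}(P_h,V^{\star}_{h+1})$, so Freedman's inequality (Lemma~\ref{lemma:self-norm}) gives $|T_8^{\star}|\lesssim H\sqrt{T_{10}\log(1/\delta')}+H^2\log(1/\delta')$. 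The inequality $\sqrt{T_{10}}\le \sqrt{T_6}+\sqrt{X}$, an immediate consequence of $\mathbb{V}(V^{\star})=\mathbb{V}(V^k-\Delta)$ together with Cauchy--Schwarz (with $X\coloneqq\sum_{k,h}\mathbb{V}(P_h,V^k_{h+1}-V^{\star}_{h+1})$), then converts $|T_8^{\star}|$ into a piece of the desired $\sqrt{BH^2T_6}$ plus a $\sqrt{X}$-contribution to be absorbed later.

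For the remaining observed piece $\sum_{k,h}\Delta^k_{h+1}(s^k_{h+1})=\sum_{k,h\ge 2}\Delta^k_h(s^k_h)$, I would use the optimism-bootstrap underlying Lemma~\ref{lemma:opt}: on the good event the bonus dominates the empirical-model noise against $V^{\star}$, so (modulo a variance-matching correction) one obtains
\[
\Delta^k_h(s^k_h)\le 2\,b^k_h(s^k_h,a^k_h)+\widehat P^k_{s^k_h,a^k_h,h}\Delta^k_{h+1}.
\]
Summing and writing $\widehat P^k_h\Delta^k_{h+1}=P_h\Delta^k_{h+1}+(\widehat P^k_h-P_h)\Delta^k_{h+1}$, one controls $\sum_{k,h}\langle P_h,\Delta^k_{h+1}\rangle\le 3\sum_{k,h}\Delta^k_{h+1}(s^k_{h+1})+H\log(1/\delta')$ via Lemma~\ref{lemma:con} (applied to $X_{k,h}=\Delta^k_{h+1}(s^k_{h+1})$ with conditional mean $\langle P_h,\Delta^k_{h+1}\rangle$), and controls $\sum_{k,h}(\widehat P^k_h-P_h)\Delta^k_{h+1}\lesssim\sqrt{BSAH\cdot X}+BSAH^2$ by invoking Lemma~\ref{lemma:decouple} with $\Delta^k_{h+1}$ in the role of $V^k_{h+1}$ (it is nonnegative, bounded by $H$, and measurable with respect to the same filtration). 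Solving the resulting self-referential inequality for $\sum_{k,h\ge 2}\Delta^k_h(s^k_h)$ yields $\sum_{k,h}\Delta^k_{h+1}(s^k_{h+1})\lesssim 2T_2+\sqrt{BSAH\cdot X}+BSAH^2$, so the observed piece contributes $4HT_2+O(H\sqrt{BSAH\cdot X})+O(BSAH^3)$ to $X$.

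Adding up the three contributions gives $X\le 4HT_2+4\sqrt{BH^2T_6}+(\text{several }\sqrt{X}\text{-terms})+O(BSAH^3)$, and a single AM-GM step collapses the aggregated $\sqrt{X}$-contributions into $\tfrac12 X$ on the left-hand side, producing the claimed estimate; the two new concentration events used (one for Lemma~\ref{lemma:con} and one for Freedman applied to $T_8^{\star}$) account for the stated probability $1-2\delta'$. The main obstacle will be arranging the estimates so that $T_6$ enters \emph{only} through Lemma~\ref{lem:bound-T56}'s $\sqrt{BH^2T_6}$ bound on $T_8$: iterating the per-step optimism recursion all the way from $h$ down to $H$ would force $T_6$ in through a term of order $H^2T_1\lesssim H^2\sqrt{BSAHT_6}$, inflating the coefficient of $\sqrt{T_6}$ by a factor of $\sqrt{SAH^3}$ and destroying the sharp $\sqrt{BH^2T_6}$ structure demanded by the statement.
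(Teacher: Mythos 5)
The fatal step is your very first reduction: dropping the $\bigl(\langle P,V^k_{h+1}-V^\star_{h+1}\rangle\bigr)^2$ term via the trivial bound $\mathbb{V}(P,X)\le\langle P,X^2\rangle$. In the paper's proof this term is retained, and it is doing essential work. After re-indexing the observed square from $s^k_{h+1}$ to $s^k_h$, the paper pairs it against the retained term, producing
$\bigl(V^k_h(s^k_h)-V^\star_h(s^k_h)\bigr)^2-\bigl(\langle P,V^k_{h+1}-V^\star_{h+1}\rangle\bigr)^2$. Writing this as $(a+b)(a-b)$ with $a-b=\bigl[V^k_h-\langle P,V^k_{h+1}\rangle\bigr]-\bigl[V^\star_h-\langle P,V^\star_{h+1}\rangle\bigr]$, the paper then observes that $a-b$ is a \emph{one-step} Bellman residual of the optimism gap: the $V^\star$-part is $\ge r_h$ (Bellman optimality), and the $V^k$-part is $\le\widehat{r}_h^k+\langle\widehat{P}^k-P,V^k_{h+1}\rangle+b^k_h$ (update rule). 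This gives a per-$(k,h)$ bound by $2H\max\{\langle\widehat{P}^k-P,V^k_{h+1}\rangle,0\}+O(H b^k_h)$, with no iteration over $h$ whatsoever. Summing over $(k,h)$ and applying Lemma~\ref{lemma:decouple} (the $T_9$ bound) yields exactly the claimed $\sqrt{BSAH\,T_6}$-type and $HT_2$-type terms with the correct $H$-power. Once you discard $\bigl(\langle P,V^k-V^\star\rangle\bigr)^2$, this cancellation is gone, and you are left bounding the cumulative optimism gap $\sum_{k,h}\Delta^k_{h+1}(s^k_{h+1})$, which is inherently a telescoped, multi-step quantity.

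Your treatment of that quantity then does not go through. The self-referential inequality you write down is of the form $D'\le 2T_2+3D+\cdots$ with $D=\sum_{k,h}\Delta^k_{h+1}(s^k_{h+1})\le D'=\sum_{k,h}\Delta^k_h(s^k_h)$; the coefficient $3$ from Lemma~\ref{lemma:con} exceeds $1$, so no rearrangement yields a bound on $D$ (you only recover a bound on $D'-D=\sum_k\Delta^k_1(s^k_1)$, i.e.\ the regret). If you instead telescope the recursion $\Delta^k_h(s^k_h)\le\Delta^k_{h+1}(s^k_{h+1})+2b^k_h+\langle\widehat{P}^k-e_{s^k_{h+1}},\Delta^k_{h+1}\rangle$ properly from each starting step $h'$, you find $\sum_{k,h}\Delta^k_h(s^k_h)\lesssim H\,T_2+H\sqrt{BSAH\,X}+BSAH^3$, so the observed piece contributes $\sim H^2T_2$ to $X$ rather than the required $HT_2$, which propagates an extra factor of $H$ into the lower-order term of $T_6$ and ultimately into the final regret. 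You correctly flag this extra $H$ as ``the main obstacle'' in your closing paragraph, but your proposed AM-GM collapse only addresses the $\sqrt{X}$-terms, not the $H^2T_2$ inflation, and does not rescue the unsolvable coefficient-$3$ recursion; the paper's mechanism that avoids the obstacle is precisely the $(a+b)(a-b)$ factoring with the $\bigl(\langle P,V^k-V^\star\rangle\bigr)^2$ term you removed at the start.
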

Combining Lemma~\ref{lemma:k1} and Lemma~\ref{lemma:bdv1} with \eqref{eq:var0}, we see that with probability at least $1-6SAHK\delta'$,
\begin{align}
T_6 
	&= \sum_{k,h}\mathbb{V}(P_{s_h^k,a_h^k,h},V_{h+1}^k)\nonumber
\\ & \leq 4K\mathrm{var}_1 +  8\sqrt{BSAH^3 T_6}+8HT_2 + 7BSAH^3, 
	\nonumber
\end{align}
and as a result, 
\begin{align}
T_6 
	& \leq 8K\mathrm{var}_1 + 16HT_2 + 78BSAH^3.\label{eq:nbt6}
\end{align}
This taken collectively with Lemma~\ref{lemma:empv} yields, with probability at least $1-8SAHK\delta'$, 
\begin{align}
	T_5 =\sum_{k,h}\mathbb{V}(\widehat{P}_{s_h^k,a_h^k,h},V_{h+1}^k)\leq 40K\mathrm{var}_1 + 80HT_2+398BSAH^3.\label{eq:nbt5}
\end{align}
To finish our bounds on $T_5$ and $T_6$, it remains to establish Lemma~\ref{lemma:k1} and Lemma~\ref{lemma:bdv1}.

\begin{proof}[Proof of Lemma~\ref{lemma:k1}]
Let $\overline{R}_{h}^{\star}(s,a) = \mathbb{V}(P_{s,a,h},V_{h+1}^{\star})$, and define
\begin{align}
\overline{V}^k_h (s) = \mathbb{E}\left[ \sum_{h'=h}^H \overline{R}_{h'}(s_{h'},a_{h'}) \,\Big|\, s_h = s\right].\nonumber
\end{align}
Then $\overline{V}_h^k(s)\leq \mathrm{var}_1\leq H^2$. 
It then follows that
\begin{align}
  \sum_{h=1}^H  \mathbb{V}(P_{s_h^k,a_h^k,h},V_{h+1}^{\star})-\mathrm{var}_1 \nonumber & =\sum_{h=1}^H  \overline{R}_h^{\star}(s_h^k,a_h^k)-\mathrm{var}_1  \nonumber
 \\ & \leq \sum_{h=1}^H \overline{R}_h^{\star}(s_h^k,a_h^k) - \overline{V}^k_1(s_1^k)\nonumber
 \\ & = \sum_{h=1}^H \left(e_{s_{h+1}^k} - P_{s_{h}^k,a_h^k,h}  \right)\overline{V}^k_{h+1}.
\end{align}
Note that $\overline{V}^k$ depends only on $\pi^k$, which is determined before the beginning of the $k$-th episode.  
Consequently, applying Lemma~\ref{lemma:self-norm} reveals that, with probability at least $1-2SAHK\delta'$,
\begin{align}
 & \sum_{k=1}^K \left( \sum_{h=1}^H \mathbb{V}(P_{s_h^k,a_h^k,h},V_{h+1}^{\star}) - \overline{V}_1^k(s_1^k)  \right) \nonumber
 \\ & \leq 2\sqrt{2\sum_{k=1}^K \sum_{h=1}^H \mathbb{V}\big(P_{s_h^k,a_h^k,h},\overline{V}_{h+1}^k \big)\log \frac{1}{\delta'} } + 3H^2\log \frac{1}{\delta'} .\label{eq:xlll1}
\end{align}
Regarding the sum of variance terms on the right-hand side of \eqref{eq:xlll1},  
one can further bound
\begin{align}
 & \sum_{k=1}^{K}\sum_{h=1}^{H}\mathbb{V}\big(P_{s_{h}^{k},a_{h}^{k},h},\overline{V}_{h+1}^{k}\big)\nonumber\\
 & =\sum_{k=1}^{K}\sum_{h=1}^{H}\left(\big\langle P_{s_{h}^{k},a_{h}^{k},h},\,(\overline{V}_{h+1}^{k})^{2}\big\rangle-\big(\big\langle P_{s_{h}^{k},a_{h}^{k},h},\overline{V}_{h+1}^{k}\big\rangle\big)^{2}\right)\nonumber\\
 & =\sum_{k=1}^{K}\sum_{h=1}^{H}\big\langle P_{s_{h}^{k},a_{h}^{k},h}-e_{s_{h+1}^{k}},\,(\overline{V}_{h+1}^{k})^{2}\big\rangle\nonumber\\
 & \qquad+\sum_{k=1}^{H}\sum_{h=1}^{H}\left(\big(\overline{V}_{h+1}^{k}(s_{h+1}^{k})\big)^{2}-\big(\overline{V}_{h}^{k}(s_{h}^{k})\big)^{2}\right)+\sum_{k=1}^{K}\sum_{h=1}^{H}\left(\big(\overline{V}_{h}^{k}(s_{h}^{k})\big)^{2}-\big(\big\langle P_{s_{h}^{k},a_{h}^{k},h},\overline{V}_{h+1}^{k}\big\rangle\big)^{2}\right)\nonumber\\
 & \leq2\sqrt{8H^{4}\sum_{k=1}^{K}\sum_{h=1}^{H}\mathbb{V}\big(P_{s_{h}^{k},a_{h}^{k},h},\overline{V}_{h+1}^{k}\big)\log\frac{1}{\delta'}}+2H^{2}\sum_{k=1}^{K}\sum_{h=1}^{H}\overline{R}_{h}(s_{h}^{k},a_{h}^{k})+3H^{4}\log\frac{1}{\delta'}
	\label{eq:xllll2}
\end{align}
with probability at least $1-2SAHK\delta'$.  
Here, the last inequality arises from Lemma~\ref{lemma:self-norm} and Lemma~\ref{lemma:sqv} as well as
the fact that $\overline{V}_h^k(s_h^k) = \overline{R}_h(s_h^k,a_h^k)+P_{s_h^k,a_h^k,h}\overline{V}_{h+1}^k$.
It then follows from elementary algebra that
\begin{align}
\sum_{k=1}^K \sum_{h=1}^H \mathbb{V}(P_{s_h^k,a_h^k,h},\overline{V}_{h+1}^k)  \leq 4H^2\sum_{k=1}^K \sum_{h=1}^H \overline{R}_h(s_h^k,a_h^k) + 42 H^4\log \frac{1}{\delta'}.\label{eq:xlll3}
\end{align}

Substituting \eqref{eq:xlll3} into \eqref{eq:xlll1} gives
\begin{align}
\sum_{k=1}^K \sum_{h=1}^H \mathbb{V}(P_{s_h^k,a_h^k,h},V_{h+1}^{\star}) \leq \sum_{k=1}^H \overline{V}_1^k(s_1^k) +2\sqrt{8 H^2 \sum_{k=1}^K \sum_{h=1}^H \mathbb{V}(P_{s_h^k,a_h^k,h},V_{h+1}^{\star}) \log \frac{1}{\delta'}     } + 21H^2\log \frac{1}{\delta'},
	\nonumber
\end{align}
thus indicating that
\begin{align}
\sum_{k=1}^K \sum_{h=1}^H \mathbb{V}(P_{s_h^k,a_h^k,h},V_{h+1}^{\star}) \leq 2\sum_{k=1}^K \overline{V}_1^k(s_1^k)+ 84H^2\log \frac{1}{\delta'}
	\leq 2K\mathrm{var}_1 + 84H^2\log \frac{1}{\delta'} .\nonumber
\end{align}
The proof of Lemma~\ref{lemma:k1} is thus completed. 
\end{proof}

\begin{proof}[Proof of Lemma~\ref{lemma:bdv1}]
We make the observation that 
\begin{align}
 & \sum_{k,h}\mathbb{V}\big(P_{s_{h}^{k},a_{h}^{k},h},V_{h+1}^{k}-V_{h+1}^{\star}\big)\nonumber\\
 & =\sum_{k,h}\left(\big\langle P_{s_{h}^{k},a_{h}^{k},h},\,(V_{h+1}^{k}-V_{h+1}^{\star})^{2}\big\rangle-\big(\big\langle P_{s_{h}^{k},a_{h}^{k},h},\,V_{h+1}^{k}-V_{h+1}^{\star}\big\rangle\big)^{2}\right)\notag\\
 & =\sum_{k,h}\left(\big\langle P_{s_{h}^{k},a_{h}^{k},h}-e_{s_{h+1}^{k}},\,(V_{h+1}^{k}-V_{h+1}^{\star})^{2}\big\rangle\right)\nonumber\\
 & \qquad\qquad+\sum_{k,h}\left((V_{h+1}^{k}(s_{h+1}^{k})-V_{h+1}^{\star}(s_{h+1}^{k}))^{2}-\big(\big\langle P_{s_{h}^{k},a_{h}^{k},h},\,V_{h+1}^{k}-V_{h+1}^{\star}\big\rangle\big)^{2}\right)\nonumber\\
 & =\sum_{k,h}\left(\big\langle P_{s_{h}^{k},a_{h}^{k},h}-e_{s_{h+1}^{k}},\,(V_{h+1}^{k}-V_{h+1}^{\star})^{2}\big\rangle\right)+\sum_{k,h}\left((V_{h}^{k}(s_{h}^{k})-V_{h}^{\star}(s_{h}^{k}))^{2}-\big(\big\langle P_{s_{h}^{k},a_{h}^{k},h},\,V_{h+1}^{k}-V_{h+1}^{\star}\big\rangle\big)^{2}\right).
	\label{eq:var1}
 \end{align}
%
%
According to Lemma~\ref{lemma:self-norm} and Lemma~\ref{lemma:sqv}, we see that with probability exceeding $1-\delta'$, 
\begin{align}
	&\sum_{k,h}     \big\langle P_{s_h^k,a_h^k,h}-e_{s_{h+1}^k},\, (V_{h+1}^k - V_{h+1}^{\star})^2   \big\rangle \notag\\
	&\quad \leq 2\sqrt{2}\sqrt{4H^2\sum_{k,h}\mathbb{V}\big(P_{s_h^k,a_h^k,h}, V_{h+1}^k - V_{h+1}^{\star} \big)\log\frac{1}{\delta'}} + 3H^2\log \frac{1}{\delta'}.
	\label{eq:var3}
\end{align}
In addition, with probability at least $1-\delta'$ one has
 \begin{align}
 & \sum_{k,h}\left(\big(V_{h}^{k}(s_{h}^{k})-V_{h}^{\star}(s_{h}^{k})\big)^{2}-\big(\big\langle P_{s_{h}^{k},a_{h}^{k},h},\,V_{h+1}^{k}-V_{h+1}^{\star}\big\rangle\big)^{2}\right)\nonumber\\
 & \leq2H\sum_{k,h}\max\big\{ V_{h}^{k}(s_{h}^{k})-\big\langle P_{s_{h}^{k},a_{h}^{k},h},V_{h+1}^{k}\big\rangle-\big(V_{h}^{\star}(s_{h}^{k})-\big\langle P_{h}^{k},V_{h+1}^{\star}\big\rangle\big),0\big\}\nonumber\\
 & \leq2H\sum_{k,h}\max\big\{ V_{h}^{k}(s_{h}^{k})-\big\langle P_{s_{h}^{k},a_{h}^{k},h},V_{h+1}^{k}\big\rangle-r_{h}(s_{h}^{k},a_{h}^{k}),\,0\big\}\nonumber\\
 & \leq2H\sum_{k,h}\max\big\{\big\langle\widehat{P}_{s_{h}^{k},a_{h}^{k},h}-P_{s_{h}^{k},a_{h}^{k},h},V_{h+1}^{k}\big\rangle,0\big\}+2H\sum_{k,h}b_{h}^{k}\nonumber\\
 & \leq2\sqrt{BSAH^{3}\sum_{k,h}\mathbb{V}\big(P_{s_{h}^{k},a_{h}^{k},h},V_{h+1}^{k}\big)}+2H\sum_{k,h}b_{h}^{k}(s_{h}^{k},a_{h}^{k})+BSAH^{3}.
	 \label{eq:var4}
 \end{align}
It then follows that, with probability at least $1-2\delta'$,
\begin{align}
 & \sum_{k,h}\mathbb{V}\big(P_{s_h^k,a_h^k,h}, V_{h+1}^k -V_{h+1}^{\star}\big)  \leq 4\sqrt{BSAH^3\sum_{k,h}\mathbb{V}\big(P_{s_h^k,a_h^k,h},V_{h+1}^k\big)}+ 4H\sum_{k,h}b_h^k(s_h^k,a_h^k)+ 3BSAH^3, 
\end{align}
thereby concluding the proof.
\end{proof}

\subsubsection{Putting all this together}
To finish up, let us rewrite the inequalities $\eqref{eq:obt1}-\eqref{eq:obt8}$ as follows, 
with \eqref{eq:obt2}, \eqref{eq:obt4}, \eqref{eq:obt5} and \eqref{eq:obt6}  replaced by  \eqref{eq:nbt2}, \eqref{eq:nbt4}
\eqref{eq:nbt5} and \eqref{eq:nbt6}, respectively:
\begin{align}
& T_1 \leq \sqrt{128BSAHT_6}+24BSAH^2, \nonumber
\\ & T_7 \leq H\sqrt{512BSAHT_6}+24BSAH^3, \nonumber
\\ & T_9 \leq \sqrt{128BSAHT_6}+24BSAH^2, \nonumber
\\ & T_2\leq 100 \sqrt{B SAHT_5}+140BSAH^2, \nonumber
 \\ & T_3 \leq \sqrt{8BT_6}+3H\log\frac{1}{\delta'}  ,\nonumber
 \\ & T_4 \leq \sqrt{BSAHT_{10}}+32\sqrt{BH(T_1+T_2+T_3)}+BSAH^2,\nonumber
 \\ & T_5 \leq 40K\mathrm{var}_1 + 80HT_2+398BSAH^3  ,\nonumber
 \\ &  T_6 \leq  8K\mathrm{var}_1 + 16HT_2 + 78BSAH^3 ,\nonumber
 \\ & T_8 \leq \sqrt{32BH^2T_6 } + 3BH^2 ,\nonumber
\end{align}
where we recall that $B =4000 (\log_2K)^3\log(3SA)\log \frac{1}{\delta'} $. 
In addition, it follows from Lemma~\ref{lemma:k1} that 
$$ T_{10} \leq 2K\mathrm{var}_1 + 80BH^2.$$ 
Solving the inequalities above reveals that, with probability exceeding $1-200SAH^2K^2\delta'$, 
\begin{align}
\mathsf{Regret}(K)= T_1+T_2+T_3+T_4 \leq O\left( \sqrt{BSAHK\mathrm{var_1} }+ BSAH^2 \right).\label{eq:rbvar1}
\end{align}
One can thus conclude the proof by recalling that $\delta'=\frac{\delta}{200SAH^2K^2}$.

\subsection{Proof of Lemma~\ref{lemma:var2}}

Following similar arguments as in the proof of Lemma~\ref{lemma:var1}, 
we focus on bounding $T_2,T_4,T_5$ and $T_6$ in terms of $\mathrm{var}_2$.

\subsubsection{Bounding $T_2$}

Recall that $\delta'$ is defined as $\delta' = \frac{\delta}{200SAH^2K^2}$, 
and that we have demonstrated in \eqref{eq:boundt2o-temp} that
\begin{align}
T_2 &\leq \frac{460}{9}\sqrt{2SAH T_5 (\log_2K)\log \frac{1}{\delta'} }\nonumber
\\ & \qquad +4\sqrt{SAH(\log_2K)\log \frac{1}{\delta'}}\sqrt{\sum_{k,h}\left(\widehat{\sigma}_h^k(s_h^k,a_h^k)- \big(\widehat{r}_h^k(s_h^k,a_h^k) \big)^2\right)} 
	+ \frac{1088}{9}SAH^2(\log_2K)\log \frac{1}{\delta'}.\label{eq:local31}
 \end{align}
To bound the right-hand side of \eqref{eq:local31}, 
we resort to the following lemma. 
 \begin{lemma}\label{lemma:ks2}
With probability at least $1-4SAHK\delta'$, one has
\begin{align}
\sum_{k,h}\left(\widehat{\sigma}_h^k(s_h^k,a_h^k)- (\widehat{r}_h^k(s_h^k,a_h^k))^2\right) \leq 6K\mathrm{var}_2 + 242H^2(\log_2K)\log \frac{1}{\delta'}.
\end{align}
 \end{lemma}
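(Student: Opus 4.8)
\textbf{Proof plan for Lemma~\ref{lemma:ks2}.}

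The plan is to mimic the proof of Lemma~\ref{lemma:bdrv}, but replacing the per-step variance bound (via $\mathrm{var}_1$) with a bound that sums the empirical reward-variances against the \emph{trajectory-level} variance $\mathrm{var}_2$. First I would, for each fixed tuple $(s,a,h,k)$, invoke Lemma~\ref{lemma:con} applied to the i.i.d.~sequence of squared centered immediate rewards in the batch of size $N_h^k(s,a)$ used to form $\widehat{\sigma}_h^k$, obtaining with probability at least $1-2\delta'$ that
\[
	N_h^k(s,a)\Big(\widehat{\sigma}_h^k(s,a)-\big(\widehat{r}_h^k(s,a)\big)^2\Big)\leq 3N_h^k(s,a)\,v_h(s,a)+H^2\log\tfrac{1}{\delta'},
\]
where $v_h(s,a)=\mathrm{Var}\big(R_h(s,a)\big)$ (note each immediate reward lies in $[0,H]$ so $H^2$ is the correct range parameter). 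A union bound over all $(s,a,h,k)$ together with Lemma~\ref{lemma:doubling} then yields, with probability at least $1-2SAHK\delta'$,
\[
	\sum_{k,h}\Big(\widehat{\sigma}_h^k(s_h^k,a_h^k)-\big(\widehat{r}_h^k(s_h^k,a_h^k)\big)^2\Big)\leq 3\sum_{k,h}v_h(s_h^k,a_h^k)+2SAH^3(\log_2K)\log\tfrac{1}{\delta'}.
\]
So everything reduces to bounding $\sum_{k,h}v_h(s_h^k,a_h^k)$.

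For that piece, the key difference from Lemma~\ref{lemma:bdrv} is the choice of ``reference value function.'' Rather than the variance value function of Lemma~\ref{lemma:bdrv}, I would use the law of total variance along the trajectory under $\pi^k$: for any starting state $s$,
\[
	\mathrm{Var}_{\pi^k}\Big[\textstyle\sum_{h=1}^H r_h \,\Big|\, s_1=s\Big]
	= \mathbb{E}_{\pi^k}\Big[\textstyle\sum_{h=1}^H \big(v_h(s_h,a_h)+\mathbb{V}(P_{s_h,a_h,h},V_{h+1}^{\pi^k})\big)\,\Big|\,s_1=s\Big]
	\ \ge\ \mathbb{E}_{\pi^k}\Big[\textstyle\sum_{h=1}^H v_h(s_h,a_h)\,\Big|\,s_1=s\Big],
\]
and the left-hand side is at most $\mathrm{var}_2$ by definition \eqref{eq:defn-var2}. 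Hence, writing $\widetilde{V}_h^k(s)\coloneqq \mathbb{E}_{\pi^k}[\sum_{h'=h}^H v_{h'}(s_{h'},a_{h'})\mid s_h=s]$ (so $\widetilde{V}_1^k(s)\le \mathrm{var}_2\le H^2$ and, more crudely, $\widetilde{V}_h^k\le H^2$ for all $h$), I would repeat verbatim the Freedman-inequality argument of Lemma~\ref{lemma:bdrv} — namely bound $\sum_{k,h}v_h(s_h^k,a_h^k)-\sum_k\widetilde{V}_1^k(s_1^k)$ via Lemma~\ref{lemma:self-norm}, control the resulting $\sum_{k,h}\mathbb{V}(P_{s_h^k,a_h^k,h},\widetilde{V}_{h+1}^k)$ through a second application of Lemma~\ref{lemma:self-norm} plus Lemma~\ref{lemma:sqv}, and then solve the resulting quadratic inequality — to obtain, with probability at least $1-4SAHK\delta'$,
\[
	\sum_{k,h}v_h(s_h^k,a_h^k)\ \le\ 2\sum_{k=1}^K\widetilde{V}_1^k(s_1^k)+80H^2\log\tfrac{1}{\delta'}\ \le\ 2K\,\mathrm{var}_2+80H^2\log\tfrac{1}{\delta'}.
\]
Plugging this into the displayed reduction and adjusting the absolute constants gives the claimed bound $\sum_{k,h}(\widehat{\sigma}_h^k-(\widehat{r}_h^k)^2)\le 6K\mathrm{var}_2+242H^2(\log_2K)\log\frac1{\delta'}$, after folding the $SAH^3$ term into the constant in front of $K\mathrm{var}_2$ when $K$ is large enough, or absorbing it — I would state the bound exactly as written since the $H^2(\log_2K)\log\frac1{\delta'}$ form matches Lemma~\ref{lemma:bdrv} up to the $SAH$ factor which is harmless (and in fact the $SAH^3$ term from Lemma~\ref{lemma:doubling} should really appear; I would keep the statement consistent with how it is invoked downstream).

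The only genuinely new ingredient — and hence the main obstacle — is verifying the law-of-total-variance identity and, in particular, that the quantity $\widetilde{V}_1^k(s_1^k)$ is pointwise dominated by $\mathrm{var}_2$ rather than by some $H$-inflated surrogate; one must be careful that $\mathrm{var}_2$ is a \emph{per-state} worst case over $s$ (as \eqref{eq:defn-var2} quantifies over $\pi$ and $s$), so $\widetilde{V}_h^k(s)\le \mathrm{Var}_{\pi^k}[\sum_{h'=h}^H r_{h'}\mid s_h=s]\le\mathrm{var}_2$ holds for every intermediate $(h,s)$, not merely at $h=1$. Everything else is a mechanical re-run of the self-normalized martingale bookkeeping already carried out for $\mathrm{var}_1$, so no new concentration tooling is required.
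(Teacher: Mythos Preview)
Your approach is essentially the paper's own proof: reuse the chain from Lemma~\ref{lemma:bdrv} to reduce to $\sum_k \widetilde V_1^k(s_1^k)$, and then invoke the law of total variance
\[
\widetilde V_1^k(s_1^k)\;\le\;\widetilde V_1^k(s_1^k)+\mathbb{E}_{\pi^k}\Big[\textstyle\sum_h \mathbb{V}(P_{s_h,a_h,h},V_{h+1}^{\pi^k})\,\Big|\,s_1=s_1^k\Big]=\mathrm{Var}_{\pi^k}\Big[\textstyle\sum_h r_h\,\Big|\,s_1=s_1^k\Big]\le \mathrm{var}_2.
\]
One small clarification: you do \emph{not} need $\widetilde V_h^k(s)\le \mathrm{var}_2$ for intermediate $h$; the Freedman bookkeeping from Lemma~\ref{lemma:bdrv} only uses the crude bound $\widetilde V_h^k\le H^2$ there, and the comparison with $\mathrm{var}_2$ enters solely at $h=1$. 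Your observation about the $SAH^3$ versus $H^2$ lower-order term is correct --- the proof naturally produces the $SAH^3(\log_2 K)\log\frac{1}{\delta'}$ residual, and the stated $H^2$ appears to be a minor slip; downstream uses absorb either form into $BSAH^2$.
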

\begin{proof}
Recall that in Lemma~\ref{lemma:bdrv}, we have shown that with probability at least $1-4SAHK\delta'$,
\begin{align}
    \sum_{k=1}^K\sum_{h=1}^H\left(\widehat{\sigma}_h^k(s_h^k,a_h^k)- \big(\widehat{r}_h^k(s_h^k,a_h^k) \big)^2\right)\leq 3\sum_{k=1}^K  \widetilde{V}_1^k(s_1^k)+2SAH^3(\log_2K)\log \frac{1}{\delta'}.
\end{align}
We then complete the proof by observing that
\begin{align}
	\widetilde{V}_1^k(s_1^k) &\leq  \widetilde{V}_1^k(s_1^k)+\mathbb{E}_{\pi^k}\left[\sum_{h=1}^H \mathbb{V}\big(P_{s_h,a_h,h},V_{h+1}^{\pi^k} \big) \,\Big|\, s_1=s_1^k\right] \notag\\
	&= \mathsf{Var}_{\pi^k}\left[\sum_{h=1}^H r_h(s_h,a_h) \,\Big|\, s_1=s_1^k\right]\leq \mathrm{var}_2.
\end{align}
\end{proof}
Combining Lemma~\ref{lemma:ks2} with \eqref{eq:local31} gives: with probability at least $1-4SAHK\delta'$,
\begin{align}
	T_2 &\leq \frac{460}{9}\sqrt{2SAH T_5 (\log_2K)\log \frac{1}{\delta'} } +12\sqrt{SAH(\log_2K)\log \frac{1}{\delta'}}\sqrt{2K\mathrm{var}_2}  \notag\\
	&\qquad\qquad + 157SAH^2(\log_2K)\log \frac{1}{\delta'}.\label{eq:xbt2}
\end{align}

\subsubsection{Bounding $T_4$}

Recall that $T_4 = \widecheck{T}_1 + \widecheck{T}_2$, 
where 
\begin{align*}
	\widecheck{T}_1 &= \sum_{k=1}^K\sum_{h=1}^H \left(\widehat{r}_h^k(s_h^k,a_h^k) -r_h(s_h^k,a_h^k) \right),\\
	\widecheck{T}_2 &= \sum_{k=1}^K\left(\sum_{h=1}^H r_h(s_h^k,a_h^k) -V_{1}^{\pi^k}(s_1^k) \right).
\end{align*}
Repeating similar arguments employed in the proof of Lemma~\ref{lemma:bdrv} and using \eqref{eq:wc3}, we see that with probability exceeding $1-6SAHK\delta'$, 
\begin{align}
	|\widecheck{T}_1| & \leq \sqrt{4SAH (\log_2 K)\log \frac{1}{\delta'}}\cdot \sqrt{\sum_{k=1}^K\sum_{h=1}^H v_h(s_h^k,a_h^k)} + 2SAH^2(\log_2K)\log \frac{1}{\delta'} \nonumber
\\ & \leq \sqrt{8SAHK\mathrm{var}_2(\log_2K)\log \frac{1}{\delta'}}+ 20SAH^2(\log_2K)\log \frac{1}{\delta'}.\nonumber
\end{align}
In addition, from Lemma~\ref{lemma:self-norm} and the definition of $\mathrm{var}_2$, we see that 
\begin{align}
|\widecheck{T}_2|\leq 2\sqrt{2K\mathrm{var}_2 \log \frac{1}{\delta'}}+3H\log \frac{1}{\delta'} 
\end{align}
with probability at least $1-2SAHK\delta'$. 
Therefore, with probability at least $1-8SAHK\delta'$, it holds that
\begin{align}
T_4 \leq 4\sqrt{2SAHK\mathrm{var}_2(\log_2K)\log \frac{1}{\delta'}}+ 23SAH^2(\log_2K)\log \frac{1}{\delta'}.\label{eq:xbt4}
\end{align}

\subsubsection{Bounding $T_5$ and $T_6$}

Recall that Lemma~\ref{lemma:empv} asserts that with probability exceeding $1-2\delta'$, 
$$
	T_5\leq 5T_6+8BSAH^3.
$$
Hence, it suffices to bound $T_6$.

From the elementary inequality $\mathsf{Var}(X+Y)\leq 2\mathsf{Var}(X)+ 2\mathsf{Var}(Y)$, we obtain
\begin{align}
	T_6 &=\sum_{k,h} \mathbb{V}\big(P_{s_h^k,a_h^k,h},V_{h+1}^k\big)   \leq  2\sum_{k,h} \mathbb{V}\big(P_{s_h^k,a_h^k,h},V_{h+1}^{\pi^k} \big) 
	+ 2\sum_{k,h}\mathbb{V}\big(P_{s_h^k,a_h^k,h},V_{h+1}^k -V_{h+1}^{\pi^k}\big )\nonumber
\\ 
	& \leq 3   K\mathrm{var}_2 +\sum_{k=1}^K \left( \sum_{h=1}^H \mathbb{V}\big(P_{s_h^k,a_h^k,h},V_{h+1}^{\pi^k} \big) - 3\mathrm{var}_2 \right) + 2\sum_{k,h}\mathbb{V}\big(P_{s_h^k,a_h^k,h},V_{h+1}^k -V_{h+1}^{\pi^k} \big).\label{eq:var01}
\end{align}
To bound the right-hand side of \eqref{eq:var01}, we resort to the following two lemmas.
\begin{lemma}\label{lemma:kx1}
With probability  at least $1-4SAHK\delta'$, it holds that
\begin{align}
\sum_{k=1}^K \left( \sum_{h=1}^H \mathbb{V}(P_{s_h^k,a_h^k,h},V_{h+1}^{\pi^k} ) -2\mathrm{var}_2 \right)  \leq 80H^2\log \frac{1}{\delta'}.
\end{align}
\end{lemma}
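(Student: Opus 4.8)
The plan is to mirror the argument for Lemma~\ref{lemma:k1} almost verbatim, replacing $V_{h+1}^{\star}$ throughout by $V_{h+1}^{\pi^k}$ (and correspondingly $\mathrm{var}_1$ by $\mathrm{var}_2$). The key conceptual point is that $\mathrm{var}_2 = \max_{\pi,s}\mathrm{Var}_\pi[\sum_h r_h \mymid s_1=s]$ admits the standard law-of-total-variance decomposition
$$
\mathrm{Var}_{\pi}\Big[\sum_{h=1}^H r_h \,\Big|\, s_1=s\Big] = \mathbb{E}_{\pi}\Big[\sum_{h=1}^H \mathbb{V}(P_{s_h,a_h,h},V_{h+1}^{\pi}) + \sum_{h=1}^H \mathrm{Var}(R_h(s_h,a_h)) \,\Big|\, s_1=s\Big],
$$
so in particular $\mathbb{E}_{\pi}[\sum_h \mathbb{V}(P_{s_h,a_h,h},V_{h+1}^{\pi}) \mymid s_1=s] \le \mathrm{var}_2$ for every $\pi$ and $s$. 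Hence, defining $\overline{R}_h^{\pi^k}(s,a) := \mathbb{V}(P_{s,a,h},V_{h+1}^{\pi^k})$ and the associated (policy-$\pi^k$, reward-$\overline{R}^{\pi^k}$) value function $\overline{V}_h^k(s) := \mathbb{E}_{\pi^k}[\sum_{h'=h}^H \overline{R}_{h'}^{\pi^k}(s_{h'},a_{h'}) \mymid s_h=s]$, we have the crucial bound $\overline{V}_h^k(s)\le \mathrm{var}_2 \le H^2$, which is the exact analogue of $\overline{V}_h^k(s)\le\mathrm{var}_1\le H^2$ used in Lemma~\ref{lemma:k1}.

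With this in place, first I would write, for each fixed $k$,
$$
\sum_{h=1}^H \mathbb{V}(P_{s_h^k,a_h^k,h},V_{h+1}^{\pi^k}) - \mathrm{var}_2 \le \sum_{h=1}^H \overline{R}_h^{\pi^k}(s_h^k,a_h^k) - \overline{V}_1^k(s_1^k) = \sum_{h=1}^H \big\langle e_{s_{h+1}^k}-P_{s_h^k,a_h^k,h},\, \overline{V}_{h+1}^k\big\rangle,
$$
using the Bellman recursion $\overline{V}_h^k(s_h^k)=\overline{R}_h^{\pi^k}(s_h^k,a_h^k)+\langle P_{s_h^k,a_h^k,h},\overline{V}_{h+1}^k\rangle$. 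Since $\overline{V}^k$ depends only on $\pi^k$, which is fixed before episode $k$, the right-hand side is a martingale difference sequence bounded by $H^2$, so Lemma~\ref{lemma:self-norm} (Freedman) gives
$$
\sum_{k=1}^K\Big(\sum_{h=1}^H \mathbb{V}(P_{s_h^k,a_h^k,h},V_{h+1}^{\pi^k}) - \overline{V}_1^k(s_1^k)\Big) \le 2\sqrt{2\sum_{k,h}\mathbb{V}(P_{s_h^k,a_h^k,h},\overline{V}_{h+1}^k)\log\tfrac{1}{\delta'}} + 3H^2\log\tfrac{1}{\delta'}.
$$
Then, exactly as in \eqref{eq:xllll2}, I would bound $\sum_{k,h}\mathbb{V}(P_{s_h^k,a_h^k,h},\overline{V}_{h+1}^k)$ by expanding the variance, using $V_{H+1}=0$ telescoping, Lemma~\ref{lemma:sqv} (so $\mathsf{Var}(X^2)\le 4H^4\mathsf{Var}(X)$ since $\|\overline{V}^k\|_\infty\le H^2$), and Lemma~\ref{lemma:self-norm} again, to obtain $\sum_{k,h}\mathbb{V}(P_{s_h^k,a_h^k,h},\overline{V}_{h+1}^k) \le 4H^2\sum_{k,h}\overline{R}_h^{\pi^k}(s_h^k,a_h^k) + 42H^4\log\tfrac{1}{\delta'}$. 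Solving the resulting quadratic inequality in $\sum_{k,h}\mathbb{V}(P_{s_h^k,a_h^k,h},V_{h+1}^{\pi^k})$ via AM--GM yields $\sum_{k,h}\mathbb{V}(P_{s_h^k,a_h^k,h},V_{h+1}^{\pi^k}) \le 2\sum_k\overline{V}_1^k(s_1^k) + 84H^2\log\tfrac{1}{\delta'} \le 2K\mathrm{var}_2 + 84H^2\log\tfrac{1}{\delta'}$, which after absorbing constants into the $80H^2\log\tfrac1{\delta'}$ term (or adjusting it to $84H^2$) gives the claim; a union bound over the $O(SAHK)$ invocations of Freedman accounts for the $1-4SAHK\delta'$ probability.

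I do not anticipate a genuine obstacle here, since this is a structural copy of Lemma~\ref{lemma:k1}; the one point demanding a little care is verifying the law-of-total-variance identity that connects $\mathrm{var}_2$ to $\mathbb{E}_{\pi}[\sum_h\mathbb{V}(P_{s_h,a_h,h},V_{h+1}^{\pi})]$ — one must make sure the extra immediate-reward-variance terms $\mathrm{Var}(R_h(s_h,a_h))$ only help (they are non-negative, so dropping them preserves the upper bound $\overline{V}_1^k(s_1^k)\le\mathrm{var}_2$). A secondary minor subtlety is that, unlike $V^{\star}$, the function $V^{\pi^k}$ genuinely varies with $k$; but this causes no difficulty because $\pi^k$ (hence $V^{\pi^k}$, $\overline{R}^{\pi^k}$, and $\overline{V}^k$) is measurable with respect to the history before episode $k$, which is precisely what the Freedman martingale argument requires. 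All other steps are the routine variance-expansion-and-AM--GM manipulations already carried out in the proof of Lemma~\ref{lemma:k1}, so I would simply cite that proof and indicate the substitutions.
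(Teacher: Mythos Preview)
Your proposal is correct and follows essentially the same approach as the paper's own proof: the paper defines the same auxiliary reward $\widecheck{R}^k_h(s,a)=\mathbb{V}(P_{s,a,h},V_{h+1}^{\pi^k})$ and its value function $\widecheck{V}^k_h$, uses the law-of-total-variance bound $\widecheck{V}^k_h(s)\le\mathrm{var}_2\le H^2$, and then repeats verbatim the Freedman-plus-telescoping computation from Lemma~\ref{lemma:k1}. Your observation about the constant (the proof actually yields $84H^2\log\frac{1}{\delta'}$ rather than $80$) matches a minor numerical slip present in the paper itself.
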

\begin{lemma}\label{lemma:bdv11}
With probability exceeding $1-4SAKH\delta'$, it holds that
\begin{align}
 & \sum_{k,h}\mathbb{V}\big(P_{s_h^k,a_h^k,h}, V_{h+1}^k -V_{h+1}^{\pi^k}\big)   \leq 4\sqrt{BH^2\sum_{k,h}\mathbb{V}\big(P_{s_h^k,a_h^k,h},V_{h+1}^k\big)}+ 4H\sum_{k,h}b_h^k(s_h^k,a_h^k)+ 3BSAH^3.\nonumber
\end{align}
\end{lemma}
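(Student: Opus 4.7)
The plan is to mirror, almost step-for-step, the proof of Lemma~\ref{lemma:bdv1}, with the role of $V^{\star}$ replaced by $V^{\pi^k}$. The key structural simplification --- which makes the argument for $V^{\pi^k}$ at least as clean as the one for $V^{\star}$ --- is that $\pi^k$ is the greedy policy induced by $Q^k$, so $a_h^k = \pi_h^k(s_h^k)$ holds along the entire sample trajectory, and hence the Bellman equation for $\pi^k$ holds with equality at every visited triple:
\[
V_h^{\pi^k}(s_h^k) \;=\; r_h(s_h^k,a_h^k) + \big\langle P_{s_h^k,a_h^k,h},V_{h+1}^{\pi^k}\big\rangle.
\]

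First I would expand each variance exactly as in \eqref{eq:var1}, writing
$\mathbb{V}(P_{s_h^k,a_h^k,h},V_{h+1}^k-V_{h+1}^{\pi^k})$ as the sum of a martingale-difference piece $\sum_{k,h}\langle P_{s_h^k,a_h^k,h}-e_{s_{h+1}^k},(V_{h+1}^k-V_{h+1}^{\pi^k})^2\rangle$ and a telescoping piece $\sum_{k,h}\big[(V_h^k(s_h^k)-V_h^{\pi^k}(s_h^k))^2-(\langle P_{s_h^k,a_h^k,h},V_{h+1}^k-V_{h+1}^{\pi^k}\rangle)^2\big]$, where telescoping works because $V_{H+1}^k-V_{H+1}^{\pi^k}=0$. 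The martingale piece is controlled verbatim as in \eqref{eq:var3}: Lemma~\ref{lemma:self-norm} together with Lemma~\ref{lemma:sqv} (noting $|V_{h+1}^k-V_{h+1}^{\pi^k}|\leq H$) gives a bound of the form $4\sqrt{2H^{2}\sum_{k,h}\mathbb{V}(P_{s_h^k,a_h^k,h},V_{h+1}^k-V_{h+1}^{\pi^k})\log\frac{1}{\delta'}}+3H^{2}\log\frac{1}{\delta'}$ with probability at least $1-\delta'$.

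For the telescoping piece, I would use the identity $a^2-b^2=(a+b)(a-b)$ with $a=V_h^k(s_h^k)-V_h^{\pi^k}(s_h^k)\in[0,H]$ (non-negativity by optimism, Lemma~\ref{lemma:opt}) and $b=\langle P_{s_h^k,a_h^k,h},V_{h+1}^k-V_{h+1}^{\pi^k}\rangle\in[0,H]$, yielding the inequality $a^{2}-b^{2}\leq 2H\max\{a-b,0\}$. Invoking the greedy-Bellman equality displayed above and then the algorithmic update $V_h^k(s_h^k)=Q_h^k(s_h^k,a_h^k)\leq \widehat{r}_h^k(s_h^k,a_h^k)+\langle \widehat{P}^k_{s_h^k,a_h^k,h},V_{h+1}^k\rangle+b_h^k(s_h^k,a_h^k)$ gives
\[
a-b \;=\; V_h^k(s_h^k)-r_h(s_h^k,a_h^k)-\big\langle P_{s_h^k,a_h^k,h},V_{h+1}^k\big\rangle \;\leq\; \big(\widehat{r}_h^k-r_h\big)+\big\langle \widehat{P}^k_{s_h^k,a_h^k,h}-P_{s_h^k,a_h^k,h},V_{h+1}^k\big\rangle+b_h^k(s_h^k,a_h^k).
\]
Summing over $(k,h)$, the transition-error term is controlled by $T_9\lesssim \sqrt{BSAHT_6}+BSAH^2$ (from Lemma~\ref{lemma:decouple}), the reward-error term by Lemma~\ref{lemma:bdempr} (which contributes a term absorbed into $BSAH^3$ using $\sum_{k,h}r_h\leq KH$ and AM--GM), and the bonus term by $\sum_{k,h}b_h^k(s_h^k,a_h^k)=T_2$. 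Combining both pieces and solving the resulting quadratic-in-$\sqrt{\cdot}$ inequality exactly as in the last display of the proof of Lemma~\ref{lemma:bdv1} yields the claimed bound.

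The main obstacle is essentially bookkeeping rather than conceptual: one must verify that the optimism relation $V_h^k\ge V_h^{\pi^k}$ (strictly speaking, via $V_h^k\ge V_h^{\star}\ge V_h^{\pi^k}$) is available on the high-probability event of Lemma~\ref{lemma:opt}, and one must set up the Freedman filtration correctly so that $V_{h+1}^k-V_{h+1}^{\pi^k}$ is measurable before the transition $s_{h+1}^k$ is sampled (this is automatic because $V_{h+1}^k$ is fixed from the last epoch update and $V_{h+1}^{\pi^k}$ depends only on $\pi^k$, which is chosen at the start of the $k$-th episode). All other steps are routine in light of Lemma~\ref{lemma:bdv1}.
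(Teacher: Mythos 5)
Your overall plan mirrors the paper's own proof closely: the same variance decomposition into a martingale-difference piece and a telescoping piece, the same $a^2-b^2\le 2H\max\{a-b,0\}$ reduction, and the same use of the greedy-Bellman equality $V_h^{\pi^k}(s_h^k)=r_h(s_h^k,a_h^k)+\langle P_{s_h^k,a_h^k,h},V_{h+1}^{\pi^k}\rangle$ at the visited triples (this is indeed the key point that makes the $\pi^k$ version an equality rather than the inequality $V_h^\star(s_h^k)\ge r_h+\langle P,V_{h+1}^\star\rangle$ used in Lemma~\ref{lemma:bdv1}). Your remarks about the filtration and about using Lemma~\ref{lemma:opt} are also correct.

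However, there is a genuine gap in how you handle the reward-error term $\widehat{r}_h^k-r_h$. You propose to bound $2H\sum_{k,h}|\widehat{r}_h^k(s_h^k,a_h^k)-r_h(s_h^k,a_h^k)|$ via Lemma~\ref{lemma:bdempr}, use $\sum_{k,h}r_h\le KH$, and ``absorb into $BSAH^3$ using AM-GM.'' This does not work. Lemma~\ref{lemma:bdempr} together with $\sum_{k,h}r_h\le KH$ gives
\[
2H\sum_{k,h}\bigl|\widehat{r}_h^k-r_h\bigr| \;\lesssim\; \sqrt{SAH^5K\log_2K\log\tfrac{1}{\delta'}}\;+\;SAH^3\log_2K\log\tfrac{1}{\delta'},
\]
and any AM--GM split of the first term produces a piece of size $\Theta(\epsilon KH^2)$, which is not dominated by $3BSAH^3$ (nor by $\sqrt{BH^2T_6}$) once $K\gg B^2SAH$. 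The lemma's right-hand side simply does not contain a term of order $KH^2$, so the bound cannot be obtained this way. The correct handling --- which is what the paper does, though it elides the step --- is to absorb $|\widehat{r}_h^k-r_h|$ into $b_h^k$ \emph{pointwise} rather than in aggregate: on the high-probability event used in Lemma~\ref{lemma:opt} (concretely, \eqref{eq_lemma1_ref1}) one has $|\widehat{r}_h^k(s,a)-r_h(s,a)|\le c_2\sqrt{(\widehat{\sigma}_h^k-(\widehat{r}_h^k)^2)\log\tfrac{1}{\delta'}/N_h^k}+ \tfrac{28}{3}H\log\tfrac{1}{\delta'}/N_h^k\le b_h^k(s,a)$ since $c_2=2\sqrt{2}$ and $c_3=544/9>28/3$. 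This replaces the reward-error sum by another $O(H T_2)$ term, which is compatible with the $4H\sum_{k,h}b_h^k$ in the lemma statement. With that substitution your remaining steps (bounding the transition-error term by $T_9\lesssim\sqrt{BSAHT_6}+BSAH^2$ via Lemma~\ref{lemma:decouple}, the Freedman bound on the martingale piece, and solving the resulting quadratic in $\sqrt{Z}$) are all sound.
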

With Lemma~\ref{lemma:kx1} and Lemma~\ref{lemma:bdv11} in place, we can demonstrate that with probability at least $1-6SAHK\delta'$,
\begin{align}
T_6&
	\leq  2\sum_{k,h}\mathbb{V}\big(P_{s_h^k,a_h^k,h},V_{h+1}^{\pi^k}\big)+2\sum_{k,h}\mathbb{V}\big(P_{s_h^k,a_h^k,h},V_{h+1}^k - V_{h+1}^{\pi^k}\big)\nonumber
\\ & \leq 4K\mathrm{var}_2 +  8\sqrt{BSAH^3 T_6}+8HT_2 + 7BSAH^3,\nonumber
\end{align}
\begin{align}
\Longrightarrow \qquad 
	T_6 \leq 8K\mathrm{var}_2 + 16HT_2 + 78BSAH^3.\label{eq:xbt6}
\end{align}
Taking this result together with  Lemma~\ref{lemma:empv} gives, with probability exceeding $1-8SAHK\delta'$, 
\begin{align}
T_5 =\sum_{k,h}\mathbb{V}\big(\widehat{P}_{s_h^k,a_h^k,h},V_{h+1}^k\big)\leq 40K\mathrm{var}_2 + 80HT_2+398BSAH^3.\label{eq:xbt5}
\end{align}
To finish establishing the above bounds on $T_5$ and $T_6$, it suffices to prove Lemma~\ref{lemma:kx1} and Lemma~\ref{lemma:bdv11}, 
which we accomplish in the sequel.

%
%
%

\begin{proof}[Proof of Lemma~\ref{lemma:kx1}]
For notational convenience, define 
\begin{align}
	\widecheck{R}^k_{h}(s,a) = \mathbb{V}(P_{s,a,h},V_{h+1}^{\pi^k}) \qquad \text{and} \qquad
\widecheck{V}^k_h (s) = \mathbb{E}\left[ \sum_{h'=h}^H \widecheck{R}^k_{h'}(s_{h'},a_{h'}) \,\Big|\, s_h = s\right].\nonumber
\end{align}
It is easily seen that $\widecheck{V}_h^k(s)\leq\mathrm{var}_2 \leq  H^2$.

We also make the observation that 
\begin{align}
  \sum_{h=1}^H  \mathbb{V}(P_{s_h^k,a_h^k,h},V_{h+1}^{\pi^k})-\mathrm{var}_2 \nonumber & =\sum_{h=1}^H  \widecheck{R}_h^k(s_h^k,a_h^k)-\mathrm{var}_2  \nonumber
 \\ & \leq \sum_{h=1}^H \widecheck{R}_h^k(s_h^k,a_h^k) - \widecheck{V}^k_1(s_1^k)\nonumber
 \\ & = \sum_{h=1}^H \big\langle e_{s_{h+1}^k} - P_{s_{h}^k,a_h^k,h},\,\widecheck{V}^k_{h+1}   \big\rangle.
\end{align}
Note that $\widecheck{V}^k$ only depends on $\pi^k$, which is determined before the $k$-th episode starts.  
 Lemma~\ref{lemma:self-norm} then tells us that, with probability at least $1-2SAHK\delta'$,
\begin{align}
 & \sum_{k=1}^K \left( \sum_{h=1}^H \mathbb{V}\big(P_{s_h^k,a_h^k,h},V_{h+1}^{\pi^k}\big) - \widecheck{V}_1^k(s_1^k)  \right) \nonumber
 \\ & \leq 2\sqrt{2\sum_{k=1}^K \sum_{h=1}^H \mathbb{V}\big(P_{s_h^k,a_h^k,h},\widecheck{V}_{h+1}^k\big)\log \frac{1}{\delta'} } + 3H^2\log \frac{1}{\delta'}.\label{eq:xlll11}
\end{align}
Further, it is observed that with probability at least $1-2SAHK\delta'$, 
\begin{align}
 & \sum_{k=1}^{K}\sum_{h=1}^{H}\mathbb{V}\big(P_{s_{h}^{k},a_{h}^{k},h},\widecheck{V}_{h+1}^{k}\big)\nonumber\\
 & =\sum_{k=1}^{K}\sum_{h=1}^{H}\left(\big\langle P_{s_{h}^{k},a_{h}^{k},h},(\widecheck{V}_{h+1}^{k})^{2}\big\rangle-\big(\big\langle P_{s_{h}^{k},a_{h}^{k},h},\widecheck{V}_{h+1}^{k}\big\rangle\big)^{2}\right)\nonumber\\
 & =\sum_{k=1}^{K}\sum_{h=1}^{H}\big\langle P_{s_{h}^{k},a_{h}^{k},h}-e_{s_{h+1}^{k}},\,(\widecheck{V}_{h+1}^{k})^{2}\big\rangle\nonumber\\
 & \qquad+\sum_{k=1}^{H}\sum_{h=1}^{H}\left( \big(\widecheck{V}_{h+1}^{k}(s_{h+1}^{k}) \big)^{2}- \big(\widecheck{V}_{h}^{k}(s_{h}^{k})\big)^{2}\right)
	+\sum_{k=1}^{K}\sum_{h=1}^{H}\left(\big(\widecheck{V}_{h}^{k}(s_{h}^{k})\big)^{2}-\left(\big\langle P_{s_{h}^{k},a_{h}^{k},h},\widecheck{V}_{h+1}^{k}\big\rangle\right)^{2}\right)\nonumber\\
 & \leq2\sqrt{8H^{4}\sum_{k=1}^{K}\sum_{h=1}^{H}\mathbb{V}\big(P_{s_{h}^{k},a_{h}^{k},h},\widecheck{V}_{h+1}^{k}\big)\log\frac{1}{\delta'}}+2H^{2}\sum_{k=1}^{K}\sum_{h=1}^{H}\widecheck{R}_{h}(s_{h}^{k},a_{h}^{k})+3H^{4}\log\frac{1}{\delta'}.
	\label{eq:xllll21}
\end{align}
Here, the last inequality results from Lemma~\ref{lemma:self-norm}, Lemma~\ref{lemma:sqv}  and the fact that 
	$\widecheck{V}_h^k(s_h^k) = \widecheck{R}_h(s_h^k,a_h^k)+ \langle P_{s_h^k,a_h^k,h}, \widecheck{V}_{h+1}^k \rangle$. 
It then follows that
\begin{align}
\sum_{k=1}^K \sum_{h=1}^H \mathbb{V}\big(P_{s_h^k,a_h^k,h},\widecheck{V}_{h+1}^k \big)  \leq 4H^2\sum_{k=1}^K \sum_{h=1}^H \widecheck{R}_h(s_h^k,a_h^k) + 42 H^4\log \frac{1}{\delta'}.\label{eq:xlll31}
\end{align}
Taking \eqref{eq:xlll11} and \eqref{eq:xlll31} together leads to
\begin{align}
\sum_{k=1}^K \sum_{h=1}^H \mathbb{V}\big(P_{s_h^k,a_h^k,h},V_{h+1}^{\pi^k}\big) 
	\leq \sum_{k=1}^H \widecheck{V}_1^k(s_1^k) +2\sqrt{8 H^2 \sum_{k=1}^K \sum_{h=1}^H \mathbb{V}\big(P_{s_h^k,a_h^k,h},V_{h+1}^{\pi^k}\big) \log \frac{1}{\delta'}    } + 21H^2\log \frac{1}{\delta'},\nonumber
\end{align}
which further implies that
\begin{align}
	\sum_{k=1}^K \sum_{h=1}^H \mathbb{V}\big(P_{s_h^k,a_h^k,h},V_{h+1}^{\pi^k}\big) 
	\leq 2\sum_{k=1}^K \widecheck{V}_1^k(s_1^k)+ 84H^2\log \frac{1}{\delta'} 
	\leq 2K\mathrm{var}_2 + 84H^2\log \frac{1}{\delta'}.\nonumber
\end{align}
This concludes the proof. 
\end{proof}

%
%
\begin{proof}[Proof of Lemma~\ref{lemma:bdv1}]
A little algebra gives
\begin{align}
 & \sum_{k,h}\mathbb{V}\big(P_{s_{h}^{k},a_{h}^{k},h},V_{h+1}^{k}-V_{h+1}^{\pi^{k}}\big)\nonumber\\
 & =\sum_{k,h}\left(\big\langle P_{s_{h}^{k},a_{h}^{k},h},\,(V_{h+1}^{k}-V_{h+1}^{\pi^{k}})^{2}\big\rangle-\big(\big\langle P_{s_{h}^{k},a_{h}^{k},h},V_{h+1}^{k}-V_{h+1}^{\pi^{k}}\big\rangle\big)^{2}\right)\notag\\
 & =\sum_{k,h}\left(\big\langle P_{s_{h}^{k},a_{h}^{k},h}-e_{s_{h+1}^{k}},(V_{h+1}^{k}-V_{h+1}^{\pi^{k}})^{2}\big\rangle\right)\nonumber\\
 & \qquad\qquad+\sum_{k,h}\left(\big(V_{h+1}^{k}(s_{h+1}^{k})-V_{h+1}^{\pi^{k}}(s_{h+1}^{k})\big)^{2}-\big(\big\langle P_{s_{h}^{k},a_{h}^{k},h},V_{h+1}^{k}-V_{h+1}^{\pi^{k}}\big\rangle\big)^{2}\right)\nonumber\\
 & =\sum_{k,h}\left(\big\langle P_{s_{h}^{k},a_{h}^{k},h}-e_{s_{h+1}^{k}},(V_{h+1}^{k}-V_{h+1}^{\pi^{k}})^{2}\big\rangle\right)+\sum_{k,h}\left(\big(V_{h}^{k}(s_{h}^{k})-V_{h}^{\pi^{k}}(s_{h}^{k})\big)^{2}-\big(\big\langle P_{s_{h}^{k},a_{h}^{k},h},V_{h+1}^{k}-V_{h+1}^{\pi^{k}}\big\rangle\big)^{2}\right).
	\label{eq:var11}
 \end{align}
From Lemma~\ref{lemma:self-norm} and Lemma~\ref{lemma:sqv}, we can show that with probability $1-2SAKH\delta'$, 
\begin{align}
 & \sum_{k,h}\big\langle P_{s_{h}^{k},a_{h}^{k},h}-e_{s_{h+1}^{k}},(V_{h+1}^{k}-V_{h+1}^{\pi^{k}})^{2}\big\rangle\\
 & \qquad\leq2\sqrt{2}\sqrt{4H^{2}\sum_{k,h}\mathbb{V}\big(P_{s_{h}^{k},a_{h}^{k},h},V_{h+1}^{k}-V_{h+1}^{\pi^{k}}\big)\log\frac{1}{\delta'}}+3H^{2}\log\frac{1}{\delta'}. 
	\label{eq:var31}
\end{align}
Additionally, with probability at least $1-2SAKH\delta'$, 
 \begin{align}
  & \sum_{k,h}\left\{ \big(V_{h}^{k}(s_{h}^{k})-V_{h}^{\pi^{k}}(s_{h}^{k})\big)^{2}-\big(\big\langle P_{s_{h}^{k},a_{h}^{k},h},V_{h+1}^{k}-V_{h+1}^{\pi^{k}}\big\rangle)^{2}\right\} \nonumber\\
 & \leq2H\sum_{k,h}\max\Big\{ V_{h}^{k}(s_{h}^{k})-\big\langle P_{s_{h}^{k},a_{h}^{k},h},V_{h+1}^{k}\big\rangle-\big(V_{h}^{\pi^{k}}(s_{h}^{k})-\big\langle P_{h}^{k},V_{h+1}^{\pi^{k}}\big\rangle\big),0\Big\}\nonumber\\
 & =2H\sum_{k,h}\max\Big\{ V_{h}^{k}(s_{h}^{k})-\big\langle P_{s_{h}^{k},a_{h}^{k},h},V_{h+1}^{k}\big\rangle-r_{h}(s_{h}^{k},a_{h}^{k}),0\Big\}\nonumber\\
 & \leq2H\sum_{k,h}\max\Big\{\big\langle\widehat{P}_{s_{h}^{k},a_{h}^{k},h}-P_{s_{h}^{k},a_{h}^{k},h},V_{h+1}^{k}\big\rangle,0\Big\}+2H\sum_{k,h}b_{h}^{k}(s_{h}^{k},a_{h}^{k})\nonumber\\
 & \leq2\sqrt{BSAH^{3}\sum_{k,h}\mathbb{V}\big(P_{s_{h}^{k},a_{h}^{k},h},V_{h+1}^{k}\big)}+2H\sum_{k,h}b_{h}^{k}(s_{h}^{k},a_{h}^{k})+BSAH^{3}.
	 \label{eq:var41}
 \end{align}
It then follows that
\begin{align}
 & \sum_{k,h}\mathbb{V}(P_{s_h^k,a_h^k,h}, V_{h+1}^k -V_{h+1}^{\pi^k})  \leq 4\sqrt{BSAH^3\sum_{k,h}\mathbb{V}(P_{s_h^k,a_h^k,h},V_{h+1}^k)}+ 4H\sum_{k,h}b_h^k(s_h^k,a_h^k)+ 3BSAH^3
\end{align}
with probability at least $1-4SAKH\delta'$. 
The proof is thus complete.
\end{proof}

\subsubsection{Putting all pieces together}

Recall that $B =4000 (\log_2 K)^3 \log(3SA)\log \frac{1}{\delta'}$. 
The last step is to rewrite the inequalities $\eqref{eq:obt1}-\eqref{eq:obt8}$ as follows with \eqref{eq:obt2}, \eqref{eq:obt4}, \eqref{eq:obt5} and \eqref{eq:obt6}  replaced by  \eqref{eq:xbt2},\eqref{eq:xbt4}
\eqref{eq:xbt5} and \eqref{eq:xbt6} respectively:
\begin{align}
& T_1 \leq \sqrt{128BSAHT_6}+24BSAH^2,\nonumber
\\ & T_7 \leq H\sqrt{512BSAHT_6}+24BSAH^3,\nonumber
\\ & T_9 \leq \sqrt{128BSAHT_6}+24BSAH^2,\nonumber
\\ & T_2\leq 100 \sqrt{BSAHT_5}+140BSAH^2,\nonumber
 \\ & T_3 \leq \sqrt{8BT_6}+3H\log \frac{1}{\delta'} ,\nonumber
 \\ & T_4 \leq \sqrt{BSAHK\mathrm{var}_2}+BSAH^2;\nonumber
 \\ & T_5 \leq 40K\mathrm{var}_2 + 80HT_2+398BSAH^3  ,\nonumber
 \\ &  T_6 \leq  8K\mathrm{var}_2 + 16HT_2 + 78BSAH^3 ,\nonumber
 \\ & T_8 \leq \sqrt{32BH^2T_6 } + 3BH^2 ,\nonumber
\end{align}
which are valid with probability at least $1-200SAH^2K^2\delta'$. 
Solving the inequalities listed above, we can readily conclude that
\begin{align}
\mathsf{Regret}(K)= T_1+T_2+T_3+T_4 \leq O\left( \sqrt{BSAHK\mathrm{var_2} }+ BSAH^2 \right).\label{eq:rbvar2}
\end{align}
This finishes the proof by recalling that $\delta' = \frac{\delta}{200SAH^2K^2}$.

\section{Minimax lower bounds}\label{app:lb}

In this section, we establish the lower bounds advertised in this paper. 

\subsection{Proof of Theorem~\ref{thm:lb1}}\label{app:lbf}

Consider any given $(S,A,H)$.   We start by establishing the following lemma. 
\begin{lemma}\label{lemma:lb2}
Consider any $K'\geq 1$. 
For any algorithm, there exists an MDP instance with $S$ states, $A$ actions, and horizon $H$, such that the regret in $K'$ episodes is at least 
\begin{align}
\mathsf{Regret}(K')=\Omega\big( f(K')\big) = \Omega\left(\min\big\{\sqrt{SAH^3K'},K'H\big\}\right).\nonumber
\end{align}
\end{lemma}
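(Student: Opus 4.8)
The statement to prove is Lemma~\ref{lemma:lb2}, the standard minimax regret lower bound $\Omega(\min\{\sqrt{SAH^3K'},K'H\})$ for finite-horizon inhomogeneous MDPs. The plan is to recall the classical hard-instance construction (essentially that of \citet{jaksch2010near, osband2016lower, jin2018q, domingues2021episodic}) and adapt it to the time-inhomogeneous finite-horizon setting, together with the sum-reward normalization of Assumption~\ref{assum1}. First I would dispose of the easy regime: when $K' \lesssim SAH$, the bound $\min\{\sqrt{SAH^3K'}, K'H\}$ is $\Theta(K'H)$, and one can exhibit a trivial family of MDPs (e.g.\ a single ``good'' action at one critical step that is a priori unknown, with all rewards concentrated there) on which any algorithm must, in expectation, fail to identify the optimal action in a constant fraction of the first $\Theta(SAH)$ episodes, incurring per-episode regret $\Omega(H)$; this yields $\Omega(K'H)$ regret for $K' \lesssim SAH$. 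So it remains to handle $K' \gtrsim SAH$, where the target is $\Omega(\sqrt{SAH^3K'})$.

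For the main regime, I would build the hard instance as follows. Partition the horizon into $H/3$ (say) independent ``gadgets'', each occupying three consecutive steps, so that the total regret is the sum of $\Theta(H)$ i.i.d.\ contributions and it suffices to lower bound one gadget's contribution by $\Omega(\sqrt{SAH K'})$ — wait, more precisely each gadget should contribute $\Omega(H\sqrt{SAK'/H}) = \Omega(\sqrt{SAHK'})$ after summing, so a single gadget must yield $\Omega(\sqrt{SAK'/H}\cdot H) $ hmm; the cleanest bookkeeping is: each gadget has $S$ states and $A$ actions at its decision step, in each gadget one (state, action) pair is secretly ``slightly better'' (transitioning with probability $1/2 + \varepsilon$ to a high-reward absorbing state versus $1/2$ for the others), the per-step reward magnitude is scaled by $\Theta(1)$ but the surviving-reward structure is arranged so the total episodic reward respects Assumption~\ref{assum1}. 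Choosing $\varepsilon \asymp \sqrt{SA/(K'H)}$ (capped at a constant), a KL/Le~Cam or Fano argument over the $SA$ candidate ``good'' pairs per gadget shows that any algorithm, within $K'$ episodes, spends a constant fraction of visits on suboptimal pairs in each gadget, each such visit costing $\Omega(\varepsilon H)$ in expected regret (the reward gap at the absorbing state, propagated over the remaining $\Theta(H)$ steps); multiplying visit count $\Omega(K')$ by gap $\Omega(\varepsilon H)$ and summing over $\Theta(H)$ gadgets gives $\Omega(H \cdot K' \varepsilon H / H) $ — I will need to set the constants so this equals $\Omega(\sqrt{SAH^3K'})$. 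Taking $\varepsilon \asymp \min\{c, \sqrt{SAH/K'}\}$ and checking both branches recovers exactly $\Omega(\min\{\sqrt{SAH^3K'}, K'H\})$, unifying the two regimes.

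The technical core, and the step I expect to be the main obstacle, is the information-theoretic lower bound on the number of ``mistakes'' — i.e.\ showing that no algorithm can identify the favored $(s,a)$ pair in each gadget using substantially fewer than $\varepsilon^{-2} \asymp K'H/(SA)$ samples per gadget. This requires a careful chain-rule/divergence-decomposition argument (à la \citet{garivier2019explore} or \citet[Appendix]{domingues2021episodic}): condition on the interaction transcript, bound the KL divergence between the ``uniform-null'' MDP and each of the $SA$ perturbed alternatives by (number of visits to the perturbed pair) $\times \mathrm{KL}(\mathrm{Bern}(1/2)\,\|\,\mathrm{Bern}(1/2+\varepsilon)) \lesssim (\text{visits})\cdot \varepsilon^2$, then invoke a Fano- or Assouad-type inequality over the $SA$ hypotheses and a pigeonhole on visit counts to conclude that for a constant fraction of gadgets the learner's visits to the true optimum are $o(K')$. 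The remaining pieces — verifying the construction obeys Assumption~\ref{assum1}, confirming the per-mistake regret is $\Omega(\varepsilon H)$ via a value-iteration computation on the gadget, and assembling the cross-gadget sum with independence — are routine. Finally, Theorem~\ref{thm:lb1} follows from Lemma~\ref{lemma:lb2} by the reduction sketched in the excerpt: plant the hard instance behind a biased coin that routes the learner into it with probability $p$ (and to a zero-reward absorbing branch otherwise), so that $v^\star \le Hp$ and the expected regret is $\Omega(\min\{\sqrt{SAH^3K p}, KHp\})$, then bound this below by $\Omega(\min\{\sqrt{SAH^2 K v^\star}, K v^\star\})$ using $v^\star \le Hp$.
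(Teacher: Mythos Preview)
Your proposal has the right two-regime split, but the main-regime construction contains a real gap that would cost you a factor of $\sqrt{H}$.

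You posit $\Theta(H)$ \emph{independent} gadgets, each spanning $O(1)$ consecutive steps, and at the same time assert that each suboptimal pull costs $\Omega(\varepsilon H)$ because its effect is ``propagated over the remaining $\Theta(H)$ steps.'' These two properties are mutually exclusive. If the absorbing state reached in gadget $i$ persists for $\Theta(H)$ further steps, the learner never sees gadgets $i{+}1,i{+}2,\ldots$, so the contributions are not independent and cannot be summed. If instead each gadget is genuinely local (three steps), the per-mistake cost is $\Theta(\varepsilon)$ rather than $\Theta(\varepsilon H)$; then $\Theta(H)$ independent $(SA)$-armed bandits, each played $K'$ times, give at best $\Omega\big(H\sqrt{SAK'}\big)=\Omega\big(\sqrt{SAH^2K'}\big)$, which is $\sqrt{H}$ short of the target. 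Your own hesitations --- the ``wait'', the ``hmm'', and the two inconsistent choices $\varepsilon\asymp\sqrt{SA/(K'H)}$ versus $\varepsilon\asymp\sqrt{SAH/K'}$ --- are symptoms of exactly this tension. Recovering the missing $\sqrt{H}$ genuinely requires a construction in which each episode makes $O(1)$ decisions each worth $\Theta(H)$ in value \emph{and} there are $\Theta(SAH)$ statistically distinct such decisions to learn; this is what the JAO-MDP with mixing time tuned to $\Theta(H)$ (i.e.\ $\delta\asymp 1/H$) achieves, or equivalently the hard instance of \citet{domingues2021episodic}.

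Your small-$K'$ instance also does not work as stated: a single hidden good action at one critical step is identified in $\Theta(A)$ episodes, not $\Theta(SAH)$, so it fails to cover the range $A\lesssim K'\lesssim SAH$. You would need $\Theta(SH)$ independent hidden choices along a chain (cf.\ the deterministic construction in Lemma~\ref{lemma:lb34}).

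For comparison, the paper's own proof is much shorter and takes a different route: for $K'\gtrsim SAH$ it does not reconstruct anything but simply invokes \citet[Appendix~D]{jin2018q}, where the $\Omega(\sqrt{SAH^3K'})$ bound is already established via the JAO-MDP; for $K'\lesssim SAH$ it reduces without loss of generality to $S=A=2$ and argues directly on the JAO-MDP with $\delta=\Theta(1/H)$, $\epsilon=1/H$: distinguishing the two actions at the bad state requires $\Omega(H)$ visits, and until then each episode incurs $\Omega(H)$ regret, giving $\Omega(K'H)$.
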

\begin{proof}[Proof of Lemma~\ref{lemma:lb2}]
	Our construction of the hard instance is based on the hard instance JAO-MDP constructed in  \citet{jaksch2010near,jin2018q}. In  \citet[Appendix.D]{jin2018q}, the authors already showed that when $K'\geq C_0SAH$ for some constant $C_0>0$, the minimax regret lower bound is $\Omega(\sqrt{SAH^3K'})$. 
	Hence, it suffices for us to focus on the regime where $K'\leq C_0SAH$. Without loss of generality, we assume $S=A=2$, and the argument to generalize it to arbitrary $(S,A)$ is standard and henc omitted for brevity.

	Recall the construction of JAO-MDP in \citet{jaksch2010near}. Let the two states be $x$ and $y$, and the two actions be $a$ and $b$. 
	The reward  is always equal to $x$ in state $1$ and $1/2$ in state $y$. The probability transition kernel is given by 
	$$
		P_{x,a} =P_{x,b}= [1-\delta,\delta], 
		~~~P_{y,a} = [1-\delta,\delta], ~~~
		P_{y,b}= [1-\delta -\epsilon,\delta+\epsilon],
	$$
	where we choose $\delta = C_1 / H$ and $\epsilon =1/H$. Then the mixing time of the MDP is roughly $O(H)$. By choosing $C_1$ large enough, 
	we can ensure that the MDP is $C_3$-mixing after the first half of the horizons for some proper constant $C_3\in (0,1/2)$.

It is then easy to show that action $b$ is the optimal action for state $y$. 
	Moreover, whenever action $a$ is chosen in state $y$, the learner needs to pay regret $\Omega(\epsilon H)=\Omega(1)$. 
	In addition, to differentiate action $a$ from action $b$ in state $y$ with probability at least $1-\frac{1}{10}$, the learner needs at least $\Omega(\frac{\epsilon}{\delta^2}) = \Omega(H)$ rounds --- let us call it $C_4H$ rounds for some proper constant $C_4>0$. As a result, in the case where $K'\leq C_4H$, the minimax regret is at least $\Omega(K'H^2\epsilon)=\Omega(K'H)$. When $C_4H \leq K' \leq C_0SAH = 4C_0H$, the minimax regret is at least $\Omega(C_4H^2)=\Omega(K'H)$. This concludes the proof.
\end{proof}


With Lemma~\ref{lemma:lb2}, we are ready to prove Theorem~\ref{thm:lb1}. 
Let $\mathcal{M}$ be the hard instance  for $K' = \max\left\{\frac{1}{10}Kp ,1\right\}$ constructed in the proof of Lemma~\ref{lemma:lb2}. 
We construct an MDP $\mathcal{M}'$ as below. 
\begin{itemize}
	\item	In the first step, for any state $s$, with probability $p$, the leaner transitions to a copy of $\mathcal{M}$, and with probability $1-p$, the learner transitions to a dumb state with $0$ reward. 
\end{itemize}
It can be easily verified that $v^{\star}\leq pH$.
Let $X=X_1+X_2+\dots+X_k$, where $\{X_i\}_{i=1}^K$ are i.i.d.~Bernoulli random variables with mean $p$. Let $g(X,K')$ denote the minimax regret on the hard instance $\mathcal{M}$ in $X$ episodes. Given that $g(X,K')$ is non-decreasing in $X$,  
one sees that $$\mathsf{Regret}(K) \geq  \mathbb{E}\big[g(X,K') \big].$$ 
\begin{itemize}
	\item
In the case where $Kp\geq 10$,  Lemma~\ref{lemma:con} tells us that with probability at least $1/2$, $X\geq \frac{1}{10}Kp = K'$, and then it holds that 
		$$
			\mathbb{E}\big[g(X,K')\big] \geq \frac{1}{2} g(K',K')=\frac{1}{2}f(K')= \frac{1}{2} \Omega\left(\min\left\{\sqrt{SAH^3K'},K'H\right\}\right) = \Omega(\sqrt{SAH^3Kp},KHp).
		$$
	\item 
		In the case where $Kp<10$, with probability exceeding $1-(1-p)^K \geq (1-e^{-Kp})\geq  \frac{Kp}{30}$, one has $X\geq 1$. Then one has 
		$$
			\mathbb{E}\big[g(X,K')\big]\geq \frac{Kp}{30}\cdot g(1,K')=\frac{Kp}{30}\cdot g(1,1) =\Omega(KHp). 
		$$
\end{itemize}
The preceding bounds taken together complete the proof.

\subsection{Proof of Theorem~\ref{corollary:costlb}}\label{app:lbc}

Without loss of generality, assume that $S=A=2$ (as in the proof of Theorem~\ref{thm:lb1}), and recall the assumption that $p\leq 1/4$. 
In what follows, we construct a hard instance for which the learner needs to identify the correct action for each step.

Let $\mathcal{S}=\{s_1,s_2\}$, and take the initial state to be $s_1$. The transition kernel and cost are chosen as follows.  
\begin{itemize}
\item Select $a_h^{\star}\in \{a_1,a_2\}$ for every $h\in [H]$.
	\item For each  action $a$ and each step $h$,  set $P_{s_2,a,h} = e_{s_2}$ and $c_h(s_2,a)=0$. 
	\item For each step $h$ and each  action $a\neq a^{\star}_h$,  set $P_{s_1,a,h} = e_{s_2}$ and $c_h(s_1,a)=1$. 
	\item Set $P_{s_1,a^{\star}_h,h} = e_{s_1}$ and $c_h(s_1,a_h^{\star}) = p$. 
\end{itemize}
It can be easily checked that $c^{\star} =Hp$ (the cost obtained by choosing action $a_h^{\star}$ for each step $h$). 

Note that in the above construction, 
the $a_h^{\star}$'s are selected independently across different steps. 
Thus, to identify the optimal action $a_h^{\star}$ for at least half of the $H$ steps, we need at least $\Omega(H)$ episodes. This implies that: there exists a constant $C_5>0$ such that in the first $K\leq C_5H$ episodes, the cost of the learner is at least $\Omega(H(1-p))$.  As a result, the minimax regret is at least 
$$
	\Omega\big(K(H-c^{\star}) \big)=\Omega\big(KH(1-p) \big)
$$ 
when $K\leq C_5H$. 
Similarly, in the case where $C_5H \leq K \leq  \frac{100H}{p}$, the minimax regret is at least $$ \Omega\big(H(H-c^{\star})\big)= \Omega\big(H^2(1-p)\big).$$ 

We then turn to the case where $K\geq \frac{100H}{p}$. Let $\mathcal{M}$ be the hard instance having the same transition as the instance constructed in the proof of Lemma~\ref{lemma:lb2}, and set the cost as $1/2$ (resp.~$1$) for state $x$ (resp.~state $y$), with respect to $K' = Kp/10 \geq 10H$ (a quantity defined therein).  
Let $\mathcal{M}'$ be the MDP such that: in the first step,  with probability $p$, the learner transitions to a copy of $\mathcal{M}$, and with probability $1-p$, the learner transitions to a dumb state with $0$ cost. Then we clearly have $c^{\star} =\Theta(Hp)$. 
It follows from Lemma~\ref{lemma:con} that, with probability exceeding $1/2$, one has $ X\geq \frac{1}{3}Kp - \log 2 \geq \frac{1}{6}Kp$, where $X$ is again defined in the proof of Lemma~\ref{lemma:lb2}. Then one has 
$$\mathsf{Regret}(K)  \geq 
\frac{1}{2} \Omega\left(\min\big\{\sqrt{H^3K'}, K'H \big\}\right)= \Omega\big(\sqrt{H^3Kp}\big).$$ 
The proof is thus completed by combining the above minimax regret lower bounds for the three regimes $K\in [1,C_5H]$, $K\in(C_5H,\frac{100H}{p}]$ and $K\in(\frac{100H}{p},\infty]$.


\subsection{Proof of Theorem~\ref{thm:lb3}}

When $K\geq SAH/p$, the lower bound in Theorem~\ref{thm:lb1} readily applies because the regret is at least $\Omega(\sqrt{SAH^3Kp})$ and the variance $\mathrm{var}$ is at most $pH^2$. 
When $SAH\leq K \leq SAH/p$, the regret is at least $\Omega(SAH^2)=\Omega(\min\{\sqrt{SAH^3Kp}+SAH^2,KH \})$. 
As a result, it suffices to focus on the case where  $1\leq K\leq SAH$, 
Towards this end, we only need the following lemma, which suffices for us to complete the proof.



\begin{lemma}\label{lemma:lb34}
Consider any $1\leq K\leq SAH$. There exists  an MDP instance with $S$ states, $A$ actions, horizon $H$, and $\mathrm{var}_1 = \mathrm{var}_2 = 0$, such that the regret is at least $\Omega(KH)$.
\end{lemma}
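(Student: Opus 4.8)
The plan is to produce, for each fixed $K\le SAH$, a \emph{fully deterministic} MDP that forces expected regret $\Omega(KH)$. Full determinism immediately gives $\mathrm{var}_1=\mathrm{var}_2=0$: each $\mathbb{V}(P_{s,a,h},\cdot)=0$ and each $\mathrm{Var}(R_h(s,a))=0$, so $\mathrm{var}_1=0$; and under any (deterministic) policy started from any fixed state the whole trajectory, hence $\sum_h r_h$, is a constant, so $\mathrm{var}_2=0$. Assuming $S\ge2$ and $A\ge2$ (otherwise the claim is vacuous), I would build a family of $S-1$ independent ``combination locks'' sharing one absorbing dead state $\bot$. Take $\mu$ uniform over $\{1,\dots,S-1\}$; from state $j$ at step $h$, a designated correct action $\alpha_h^{(j)}$ self-loops to $j$ and pays reward $1$, while every other action goes to $\bot$ and pays $0$, and $\bot$ is absorbing with zero reward (this respects Assumption~\ref{assum1}). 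Then the optimal value from start state $j$ is $H$, and any trajectory that is correct through step $\ell-1$ and errs at step $\ell$ collects reward $\ell-1$ and incurs regret $H-\ell+1$. Crucially, the decision tuples relevant to lock $j$ are exactly $\{(j,a,h)\}_{a\in\mathcal A,h\in[H]}$, and these sets are disjoint across $j$; learning one lock gives no information about another. This disjointness is precisely what lets $\Theta(S)$ locks coexist inside $S$ states while collectively requiring $\Theta(SAH)$ distinct probes.

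Next I would run the standard adaptive-adversary argument on a single lock: whenever the learner first probes an untried action at the current frontier step $\ell$ of lock $j$ (i.e., the first step at which the learner does not yet know the correct action), the adversary declares it incorrect unless it is the last untried action at $(j,\ell)$, in which case it is forced to be $\alpha_\ell^{(j)}$. This is consistent with some MDP in the family. A short case analysis shows that within one episode the learner can ``usefully'' advance the frontier by at most one level, because the first probe at any freshly reached level is declared wrong and instantly sends the learner to $\bot$; consequently the learner must spend $\Theta(A)$ episodes at each frontier level before advancing, so after $N$ episodes devoted to lock $j$ its frontier has reached only level $\Theta(N/A)\wedge H$. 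Summing the per-episode regrets $H-\ell+1\ge H/2$ over those episodes (all of which occur while the frontier is at level $\ell\le\Theta(N/A)\wedge H$) gives that lock $j$ contributes at least $c\,H\min\{N_j,\,AH\}$ to the total regret, where $N_j$ denotes the random number of episodes started at state $j$ and $c>0$ is universal; in particular the very first episode started at $j$ has regret $H$.

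Finally I would average over the initial-state randomness. Since $N_j\sim\mathrm{Bin}(K,\tfrac1{S-1})$, for $K\gtrsim S$ an elementary estimate ($\mathbb E[(\mathbb E N_j-N_j)_+]\le\sqrt{\mathrm{Var}\,N_j}\le\sqrt{\mathbb E N_j}$, together with $K\le SAH$) yields $\mathbb E[\min\{N_j,AH\}]\gtrsim \min\{K/S,\,AH\}=K/S$, whereas for $K\lesssim S$ one instead notes $\sum_j \mathbb E[\mathds{1}\{N_j\ge1\}]=\mathbb E[\#\text{distinct start states visited}]\gtrsim K$ and that each such scenario costs regret $\ge H$ on its first visit. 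Either way $\mathbb E[\mathsf{Regret}(K)]\ge c\,H\sum_{j}\mathbb E[\min\{N_j,AH\}]\gtrsim KH$, which is the claim. The main obstacle I expect is not any individual step but making the adaptive-adversary bookkeeping fully rigorous --- precisely verifying the ``advance by at most one level per episode'' claim and the resulting $\Theta(AH)$ lower bound on episodes needed to solve one lock, as well as the per-episode regret accounting; once that is nailed down, the probabilistic averaging is routine and mirrors the argument used in the proof of Theorem~\ref{thm:lb1}.
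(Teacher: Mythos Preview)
Your proposal is correct and follows essentially the same approach as the paper: both build a fully deterministic MDP consisting of $\Theta(S)$ independent combination-lock chains sharing an absorbing zero-reward state, take the initial distribution uniform over the lock starts, argue that each lock requires $\Omega(AH)$ episodes to solve (and each episode before that incurs $\Omega(H)$ regret), and then average over the binomial visit counts with the same two-regime split ($K\gtrsim S$ versus $K\lesssim S$). The only cosmetic differences are that the paper uses $S/2$ chains and frames the per-lock argument via a uniformly random choice of correct actions, whereas you use $S-1$ locks and an equivalent adaptive-adversary formulation with a somewhat more explicit ``frontier advances by at most one level per episode'' accounting.
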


\begin{proof}
Let us construct an MDP with deterministic transition; more precisely, for each $(s,a,h)$, there is some $s'$ such that $P_{s,a,h,s'}=1$ and $P_{s,a,h,s''}=0$ for any $s''\neq s'$. 
	The reward function is also chosen to be deterministic. In this case, it is easy to verify that $\mathrm{var}_1 = \mathrm{var}_2 = 0$.

	We first assume $S=2$. For any action $a$ and horizon $h$, we set $P_{s_2,a,h} = e_{s_2}$ and $r_h(s_2,a)=0$. For any action $a\neq a^{\star}$ and $h$, we also set $P_{s_1,a,h} = e_{s_2}$ and $r_h(s_2,a)=0$. At last, we set $P_{s_1,a^{\star},h} = e_{s_1}$ and $r_h(s_1,a^{\star}) = 1$. In other words, there are a dumb state and a normal state in each step. The learner would naturally hope to find the correct action to avoid the dumb state. Obviously, $V_1^{\star}(s_1)=H$. To find an $\frac{H}{2}$-optimal policy, the learner needs to identify $a^{\star}$ for the first $\frac{H}{2}$ steps, requiring at least $\Omega(HA)$ rounds in expectation. As a result, the minimax regret is at least $\Omega(KH)$ when $K\leq cHA$ for some proper constant $c>0$.

Let us refer to the hard instance above as a \emph{hard chain}.
	For general $S$, we can construct $d \coloneqq \frac{S}{2}$ hard chains. Let the two states in the $i$-th hard chain be $(s_1(i),s_2(i) )$. We set the initial distribution to be the uniform distribution over $\{s_1(i)\}_{i=1}^d$. Then $V_1^{\star}(s_1(i))=H$ holds for any $1\leq i\leq d$.  Let $\mathsf{Regret}_i(K)$ be the expected regret resulting from the $i$-th hard chain. 
When $K\geq 100S$,  Lemma~\ref{lemma:con} tells us that with probability at least $\frac{1}{2}$, $s_1(i)$ is visited for at least $\frac{K}{10S}\geq 10$ times. As a result, we have $$\mathsf{Regret}_i(K)\geq \frac{1}{2}\cdot \Omega\left(\frac{KH}{S}\right).$$  Summing over $i$, we see that the total regret is at least $\sum_{i=1}^d \mathsf{Regret}_i(K) = \Omega(KH)$. When $K<100S$, with probability at least $1-(1-\frac{1}{S})^K\geq 0.0001\frac{K}{S}$, we know that $s_1(i)$ is visited for at least one time. Therefore, it holds that $\mathsf{Regret}_i(K)\geq \Omega(\frac{KH}{S})$. 
	Summing over $i$, we obtain 
	$$\mathsf{Regret}(K) = \sum_{i=1}^K \mathsf{Regret}_i(K) =\Omega(KH)$$
	as claimed. 
\end{proof}

\bibliography{ref}
\bibliographystyle{apalike}

\end{document}